\def\eqref#1{equation~\ref{#1}}
\def\1{\bm{1}}
\DeclareMathAlphabet{\mathsfit}{\encodingdefault}{\sfdefault}{m}{sl}
\SetMathAlphabet{\mathsfit}{bold}{\encodingdefault}{\sfdefault}{bx}{n}
\def\gO{{\mathcal{O}}}
\def\sP{{\mathbb{P}}}
\newcommand{\kl}{\mathrm{KL}}
\DeclareMathOperator*{\argmax}{arg\,max}
\DeclareMathOperator*{\argmin}{arg\,min}
\DeclareMathOperator*{\subopt}{SubOpt}
\newtheorem{defn}{Definition}
\newtheorem{assumption}{Assumption}[section]
\newtheorem{theorem}{Theorem}
\newtheorem{lemma}{Lemma}[section]
\newtheorem{corollary}{Corollary}
\newtheorem{remark}{Remark}
\newcommand{\tiO}[1]{\tilde{\mathcal{O}}(}
\title{On Instance-Dependent Bounds for Offline Reinforcement Learning with Linear Function Approximation}
\author{
Thanh Nguyen-Tang$^1$\footnote{Email: \url{nguyent@cs.jhu.edu} / \url{thnguyentang@gmail.com}}, Ming Yin$^{2,3}$, Sunil Gupta$^4$, Svetha Venkatesh$^4$, Raman Arora$^1$\\
$^1$Department of Computer Science, Johns Hopkins University\\
$^2$Department of Computer Science, UC Santa Barbara \\
$^3$Department of Statistics and Applied Probability, UC Santa Barbara \\
$^4$Applied AI Institute, Deakin University
}
\begin{document}
\maketitle

\begin{abstract}

Sample-efficient offline reinforcement learning (RL) with linear function approximation has recently been studied extensively. Much of prior work has yielded the minimax-optimal bound of $\tilde{\mathcal{O}}(\frac{1}{\sqrt{K}})$, with $K$ being the number of episodes in the offline data. In this work, we seek to understand instance-dependent bounds for offline RL with function approximation. We present an algorithm called Bootstrapped and Constrained Pessimistic Value Iteration (BCP-VI), which leverages data bootstrapping and constrained optimization on top of pessimism. We show that under a partial data coverage assumption, that of \emph{concentrability} with respect to an optimal policy, the proposed algorithm yields a fast rate of $\tilde{\mathcal{O}}(\frac{1}{K})$ for offline RL when there is a positive gap in the optimal Q-value functions, even when the offline data were adaptively collected. Moreover, when the linear features of the optimal actions in the states reachable by an optimal policy span those reachable by the behavior policy and the optimal actions are unique, offline RL achieves absolute zero sub-optimality error when $K$ exceeds a  (finite) instance-dependent threshold. To the best of our knowledge, these are the first $\tilde{\mathcal{O}}(\frac{1}{K})$ bound and absolute zero sub-optimality bound respectively for offline RL with linear function approximation from adaptive data with partial coverage. We also provide instance-agnostic and instance-dependent information-theoretical lower bounds to complement our upper bounds. 
\end{abstract}

\tableofcontents

\section{Introduction}


We consider the problem of offline reinforcement learning (offline RL), where the goal is to learn an optimal policy from a fixed dataset generated by some unknown behavior policy 
~\citep{lange2012batch,levine2020offline}. The offline RL problem has recently attracted much attention from the research community. It provides a practical setting where logged datasets are abundant but exploring the environment can be costly due to computational, economic, or ethical reasons. It finds applications in a number of important domains including healthcare~\citep{gottesman2019guidelines,nie2021learning}, recommendation systems~\citep{strehl2010learning,thomasAAAI17,zhang2022two}, econometrics~\citep{Kitagawa18,athey2021policy}, and more. 



A large body of literature is devoted to providing 
generalization bounds for offline reinforcement learning with linear function approximation, wherein the reward and transition probability functions are parameterized as linear functions of a given feature mapping.  
For such linear MDPs, \citet{jin2021pessimism} present a pessimistic value iteration (PEVI) algorithm and show that it is sample-efficient. 
In particular, \citet{jin2021pessimism} provide a sample complexity bound for PEVI such that under the assumption that each trajectory is independently sampled and the behaviour policy is uniformly explorative in all dimensions of the feature mapping, the complexity bound improves to $\tiO(\frac{d^{3/2} H^2}{\sqrt{K}} )$ where $d$ is the dimension of the feature mapping, $H$ is the episode length, and $K$ is the number of episodes in the offline data. 
In a follow-up work, \citet{Xiong2022NearlyMO,yinnear} leverage variance reduction (to derive a variance-aware bound) and data-splitting (to circumvent the uniform concentration argument) to further improve the result in \citet{jin2021pessimism} by a factor of $\mathcal{O}(\sqrt{d} H)$.
\citet{xie2021bellman} propose a pessimistic framework with general function approximation, and their bound improves that of \cite{jin2021pessimism} by a factor of $\sqrt{d}$ when the action space is finite, and the function approximation is linear. 
\citet{uehara2021pessimistic} also 
obtain the $\frac{1}{\sqrt{K}}$ rate for offline RL with general function approximation, but like \cite{xie2021bellman}, their results are, in general, not computationally tractable as they require an optimization subroutine over a general function class. Although the $\frac{1}{\sqrt{K}}$ rate is minimax-optimal, in practice, assuming a worst-case setting is too pessimistic. Indeed, several empirical works suggest that in such natural settings, we can learn at a rate that is much faster than $\frac{1}{\sqrt{K}}$ (e.g., see Figure \ref{fig:fast_rates_sim}). We argue that to circumvent these lower bounds and explain the rates we observe in practical settings, we should consider the intrinsic instance-dependent structure of the underlying MDP.
Furthermore, existing works establishing the the minimax-optimal $\frac{1}{\sqrt{K}}$ rate still require a strong assumption of uniform feature coverage and trajectory independence. \footnote{The only exception is \cite{jin2021pessimism}, but their bound is generic, and they do not show if they can achieve a rate of $\frac{1}{\sqrt{K}}$ under a partial data coverage assumption.} This motivates us to study tighter instance-dependent bounds for offline RL with the mildest data coverage condition possible. 

Instance/gap-dependent bounds have been extensively studied in \emph{online} bandit and reinforcement learning literature~\citep{simchowitz2019non,yang2021q,xu2021fine,he2021logarithmic}. 
These works typically rely on an instance-dependent quantity, such as the minimum positive sub-optimality gap between an optimal action and the sub-optimal ones. However, to the best of our knowledge, it is still largely unclear how to leverage such an instance-dependent structure to improve offline RL, especially due to the unique challenge of distributional shift in offline RL as compared to the online case. A few recent works \citep{hu2021fast,wang2022gap} give gap-dependent bounds for offline RL; however, these works either require a strong uniform feature coverage assumption or only work for tabular MDPs. In addition, they require that the trajectories are collected independently across episodes -- an assumption that is not very realistic as the data might have been collected by some online learning algorithms that interact with the MDPs \citep{fu2020d4rl}. 
We are unaware of any existing work that leverages an instance/gap-dependent structure for offline RL with adaptive data and linear function approximation, which motivates the following question we consider in this paper. 
\begin{center}
\textit{Can we derive instance/gap-dependent bounds for offline RL with linear representations?}
\end{center}

We answer the above question affirmatively and thus narrow the literature gap that were discussed in the recent work of \cite{wang2022gap}. In particular, we use $\Delta_{\min}$ to denote the minimum positive sub-optimality gap between the optimal action and the sub-optimal ones \citep{simchowitz2019non,yang2021q,he2021logarithmic}. The larger the $\Delta_{\min}$, the faster we can learn in an online setting since the actions with larger rewards are likely to be optimal, thereby reducing the time needed for exploration. Similarly, offline learning with uniform data coverage can benefit from the gap information as the entire state-action space is already fully explored by the offline policy \citep{hu2021fast}. However, it remains elusive as \emph{how an offline learner can benefit from the gap information where the learner cannot explore the environment anymore, and the offline data does not fully cover the state-action space}. 

\begin{figure*}[h!]
    \centering
    \includegraphics[scale=0.29]{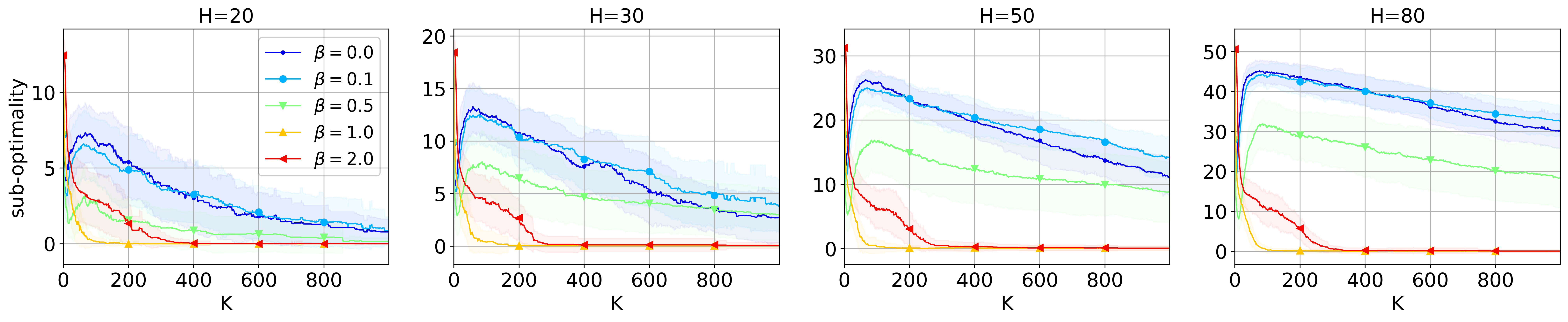}
    \caption{A comparison of BCP-VI with its non-pessimistic variant (i.e., $\beta=0$, where $\beta$ is defined at line~\ref{bpvi:lcb} of Algorithm~\ref{alg:bpvi}). The plots show the sub-optimality of $\hat{\pi}^K$ returned by Algorithm~\ref{alg:bpvi} for $K \in [1,\ldots, 1000]$ and various values of episode lengths $H \in \{20,30,50,80\}$. }
    \label{fig:fast_rates_sim}
\end{figure*}



\subsection*{Our Contributions}
\begin{table*}[]
    \centering
    \def\arraystretch{1.9}%
    \resizebox{\textwidth}{!}{
    \begin{tabular}{|c|c|c|c|c|}
    \hline
    {\bf Algorithm} & {\bf Condition} & {\bf Upper Bound} & {\bf Lower Bound} & {\bf Data} \\[5pt]
    \hline 
      PEVI & Uniform  &  \makecell{$\tilde{\mathcal{O}}  \left( \frac{ H^2 d^{3/2}}{\sqrt{K}} \right)$} & \makecell{$\Omega \left( \frac{H}{ \sqrt{K}} \right)$ } & Independent \\[1mm] 
      \hline
      \multirow{3.2}{*}{BCP-VI} & \cellcolor{lightgray}OPC & \cellcolor{gray} $\tilde{\mathcal{O}} \left( \frac{ H^2 d^{3/2} \kappa_*}{\sqrt{K}} \right)$ & \cellcolor{gray} $\Omega \left( \frac{H \sqrt{\kappa_{\min}}}{\sqrt{K}} \right)$ & Adaptive \\
      & \cellcolor{lightgray} OPC, $\Delta_{min}$ & \cellcolor{gray} $\tilde{\mathcal{O}} \left( \frac{ d^3 H^5 \kappa_*^{3}}{\Delta_{\min} \cdot K} \right)$ & \cellcolor{gray} $\Omega \left( \frac{ H^2 \kappa_{\min}}{ \Delta_{\min} \cdot K } \right)$ & Adaptive \\ 
      & \cellcolor{lightgray} OPC, $\Delta_{min}$, UO-SF, $K \geq k^*$ & \cellcolor{gray} $0$ & $0$ & Adaptive \\[5pt] 
      \hline
      \multirow{2.5}{*}{BCP-VTR} & \cellcolor{lightgray} OPC & \cellcolor{gray} $\tilde{\mathcal{O}} \left( \frac{ H^2 d \kappa_*}{\sqrt{K}} \right)$ & \cellcolor{gray}$\Omega \left( \frac{H \sqrt{\kappa_{\min}}}{\sqrt{K}} \right)$ & Adaptive \\[5pt]
      & \cellcolor{lightgray} OPC, $\Delta_{min}$ & \cellcolor{gray} $\tilde{\mathcal{O}} \left( \frac{ d^2 H^5 \kappa_*^{3}}{\Delta_{\min} \cdot K} \right)$ & \cellcolor{gray} $\Omega \left( \frac{ H^2 \kappa_{\min}}{ \Delta_{\min} \cdot K } \right)$ & Adaptive \\[5pt]
      \hline 
    \end{tabular}
    }
    \caption{Bounds on the sub-optimality of offline RL with linear function approximation under different conditions and data coverage assumptions. Cells in gray are our contributions. The results
    in the first line were obtained in \cite{jin2021pessimism} under ``sufficient'' data coverage. Here,
    $K$ is the number of episodes in the offline dataset, $d$ is the dimension of the known linear mapping, $H$ is the episode length, 
    OPC stands for optimal policy concentrability (Assumption \ref{assumption:single_policy_concentration}), $\kappa_* = \max_{h \in [H]}\kappa_h$ where $\kappa_h$ is the OPC coefficient defined in Assumption \ref{assumption:lower bound density}, $\kappa_{\min} = \min_{h \in [H]} \kappa_h$, $k^*$ is defined in Eq. (\ref{equation: k threshold}), ``Uniform'' means uniform data coverage, ``Independent'' and ``Adaptive'' mean the episodes of the offline data were collected independently and adaptively, respectively, and UO-SF stands for unique optimality and spanning features in Assumption \ref{assumption:spaning_feature}. BCP-VTR is a model-based offline RL method for linear mixture MDPs which is presented in Section \ref{model-based offline RL}.} 
    \label{tab: result for comparison} 
\end{table*}

We propose a novel bootstrapped and constrained pessimistic value iteration (BCP-VI) algorithm to leverage the gap information for an offline learner under partial data coverage, adaptive data, and linear function approximation. The key idea is to apply constrained optimization to the pessimistic value iteration (PEVI) algorithm of \citet{jin2021pessimism} to ensure that each policy estimate has the same support as the behaviour policy. We then repeatedly apply the resulting algorithm to a sequence of partial splits bootstrapped from the original data to form an ensemble of policy estimates. Our key contributions are: 

\begin{enumerate}

\item We show that BCP-VI adapts to the instance-dependent quantity, $\Delta_{\min}$, to achieve a fast rate of $\mathcal{O}(\frac{\log K}{K})$ where $K$ is the number of episodes in the offline data. Our result holds under the single-policy concentration coverage even when the offline data were adaptively collected. 


\item As a byproduct, we also derive strong data-adaptive bounds for offline RL with linear function approximation 
under the single-policy concentrability assumption, which readily turns into a $\frac{1}{\sqrt{K}}$ bound with the single-policy concentration coefficients (without the gap information). 

\item Under an additional condition that the linear features for optimal actions in states reachable by the behavior policy span those in states reachable by an optimal policy, we show that the policies returned by BCP-VI obtain an absolute zero sub-optimality when $K$ is larger than some problem-dependent constant. 

\item We accompany our main result with information-theoretic lower bounds, which show that our gap-dependent bounds for offline RL are nearly optimal up to a polylog factor in terms of $K$ and $\Delta_{\min}$. We summarize our results in Table \ref{tab: result for comparison}.

\end{enumerate}

\section{Related Work}


\paragraph{Offline RL with (linear) function approximation.} While there has been much focus on provably efficient RL under linear function approximation, \citet{jin2021pessimism} were the first to show that pessimistic value iteration is provably efficient for offline linear MDPs. \citet{Xiong2022NearlyMO} and \citet{yinnear} improve  upon \citet{jin2021pessimism} by leveraging variance reduction and data splitting. \citet{xie2021bellman} consider a Bellman-consistency assumption with general function approximation, which improves the bound of \citet{jin2021pessimism} by a factor of $\sqrt{d}$ when realized to finite action spaces and linear MDPs. On the other hand, \citet{wang2020statistical} study the statistical hardness of offline RL with linear representation, suggesting that only realizability and strong uniform data coverage are insufficient for sample-efficient offline RL. Beyond linearity, the sample complexity of offline RL were studied with general, nonparametric or parametric, function approximation, either based on Fitted-Q Iteration (FQI) \citep{DBLP:journals/jmlr/MunosS08,DBLP:conf/icml/0002VY19,chen2019information,duan2021risk,duan2021optimal,hu2021fast,,nguyentang2021sample,ji2022sample} or pessimism principle \citep{uehara2021pessimistic,nguyen2021offline,jin2021pessimism}. However, all of the results above yield a worst-case bound of $\frac{1}{\sqrt{K}}$ without taking into account the structure of a problem instance.


\paragraph{Instance-dependent bounds for offline RL.}

The gap assumption (Assumption \ref{assumption:margin}) has been studied extensively in online RL~\citep{bubeck2012regret,lattimore_szepesvari_2020}, yielding gap-dependent logarithmic regret bounds for bandits, tabular MDPs \citep{yang2021q} and MDPs with linear representation \citep{he2021logarithmic}. In online RL, when learning MDPs with linear rewards, under an additional assumption that the linear features of optimal actions span the space of the linear features of all actions~\citep{papini2021reinforcement}, we can bound the regret by a constant. However, instance-dependent results for offline RL are still sparse and limited, mainly due to the unique challenge of distributional-shift in offline RL. There are only two instance-dependent works that we are aware of in the context of offline RL. The work of \citet{hu2021fast} establishes a relationship between pointwise error rate of an estimate of $Q^*$ and the rate of the resulting policy in Fitted Q-Iteration (FQI) and Bellman residual minimization under (a probabilistic version of) the minimum positive sub-optimality gap. 
\citet{hu2021fast} showed that under the uniform feature coverage, i.e. $ \lambda_{\min} \left(\mathbb{E}_{(s_h, a_h) \sim d^{\mu}_h} \left[ \phi_h(s_h, a_h) \phi_h(s_h, a_h)^T  \right] \right) > 0$ and the assumption that gap information is uniformly bounded away from zero with high probability, i.e. $\sup_{\pi} \sP_{s \sim d^{\pi}}(0 < \Delta(s) < \delta) \leq (\delta / \delta_0)^{\alpha}$ for some constants $\delta_0 > 0, \alpha \in [0, \infty]$ and any $\delta > 0$, FQI yields a rate of $\gO(\frac{1}{K})$ in linear MDP and $\mathcal{O}(e^{- K})$ in tabular MDP, respectively. 
A more recent work of \citet{wang2022gap} obtained gap-dependent bounds for offline RL; however, the results and technique (i.e. so-called the deficit thresholding technique) are limited only to independent data and tabular settings. 

\paragraph{Offline RL from adaptive data.} 

A common assumption for sample-efficient guarantees of offline RL is the assumption that the trajectories of different episodes are collected independently. However, it is quite common in practice that offline data is collected adaptively, for example, using contextual bandits, Q-learning, and optimistic value iteration. Thus, it is natural to study sample-efficient RL from adaptive data. Most initial results with adaptive data are for offline contextual bandits~\citep{Zhan2021,zhan2021policy,nguyen2021offline,zhang2021statistical}. Pessimistic value iteration (PEVI) \citep{jin2021pessimism} works in linear MDP for the general data compliance assumption (see  \cite[Definition 2.1]{jin2021pessimism}), which is essentially equivalent to assuming that the data were adaptively collected. 
However, when deriving the explicit $\frac{1}{\sqrt{K}}$ bound of their algorithm, they made the assumption that the trajectories are independent (see their Corollary 4.6). 
The recent work of \citet{wang2022gap} derives a gap-dependent bound for offline tabular MDP but still requires that trajectories are collected independently.

\section{Problem Setting}


\paragraph{Episodic time-inhomogenous Markov decision processes (MDPs).}
A finite-horizon Markov decision process (MDP) is denoted as the tuple $\mathcal{M} = (\mathcal{S}, \mathcal{A}, \mathbb{P},r, H, d_1)$, where $\mathcal{S}$ is an arbitrary state space, $\mathcal{A}$ is an arbitrary action space, $H$ the episode length, and $d_1$ the initial state distribution. A time-inhomogeneous transition kernel $\mathbb{P} = \{\mathbb{P}_h\}_{h=1}^H$, where $\mathbb{P}_h: \mathcal{S} \times \mathcal{A} \rightarrow \mathcal{P}(\mathcal{S})$ (where $\mathcal{P}(\mathcal{S})$ denotes the set of probability measures over $\mathcal{S}$) maps each state-action pair $(s_h, a_h)$ to a probability distribution $\mathbb{P}_h(\cdot|s_h, a_h)$ (with the corresponding density function $p_h(\cdot|s_h, a_h)$  with respect to the Lebesgue measure $\rho$ on $\mathcal{S}$), and $r = \{r_h\}_{h=1}^H$ where $r_h: \mathcal{S} \times \mathcal{A} \rightarrow [0,1]$ is the mean reward function at step $h$. A policy $\pi = \{\pi_h\}_{h=1}^H$ assigns each state $s_h \in \mathcal{S}$ to a probability distribution, $\pi_h(\cdot|s_h)$, over the action space  and induces a random trajectory $s_1, a_1, r_1, \ldots, s_H, a_H, r_H, s_{H+1}$ where $s_1 \sim d_1$, $a_h \sim \pi_h(\cdot|s_h)$, $s_{h+1} \sim \mathbb{P}_h(\cdot | s_h, a_h)$. 

\paragraph{$V$-values and $Q$-values.} For any policy $\pi$, the $V$-value function $V^{\pi}_h \in \mathbb{R}^{\mathcal{S}}$ and the $Q$-value function $Q^{\pi}_h \in \mathbb{R}^{\mathcal{S} \times \mathcal{A}}$ are defined as: 
   $ Q^{\pi}_h(s,a) = \mathbb{E}_{\pi}[\sum_{t=h}^H r_t | s_h=s, a_h=a]$, 
    $V^{\pi}_h(s) = \mathbb{E}_{a \sim \pi(\cdot |s)} [Q^{\pi}_h(s,a)]$. We also define 
    $({P}_h V)(s,a) := \mathbb{E}_{s' \sim \mathbb{P}_h(\cdot|s,a)}[V(s')]$,  
    $(T_h V)(s,a) := r_h(s,a) + (P_hV)(s,a)$,
We have: $Q^{\pi}_h = T_h V_{h+1}^{\pi}$ (the Bellman equation), $V^{\pi}_h(s) = \mathbb{E}_{a \sim \pi(\cdot|s)}[Q^{\pi}_h(s,a)]$, $Q^*_h = T_h V_{h+1}^*$ (the Bellman optimality equation), and $V^*_h(s) = \max_{a \in \mathcal{A}} Q^*_h(s,a)$. Let $\pi^* = \{\pi^*_h\}_{h \in [H]}$ be any deterministic, optimal policy, i.e., $\pi^* \in \argmax_{\pi} Q^{\pi} $ and denote $v^* = v^{\pi^*}$. Moreover, let $d^{\mathcal{M},\pi}_h$ be the marginal state-visitation density for policy $\pi$ at step $h$ with respect to the Lebesgue measure $\rho$ on $\mathcal{S}$, i.e., $ \int_{B} d^{\mathcal{M}, \pi}_h(s_h) \rho(ds_h) =$ $\mathbb{P}\left(s_h \in B |  d_1, \pi,   \mathbb{P} \right)$. We overload the notation $ d^{\mathcal{M}, \pi}_h(s_h, a_h) = d^{\mathcal{M}, \pi}_h(s_h) \pi(a_h | s_h)$ for the state-action visitation density when the context is clear. We abbreviate $d^{\mathcal{M},*}_h = d^{\mathcal{M},\pi^*}_h$. Let $\mathcal{S}^{\mathcal{M},\pi}_h := \{s_h: d^{\mathcal{M}, \pi}_h(s_h) > 0\}$ and $\mathcal{S}\mathcal{A}^{\mathcal{M},\pi}_h := \{(s_h,a_h): d^{\mathcal{M}, \pi}_h(s_h, a_h) > 0\}$ be the set of feasible states and feasible state-action pairs, respectively at step $h$ under policy $\pi$. Denote by $\mathcal{S}^{\mathcal{M}}_h = \cup_{\pi} \mathcal{S}^{\mathcal{M},\pi}_h $ and $\mathcal{S} \mathcal{A}^{\mathcal{M}}_h = \cup_{\pi} \mathcal{S} \mathcal{A}^{\mathcal{M},\pi}_h $ the set of all feasible states and feasible state-action pairs, respectively at step $h$. When the underlying MDP is clear, we drop the superscript $\mathcal{M}$ in $d^{\mathcal{M}, \pi}$, $d^{\mathcal{M}, *}$, $\mathcal{S}^{\mathcal{M}, \pi}$, and $\mathcal{S} \mathcal{A}^{\mathcal{M}, \pi}$ to become $d^{\pi}$, $d^{ *}$, $\mathcal{S}^{ \pi}$, and $\mathcal{S}\mathcal{A}^{\pi}$ respectively. We assume bounded marginal state(-action) visitation density functions and without loss of generality, we assume $d^{\pi}_h(s_h, a_h) \leq 1, \forall (h,s_h, a_h, \pi)$.\footnote{This trivially holds when $\mathcal{S}$ and $\mathcal{A}$ are discrete regardless of how large they are). When either $\mathcal{S}$ or $\mathcal{A}$ are continuous, we assume $d^{\pi}_h(s_h, a_h) \leq B < \infty$ and assume $B=1$ for simplicity.}

\paragraph{Linear MDPs.} When the state space is large or continuous, we often use a parametric representation for value functions or transition kernels. A standard parametric representation is linear models with given feature maps. In this paper, we consider such linear representation with the linear MDP \citep{yang2019sample,jin2020provably} where the transition kernel and the rewards are linear with respect to a given $d$-dimensional feature map: $\phi_h: \mathcal{S} \times \mathcal{A} \rightarrow \mathbb{R}^d$. 
\begin{defn}[Linear MDPs]
An MDP has a linear structure if for any $(s,a,s',h)$,  
\begin{align*}
    r_h(s,a) = \phi_h(s,a)^T \theta_h, \mathbb{P}_h(s'|s,a) = \phi_h(s,a)^T \mu_h(s'),
\end{align*}
for some $\theta_h \in \mathbb{R}^d$ and some $\mu_h: \mathcal{S} \rightarrow \mathbb{R}^d$. For simplicity, we further assume that $\|\theta_h\|_2 \leq \sqrt{d}$, $ \| \int \mu_h(s) v(s) ds \|_2 \leq \sqrt{d} \| v\|_{\infty}$ for any $v: \mathcal{S} \rightarrow \mathbb{R}$ and $\| \phi_h(s,a) \|_2 \leq 1$.
\label{definition:linear_mdp}
\end{defn}
\begin{remark}
The linear MDP can be made practical with contrastive representation learning \citep{zhang2022making}. 
We only consider linear MDP in the main paper but also consider a linear mixture model \citep{cai2020provably,zhou2021nearly} in Section \ref{model-based offline RL}. 
\end{remark}



\paragraph{Offline Regime.}
In the offline learning setting, the goal is to learn the policy $\pi$ that maximizes $v^{\pi}$, given the historical data $\mathcal{D} = \{(s^t_h, a^t_h, r^t_h)\}^{t \in [K]}_{h \in [H]}$ generated by some unknown behaviour policy $\mu = \{\mu_h\}_{h \in [H]}$. Here, 
we allow the trajectory at any episode $k$ to depend on the trajectories at all the previous episodes $t < k$.
This reflects many practical scenarios where  episode trajectories are collected adaptively by some initial online learner (e.g., $\epsilon$-greedy, Q-learning, and LSVI-UCB).

In this paper, we assume that the support of $\mu_h(\cdot | s_h)$ for each $s_h$ and $h$, denoted by $\textrm{supp}(\mu_h(\cdot | s_h))$ is known to the learner. We also denote the $\mu$-supported policy class at stage $h$, denoted by $\Pi_h(\mu)$ as the set of policies whose supports belong to the support of the behavior policy: 
\begin{align}
    \Pi_h(\mu)\! :=\! \{\pi_h\!: \textrm{supp}(\pi_h(\cdot|s_h)) \!\subseteq\!  \textrm{supp}(\mu_h(\cdot|s_h)),\! \forall s_h\! \in\! \mathcal{S}_h \}.
    \label{eq:constrained_policy_class}
\end{align}

\paragraph{Performance metric.} We measure the performance of policy $\hat{\pi}$ via the sub-optimality metric: $\subopt( \hat{\pi} ) : = \mathbb{E}_{s_1 \sim d_1} \left[ \subopt(\hat{\pi}; s_1) \right]$, where
    $\subopt(\hat{\pi}; s) := V^{\pi^*}_1(s_1) - V^{\hat{\pi}}_1(s_1)$. 
As $\hat{\pi}$ is learnt from the offline data $\mathcal{D}$, $\subopt(\pi)$ is random (with respect to the randomness of $\mathcal{D}$ and possibly internal randomness of the offline algorithm). The goal of offline RL is to learn $\hat{\pi}$ from $\mathcal{D}$ such that $\subopt(\hat{\pi})$ is small with high probability. 






\section{Bootstrapped and Constrained Pessimistic Value Iteration}
\label{section:instance_agnostic_main}
In this section, we describe our main algorithm and establish both instance-agnostic and instance-dependent bounds for offline RL from adaptive data with linear function approximation. Through this algorithm, we show that offline RL achieves a generic data-dependent bound under the optimal-policy concentrability and adapts to the gap information to accelerate to the $\frac{\log K}{K}$ bound and even obtain zero sub-optimality when the optimal linear features under the behavior policy spans those under an optimal policy.


\subsection{Algorithm}
\label{subsection:bpvi}
We build upon the Pessimistic Value Iteration (PEVI) algorithm \citep{jin2021pessimism} with two additional modifications: bootstrapping and constrained optimization, thus the name Bootstrapped and Constrained Pessimistic Value Iteration (BCP-VI) in Algorithm \ref{alg:bpvi}. The constrained optimization in Line \ref{bpvi:greedy} ensures that the extracted policy is supported by the behaviour policy. The bootstrapping part divides the offline data in a progressively increasing split and applies the constrained version of PEVI in each split to form an ensemble (Line \ref{bcpvi:ensemble}).\footnote{To be precise, this is not exactly bootstrapping in the traditional sense where the data is sampled with replacement and the ensemble is used to estimate uncertainty. Here we instead use progressive data splits to deal with adaptive data and form an ensemble of policy estimates.}
The additional modifications are highlighted in \textcolor{red}{red} in  Algorithm \ref{alg:bpvi}.  

Overall, BCP-VI estimates the optimal action-value functions $Q^*_h$ leveraging its linear representation. In Line~\ref{bpvi:least_square}, it solves the regularized least-squares regression on $\mathcal{D}^{k-1}$:   
\begin{align*}
    \hat{w}_h :=\argmin_{w \in \mathbb{R}^d}  \sum_{i=1}^{k} [ \langle \phi(s_h^i, a_h^i), w \rangle - r_h^i - V_{h+1}(s^i_{h+1}) ]^2 + \lambda \| w \|_2^2.
\end{align*}    
In Line \ref{bpvi:lcb}, BCP-VI computes the action-value functions using $\hat{w}_h$, then offsets it with a bonus function $b_h$ to ensure a  pessimistic estimate. In Line \ref{bpvi:greedy}, we extract policy $\hat{\pi}_h$ that is most greedy with respect to $\hat{Q}_h$ among the set of all policies $\Pi_h(\mu)$. 

\paragraph{Policy execution.} Given the policy ensemble $\{\hat{\pi}^k: k \in [K + 1]\}$, we consider two policies from the ensemble as the execution policy: the \emph{mixture} policy $\hat{\pi}^{mix}$ and the \emph{last-iteration} policy $\hat{\pi}^{last}$, defined as:
    $\hat{\pi}^{mix} := \frac{1}{K} \sum_{k=1}^K \hat{\pi}^k, \text{ and } \hat{\pi}^{last} := \hat{\pi}^{K+1}$. 
Note that $\hat{\pi}^{last}$ is similar to the PEVI policy in \cite{jin2021pessimism}.

\begin{algorithm}
\caption{Bootstrapped and Constrained Pessimistic Value Iteration (BCP-VI) }
\begin{algorithmic}[1]
\State \textbf{Input:} Dataset $\mathcal{D} = \{(s^t_h, a^t_h, r^t_h)\}_{h \in [H]}^{t \in [K]}$, uncertainty parameters $\{\beta_k\}_{k \in [K]}$, regularization hyperparameter $\lambda$, \textcolor{red}{$\mu$-supported policy class $\{\Pi_h(\mu)\}_{h \in [H]}$}. 
\For{ \textcolor{red}{$k = 1, \ldots, K + 1$}}
    \State $\hat{V}_{H+1}^k(\cdot) \leftarrow 0$.
    \For{step $h = H, H-1, ..., 1$}
        \State $\Sigma_h^k \leftarrow \sum_{t=1}^{k-1} \phi_h(s^t_h, a^t_h) \cdot \phi_h(s^t_h, a^t_h)^T  + \lambda \cdot I$. 
        \State $\hat{w}_h^k \leftarrow (\Sigma_h^k)^{-1} \sum_{t=1}^{k-1} \phi_h(s_h^t, a_h^t) \cdot (r^t_h + \hat{V}_{h+1}^k(s^t_{h+1}))$. 
        \label{bpvi:least_square}
        \State $b_h^k(\cdot,\cdot) \leftarrow \beta_k \cdot  \| \phi_h(\cdot, \cdot) \|_{(\Sigma_h^k)^{-1}}$. 
        \label{bpvi:lcb}
        \State $\bar{Q}_h^k(\cdot, \cdot) \leftarrow \langle \phi_h(\cdot, \cdot), \hat{w}_h^k \rangle - b_h^k(\cdot, \cdot)$. 
        \State $\hat{Q}_h^k(\cdot, \cdot) \leftarrow \min\{\bar{Q}_h^k(\cdot, \cdot), H - h +1\}^+$. 
        \State $\hat{\pi}_h^k \leftarrow \displaystyle \textcolor{red}{ \argmax_{\pi_h \in \Pi_h(\mu)} \langle \hat{Q}_h^k, \pi_h \rangle}$
        \label{bpvi:greedy}
        \State $\hat{V}_h^k(\cdot) \leftarrow \langle \hat{Q}_h^k(\cdot, \cdot), \pi_h^k(\cdot|\cdot) \rangle$.
    \EndFor
\EndFor
\State \textbf{Output:} Ensemble $\{\hat{\pi}^k: k \in [K + 1]\}$. 
\label{bcpvi:ensemble}
\end{algorithmic}
\label{alg:bpvi}
\end{algorithm}

\paragraph{Practical consideration.} In practice where the action space is large, the constrained optimization in Line \ref{bpvi:greedy} could be relaxed into the regularization optimization $\max_{\pi_h}  \langle \hat{Q}_h^k, \pi_h \rangle + \gamma \mathrm{KL}[\pi_h \| \mu_h]$ for some $\gamma > 0$ (in the setting the behavior policy $\mu$ is not given, it can be simply estimated from the data). The relaxed regularization optimization assures that $\hat{\pi}^k_h$ is supported by $\mu_h$ and can be solved efficiently using an actor-critic framework. It is possible to include the optimization error of this actor-critic framework with a more involved analysis \citep{xie2021bellman,zanette2021provable,cheng2022adversarially}; we, however, ignore this here for simplicity.

\subsection{Data-dependent minimax bounds}
\label{subsection:main_single_concentrability}


Sample-efficient offline reinforcement learning is not possible without certain data-coverage assumptions \citep{wang2020statistical}. In this work, we rely on the optimal-policy concentrability (Assumption \ref{assumption:single_policy_concentration}) which ensures that $d^{\mu}$ covers the trajectory of some optimal policy $\pi^*$ and can be agnostic to other locations.

\begin{assumption}[Optimal-Policy Concentrability (OPC) \citep{DBLP:conf/uai/LiuSAB19}]
There is an optimal policy $\pi^*$: $ \forall (h,s_h, a_h)$, 
    $d^{\pi^*}_h(s_h, a_h) > 0 \implies d^{\mu}_h(s_h, a_h) > 0.$
\label{assumption:single_policy_concentration}
\end{assumption}
\begin{remark}
Consider any $s_h \in \mathcal{S}^{\pi^*}_h$. If $\pi^*_h(a_h | s_h) > 0$, then $d^{\pi^*}_h(s_h, a_h) > 0$, and thus $d^{\mu}_h(s_h, a_h) > 0$ by Assumption \ref{assumption:single_policy_concentration} which implies that $\mu_h(a_h | s_h) > 0$. For any $s_h \notin \mathcal{S}_h^{\pi^*}$, $\pi^*_h(\cdot| s_h)$ has no impact on the optimal value function $\{V^*_h \}_ {h \in [H] }$. Thus, without loss of generality, we can assume that $\textrm{supp}(\pi^*_h(\cdot| s_h)) \subseteq \textrm{supp}(\mu_h(\cdot| s_h)), \forall s_h \notin \mathcal{S}_h^{\pi^*}$. Overall, we have
   $ \pi^*_h \in \Pi_h(\mu), \forall h \in [H]$.
\label{remark: optimal policy belongs to the constrained class}
\end{remark}
Assumption \ref{assumption:single_policy_concentration} is arguably the weakest data coverage assumption for sample-efficient offline RL, i.e., to ensure an optimal policy is statistically learnable from offline data (see Appendix \ref{subsection:single_concentrability_is_necessary} for a proof that the OPC condition is necessary). As such, Assumption \ref{assumption:single_policy_concentration} is significantly weaker than \textit{uniform} data coverage assumption which features in most existing works in offline RL e.g., the uniform feature coverage~\citep{duan2020minimax,yinnear}, i.e., for all $h \in [H]$, 
    $\lambda_{\min} \left(\mathbb{E}_{(s_h, a_h) \sim d^{\mu}_h} \left[ \phi_h(s_h, a_h) \phi_h(s_h, a_h)^T  \right] \right) > 0, \text{ or } 
    \min_{h,s_h, a_h} d^{\mu}_h(s_h, a_h) > 0$, 
and the classical uniform concentrability \citep{szepesvari2005finite,chen2019information,nguyentang2021sample}, i.e., $ \sup_{\pi,h, s_h, a_h} \frac{d^{\pi}_h(s_h, a_h)} { d^{\mu}_h(s_h, a_h)}  < \infty$.

We further assume the positive occupancy density under $\mu$ is bounded away from $0$. 
\begin{assumption}
$\kappa_h^{-1} := \inf_{(s_h, a_h): d^{\mu}_h(s_h,a_h) > 0} d^{\mu}_h(s_h,a_h) > 0$, $\forall h \in [H]$.  
\label{assumption:lower bound density}
\end{assumption}

Here, the infimum is over only the feasible state-action pairs under $\mu$ and it is agnostic to other locations. For example, the assumption is automatically satisfied when the state-action space is finite (but can be exponentially large). We remark that Assumption \ref{assumption:lower bound density} is significantly milder than the uniform data coverage assumption $d_m := \inf_{h, s_h ,a_h} d^{\mu}_h(s_h, a_h) > 0$ in \cite{yin2020near} as the infimum in the latter is uniformly over all states and actions. \footnote{We notice that, interestingly, this assumption were also independently used in the prior work of \cite{yin2021near}.} Note that Assumption \ref{assumption:lower bound density} also implies that $d^{\mu}_h(s_h) = \frac{d^{\mu}_h(s_h, a_h)}{\mu_h(a_h | s_h)} \geq \kappa_h^{-1}$ for any $s_h \in \mathcal{S}^{\mu}_h$. Combing with Assumption \ref{assumption:single_policy_concentration}, $\kappa_h$ can be seen as (an upper bound on) the \emph{OPC coefficient} at stage $h$ as we have 
    $\frac{d_h^{\pi^*}(s_h, a_h)}{d_h^{\mu}(s_h, a_h)} \leq \kappa_h, \forall (h,s_h, a_h) \in [H] \times \mathcal{S} \times \mathcal{A}$.

We fix any $\delta \in (0,1]$ and set $\lambda = 1$ for simplicity and  $\beta_k = \beta_k(\delta) := c_1  \cdot dH \log(dHk/\delta)$ for some absolute constant $c_1 > 0, \forall k \in [K]$ in Algorithm \ref{alg:bpvi}. We now present our data-dependent bound. 
\begin{theorem}[Data-dependent bound]
Under Assumption \ref{assumption:single_policy_concentration}, with probability at least $1 - 4 \delta$ over the randomness of $\mathcal{D}$, we have: 
\begin{align*}
\subopt(\hat{\pi}^{mix}) \lor
\subopt(\hat{\pi}^{last})  
\leq  \frac{4 \beta({\delta})}{K} \sum_{h=1}^H  \sum_{k=1}^K  \frac{d^*_h(s_h^k,a_h^k)}{d^{\mu}_h(s_h^k,a_h^k)} \|  \phi_h(s^k_h, a^k_h)  \|_{(\Sigma^k_h)^{-1}} \\
+\frac{4 \beta({\delta})}{K} \sum_{h=1}^H \sqrt{\log \left(\frac{H}{\delta} \right) \sum_{k=1}^K \left(\frac{d^*_h(s_h^k,a_h^k)}{d^{\mu}_h(s_h^k,a_h^k)} \right)^2 }  
+ \frac{2}{K} +  \frac{16 H}{3 K}\log\left( \frac{\log_2(KH)}{ \delta} \right).
\end{align*}
\label{theorem:sublinear_subopt}
\end{theorem}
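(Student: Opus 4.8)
The plan is to follow the pessimism-based analysis of PEVI, but adapted to the progressive-split ensemble and the adaptive data. Fix the ``good event'' on which the least-squares estimates are well controlled: writing $\iota_h^k := T_h \hat V_{h+1}^k - \hat Q_h^k$ for the per-split Bellman estimation error, I would first show that with probability at least $1 - O(\delta)$, simultaneously for all $k \in [K+1]$ and $h \in [H]$, the bonus dominates the error, i.e. $0 \le \iota_h^k(s,a) \le 2 b_h^k(s,a)$ pointwise. Since $\hat V_{h+1}^k$ depends on the same split $\mathcal{D}^{k-1}$ used in the regression at step $h$, this requires a self-normalized martingale concentration bound valid for compliant/adaptive data together with a covering ($\epsilon$-net) argument over the induced value-function class; the choice $\beta_k = c_1 dH \log(dHk/\delta)$ is exactly what makes the bonus uniformly dominate, and a union bound over $k$ (and the net and $h$) closes this step. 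This part is standard modulo bookkeeping and I would cite the corresponding appendix lemma.

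Next, on the good event, I would apply the extended value-difference decomposition to each ensemble member $\hat\pi^k$. Because $\hat\pi_h^k$ is greedy for $\hat Q_h^k$ over the constrained class $\Pi_h(\mu)$ and, by Remark~\ref{remark: optimal policy belongs to the constrained class}, $\pi^*_h \in \Pi_h(\mu)$, the policy-mismatch term $\langle \hat Q_h^k, \pi^*_h - \hat\pi_h^k\rangle$ is nonpositive; combined with $\iota_h^k \ge 0$ (pessimism) this yields $\subopt(\hat\pi^k) \le 2\beta_k \sum_{h=1}^H \mathbb{E}_{(s_h,a_h)\sim d^*_h}\big[\|\phi_h(s_h,a_h)\|_{(\Sigma_h^k)^{-1}}\big]$. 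Averaging over $k$ gives the mixture bound directly. For the last-iterate policy I would exploit the monotonicity $\Sigma_h^{K+1} \succeq \Sigma_h^k$, which implies $\|\phi_h\|_{(\Sigma_h^{K+1})^{-1}} \le \|\phi_h\|_{(\Sigma_h^k)^{-1}}$ for every $k \le K+1$; hence $\mathbb{E}_{d^*_h}[\|\phi_h\|_{(\Sigma_h^{K+1})^{-1}}] \le \tfrac1K\sum_{k=1}^K \mathbb{E}_{d^*_h}[\|\phi_h\|_{(\Sigma_h^k)^{-1}}]$, so $\subopt(\hat\pi^{last})$ is controlled by the same averaged quantity as $\subopt(\hat\pi^{mix})$, explaining the $\lor$ in the statement.

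The crux, and the step I expect to be the main obstacle, is converting the ``population'' averaged bonus $\tfrac1K\sum_k \mathbb{E}_{d^*_h}[\|\phi_h\|_{(\Sigma_h^k)^{-1}}]$ into the realized, data-dependent sum appearing in the theorem. Using Assumption~\ref{assumption:single_policy_concentration} the importance ratio $d^*_h/d^\mu_h$ is well defined on the support, and under the compliance of the adaptive data the conditional law of the episode-$k$ sample $(s_h^k,a_h^k)$ given the past $\mathcal{F}_{k-1}$ is $d^\mu_h$, so that with $X_k := \frac{d^*_h(s_h^k,a_h^k)}{d^\mu_h(s_h^k,a_h^k)}\|\phi_h(s_h^k,a_h^k)\|_{(\Sigma_h^k)^{-1}}$ (note $\Sigma_h^k$ is $\mathcal{F}_{k-1}$-measurable) we have $\mathbb{E}[X_k \mid \mathcal{F}_{k-1}] = \mathbb{E}_{d^*_h}[\|\phi_h\|_{(\Sigma_h^k)^{-1}}]$. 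Thus $\{\mathbb{E}[X_k\mid\mathcal{F}_{k-1}] - X_k\}_k$ is a martingale-difference sequence, and a Freedman/empirical-Bernstein inequality bounds $\sum_k \mathbb{E}[X_k\mid\mathcal{F}_{k-1}] \le \sum_k X_k + \sqrt{2\log(H/\delta)\sum_k X_k^2} + \tfrac{2}{3}(\text{range})\log(\cdots)$. Because $\Sigma_h^k \succeq I$ gives $\|\phi_h\|_{(\Sigma_h^k)^{-1}} \le 1$, the variance proxy satisfies $X_k^2 \le (d^*_h/d^\mu_h)^2$ and the range is $O(\kappa_h)$, producing the realized main term, the $\sqrt{\log(H/\delta)\sum_k (d^*_h/d^\mu_h)^2}$ deviation term, and the lower-order $\tfrac1K$ and $\tfrac{H}{K}\log(\log_2(KH)/\delta)$ remainders (the $\log_2(KH)$ arising from a dyadic peeling that handles the a-priori-unknown empirical variance); a final union bound over $h$ and collection of the harmless constants yields the factors $4\beta(\delta)$ in the statement. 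The delicate points are (i) justifying the conditional-law identity $\mathbb{E}[X_k\mid\mathcal{F}_{k-1}] = \mathbb{E}_{d^*_h}[\|\phi_h\|_{(\Sigma_h^k)^{-1}}]$ under the precise adaptive-data model, and (ii) arranging the concentration so that the variance proxy surfaces as the realized sum $\sum_k (d^*_h/d^\mu_h)^2$ rather than its conditional expectation.
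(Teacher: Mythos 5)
Your argument tracks the paper's proof for its first half: the same good event on which the bonus dominates the Bellman error (giving $0\le\zeta_h^k\le 2b_h^k$ as in Lemma \ref{lemma:bound_with_uncertainty}), the same use of constrained greediness together with $\pi^*_h\in\Pi_h(\mu)$ to discard the policy-mismatch term in the extended value difference (culminating in Lemma \ref{lemma:bound_ensemble_with_bonus}), and the same monotonicity $\Sigma_h^{K+1}\succeq\Sigma_h^k$ to reduce the last iterate to the ensemble average. Where you genuinely diverge is in the population-to-empirical conversion. The paper does this in two stages: it first applies an improved online-to-batch bound (Lemma \ref{lemma:improved_online_to_batch}, Freedman plus dyadic peeling for $[0,H]$-valued adapted sequences) to the sub-optimality sequence itself, converting $\sum_k\subopt(\hat\pi^k)$ into $2\sum_k\subopt(\hat\pi^k;s_1^k)$ plus the remainder $2+\tfrac{16}{3}H\log(\log_2(KH)/\delta)$ (Lemma \ref{lemma:expected_to_sample_subopt}); only then does it importance-sample the conditional bonus $\mathbb{E}_{\pi^*}[\,\cdot\mid\mathcal{F}_{k-1},s_1^k]$ onto the realized trajectory (Lemma \ref{lemma:sum_sample_subopt}). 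You skip the realized-initial-state intermediate quantity and apply a single martingale concentration directly to the importance-weighted bonus $X_k$. Your conditional-law identity is sound ($\Sigma_h^k$ is $\mathcal{F}_{k-1}$-measurable and the episode-$k$ sample has conditional law $d^\mu_h$ under the compliance assumption), and your route is arguably cleaner: it avoids the conditioning on $s_1^k$ inside marginal densities that the paper's Lemma \ref{lemma:sum_sample_subopt} is somewhat loose about, and it saves a factor of $2$ on the leading term.

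Two caveats. First, your route does not reproduce the stated remainder $\tfrac{2}{K}+\tfrac{16H}{3K}\log(\log_2(KH)/\delta)$: in the paper that term comes from peeling over the unknown total $\sum_k\subopt(\hat\pi^k)\in[0,KH]$ in the online-to-batch step, not from the bonus concentration as you suggest. In your version the analogous range term from Freedman applied to $X_k$ scales with the range $\kappa_h/\sqrt{\lambda}$ of the importance weight and carries the prefactor $\beta(\delta)/K$, so you would obtain a remainder of order $\beta(\delta)\kappa\log(1/\delta)/K$ rather than $H\log\log(KH)/K$; this is a different (not always smaller) lower-order term, so you prove a variant of the display rather than the display itself. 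Second, the difficulty you flag in (ii) is real: Freedman's inequality (Lemma \ref{lemma:freedman}) controls deviations via the predictable variance $\sum_k\mathbb{E}[X_k^2\mid\mathcal{F}_{k-1}]$, whereas the theorem's deviation term is the realized sum $\sum_k(d^*_h/d^\mu_h)^2(s_h^k,a_h^k)$; passing between the two requires an additional empirical-Bernstein-type argument (the paper instead invokes an Azuma variant with data-dependent increments), and it should not be treated as bookkeeping. Neither caveat undermines the approach -- the dominant data-dependent term and the $\sqrt{\log(H/\delta)\sum_k(d^*_h/d^\mu_h)^2}$ term come out the same -- but the constants and lower-order terms you would obtain differ from those in the statement.
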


\begin{remark}
The first term in the bound in Theorem \ref{theorem:sublinear_subopt} is the elliptical potential that emerges from pessimism and is dominant while the other terms are generalization errors by concentration phenomenon and peeling technique.
\end{remark}
The sub-optimality bound in Theorem \ref{theorem:sublinear_subopt} explicitly depends on the observed data in the offline data via the marginalized density ratios $\frac{d^*_h(s_h^k,a_h^k)}{d^{\mu}_h(s_h^k,a_h^k)}$ (which is valid thanks to Assumption \ref{assumption:single_policy_concentration}).  One immediate consequence of the data-dependent bound in Theorem \ref{theorem:sublinear_subopt} is that the bound can turn into a weaker yet more explicit rate of $\frac{1}{\sqrt{K}}$ in Corollary \ref{corollary:sublinear_subopt}. 


\begin{corollary}
Under Assumptions  \ref{assumption:single_policy_concentration}-\ref{assumption:lower bound density}, with probability at least $1 - \Omega(1/K)$ over the randomness of $\mathcal{D}$, we have: 
\begin{align*}
    \mathbb{E} \left[ \subopt(\hat{\pi}^{mix}) \right] \lor \mathbb{E} \left[ \subopt(\hat{\pi}^{last}) \right] = \tilde{\mathcal{O}} \left( \frac{\kappa_* H^2 d^{3/2}}{\sqrt{K}} \right),
\end{align*}
where $\kappa_* := \max_{h \in [H]} \kappa_h$. 
\label{corollary:sublinear_subopt}
\end{corollary}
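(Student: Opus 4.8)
The plan is to specialize the data-dependent bound of Theorem~\ref{theorem:sublinear_subopt} using the uniform control on the marginalized density ratio afforded by Assumptions~\ref{assumption:single_policy_concentration}--\ref{assumption:lower bound density}. The crucial observation is that every sampled pair $(s_h^k, a_h^k)$ lies in the support of $d^{\mu}_h$, so Assumption~\ref{assumption:lower bound density} gives $d^{\mu}_h(s_h^k, a_h^k) \geq \kappa_h^{-1}$; combined with the bounded-density normalization $d^*_h \leq 1$, this yields the pointwise bound $\frac{d^*_h(s_h^k, a_h^k)}{d^{\mu}_h(s_h^k, a_h^k)} \leq \kappa_h$ for all $h$ and $k$. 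I would substitute this bound into each of the three groups of terms in Theorem~\ref{theorem:sublinear_subopt}, together with $\beta(\delta) = \tilde{\mathcal{O}}(dH)$ coming from the choice $\beta_k(\delta) = c_1 dH \log(dHk/\delta)$ with $k \leq K$.

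The dominant contribution is the elliptical-potential term. After replacing the density ratio by $\kappa_h$, I would invoke the elliptical potential lemma: since $\lambda = 1$ and $\|\phi_h\|_2 \leq 1$ force $\|\phi_h(s_h^k, a_h^k)\|_{(\Sigma_h^k)^{-1}}^2 \leq 1$, the standard bound gives $\sum_{k=1}^K \|\phi_h(s_h^k, a_h^k)\|_{(\Sigma_h^k)^{-1}}^2 \leq 2d \log(1 + K/d) = \tilde{\mathcal{O}}(d)$. A Cauchy--Schwarz step then converts the $\ell_1$-sum into $\sum_{k=1}^K \|\phi_h(s_h^k, a_h^k)\|_{(\Sigma_h^k)^{-1}} \leq \sqrt{K \sum_{k=1}^K \|\phi_h(s_h^k, a_h^k)\|_{(\Sigma_h^k)^{-1}}^2} = \tilde{\mathcal{O}}(\sqrt{dK})$. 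Carrying the prefactor $\frac{4\beta(\delta)}{K} \sum_{h=1}^H \kappa_h = \frac{\tilde{\mathcal{O}}(dH)}{K}\,\kappa$ then produces a first term of order $\frac{\tilde{\mathcal{O}}(dH)}{K} \cdot \kappa \cdot \tilde{\mathcal{O}}(\sqrt{dK}) = \tilde{\mathcal{O}}\!\left(\frac{\kappa H d^{3/2}}{\sqrt{K}}\right)$, which is exactly the claimed rate.

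For the second (concentration) term, the same substitution gives $\sum_{k=1}^K (d^*_h/d^{\mu}_h)^2 \leq K \kappa_h^2$, so this term is at most $\frac{4\beta(\delta)}{K} \sqrt{\log(H/\delta)}\,\sqrt{K} \sum_{h=1}^H \kappa_h = \tilde{\mathcal{O}}\!\left(\frac{\kappa H d}{\sqrt{K}}\right)$, which carries one fewer power of $d$ and is therefore dominated by the elliptical term. The residual terms $\frac{2}{K} + \frac{16H}{3K}\log(\log_2(KH)/\delta) = \tilde{\mathcal{O}}(H/K)$ are of lower order in $K$ and are likewise absorbed. Summing the three groups yields the overall $\tilde{\mathcal{O}}\!\left(\frac{\kappa H d^{3/2}}{\sqrt{K}}\right)$ rate on the high-probability event of Theorem~\ref{theorem:sublinear_subopt}.

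Finally, to pass from the high-probability statement to the bound on the expectation, I would set $\delta = 1/K$ (so $\beta(\delta)$ grows only polylogarithmically and the failure probability is $4\delta = \tilde{\mathcal{O}}(1/K)$) and split the expectation over $\mathcal{D}$ into the success event and its complement. On the complement the sub-optimality is deterministically bounded by $H$, since the rewards lie in $[0,1]$ over a horizon of length $H$, contributing at most $\frac{4H}{K} = \tilde{\mathcal{O}}(H/K)$, again of lower order than the main term. I expect no genuine structural obstacle here, as the argument is a direct specialization of Theorem~\ref{theorem:sublinear_subopt}; the only delicate point is the bookkeeping of polylogarithmic factors and verifying that the $\sqrt{d}$ from Cauchy--Schwarz, the $\sqrt{d}$ from the elliptical potential lemma, and the $dH$ inside $\beta(\delta)$ compose to exactly $d^{3/2}H$ in the leading term.
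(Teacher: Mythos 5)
Your proposal is correct and follows essentially the same route as the paper: the paper's own proof (via Lemma~\ref{lemma:bound_shrinking_sum}) likewise bounds the density ratio by $\kappa_h$, applies Cauchy--Schwarz together with the elliptical potential lemma to get $\sum_{k=1}^K \|\phi_h(s_h^k,a_h^k)\|_{(\Sigma_h^k)^{-1}} \leq \kappa_h\sqrt{2Kd\log(1+K/d)}$, and then sets $\delta = \Omega(1/K)$ to pass to the expectation. The only discrepancy is that the paper's Lemma~\ref{lemma:bound_shrinking_sum} writes $\kappa_h^{-1}$ where $\kappa_h$ is clearly intended (an apparent typo), and your version has the correct factor.
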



As the $\frac{1}{\sqrt{K}}$ bound is minimax, we compare our result to other existing works. 


\paragraph{Comparing with \cite{yin2021towards}.} \citet{yin2021towards} also use OPC to establish the intrinsic offline learning bound with pessimism and leverage the variance information to obtain a tight dependence on $H$. Their result is valid only for tabular MDPs with the finite state space and finite action space and cannot generalize to linear MDPs. 
\paragraph{Comparing with \cite{jin2021pessimism}.} Similarly, \citet{jin2021pessimism} also consider linear MDPs with pessimism and provide a generic bound under arbitrary data coverage. They then realize their generic bound in the uniform feature coverage assumption \citep{duan2020minimax,yin2021towards} to obtain a sub-optimality bound of $\tilde{\mathcal{O}}(\frac{d^{3/2}H^2}{\sqrt{K}})$. However, the uniform feature coverage is not necessary to obtain the $\frac{1}{\sqrt{K}}$ bound; in our result, we demonstrate that OPC is sufficient to obtain the $\frac{1}{\sqrt{K}}$ bound.


\paragraph{Comparing with \cite{xie2021bellman}.} 
\citet{xie2021bellman} consider Bellman-consistent pessimism for offline RL with general function approximation, where they maintain a version space of all functions that have small Bellman evaluation error and select a function from the version space that has the smallest initial value. Their algorithm is however computationally intractable in general. When realized to linear MDPs, they do have a tractable algorithm but its guarantee requires the behaviour policy to be explorative all dimensions of the feature mapping, i.e., $\mathbb{E}_{\mu} [\phi(s,a) \phi(s,a)^T] \succ 0$. We do not require such assumption in our guarantee. 


Theorem \ref{theorem:sublinear_subopt} is a byproduct that sets the stage for our instance-dependent bounds in the following section; nonetheless, it is  the first, to the best of our knowledge, that provides an explicit $\frac{1}{\sqrt{K}}$ bound for linear MDPs with OPC.

\begin{remark}
We show in Appendix \ref{subsection:single_concentrability_is_necessary} that OPC is necessary to guarantee sublinear sub-optimality bound for offline RL. When OPC fails to hold, we show in Appendix \ref{subsection:arbitrary_data_coverage} that BCP-VI suffers a constant sub-optimality incurred at optimal locations that are not supported by the behavior policy. By intuition, such a constant sub-optimality occurs due to the off-support actions and vanishes when the behaviour policy covers the trajectories of at least one optimal policy (Assumption \ref{assumption:single_policy_concentration}). 
\label{remark: BCP-VI under arbitrary data coverage}
\end{remark}
\subsection{Instance-dependent bounds}
\label{section:instance_dependent_structures}

We now show that BCP-VI automatically exploits various types of instance-dependent structures of the underlying MDP to speed up the sub-optimality rate. 

\subsubsection*{Gap-dependent bounds}
The first measure of the hardness of an MDP instance is the minimum positive action gap (Assumption \ref{assumption:margin}) which determines how hard it is to distinguish optimal actions from sub-optimal ones. 
\begin{defn}
For any $(s,a,h) \in \mathcal{S} \times \mathcal{A} \times [H]$, the sub-optimality gap $\Delta_h(s,a)$ is defined as:
    $\Delta_h(s,a) := V^*_h(s) - Q^*_h(s,a)$,
and the minimal sub-optimality gap is defined as: 
\begin{align*}
    \Delta_{\min} := \min_{s,a,h} \{\Delta_h(s,a) | \Delta_h(s,a) \neq 0\}. 
\end{align*}
\end{defn}
In this paper, we assume that the minimal sub-optimality gap is strictly positive, which is a common assumption for gap-dependent analysis \citep{simchowitz2019non,yang2021q,he2021logarithmic}. 

\begin{assumption}[Minimum positive sub-optimality gap]
 $\Delta_{\min} > 0$. 
\label{assumption:margin}
\end{assumption}
 We now present the sub-optimality bound under the gap information. 
\begin{theorem}[$\frac{\log K}{K}$ sub-optimality bound]
Under Assumptions \ref{assumption:single_policy_concentration}-\ref{assumption:lower bound density}-\ref{assumption:margin}, with probability at least $1 - (1 + 3  \log_2(H/\Delta_{\min})) \delta$, 
\begin{align*}
    \subopt(\hat{\pi}^{mix})  &\lesssim  2 \frac{ d^3 H^5 \kappa_*^{3}}{\Delta_{\min} \cdot K} \log^3(dKH/\delta) 
    + \frac{16 H \kappa_*}{3 K} \log \log_2(K H \kappa_* / \delta) + \frac{2}{K},
\end{align*}
where $\kappa_* := \max_{h \in [H]} \kappa_h$. 
\label{theorem:logarithimic_regret}
\end{theorem}
\begin{remark}
If we set the $\delta$ in Theorem \ref{theorem:logarithimic_regret} as $\delta = \Omega(1/K)$, then for the expected sub-optimality bound, we have: 
\begin{align*}
    \mathbb{E} \left[ \subopt(\hat{\pi}^{mix}) \right] = \tilde{\mathcal{O}} \left( \frac{ d^3 H^5 \kappa_*^{3}}{\Delta_{\min} \cdot K} \right). 
\end{align*}
\end{remark}

The sub-optimality bound in Theorem \ref{theorem:logarithimic_regret} depends on $\Delta_{\min}$ inversely. It is independent of the state space $\mathcal{S}$, action space $\mathcal{A}$, and is logarithmic in the number of episodes $K$. This suggests that our offline algorithm is sample-efficient for MDPs with large state and action spaces. To our knowledge, this is the first theoretical result that can leverage the gap information $\Delta_{\min}$ to accelerate to a $\mathcal{O}\left( \frac{\log K}{K} \right)$ bound for offline RL with linear function approximation, partial data coverage and adaptive data. 

We now provide the information-theoretic lower bound of learning offline linear MDPs under Assumptions \ref{assumption:single_policy_concentration}-\ref{assumption:lower bound density}-\ref{assumption:margin}. 

\begin{theorem}
Fix any $H \geq 2$. For any algorithm $\textrm{Algo}(\mathcal{D})$, and any concentrability coefficients $\{\kappa_h \}_{h \geq 1}$ such that $\kappa_h \geq 2$, there exists a linear MDP $\mathcal{M} = (\mathcal{S}, \mathcal{A}, H, \mathbb{P}, r, d_0)$ with a positive minimum sub-optimality gap $\Delta_{\min} > 0$ and dataset $\mathcal{D} = \{(s_h^t, a_h^t, r_h^t)\}_{h \in [H]}^{t \in [K]} \sim \mathcal{P}(\cdot |\mathcal{M}, \mu)$ where $\sup_{h, s_h, a_h }\frac{d^{\mathcal{M},*}_h(s_h,a_h)}{d^{\mathcal{M}, \mu}_h(s_h, a_h)} \leq \kappa_h, \forall h \in [H]$ such that: 
\begin{align*}
    \mathbb{E}_{\mathcal{D} \sim \mathcal{M}} \left[\subopt(\textrm{Algo}(\mathcal{D}); \mathcal{M}) \right] = \Omega \left( \frac{\kappa_{\min} H^2}{ K \Delta_{\min}} \right), 
\end{align*}
where  $\kappa_{\min} = \min\{\kappa_h: h \in [H]\}$. 
\label{theorem:lower_bound}
\end{theorem}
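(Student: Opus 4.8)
The plan is to prove the lower bound by Le~Cam's two-point method, reducing the problem of learning a near-optimal policy to a binary hypothesis test between two hard instances that differ only in which action is optimal at a single ``gate'' state. First I would build a pair of tabular MDPs $\mathcal{M}_+,\mathcal{M}_-$ (tabular MDPs are linear in the sense of Definition~\ref{definition:linear_mdp} with one-hot features and constant $d$, consistent with the $d$-free bound). There is a gate state $s^\dagger$, reached at step $1$ with a tunable probability $\rho$ under every policy, with two actions $a_1,a_2$ and zero immediate reward. From $s^\dagger$ one transitions stochastically either into a ``good'' absorbing chain $G$ (deterministic reward $1$ per step, so $V^*_2(G)=H-1$) or a ``bad'' chain $B$ (deterministic reward $0$, $V^*_2(B)=0$). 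In $\mathcal{M}_+$, action $a_1$ reaches $G$ with probability $\tfrac12+\eta$ and $a_2$ with $\tfrac12-\eta$; $\mathcal{M}_-$ swaps the two. Everything else is identical across the two instances. The key design point is that the value consequence of the gate decision is $\Delta_{\min}=2\eta(H-1)$, i.e. $\eta=\Delta_{\min}/(2(H-1))$: the per-transition statistical signal $\eta$ is a factor $\Theta(H)$ smaller than the value gap it controls, while the deterministic chains carry no information about the hidden index. I would then verify the norm bounds of Definition~\ref{definition:linear_mdp} and identify $\Delta_{\min}$ as the minimum sub-optimality gap.

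Next I would instantiate the coverage. Choose a (fixed) behavior policy $\mu$ that reaches $s^\dagger$ with probability $\rho_\mu=\rho/\kappa_{\min}$ and plays $a_1,a_2$ uniformly there, placing the remaining mass so that $\sup_h d^{*}_h(s_h,a_h)/d^{\mu}_h(s_h,a_h)\le\kappa_h$ at every step, with the gate step saturating at $\kappa_{\min}$ (the constraint $\kappa_h\ge 2$ makes this feasible at all other steps). Since the transition law at $s^\dagger$ does not change its visitation, $s^\dagger$ is reached with probability $\rho$ under any returned $\hat\pi$, and choosing the suboptimal gate action costs exactly $\Delta_{\min}$ in value; hence $\subopt(\hat\pi)\ge \rho\,\Delta_{\min}\,\Pr[\hat\pi \text{ plays the suboptimal action at } s^\dagger]$. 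For the information bound, because $\mathcal{M}_+$ and $\mathcal{M}_-$ differ only in the $s^\dagger$-transition, the chain-rule / divergence-decomposition for sequential data gives $\mathrm{KL}(P^{K}_+\,\Vert\,P^{K}_-)\lesssim (\text{expected gate visits})\cdot \mathrm{KL}\big(\mathrm{Ber}(\tfrac12+\eta)\,\Vert\,\mathrm{Ber}(\tfrac12-\eta)\big)\lesssim \tfrac{K\rho}{\kappa_{\min}}\,\eta^2$. Crucially the deterministic reward chains contribute zero conditional divergence given the landing state, so no spurious factor of $H$ enters the information; and since a fixed $\mu$ is a special case of adaptively collected data, this lower bound applies in the adaptive setting as well.

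Finally I would combine these: by Le~Cam, $\max_{\pm}\mathbb{E}[\subopt]\ge \tfrac{\rho\Delta_{\min}}{4}\big(1-\sqrt{\mathrm{KL}/2}\,\big)$, and I tune $\rho\asymp \tfrac{\kappa_{\min}H^2}{K\Delta_{\min}^2}$ so that $\mathrm{KL}\le\tfrac12$ (which is feasible, i.e. $\rho\le 1$, precisely when $\Delta_{\min}\gtrsim H\sqrt{\kappa_{\min}/K}$). Substituting $\eta=\Delta_{\min}/(2(H-1))$ then yields $\mathbb{E}[\subopt]=\Omega(\rho\,\Delta_{\min})=\Omega\!\big(\tfrac{\kappa_{\min}H^2}{K\Delta_{\min}}\big)$, as claimed. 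The main obstacle is engineering the two factors of $H$: one factor comes from amplifying the value consequence of the single gate decision to $\Theta(H)$ via the reward chain, and the second comes from the denominator $\Delta_{\min}$ once $\eta=\Theta(\Delta_{\min}/H)$ is substituted. The delicate point is that this amplification must not let the behavior policy accumulate $\Theta(H)$ independent observations of the signal $\eta$ (which would erase one factor of $H$); the lock-plus-deterministic-chain gadget is exactly what concentrates all information into a single stochastic transition, and verifying the conditional-divergence-zero property together with the per-step concentrability constraints $\kappa_h$ is where the care is needed.
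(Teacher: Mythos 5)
Your proposal is correct and follows essentially the same route as the paper's proof: a Le~Cam two-point argument between two instances that differ only in which of two first-stage actions more often reaches an $(H-1)$-step unit-reward absorbing chain, with the concentrability coefficient injected by having $\mu$ place only $\Theta(1/\kappa_{\min})$ mass on the informative actions, yielding $\mathrm{KL} \lesssim K\eta^2/\kappa_{\min}$ and the stated bound after tuning $\eta \asymp \sqrt{\kappa_{\min}/K}$ (so that $\Delta_{\min} = \Theta(H\sqrt{\kappa_{\min}/K})$, which is exactly the paper's choice and sits at the boundary of your feasibility condition $\rho \le 1$, where your extra knob $\rho$ collapses to the paper's $\rho=1$ setting). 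One wording slip worth fixing: since the gate is reached at step $1$ via the initial distribution, $\mu$ cannot visit it with probability $\rho/\kappa_{\min}$ while all other policies visit it with probability $\rho$; the $1/\kappa_{\min}$ attenuation must instead come from $\mu$'s action probabilities at the gate (as in the paper, $\mu_1(b_1|x_0)=\mu_1(b_2|x_0)=1/\kappa_{\min}$, with the remaining mass on uninformative actions), which is what your KL bound $\frac{K\rho}{\kappa_{\min}}\eta^2$ implicitly assumes.
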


Theorem \ref{theorem:lower_bound} implies that any offline algorithm suffers the expected sub-optimality of $\Omega \left( \frac{\kappa_{\min} H^2}{ K \Delta_{\min}} \right)$ under a certain linear MDP instance and behaviour policy that satisfy the minimum positive action gap and the single concentrability. Thus, the result suggests that our algorithm is optimal in terms of $K$ and $\Delta_{\min}$ up to log factors.





\subsubsection*{Zero sub-optimality}
We introduce extra assumptions on the linear mapping which our algorithm can exploit further to accelerate the rate. For that, we assume the unique optimality and the spanning feature property. Denote $\textrm{span}(\mathcal{X})$ the vector space spanned by all linear combination of elements in $\mathcal{X}$. 
\begin{assumption}[Unique Optimality and Spanning features] We assume that
\begin{enumerate}
    \item (Unique Optimality - UO): The optimal actions are unique, i.e. 
    \begin{align*}
        |\textrm{supp}(\hat{\pi}^*_h(\cdot |s_h))| = 1, \forall (h, s_h) \in [H] \times \mathcal{S}_h^*.
    \end{align*}
    
    \item (Spanning Features - SF): Let $\phi^*_h(s) := \phi_h(s, \pi^*_h(s))$. For any $h \in [H]$, 
\begin{align*}
   \textrm{span}\{\phi^*_h(s_h): \forall s_h \in \mathcal{S}_h^{\mu}\} \subseteq \textrm{span}\{\phi^*_h(s_h): \forall  s_h \in \mathcal{S}_h^{*}\}.
\end{align*}
\end{enumerate}
\label{assumption:spaning_feature}
\end{assumption}
Intuitively, the features of optimal actions in states reachable by an optimal policy provide all information about those in states reachable by the behaviour policy $\mu$. Note that Assumption \ref{assumption:spaning_feature}.2 is much milder than the uniform feature coverage assumption as it does not impose any constraint on the linear features with respect to the offline policy and does not require $\textrm{span}\{\phi^*_h(s): \forall s \in \mathcal{S}, d^*_h(s) > 0\}$ to span the entire $\mathbb{R}^d$. In online regime, a similar assumption called ``universally spanning optimal features'' is used to obtain constant regrets \citep{papini2021reinforcement}. However, their assumption is strictly stronger than ours as they require $ \textrm{span}\{\phi^*_h(s): \forall s \in \mathcal{S}, d^*_h(s) > 0\}$ to span all the features of all actions and states reachable by \emph{any} policy.  Assumption \ref{assumption:spaning_feature}.2 instead requires such condition only over optimal actions and states reachable by the behavior policy. 

Let $\lambda_h^+$ be the smallest positive eigenvalue of $\Sigma_h^* := \mathbb{E}_{(s_h, a_h) \sim d^{\pi^*}_h} [ \phi_h(s_h, a_h) \phi_h(s_h, a_h)^T ],$ let $\kappa_{1:h} := \prod_{i=1}^h \kappa_i$, and define:
\vspace*{-5pt}
\begin{align}
    k^* = \max_{h \in [H]} \bar{k}_h \lor \tilde{k}_h,
    \label{equation: k threshold}
\end{align} where 
    $\bar{k}_h := \tilde{\Omega} \left( \frac{d^6 H^{10} \kappa_*^6}{\Delta_{\min}^4 (\lambda_h^+)^2} + \frac{ \kappa_{1:h}}{\lambda_h^+} \right) \land \tilde{\Omega} \left( \frac{\kappa_{1:h}^2 \kappa_*^2 H^4 d^3}{ (\lambda_h^+)^2} \right)$, 
    $\tilde{k}_h := \tilde{\Omega} \left( \frac{d^2 H^4 \kappa_{1:h}}{\Delta_{\min}^2 (\lambda_h^+)^3} \right), \forall h$. 

\begin{theorem}
Under Assumption \ref{assumption:single_policy_concentration}-\ref{assumption:lower bound density}-\ref{assumption:margin}-\ref{assumption:spaning_feature}, then with probability at least $1 - 4 \delta$, we have:
    $\subopt(\hat{\pi}^k) = 0, \forall k \geq k^*$,
where $k^*$ is defined in Eq. (\ref{equation: k threshold}).
\label{theorem:constant_regret}
\end{theorem}
\begin{remark}
The thresholding value $k^*$ defined in Eq. (\ref{equation: k threshold}) is independent of $K$, and it scales with the inverse of $\Delta_{\min}$ and the distributional shift measures $\kappa_h$. 
\end{remark}
Theorem \ref{theorem:constant_regret} suggests that when the linear feature at the optimal actions are sufficiently informative and when the number of episodes is sufficiently large exceeding a instance-dependent threshold specified by $k^*$, the policies $\hat{\pi}^k$ in the ensemble precisely match the (unique) optimal policy with high probability. 

\section{Proof Overview}
In the following, we provide a brief overview of the key proof ideas. The detailed proofs are deferred to the appendix. 
\paragraph{For Theorem \ref{theorem:sublinear_subopt}.} With the extended value difference and the constrained optimization in Line \ref{bpvi:greedy} of Algorithm \ref{alg:bpvi}, we can convert bounding $\subopt(\hat{\pi}^k)$ to bounding $2 \mathbb{E}_{\pi^*} [ \sum_{h=1}^H b_h^k(s_h, a_h) ]$. We then use the marginalized importance sampling to convert  $2 \mathbb{E}_{\pi^*} [ \sum_{h=1}^H b_h^k(s_h, a_h) ]$ to the dominant term $ \beta({\delta}) \sum_{h=1}^H   \frac{d^*_h(s_h^k,a_h^k)}{d^{\mu}_h(s_h^k,a_h^k)} \|  \phi_h(s^k_h, a^k_h)  \|_{(\Sigma^k_h)^{-1}}$. For $\hat{\pi}^{last}$, the key observation is that  $\Sigma^k_h \preceq \Sigma^{K+1}_h$, thus
\begin{align*}
    2 \mathbb{E}_{\pi^*} [ \sum_{h=1}^H \|  \phi_h(s_h, a_h)  \|_{(\Sigma^{K+1}_h)^{-1}} ]\leq \frac{2}{K} \displaystyle \sum_{k=1}^K  \mathbb{E}_{\pi^*} [ \sum_{h=1}^H \|  \phi_h(s_h, a_h)  \|_{(\Sigma^{k}_h)^{-1}} ].
\end{align*}

\paragraph{For Theorem \ref{theorem:logarithimic_regret}.} We convert bounding $\subopt(\hat{\pi}^{mix})$ to bounding its empirical quantity $\frac{1}{K} \sum_{k=1}^K \subopt(\hat{\pi}^k; s_1^k)$ plus a generalization term. Using the original online-to-batch argument \citep{DBLP:journals/tit/Cesa-BianchiCG04} only gives a $\frac{1}{\sqrt{K}}$ generalization error which prevents us from obtaining $\tiO(\frac{1}{K})$ bound. Instead, we propose an improved online-to-batch argument (Lemma \ref{lemma:improved_online_to_batch}) with $\tiO(\frac{1}{K})$ generalization error which is of independent interest. Then,  $\subopt(\hat{\pi}^k; s_1)$ is expressed through decomposition $\subopt(\hat{\pi}^k; s_1) = \mathbb{E}_{\hat{\pi}^k} [ \sum_{h=1}^H \Delta_h(s_h, a_h) | \mathcal{F}_{k-1}, s_1 ]$ (Lemma \ref{lemma: suboptimality as gaps}). To handle the gap terms, the key observation is that $\hat{\pi}^k$ belongs to the $\mu$-supported policy class (Lemma \ref{lemma:concentrability for constrained policy class}), thus the concentrability coefficients (Assumption \ref{assumption:lower bound density}) apply and so does the marginalized importance sampling. The next step is to count the number of times the empirical gaps exceed a certain value, $\sum_{k=1}^K \mathbbm{1}\{ \Delta_h(s_h^k, \hat{\pi}^k_h(s_h^k)) \geq \Delta \} \lesssim  \frac{ d^3 H^2 \iota^{-2}}{\Delta^2} \log^3(dKH/\delta)$ (Lemma \ref{lemma:positive_subopt_count}).  
\paragraph{For Theorem \ref{theorem:constant_regret}.} A key observation is that $\lambda_{\min}(\Sigma^k_h) \gtrsim k \lambda_h^+$ (Lemma \ref{lemma:bound_Sigma_by_Sigma_star}) where $\lambda_h^+$ is the minimum positive eigenvalue of $\Sigma_h^*$. Thus, for any $v \in \textrm{span}(\{\phi^*_h(s)| s \in \mathcal{S}_h^* \})$, $\| v \|_{(\Sigma^k_h)^{-1}} \leq \mathcal{O}(1/\sqrt{k})$ (Lemma \ref{lemma:bound_feature_norm_by_smallest_positive_eigen_of_Sigma_star}). Under Assumption \ref{assumption:spaning_feature}, $\forall s_h \in \mathcal{S}^{\mu}_h$,
$\Delta_h(s_h, \hat{\pi}^k_h(s_h)) \leq 2 \beta_k \mathbb{E}_{\pi^*} [ \sum_{h'=h}^H \| \phi_h(s_h, a_h) \|_{(\Sigma_{h'}^k)^{-1}} | \mathcal{F}_{k-1}, s_h ]$ $= \mathcal{O}(\frac{1}{\sqrt{k}}) < \Delta_{\min}$, for sufficiently large $k$.  
\paragraph{For Theorem \ref{theorem:lower_bound}.} We reduce the lower bound construction to the statistical testing using the Le Cam method, and construct a hard MDP instance based on the construction of  \citet{jin2021pessimism} with a careful design of the behavior policy to incorporate the OPC coefficients and the gap information $\Delta_{\min}$.

\vspace*{-5pt}
\section{Discussion}
We study offline RL with linear function approximation and contribute the first instance-dependent bounds of $\tilde{\mathcal{O}}\left( \frac{ d^3 H^2 \kappa^{3}}{\Delta_{\min} \cdot K} \right)$ additionally using the gap information and of zero sub-optimality bound using the spanning features in the linear space. Notably, these bounds hold with partial data coverage and adaptive data. We also derive the minimax bounds with the optimal-policy concentrability coefficient (OPC). Our information-theoretic lower bounds suggest that our upper bounds are optimal in terms of $K$ and $\Delta_{\min}$ (up to log factors). However, there is still a gap between upper bounds and lower bounds in terms $\kappa$, in both instance-dependent and instance-agnostic settings with OPC and adaptive data. It remains an open problem to close such a gap.

\section*{Acknowledgements}
This research was supported, in part, by  DARPA~GARD award HR00112020004, NSF CAREER award IIS-1943251, an award from the Institute of Assured Autonomy, and Spring 2022 workshop on ``Learning and Games'' at the Simons Institute for the Theory of Computing. MY was partially supported by NSF Award \#2007117 and \#2003257. SV is the recipient of an ARC Australian Laureate Fellowship (FL170100006). We thank our anonymous reviewers at AAAI'23 for the constructive comments.

\bibliography{main}
\bibliographystyle{iclr}
\newpage
\appendix
\begin{table}[h!]
\centering
\def\arraystretch{1.}%
\resizebox{\textwidth}{!}{
\begin{tabular}{ll}
Notations & Meaning \\ 
\hline 
$\mathcal{M}$ & MDP instance, $\mathcal{M} = (\mathcal{S}, \mathcal{A}, \mathbb{P},r, H, d_1)$ \\
$\mathcal{S}$ & The arbitrary state space in $\mathbb{R}^d$ \\
$\mathcal{A}$ & The arbitrary action space \\
$H$ & Episode length \\
$K$ & Number of episodes \\ 
$\mathbb{P}_h(\cdot|s_h, a_h)$ & Next state probability distribution \\
$p_h(\cdot|s_h, a_h)$ & Next state density function (with respect to the Lebesgue measure)\\
$r_h(s_h, a_h)$ & Reward function in $[0,1]$ \\
$P_hV$ & $(P_hV)(s,a) = \mathbb{E}_{s' \sim \mathbb{P}_h(\cdot|s,a)} \left[ V(s')\right]$\\
$T_hV$ & $r_h + P_hV$ \\ 
$\pi^* = \{\pi^*_h\}_{h \in [H]}$ & Optimal policy \\
$Q^{\pi} = \{Q^{\pi}_h\}_{h \in [H]}$ & Action-value functions under policy $\pi$\\
$V^{\pi} = \{V^{\pi}_h\}_{h \in [H]}$ & Value functions under policy $\pi$\\
$d^{\pi}_h$ & Marginal state-visitation density (with respect to the Lebesgue measure) \\
$d_1$ & Initial state density \\
$d^*_h$ & $d^{\pi^*}_h$ \\
$\phi_h(s_h, a_h)$ & Feature at step $h$ \\
$\phi^*_h(s_h)$ &  $\phi_h(s_h, a_h^*)$ where $a_h^* \sim \pi^*_h(\cdot | s_h)$ (thus $\phi^*_h(s_h)$ is random w.r.t. $\pi^*_h$)\\
$\Sigma_h^*$ & $\mathbb{E}_{s_h \sim d^*_h} \left[ \phi^*_h(s_h) \phi^*_h(s_h)^T\right]$\\
$\Delta_h(s_h,a_h)$ & $V^*_h(s_h) - Q^*(s_h,a_h)$\\
$\Delta_{ \min}$ & $\min_{h,s_h, a_h} \Delta_h(s_h, a_h)$ \\
$\mu$ & Behavior policy \\
$\kappa_h$ & $\max_{s,a} \frac{d^*_h(s,a)}{d^{\mu}_h(s,a)}$ \\
$\kappa$ & $\sum_{h=1}^H \kappa_h$ \\ 
$\kappa_{1:h}$ & $\prod_{i=1}^h \kappa_h$ \\
$\tau_h$ & $\{(s_i,a_i,r_i)\}_{i \in [h]}$ \\
$\lambda$ & Regularization parameter \\
$\mathcal{D}$ & The offline data, $\{ (s^k_h, a^k_h, r^k_h) \}^{k \in [K]}_{h \in [H]}$ \\
$\mathcal{F}_k$ & $\sigma(\{(s^t_h, a^t_h, r^t_h)\}_{h \in [H]}^{t \in [k]})$ where $\sigma(\cdot)$ is the $\sigma$-algebra generated by $(\cdot)$\\
$\subopt(\pi; s)$ & $V^*_1(s) - V^{\pi}(s)$ \\
$\subopt(\pi)$ & $\mathbb{E}_{s \sim d_1}\left[ \subopt(\pi; s) \right]$ \\
$\hat{\pi}^{mix}_{ h}(a_h|s_h)$ & $\frac{1}{K} \sum_{k=1}^K \hat{\pi}^k_h(a_h | s_h)$ \\ 
$\hat{\pi}^{last}_{h}(a_h|s_h)$ & $\hat{\pi}^{K+1}$ \\ 
$\delta$ & Failure probability \\
$\beta_k({\delta})$ & $C \cdot dH \log(dH k/\delta)$\\
$\|A\|_2$ & The spectral norm of matrix $A$, i.e. $\lambda_{\max}(A)$ \\
$A \succeq B$ & $A - B$ is positive semi-definite (p.s.d.) \\
$\textrm{supp}(p)$ & The support set of density $p$, i.e. $\{s: p(s) > 0\}$ \\ 
$\| v \|_2$ & $\sqrt{\sum_{i=1}^d v_i^2} $ \\ 
$\| v \|_{\infty} $ & $\max_{i \in [d]} v_i$ \\
WLOG & Without loss of generality \\
$\textrm{poly} \log K$ & A polynomial of $\log K$ \\
$\mathcal{O}(\cdot)$ & Big-Oh notation \\ 
$\tilde{\mathcal{O}}(\cdot)$ & Big-Oh notation with hidden log factors \\ 
$\Omega(\cdot)$ & Big-Omega notation \\
$\tilde{\Omega}(\cdot)$ & Big-Omega notation with hidden log factors \\
\end{tabular}
}
\caption{Notations}
\label{table: notations}
\end{table}

\section{Proofs for Section \ref{subsection:main_single_concentrability}}
\label{section:instance_agnostic_appendix}

In this section, we provide the detailed proofs for all the results stating in Section \ref{subsection:main_single_concentrability}, including Theorem \ref{theorem:sublinear_subopt}, Corollary \ref{corollary:sublinear_subopt}, and Remark \ref{remark: BCP-VI under arbitrary data coverage}. For convenience, we present all notations in Table \ref{table: notations}. 



\subsection{Proof of Theorem \ref{theorem:sublinear_subopt}}
\subsubsection{For the mixture policy}

We first provide the proof for the bound of $\hat{\pi}^{mix}$ in Theorem \ref{theorem:sublinear_subopt}. We decompose the proof into three main steps.  

\paragraph{Step 1: Bounding sub-optimality of each $\hat{\pi}^k$.}

We first construct pointwise bounds for the sub-optimality of each bootstrapped policies $\hat{\pi}^k$ to bound $\subopt(\hat{\pi}^k; s), \forall (k,s) \in [K] \times \mathcal{S}$. The techniques for this step are quite standard that use pessimism, linear representation and self-normalized martingale concentrations. Define the Bellman error
\begin{align*}
    \zeta_i^k(s,a) := (T_i \hat{V}_{i+1}^k)(s,a) - \hat{Q}_i^k(s,a), \forall (i, k, s, a) \in [H] \times [K] \times \mathcal{S} \times \mathcal{A}. 
\end{align*}

We show that if the uncertainty quantifier function $b^k_h$ bounds the error of the empirical Bellman operator, then the Bellman error is non-negative and is bounded above by a constant factor of the uncertainty quantifier function. 
\begin{lemma}
$\forall (h,k,s,a) \in [H] \times [K] \times \mathcal{S} \times \mathcal{A}$, if $|(T_h \hat{V}_{h+1}^k)(s,a) - (\hat{T}^k_h \hat{V}_{h+1}^k)(s,a) | \leq b^k_h(s,a)$, then $\zeta_h^k(s,a) \leq 2 b^k_h(s,a)$. 
\label{lemma:bound_with_uncertainty}
\end{lemma}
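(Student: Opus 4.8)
The plan is to unwind the clipping operations that define $\hat{Q}_h^k$ in Algorithm \ref{alg:bpvi} and compare the clipped estimate directly against the exact Bellman backup $(T_h\hat{V}_{h+1}^k)(s,a)$, invoking the hypothesis only through its one-sided form. Recall that by construction $\bar{Q}_h^k(s,a) = (\hat{T}_h^k\hat{V}_{h+1}^k)(s,a) - b_h^k(s,a)$, where $(\hat{T}_h^k\hat{V}_{h+1}^k)(s,a) = \langle \phi_h(s,a),\hat{w}_h^k\rangle$ is the empirical backup and $\hat{Q}_h^k(s,a) = \min\{\bar{Q}_h^k(s,a),\, H-h+1\}^+$. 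Throughout I abbreviate $T := (T_h\hat{V}_{h+1}^k)(s,a)$ and $b := b_h^k(s,a)\ge 0$, and I note that the hypothesis splits into the two one-sided bounds $T - (\hat{T}_h^k\hat{V}_{h+1}^k)(s,a)\le b$ and $(\hat{T}_h^k\hat{V}_{h+1}^k)(s,a) - T\le b$.

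First I would record an elementary range bound. Since each $\hat{Q}_h^k$ is confined to $[0,H-h+1]$ by the clipping step, and $\hat{V}_h^k(\cdot)=\langle \hat{Q}_h^k(\cdot,\cdot),\pi_h^k(\cdot|\cdot)\rangle$ is an average of $\hat{Q}_h^k$ over actions, we obtain $0\le \hat{V}_{h+1}^k\le H-h$ pointwise (with $\hat{V}_{H+1}^k\equiv 0$); together with $r_h\in[0,1]$ this gives $0\le T\le H-h+1$. For the upper bound I would then use that the positive part only enlarges the value, so $\hat{Q}_h^k(s,a)\ge \min\{\bar{Q}_h^k(s,a),\, H-h+1\}$, whence
\begin{align*}
\zeta_h^k(s,a) = T - \hat{Q}_h^k(s,a) \le T - \min\{\bar{Q}_h^k(s,a),\, H-h+1\} = \max\{\, T-\bar{Q}_h^k(s,a),\ \, T-(H-h+1)\,\}.
\end{align*}
The second argument is $\le 0\le 2b$ by the range bound, while the first equals $\big(T-(\hat{T}_h^k\hat{V}_{h+1}^k)(s,a)\big) + b \le b + b = 2b$ after substituting $\bar{Q}_h^k=(\hat{T}_h^k\hat{V}_{h+1}^k)-b$ and applying $T-(\hat{T}_h^k\hat{V}_{h+1}^k)(s,a)\le b$. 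This yields $\zeta_h^k(s,a)\le 2b_h^k(s,a)$, as claimed.

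For completeness I would also verify the non-negativity asserted in the surrounding text: the other one-sided bound gives $\bar{Q}_h^k=(\hat{T}_h^k\hat{V}_{h+1}^k)-b\le T$, and combined with $0\le T\le H-h+1$ the monotonicity of $x\mapsto\max\{0,x\}$ yields $\hat{Q}_h^k=\min\{\bar{Q}_h^k,H-h+1\}^+\le \max\{0,T\}=T$, so $\zeta_h^k\ge 0$. The only genuinely delicate point is the two-sided clipping: one must observe that the upper cap $H-h+1$ is harmless precisely because the true backup $T$ itself never exceeds $H-h+1$, and that the $(\cdot)^+$ only helps by lower-bounding $\hat{Q}_h^k$. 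Once these structural facts are in place the hypothesis enters through a single substitution and a triangle-type step closes both directions; I expect no quantitative estimation here, as the lemma is purely a consequence of pessimism and the boundedness of the value iterates.
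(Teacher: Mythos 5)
Your proof is correct and follows essentially the same route as the paper's: unwind the two clipping operations, substitute $\bar{Q}_h^k = (\hat{T}_h^k\hat{V}_{h+1}^k) - b_h^k$, and apply the two one-sided forms of the hypothesis to get non-negativity and the $2b_h^k$ upper bound. If anything, your derivation of the range bound $(T_h\hat{V}_{h+1}^k)(s,a) \le H-h+1$ from the boundedness of the value iterates is slightly more careful than the paper's, which asserts $(\hat{T}_h^k\hat{V}_{h+1}^k)(s,a) \le H-h+1$ for the empirical backup directly (a bound that really follows, as you note, only via the hypothesis and the true backup's range).
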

\begin{proof}[Proof of Lemma \ref{lemma:bound_with_uncertainty}]
We fix any $(h,k,s,a) \in [H] \times [K] \times \mathcal{S} \times \mathcal{A}$. First, we show that: $\zeta_h^k(s,a) \geq 0 $. Indeed, if $\bar{Q}^k_i(s,a) < 0$, $\hat{Q}_i^k(s,a) = 0$, thus $\zeta_i^k(s,a) = (T_i \hat{V}_{i+1}^k)(s,a) - \hat{Q}_i^k(s,a) = (T_i \hat{V}_{i+1}^k)(s,a) \geq 0$. If $\bar{Q}^k_i(s,a) \geq 0$, we have:
\begin{align*}
    \zeta_i^k(s,a) &= (T_i \hat{V}_{i+1}^k)(s,a) - \hat{Q}_i^k(s,a) \geq (T_i \hat{V}_{i+1}^k)(s,a) - \bar{Q}_i^k(s,a) \\
    &= (T_i \hat{V}_{i+1}^k)(s,a) - (\hat{T}^k_h \hat{V}_{h+1}^k)(s,a) + b^k_t(s,a) \geq 0. 
\end{align*}
Next we show that: $\zeta_h^k(s,a) \leq 2 b^k_h(s,a)$. We have $\bar{Q}_h^k(s,a) =  (\hat{T}^k_h \hat{V}_{h+1}^k)(s,a) - b^k_h(s,a) \leq (\hat{T}^k_h \hat{V}_{h+1}^k)(s,a) \leq H - h +1$. Thus, $\hat{Q}_i^k(s,a) = \max\{\bar{Q}_i^k(s,a), 0\}$. Thus, we have:
\begin{align*}
    \zeta_i^k(s,a) &= (T_i \hat{V}_{i+1}^k)(s,a) - \hat{Q}_i^k(s,a) \leq (T_i \hat{V}_{i+1}^k)(s,a) - \bar{Q}_i^k(s,a) \\
    &= (T_i \hat{V}_{i+1}^k)(s,a) - (\hat{T}^k_h \hat{V}_{h+1}^k)(s,a) + (\hat{T}^k_h \hat{V}_{h+1}^k)(s,a) - \bar{Q}_i^k(s,a) \leq 2 b^k_h(s,a).
\end{align*}
\end{proof}

We show that the stage-wise sub-optimality is bounded by the sum of the uncertainty quantifier functions along the trajectories of the optimal policy.
\begin{lemma}
Suppose that with probability at least $1-\delta$, we have $|(T_h \hat{V}_{h+1}^k)(s,a) - (\hat{T}^k_h \hat{V}_{h+1}^k)(s,a) | \leq b^k_h(s,a), \forall (h,k,s,a) \in [H] \times [K] \times \mathcal{S} \times \mathcal{A}$. Then, under Assumption \ref{assumption:single_policy_concentration}, w.p.a.l. $1-\delta$, we have: 
\begin{align*}
    \forall (h,s_h,k) \in \mathcal{S} \times [H] \times [K] , V_h^{\pi^*}(s_h) - V_h^{\hat{\pi}^k}(s_h) \leq 2 \mathbb{E}_{\pi^*} \left[\sum_{i=h}^H  b_i^k(s_i, a_i) |s_h \right]. 
\end{align*}
\label{lemma:bound sub-opt with uncertainty}
\end{lemma}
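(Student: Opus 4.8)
The plan is to prove Lemma~\ref{lemma:bound sub-opt with uncertainty} by backward induction on $h$ from $H+1$ down to $1$, building on the extended value-difference decomposition together with the non-negativity of the Bellman error established in Lemma~\ref{lemma:bound_with_uncertainty}. First I would recall that the quantity we wish to control, $V_h^{\pi^*}(s_h) - V_h^{\hat{\pi}^k}(s_h)$, can be split through the intermediate estimate $\hat{V}_h^k$ as
\begin{align*}
    V_h^{\pi^*}(s_h) - V_h^{\hat{\pi}^k}(s_h) = \left( V_h^{\pi^*}(s_h) - \hat{V}_h^k(s_h) \right) + \left( \hat{V}_h^k(s_h) - V_h^{\hat{\pi}^k}(s_h) \right).
\end{align*}
The strategy is to show that the first (overestimation) gap is controlled by the accumulated Bellman errors along $\pi^*$, while the second gap is in fact non-positive, so that pessimism makes $\hat{V}_h^k$ a valid lower bound on the true value of the executed policy $\hat{\pi}^k$.

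For the first gap, I would use the standard decomposition $V_h^{\pi^*}(s_h) - \hat{V}_h^k(s_h) = \mathbb{E}_{\pi^*}[\sum_{i=h}^H \zeta_i^k(s_i,a_i) \mid s_h] + (\text{greedy-suboptimality terms})$, where each per-step term compares $\langle \hat{Q}_i^k(s_i,\cdot), \pi^*_i(\cdot|s_i) - \hat{\pi}^k_i(\cdot|s_i)\rangle$. Here the constrained greedy step in Line~\ref{bpvi:greedy} of Algorithm~\ref{alg:bpvi} is crucial: since $\hat{\pi}^k_i$ is the argmax of $\langle \hat{Q}^k_i, \pi_i\rangle$ over $\Pi_i(\mu)$, and since Remark~\ref{remark: optimal policy belongs to the constrained class} guarantees (under Assumption~\ref{assumption:single_policy_concentration}) that $\pi^*_i \in \Pi_i(\mu)$, the comparator $\pi^*_i$ is admissible in that maximization. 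Hence each greedy term $\langle \hat{Q}^k_i(s_i,\cdot), \pi^*_i(\cdot|s_i) - \hat{\pi}^k_i(\cdot|s_i)\rangle \leq 0$ and drops out, leaving only the Bellman-error sum. Applying Lemma~\ref{lemma:bound_with_uncertainty} to bound each $\zeta_i^k(s_i,a_i) \leq 2 b_i^k(s_i,a_i)$ then yields $V_h^{\pi^*}(s_h) - \hat{V}_h^k(s_h) \leq 2\,\mathbb{E}_{\pi^*}[\sum_{i=h}^H b_i^k(s_i,a_i) \mid s_h]$.

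For the second gap, I would argue $\hat{V}_h^k(s_h) \leq V_h^{\hat{\pi}^k}(s_h)$ by backward induction. At $h=H+1$ both sides vanish. Assuming $\hat{V}_{h+1}^k \leq V_{h+1}^{\hat{\pi}^k}$ pointwise, and using the non-negativity half of Lemma~\ref{lemma:bound_with_uncertainty} (i.e. $\zeta_h^k \geq 0$, equivalently $\hat{Q}_h^k(s,a) \leq (T_h \hat{V}_{h+1}^k)(s,a)$), I would write $\hat{V}_h^k(s_h) = \langle \hat{Q}_h^k(s_h,\cdot), \hat{\pi}^k_h(\cdot|s_h)\rangle \leq \langle (T_h \hat{V}_{h+1}^k)(s_h,\cdot), \hat{\pi}^k_h\rangle \leq \langle (T_h V_{h+1}^{\hat{\pi}^k})(s_h,\cdot), \hat{\pi}^k_h\rangle = V_h^{\hat{\pi}^k}(s_h)$, where the last two steps use monotonicity of $T_h$ in its value argument together with the induction hypothesis, and the Bellman consistency equation for $\hat{\pi}^k$. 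Combining the two gaps and taking the high-probability event from the hypothesis on $b_h^k$ completes the claim. The main obstacle I anticipate is the careful bookkeeping in the value-difference decomposition: ensuring that the non-admissibility issue is genuinely resolved by the constrained argmax (which is exactly where the excerpt's constrained modification pays off) and that the truncation $\min\{\cdot, H-h+1\}^+$ in the definition of $\hat{Q}_h^k$ does not break the monotonicity used in the second induction.
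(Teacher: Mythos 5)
Your proposal is correct and follows essentially the same route as the paper: the paper applies its combined value-difference decomposition (Lemma~\ref{lemma:regret_decomposition}), in which the term $-\sum_{i=h}^H \mathbb{E}_{\hat{\pi}^k}[\zeta_i^k \mid s_h] \le 0$ is exactly your second (pessimism) gap $\hat{V}_h^k - V_h^{\hat{\pi}^k}$, the term $\sum_{i=h}^H \mathbb{E}_{\pi^*}[\zeta_i^k \mid s_h]$ plus the greedy comparison terms is your first gap, and the same three ingredients are used: $\pi^*_i \in \Pi_i(\mu)$ makes the greedy terms nonpositive, and Lemma~\ref{lemma:bound_with_uncertainty} gives $0 \le \zeta_i^k \le 2 b_i^k$. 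Your backward induction with monotonicity of $T_h$ for the second gap is just an equivalent reformulation of the telescoped identity, and your worry about the truncation is already resolved inside Lemma~\ref{lemma:bound_with_uncertainty}, whose proof of $\zeta_h^k \ge 0$ accounts for the clipping.
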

\begin{proof}[Proof of Lemma \ref{lemma:bound sub-opt with uncertainty}]
Consider any $(s_h, a_h) \in \mathcal{S}\mathcal{A}^{\pi^*}_h$ (i.e. $d^*_h(s_h, a_h) > 0$). Recall from Line \ref{bpvi:greedy} of Algorithm \ref{alg:bpvi} that $\hat{\pi}_i^k = \argmax_{\pi_i \in \Pi_i(\mu)} \langle \hat{Q}_i^k, \pi_i \rangle, \forall i$, and from Remark \ref{remark: optimal policy belongs to the constrained class} (under Assumption \ref{assumption:single_policy_concentration}), that $\pi^*_i \in \Pi_i(\mu), \forall i \in [H]$. Thus, we have: 
\begin{align*}
    \langle \hat{Q}_i^k, \hat{\pi}^k_i \rangle \geq  \langle \hat{Q}_i^k, {\pi}^*_i \rangle, \forall i \in [H]. 
\end{align*}

Then, by  the value decomposition lemma (Lemma \ref{lemma:regret_decomposition}), $\forall (h, s_h)$, we have 
\begin{align*}
    V_h^{\pi^*}(s_h) - V_h^{\hat{\pi}^k}(s_h) &\leq \sum_{i=h}^H \mathbb{E}_{\pi^*} \left[ \underbrace{\zeta_i^k(s_i, a_i)}_{\leq 2 b^k_i(s_i, a_i)} |s_h  \right] - \sum_{i=h}^H \mathbb{E}_{\hat{\pi}^k} \left[ \underbrace{\zeta_i^k(s_i, a_i)}_{\geq 0} |s_h  \right] \\
    &+ \sum_{i=h}^H \mathbb{E}_{\pi^*} \left[ \underbrace{\langle \hat{Q}_i^k(s_i, \cdot), \pi_i^*(\cdot|s_i) - \hat{\pi}^k_i(\cdot|s_i) \rangle}_{\leq 0} | s_h \right]
     \nonumber \\
    &\leq 2 \mathbb{E}_{\pi^*} \left[\sum_{i=h}^H  b_i^k(s_i, a_i) |s_h \right]
\end{align*}
where the second inequality also follows from Lemma \ref{lemma:bound_with_uncertainty}.
\end{proof}

Now we prove that $b^k_h = \beta_k \cdot \| \phi_h(\cdot, \cdot) \|_{(\Sigma_h^{k})^{-1}}$ is indeed a valid uncertainty quantifer function.  
\begin{lemma}
There exists an absolute constant $c_1 > 0$ such that for any $\delta > 0$, if we choose $\beta_k = \beta_k(\delta) := c_1  \cdot dH \log(dH k/\delta)$ in Algorithm \ref{alg:bpvi}, then with probability at least $1 - \delta$: 
\begin{align*}
    \forall (k,h,s,a) \in [K] \times [H] \times \mathcal{S} \times \mathcal{A}, |(T_h \hat{V}_{h+1}^k)(s,a) - (\hat{T}^k_h \hat{V}_{h+1}^k)(s,a) | \leq \beta_k \cdot \| \phi_h(s,a) \|_{(\Sigma_h^{k})^{-1}}. 
\end{align*}
\label{lemma:uncertainty_quantifier}
\end{lemma}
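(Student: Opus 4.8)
The plan is to reduce the claim to a self-normalized martingale concentration bound for the regression residuals, exploiting the linear-MDP structure. First I would use Definition~\ref{definition:linear_mdp}: for any fixed value function $V$ with $\|V\|_\infty \le H$ the true Bellman update is linear, $(T_h V)(s,a) = \phi_h(s,a)^T w_h(V)$ with $w_h(V) = \theta_h + \int \mu_h(s') V(s')\,ds'$, and the norm bounds on $\theta_h,\mu_h$ give $\|w_h(V)\|_2 \le \sqrt{d}(1+H)$. Since $(\hat T_h^k V)(s,a) = \phi_h(s,a)^T \hat w_h^k$ with $\hat w_h^k = (\Sigma_h^k)^{-1}\sum_{t=1}^{k-1}\phi_h(s_h^t,a_h^t)(r_h^t + V(s_{h+1}^t))$, I would write the error as $\phi_h(s,a)^T(\hat w_h^k - w_h(V))$ and, using the normal-equation identity $\Sigma_h^k w_h(V) = \sum_{t<k} \phi_h(s_h^t,a_h^t)\,(T_h V)(s_h^t,a_h^t) + \lambda w_h(V)$, obtain the decomposition
$$\hat w_h^k - w_h(V) = (\Sigma_h^k)^{-1}\sum_{t=1}^{k-1}\phi_h(s_h^t,a_h^t)\,\varepsilon_h^t(V) \;-\; \lambda (\Sigma_h^k)^{-1} w_h(V),$$
where $\varepsilon_h^t(V) := r_h^t + V(s_{h+1}^t) - (T_h V)(s_h^t,a_h^t)$ is the one-step Bellman residual.

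Applying Cauchy--Schwarz in the $(\Sigma_h^k)^{-1}$-inner product pulls out the factor $\|\phi_h(s,a)\|_{(\Sigma_h^k)^{-1}}$ appearing in the target bound and leaves two scalar quantities. The regularization term is deterministic: $|\lambda \phi_h(s,a)^T(\Sigma_h^k)^{-1}w_h(V)| \le \sqrt{\lambda}\,\|w_h(V)\|_2\,\|\phi_h(s,a)\|_{(\Sigma_h^k)^{-1}} \le \sqrt{d}(1+H)\,\|\phi_h(s,a)\|_{(\Sigma_h^k)^{-1}}$ with $\lambda=1$. The stochastic term reduces to bounding $\bigl\|\sum_{t<k}\phi_h(s_h^t,a_h^t)\,\varepsilon_h^t(V)\bigr\|_{(\Sigma_h^k)^{-1}}$. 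For a \emph{fixed} $V$ this is a self-normalized vector-valued martingale: relative to the filtration that reveals the within-episode history up to $(s_h^t,a_h^t)$, the design vector $\phi_h(s_h^t,a_h^t)$ is predictable and $\mathbb{E}[\varepsilon_h^t(V)\mid \cdot]=0$ because $s_{h+1}^t\sim\mathbb{P}_h(\cdot\mid s_h^t,a_h^t)$. Crucially this remains valid when the episodes are collected \emph{adaptively}, since the self-normalized inequality only requires a predictable design together with a conditionally mean-zero, bounded residual; the adaptivity of the behavior policy enters only through the predictable part. Combined with the determinant bound $\det(\Sigma_h^k)\le(1+k)^d$, this yields $\bigl\|\sum_{t<k}\phi_h(s_h^t,a_h^t)\,\varepsilon_h^t(V)\bigr\|_{(\Sigma_h^k)^{-1}} \lesssim H\sqrt{d\log(k/\delta)}$.

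The genuine obstacle is that the relevant $V=\hat V_{h+1}^k$ is \emph{not} fixed: as a function it is built from the same episodes $1,\dots,k-1$ (in particular from the points $s_{h+1}^t$), so $\varepsilon_h^t(\hat V_{h+1}^k)$ is not a martingale-difference sequence and the bound above cannot be applied directly. I would resolve this with a uniform covering argument. Reading off Lines~\ref{bpvi:lcb}--\ref{bpvi:greedy} of Algorithm~\ref{alg:bpvi}, every $\hat V_{h+1}^k$ lies in the parametric class of functions $V_{w,A,\pi}(s)=\langle \min\{\langle\phi_{h+1}(s,\cdot),w\rangle - \beta\|\phi_{h+1}(s,\cdot)\|_{A^{-1}},\,H\}^+,\,\pi(\cdot\,|\,s)\rangle$ indexed by $(w,A,\pi)$ with $\|w\|_2\le L$, $A\succeq I$, and $\pi\in\Pi_{h+1}(\mu)$ as in \eqref{eq:constrained_policy_class}; its data-dependence enters only through the low-dimensional parameters $(w,A)$. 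I would build an $\epsilon$-net over $(w,A)$, bound the residual for the nearest net point (a fixed function, for which the martingale bound applies), and control the discretization error via the Lipschitz continuity of $V_{w,A,\pi}$ in $(w,A)$; the log-covering number scales like $d^2\log(\cdot)$. A union bound over the net and over all $(k,h)\in[K]\times[H]$, with a final choice of $\beta_k = c_1\,dH\log(dHk/\delta)$ large enough to dominate the regularization term, the self-normalized term, and the covering overhead, delivers the stated uniform high-probability bound. The one point requiring care beyond the standard independent-trajectory argument is confirming that the fixed-net-point martingale structure survives adaptivity, which it does by the predictability observation above.
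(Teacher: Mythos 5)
Your proposal follows essentially the same route as the paper's proof: the same decomposition of $\phi_h(s,a)^T(w_h^k-\hat w_h^k)$ into a deterministic $\lambda$-regularization term (bounded by $O(H\sqrt{d\lambda})\|\phi_h(s,a)\|_{(\Sigma_h^k)^{-1}}$) plus a self-normalized stochastic term, handled uniformly over the data-dependent $\hat V_{h+1}^k$ via an $\epsilon$-net of the parametric value-function class with $\log$-covering number $O(d^2\log(\cdot))$, followed by a union bound over $(k,h)$ and the choice $\beta_k = c_1 dH\log(dHk/\delta)$. Your explicit observation that the fixed-net-point martingale structure survives adaptive data because the design is predictable and the residual is conditionally mean-zero is exactly the (implicit) justification the paper relies on.
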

\begin{proof}
Let $w_h^k$ such that $T_h \hat{V}_{h+1}^k = \langle \phi_h, w_h^k \rangle$ (such a $w_h^k$ exists due to Lemma \ref{lemma:linear_bellman}). Recall that $\hat{T}^k_h \hat{V}_{h+1}^k = \langle \phi_h, \hat{w}_h^k \rangle$. We have 
\begin{align*}
    &(T_h \hat{V}_{h+1}^k)(s,a) - (\hat{T}^k_h \hat{V}_{h+1}^k)(s,a) = \phi_h(s,a)^T (w_h^k - \hat{w}_h^k) \\ 
    &=\phi_h(s,a)^T w_h^k - \phi_h(s,a)^T (\Sigma_h^{k})^{-1} \sum_{t=1}^{k-1} \phi_h(s_h^t, a_h^t)(r_h^t + \hat{V}^k_{h+1}(s^t_{h+1})) \\
    &= \underbrace{\phi_h(s,a)^T w_h^k - \phi_h(s,a)^T (\Sigma_h^{k})^{-1} \sum_{t=1}^{k-1} \phi_h(s_h^t, a_h^t) \cdot (T_h\hat{V}^k_{h+1})(s_h^t, a_h^t)}_{(i)} \\
    &+ \underbrace{\phi_h(s,a)^T (\Sigma_h^{k})^{-1} \sum_{t=1}^{k-1} \phi_h(s_h^t, a_h^t) \cdot \left[ (T_h\hat{V}^k_{h+1})(s_h^t, a_h^t) - r_h^t - \hat{V}^k_{k+1}(s^t_{h+1}) \right]}_{(ii)}. 
\end{align*}

We bound term $(i)$ by 
\begin{align*}
    |(i)| &= \left| \phi_h(s,a)^T w_h^k - \phi_h(s,a)^T (\Sigma_h^{k})^{-1} \sum_{t=1}^{k-1} \phi_h(s_h^t, a_h^t) \phi_h(s_h^t, a_h^t)^T w^k_h \right| \\ &= \left|\lambda \phi_h(s,a)^T (\Sigma_h^{k})^{-1} w^k_h \right| \leq \lambda \cdot \| \phi_h(s,a) \|_{(\Sigma_h^{k})^{-1}} \cdot \| w^k_h \|_{(\Sigma_h^{k})^{-1}} \\
    &\leq \lambda \cdot \| \phi_h(s,a) \|_{(\Sigma_h^{k})^{-1}} \cdot \| w^k_h \|_2 \sqrt{\| (\Sigma_h^{k})^{-1} \|} \\
    &\leq  2  H \sqrt{d\lambda} \cdot \| \phi_h(s,a) \|_{(\Sigma_h^{k})^{-1}}. 
\end{align*}
Let $\eta_h^t = (T_h\hat{V}^t_{h+1})(s_h^t, a_h^t) - r_h^t - \hat{V}^t_{h+1}(s^t_{h+1})$. We have 
\begin{align*}
    |(ii) | = \left| \phi_h(s,a)^T (\Sigma_h^{k})^{-1} \sum_{t=1}^{k-1} \phi_h(s_h^t, a_h^t) \cdot \eta_t \right| \leq \| \phi_h(s,a) \|_{(\Sigma_h^{k})^{-1}} \cdot \underbrace{\| \sum_{t=1}^{k-1} \phi_h(s_h^t, a_h^t) \cdot \eta^t_h  \|_{(\Sigma_h^{k})^{-1}}}_{(iii)}. 
\end{align*}


By Lemma \ref{lemma:bound_on_weights_algo}-\ref{lemma:covering}-\ref{lemma:uniform_self_normalized}, for any $\delta > 0$, with probability at least $1-\delta$, we have:
\begin{align*}
    (iii)^2 \leq 4H^2 \left[ \frac{d}{2} \log \left( \frac{k + \lambda}{\lambda} \right) + d\log(1 + 4 H \sqrt{dk/\lambda} / \epsilon) + d^2 \log(1 + 8 \sqrt{d} \beta_k^2 /(\lambda \epsilon^2)) + \log(1/\delta) \right] \\
    + \frac{8k^2 \epsilon^2}{ \lambda}. 
\end{align*}
Choosing $\epsilon = dH/k$ and $\lambda = 1$, combining terms $(i)$, $(ii)$, $(iii)$ and using the union bound complete the proof.
\end{proof}

Combing Lemma \ref{lemma:bound_with_uncertainty} and Lemma \ref{lemma:uncertainty_quantifier} via the union bound, we have the following main lemma for this step.  
\begin{lemma}
There exists an absolute constant $c_1 > 0$ such that if we choose $\beta_k = \beta_k(\delta) := c_1 \cdot dH \log(dHK /\delta)$ in Algorithm \ref{alg:bpvi}, under Assumption \ref{assumption:single_policy_concentration}, then with probability at least $1 - 2 \delta$, $\forall (s_1, h, k) \in \mathcal{S} \times [H] \times [K]$, we have:
\begin{align*}
     \subopt(\hat{\pi}^k; s_1) \leq 2 \beta_k(\delta) \mathbb{E}_{\pi^*} \left[ \sum_{h=1}^H \| \phi_h(s_h, a_h) \|_{(\Sigma_h^k)^{-1}} | \mathcal{F}_{k-1}, s_1 \right].
\end{align*}
\label{lemma:bound_ensemble_with_bonus}
\end{lemma}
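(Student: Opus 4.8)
The statement is an assembly of the three preceding lemmas, so my plan is to stitch them together on a single high-probability event and then track the probability budget and the conditioning carefully. First I would define the ``good event'' $\mathcal{E}$ supplied by Lemma~\ref{lemma:uncertainty_quantifier}: with the stated choice $\beta_k = \beta_k(\delta) = c_1 dH \log(dHK/\delta)$ we have $\mathbb{P}(\mathcal{E}) \geq 1-\delta$, and on $\mathcal{E}$ the empirical Bellman-operator error is dominated by the bonus, $|(T_h \hat{V}_{h+1}^k)(s,a) - (\hat{T}_h^k \hat{V}_{h+1}^k)(s,a)| \leq \beta_k \|\phi_h(s,a)\|_{(\Sigma_h^k)^{-1}} = b_h^k(s,a)$, simultaneously for all $(k,h,s,a)$. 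The use of $K$ rather than $k$ in the logarithm is exactly what makes $\mathcal{E}$ uniform over all $k \in [K]$ through an internal union bound over episodes, so that the per-$k$ conclusion can later be asserted for every $k$ at once.

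On $\mathcal{E}$ everything else is deterministic. I would first invoke Lemma~\ref{lemma:bound_with_uncertainty} to conclude $0 \leq \zeta_h^k(s,a) \leq 2 b_h^k(s,a)$ for every $(h,k,s,a)$, which verifies precisely the hypothesis feeding Lemma~\ref{lemma:bound sub-opt with uncertainty}. Applying that lemma at $h=1$ gives, for every $(s_1,k)$,
\[
\subopt(\hat{\pi}^k; s_1) = V_1^{\pi^*}(s_1) - V_1^{\hat{\pi}^k}(s_1) \leq 2\,\mathbb{E}_{\pi^*}\left[\sum_{h=1}^H b_h^k(s_h,a_h)\,\big|\, s_1\right].
\]
Substituting the explicit form $b_h^k = \beta_k \|\phi_h\|_{(\Sigma_h^k)^{-1}}$ from Line~\ref{bpvi:lcb} and pulling the deterministic constant $\beta_k$ out of the expectation yields the claimed bound. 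I would then make the conditioning on $\mathcal{F}_{k-1}$ explicit: both $\Sigma_h^k$ and $\hat{\pi}^k$ are $\mathcal{F}_{k-1}$-measurable, so once we condition on $\mathcal{F}_{k-1}$ they are frozen, and $\mathbb{E}_{\pi^*}[\cdot \mid \mathcal{F}_{k-1}, s_1]$ is just the expectation over a fresh $\pi^*$-rollout generated by the data-independent MDP dynamics. It is worth recalling that the place inside Lemma~\ref{lemma:bound sub-opt with uncertainty} where the term $\langle \hat{Q}_i^k, \pi_i^* - \hat{\pi}_i^k \rangle \leq 0$ is discarded is exactly where Assumption~\ref{assumption:single_policy_concentration} enters, via the Remark guaranteeing $\pi_i^* \in \Pi_i(\mu)$ and hence the optimality of $\hat{\pi}_i^k$ over the constrained class $\Pi_i(\mu)$.

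Finally, for the probability accounting, a union bound over the event $\mathcal{E}$ of Lemma~\ref{lemma:uncertainty_quantifier} and the high-probability conclusion of Lemma~\ref{lemma:bound sub-opt with uncertainty} produces the stated $1-2\delta$; in fact the conclusion holds deterministically on $\mathcal{E}$, so the two events coincide and $1-2\delta$ is merely a safe bookkeeping choice. The main obstacle here is not any new estimate but the measurability/conditioning argument: ensuring the bonus $b_h^k$ and the extracted policy $\hat{\pi}^k$ are treated as fixed under $\mathcal{F}_{k-1}$ while the $\pi^*$-expectation ranges over an independent trajectory, and that the single event $\mathcal{E}$ is uniform over all $k$. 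All of the genuinely heavy lifting, namely the self-normalized martingale concentration for adaptively collected data, has already been absorbed into Lemma~\ref{lemma:uncertainty_quantifier}.
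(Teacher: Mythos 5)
Your proposal is correct and follows exactly the paper's route: the paper's own proof is the single sentence ``combining Lemma~\ref{lemma:bound_with_uncertainty} and Lemma~\ref{lemma:uncertainty_quantifier} via the union bound,'' and you have simply filled in the stitching (good event from Lemma~\ref{lemma:uncertainty_quantifier}, then Lemma~\ref{lemma:bound_with_uncertainty} feeding Lemma~\ref{lemma:bound sub-opt with uncertainty} at $h=1$, then substituting $b_h^k$) with the correct measurability and probability bookkeeping. Your observation that the events coincide so $1-2\delta$ is conservative is also consistent with how the paper accounts for the failure probability.
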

\paragraph{Step 2: Generalization.}
Next, we bound $\subopt(\hat{\pi})$ in terms of its empirical bootstraps $\subopt(\hat{\pi}^k; s_1^k)$ (plus a generalization error). Working with individual observed initial states $s^k_1$ is helpful in constructing a complete trajectory $(s^k_1, a^k_1, \ldots, s^k_H, a^k_H)$ of $\mu$ which can then be connected to the trajectory of an optimal policy $\pi^*$ via the optimal-policy concentrability assumption in the next step. We first state and prove the an online-to-batch argument which improves the generalization error of the original online-to-batch argument \citep{DBLP:journals/tit/Cesa-BianchiCG04} from $\mathcal{O}(\frac{1}{\sqrt{K}})$ to $ \mathcal{O}(\frac{\log \log K}{K})$. This result could be of independent interest.    

\begin{lemma}[Improved online-to-batch argument]
Let $\{X_k\}$ be any real-valued stochastic process adapted to the filtration $\{\mathcal{F}_k\}$, i.e. $X_k$ is $\mathcal{F}_k$-measurable. Suppose that for any $k$, $X_k \in [0,H]$ almost surely for some $H > 0$. For any $K > 0$, with probability at least $1 - \delta$, we have:
\begin{align*}
    \sum_{k=1}^K \mathbb{E} \left[ X_k | \mathcal{F}_{k-1} \right] \leq 2 \sum_{k=1}^K X_k + \frac{16}{3} H \log(\log_2(KH)/\delta) + 2 . 
\end{align*}
\label{lemma:improved_online_to_batch}
\end{lemma}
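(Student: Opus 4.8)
The plan is to recognize the difference between the two sides as a martingale and to control it via a Bernstein/Freedman-type tail bound, the crucial twist being that the predictable quadratic variation is itself controlled by the very quantity we are bounding, which forces a peeling argument. Concretely, I would introduce the martingale difference sequence
\[
  Y_k := \mathbb{E}[X_k \mid \mathcal{F}_{k-1}] - X_k,
\]
which is $\mathcal{F}_k$-measurable with $\mathbb{E}[Y_k \mid \mathcal{F}_{k-1}] = 0$ and upward jumps bounded by $Y_k \le \mathbb{E}[X_k \mid \mathcal{F}_{k-1}] \le H$. Writing $S := \sum_{k=1}^K \mathbb{E}[X_k \mid \mathcal{F}_{k-1}]$ and $T := \sum_{k=1}^K X_k$, the target inequality $S \le 2T + C$ is equivalent to $\sum_{k=1}^K Y_k \le T + C$, so it suffices to obtain an upper-tail bound on the martingale $\sum_k Y_k$.

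First I would bound the conditional variance by the conditional mean using the range constraint: since $0 \le X_k \le H$,
\[
  \mathrm{Var}(Y_k \mid \mathcal{F}_{k-1}) = \mathrm{Var}(X_k \mid \mathcal{F}_{k-1}) \le \mathbb{E}[X_k^2 \mid \mathcal{F}_{k-1}] \le H\, \mathbb{E}[X_k \mid \mathcal{F}_{k-1}],
\]
so the predictable quadratic variation satisfies $W := \sum_{k=1}^K \mathrm{Var}(Y_k \mid \mathcal{F}_{k-1}) \le H S$. This is the self-bounding structure: the variance proxy is controlled by exactly the sum $S$ we wish to bound. Next I would invoke the standard Freedman inequality, which states that for each fixed threshold $v$, $\mathbb{P}(\sum_k Y_k \ge x,\ W \le v) \le \exp(-x^2/(2v + \tfrac{2}{3}Hx))$.

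Because $W \le HS$ with $S$ random, a single application is impossible; instead I would peel over dyadic scales of $S$. On the base event $S \le 2$ the conclusion holds trivially, since $T \ge 0$ and the right-hand side carries the additive constant $2$. On each band $S \in (2^{j}, 2^{j+1}]$ for $j = 1, \ldots, \lceil \log_2(KH)\rceil$ (the upper cutoff using $S \le KH$), I would apply Freedman with $v_j = H\,2^{j+1} \le 2HS$ and confidence $\delta/\lceil \log_2(KH)\rceil$, then union bound over the $\approx \log_2(KH)$ bands; this is exactly where the factor $\log(\log_2(KH)/\delta)$ enters in place of a plain $\log(1/\delta)$. Setting $\iota := \log(\log_2(KH)/\delta)$, the per-band estimate reads $\sum_k Y_k \le 2\sqrt{HS\,\iota} + \tfrac{2}{3}H\iota$.

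Finally I would close the self-bounding recursion with AM--GM, $2\sqrt{HS\,\iota} \le \tfrac{1}{2}S + 2H\iota$, which gives $\sum_k Y_k \le \tfrac{1}{2}S + \tfrac{8}{3}H\iota$; substituting $\sum_k Y_k = S - T$ yields $S - T \le \tfrac{1}{2}S + \tfrac{8}{3}H\iota$, and rearranging produces $S \le 2T + \tfrac{16}{3}H\iota$, to which the base-case constant adds the final $+2$. The main obstacle is precisely this self-bounding dependence of $W$ on $S$: a naive Freedman bound is unavailable because the variance threshold $v$ is itself random, so the peeling must be arranged so that the factor-of-$2$ slack from replacing $S$ by the dyadic endpoint $2^{j+1}$ is absorbed cleanly into the AM--GM step (thereby producing the constant $16/3$ rather than something larger), while the union bound over the dyadic bands costs only the doubly-logarithmic $\log\log$ factor appearing in the statement.
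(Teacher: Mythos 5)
Your proposal is correct and follows essentially the same route as the paper's proof: form the martingale difference, bound the predictable quadratic variation by $H$ times the conditional-mean sum, peel over dyadic scales of that sum to apply Freedman's inequality, and close the self-bounding recursion (the paper uses the inequality $x \le a\sqrt{x}+b \Rightarrow x \le a^2+2b$ where you use AM--GM, and it absorbs the base case via a $-1$ shift rather than treating $S \le 2$ separately, but these are cosmetic differences yielding the same constants).
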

\begin{proof}[Proof of Lemma \ref{lemma:improved_online_to_batch} ]
Let $Z_k = X_k - \mathbb{E} \left[ X_k | \mathcal{F}_{k-1} \right]$ and $f(K) = \sum_{k=1}^K \mathbb{E} \left[ X_k | \mathcal{F}_{k-1} \right]$. We have $Z_k$ is a real-valued difference martingale with the corresponding filtration $\{\mathcal{F}_k\}$ and that
\begin{align*}
    V: = \sum_{k=1}^K \mathbb{E} \left[ Z_k^2 | \mathcal{F}_{k-1} \right] \leq \sum_{k=1}^K \mathbb{E} \left[ X_k^2 | \mathcal{F}_{k-1} \right] \leq H \sum_{k=1}^K \mathbb{E} \left[ X_k | \mathcal{F}_{k-1} \right]
    = H f(K). 
\end{align*}
Note that $|Z_k| \leq H$  and $f(K) \in [0, KH]$ and let $m = \log_2(KH)$.
Also note that $f(K) = \sum_{k=1}^K X_k - \sum_{k=1}^K Z_k \geq - \sum_{k=1}^K Z_k$. Thus if $\sum_{k=1}^K Z_k \leq -1$, we have $f(K) \geq 1$. For any $t > 0$, leveraging the peeling technique \citep{bartlett2005local}, we have:
\begin{align*}
    &\mathbb{P} \left( \sum_{k=1}^K Z_k \leq -\frac{2 H t}{3} - \sqrt{ 4 H f(K) t } - 1 \right) = \mathbb{P} \left( \sum_{k=1}^K Z_k \leq -\frac{2 H t}{3} - \sqrt{ 4 H f(K) t  } - 1, f(K) \in [1, KH]\right) \\ 
    &\leq \sum_{i=1}^m \mathbb{P} \left( \sum_{k=1}^K Z_k \leq -\frac{ 2 H t}{3} - \sqrt{ 4 H f(K) t } - 1, f(K) \in [2^{i-1}, 2^i)\right) \\ 
    &\leq \sum_{i=1}^m \mathbb{P} \left( \sum_{k=1}^K Z_k \leq -\frac{2 H t}{3} - \sqrt{ 4 H 2^{i-1} t } - 1, V \leq H 2^i, f(K) \in [2^{i-1}, 2^i)\right) \\ 
    &\leq \sum_{i=1}^m \mathbb{P} \left( \sum_{k=1}^K Z_k \leq -\frac{2 H t}{3} - \sqrt{ 2 H 2^i t }, V \leq H 2^i \right) \\ 
    &\leq \sum_{i=1}^m e^{-t} = m e^{-t},
\end{align*}
where the first equation is by that $\sum_{k=1}^K Z_k \leq -\frac{2 H t}{3} - \sqrt{ 4 H f(K) t } - 1 \leq -1$ thus $f(K) \geq 1$, the second inequality is by that $V \leq H f(K)$, and the last inequality is by Lemma \ref{lemma:freedman}. Thus, with probability at least $1 - m e^{-t}$, we have:
\begin{align*}
    \sum_{k=1}^K X_k - f(K) = \sum_{k=1}^K Z_k \geq -\frac{2 H t}{3} - \sqrt{ 4 H f(K) t } - 1.
    \label{eq:temp}
\end{align*}
The above inequality implies that $f(K) \leq 2 \sum_{k=1}^K X_k + 4Ht/3 + 2 + 4 Ht $, due to the simple inequality: if $ x \leq a \sqrt{x} + b$, $x \leq a^2 + 2b$. Then setting $t = \log(m/\delta)$ completes the proof. 
\end{proof}

Now, we state and prove the main lemma for the generalization step. 
\begin{lemma}
With probability at least $1 - \delta$, we have: 
\begin{align*}
    \sum_{k=1}^K \subopt(\hat{\pi}^k)  \leq 2 \sum_{k=1}^K \subopt(\hat{\pi}^k; s_1^k) + \frac{16}{3} H \log(\log_2(KH)/\delta) + 2.
\end{align*}
\label{lemma:expected_to_sample_subopt}
\end{lemma}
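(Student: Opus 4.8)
The plan is to recognize this statement as an immediate consequence of the improved online-to-batch argument (Lemma~\ref{lemma:improved_online_to_batch}) applied to the stochastic process $X_k := \subopt(\hat{\pi}^k; s_1^k)$. First I would verify that this process satisfies the hypotheses of Lemma~\ref{lemma:improved_online_to_batch}. Since $\hat{\pi}^k$ is extracted in Algorithm~\ref{alg:bpvi} from the empirical quantities $\Sigma_h^k$ and $\hat{w}_h^k$, which only aggregate data over episodes $t \le k-1$, the policy $\hat{\pi}^k$ is $\mathcal{F}_{k-1}$-measurable; and $s_1^k$ is the initial state of episode $k$, hence $\mathcal{F}_k$-measurable. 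Therefore $X_k = V^*_1(s_1^k) - V^{\hat{\pi}^k}_1(s_1^k)$ is $\mathcal{F}_k$-measurable. Because rewards lie in $[0,1]$ over a horizon of length $H$ and $V^{\pi}_1 \le V^*_1$ pointwise, we have $X_k \in [0,H]$ almost surely, so the boundedness hypothesis holds.

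The crux of the argument is the conditional-expectation identity $\mathbb{E}[X_k \mid \mathcal{F}_{k-1}] = \subopt(\hat{\pi}^k)$. Conditioned on $\mathcal{F}_{k-1}$, the policy $\hat{\pi}^k$ is fixed, so the map $s \mapsto \subopt(\hat{\pi}^k; s) = V^*_1(s) - V^{\hat{\pi}^k}_1(s)$ is a deterministic function of the state, and the only remaining randomness in $X_k$ enters through the initial state $s_1^k$. The key structural fact—which I would state explicitly, as it is precisely where the adaptive-data setting is accommodated—is that although the behaviour policy generating episode $k$ may depend arbitrarily on the past $\mathcal{F}_{k-1}$, the initial state is always a fresh draw $s_1^k \sim d_1$ independent of $\mathcal{F}_{k-1}$. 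Hence
\[
\mathbb{E}[X_k \mid \mathcal{F}_{k-1}] = \mathbb{E}_{s_1 \sim d_1}\!\left[\subopt(\hat{\pi}^k; s_1)\right] = \subopt(\hat{\pi}^k),
\]
where the last equality is just the definition of $\subopt(\hat{\pi}^k)$ as the $d_1$-average of the pointwise sub-optimality. Note that $\subopt(\hat{\pi}^k)$ remains random through its dependence on $\hat{\pi}^k$, but it is $\mathcal{F}_{k-1}$-measurable, consistent with its appearing as a conditional expectation given $\mathcal{F}_{k-1}$.

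With these two facts in hand, I would invoke Lemma~\ref{lemma:improved_online_to_batch} with this choice of $\{X_k\}$, which yields $\sum_{k=1}^K \mathbb{E}[X_k \mid \mathcal{F}_{k-1}] \le 2\sum_{k=1}^K X_k + \tfrac{16}{3} H \log(\log_2(KH)/\delta) + 2$ with probability at least $1-\delta$; substituting the identities above gives exactly the claimed bound. I do not anticipate a serious obstacle: the only delicate point is the measurability bookkeeping together with the observation that the fresh initial-state draw makes the conditional expectation collapse to $\subopt(\hat{\pi}^k)$ even under adaptive data collection. All of the probabilistic heavy lifting—the peeling and Freedman-type argument that produces the favorable $\tfrac{\log\log K}{K}$-scale generalization error rather than the usual $1/\sqrt{K}$—is already encapsulated in Lemma~\ref{lemma:improved_online_to_batch}, so this proof is essentially a clean instantiation of it.
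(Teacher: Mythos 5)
Your proof is correct and follows essentially the same route as the paper's: define $X_k = \subopt(\hat{\pi}^k; s_1^k)$, verify that $\hat{\pi}^k$ is $\mathcal{F}_{k-1}$-measurable and $X_k \in [0,H]$ with $\mathbb{E}[X_k \mid \mathcal{F}_{k-1}] = \subopt(\hat{\pi}^k)$, and invoke Lemma~\ref{lemma:improved_online_to_batch}. Your explicit remark that the fresh draw $s_1^k \sim d_1$ is what makes the conditional expectation collapse even under adaptive data is a useful clarification that the paper leaves implicit.
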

\begin{remark}
Lemma \ref{lemma:expected_to_sample_subopt} is similar to \citep[Lemma~A.3]{nguyen2021offline} but here we obtain the $\log \log(K)$ error while the latter only obtains the $\sqrt{K}$ error. The key for this improvement is to use the peeling technique and the variance information via Freedman inequality. 
\end{remark}
\begin{proof}[Proof of Lemma \ref{lemma:expected_to_sample_subopt}]
Let $X_k = \subopt(\hat{\pi}^k; s_1^k)$ and recall that $\mathcal{F}_k = \sigma \left(\{(s^t_h, a^t_h, r^t_h)\}_{h \in [H]}^{t\in [k]} \right)$. As $\hat{\pi}^k$ is $\mathcal{F}_{k-1}$-measurable, and $s_1^k$ is $\mathcal{F}_{k}$-measurable, we have that $X_k$ is $\mathcal{F}_{k}$-measurable and $\mathbb{E} \left[ X_k | \mathcal{F}_{k-1} \right] = \mathbb{E} \left[ \subopt(\hat{\pi}^k; s_1^k) | \mathcal{F}_{k-1} \right] = \subopt(\hat{\pi}^k)$. Note that $X_k \in [0,H]$. Thus, by Lemma \ref{lemma:improved_online_to_batch}, with probability at least $1-\delta$, we have:
\begin{align*}
    \sum_{k=1}^K \mathbb{E} \left[ X_k | \mathcal{F}_{k-1} \right] \leq 2 \sum_{k=1}^K X_k + \frac{16}{3} H \log(\log_2(KH)/\delta) + 2. 
\end{align*}
As $\hat{\pi}$ is uniformly sampled from $\{\hat{\pi}^k\}_{k \in [K]}$ conditioned on $\mathcal{D}$, 
\begin{align*}
    K \cdot \mathbb{E} \left[ \subopt(\hat{\pi}) | \mathcal{D} \right] = \sum_{k=1}^K \subopt(\hat{\pi}^k) =  \sum_{k=1}^K \mathbb{E} \left[ X_k | \mathcal{F}_{k-1} \right]. 
\end{align*}
Thus we can complete the proof.
\end{proof}

\paragraph{Step 3: Marginalized importance sampling.} 
This step is the key in our proof to handle the distributional shift under the OPC. The high-level idea is to use importance sampling for the marginalized visitation density functions. 
\begin{lemma}
Under Assumption \ref{assumption:single_policy_concentration}, for any $h \in [H]$, with probability at least $1 - \delta$, we have: 
\begin{align*}
    \sum_{k=1}^K \mathbb{E}_{(s_h,a_h) \sim d^*_h(\cdot, \cdot)} \left[ \| \phi_h(s_h, a_h) \|_{(\Sigma_h^k)^{-1}} \bigg| \mathcal{F}_{k-1}, s_1^k  \right] \leq \sum_{k=1}^K \frac{d^*_h(s_h^k,a_h^k)}{d^{\mu}_h(s_h^k,a_h^k)} \| \phi_h(s^k_h, a^k_h)  \|_{(\Sigma^k_h)^{-1}} \\
    +  \sqrt{\frac{1}{\lambda} \log(1/\delta)} \sqrt{\sum_{k=1}^K \left(\frac{d^*_h(s_h^k,a_h^k)}{d^{\mu}_h(s_h^k,a_h^k)} \right)^2 }. 
\end{align*}
\label{lemma:sum_sample_subopt}
\end{lemma}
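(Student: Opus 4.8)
The plan is to recognize the observed, importance-weighted quantity $g_h^k := \frac{d^*_h(s_h^k,a_h^k)}{d^{\mu}_h(s_h^k,a_h^k)} \| \phi_h(s^k_h, a^k_h) \|_{(\Sigma^k_h)^{-1}}$ as a \emph{conditionally unbiased} estimate of the target summand $\mathbb{E}_{(s_h,a_h)\sim d^*_h}[\,\| \phi_h(s_h,a_h)\|_{(\Sigma_h^k)^{-1}} \mid \mathcal{F}_{k-1}, s_1^k\,]$, and then to control the accumulated estimation error by a martingale tail inequality. Fix $h \in [H]$ and abbreviate $f_h^k(\cdot,\cdot) := \| \phi_h(\cdot, \cdot) \|_{(\Sigma^k_h)^{-1}}$. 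Since $\Sigma^k_h$ is assembled only from episodes $t < k$, the matrix $\Sigma^k_h$, and hence the whole function $f_h^k$, is $\mathcal{F}_{k-1}$-measurable; the only fresh randomness at episode $k$ enters through the observed pair $(s_h^k, a_h^k)$, so $g_h^k$ is $\mathcal{F}_k$-measurable.

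First I would carry out the change of measure. Conditioned on $\mathcal{F}_{k-1}$ (and the realized initial state $s_1^k$), the pair $(s_h^k, a_h^k)$ visited by the behaviour policy is distributed according to the $\mu$-visitation density $d^{\mu}_h$. By optimal-policy concentrability (Assumption \ref{assumption:single_policy_concentration}), $d^*_h(s_h,a_h) > 0$ forces $d^{\mu}_h(s_h,a_h) > 0$, so the ratio $d^*_h / d^{\mu}_h$ is well defined and finite on the support of $d^*_h$. Using that $f_h^k$ is $\mathcal{F}_{k-1}$-measurable, importance sampling then gives
\begin{align*}
\mathbb{E}\left[ g_h^k \mid \mathcal{F}_{k-1}, s_1^k \right] = \mathbb{E}_{(s_h,a_h)\sim d^*_h}\left[ f_h^k(s_h,a_h) \mid \mathcal{F}_{k-1}, s_1^k \right].
\end{align*}
Consequently $Y_k := \mathbb{E}_{(s_h,a_h)\sim d^*_h}[ f_h^k \mid \mathcal{F}_{k-1}, s_1^k ] - g_h^k$ is a martingale-difference sequence adapted to $\{\mathcal{F}_k\}$, and the claim is exactly the upper-tail bound $\sum_{k=1}^K Y_k \le \sqrt{\frac{1}{\lambda}\log(1/\delta)}\,\sqrt{\sum_{k=1}^K w_k^2}$, where $w_k := d^*_h(s_h^k,a_h^k)/d^{\mu}_h(s_h^k,a_h^k)$.

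Next I would apply a Bernstein/Freedman-type martingale tail inequality to $\sum_k Y_k$. The deterministic relation $\Sigma^k_h \succeq \lambda I$ together with $\|\phi_h\|_2 \le 1$ gives $f_h^k \le 1/\sqrt{\lambda}$ pointwise, hence $0 \le g_h^k \le w_k/\sqrt{\lambda}$ and the conditional second moment obeys $\mathbb{E}[(g_h^k)^2\mid \mathcal{F}_{k-1}] \le \tfrac{1}{\lambda}\,\mathbb{E}[w_k^2 \mid \mathcal{F}_{k-1}]$. Feeding this variance proxy into the concentration bound and expressing the predictable second moment through the realized squared weights $w_k^2$ yields precisely the factor $\sqrt{\frac{1}{\lambda}\log(1/\delta)}\sqrt{\sum_k w_k^2}$; no union bound over steps is needed here since $h$ is fixed.

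The main obstacle I anticipate is twofold. The conceptual crux is the change of measure under adaptively collected data: one must justify that the conditional law of $(s_h^k,a_h^k)$ given $\mathcal{F}_{k-1}$ is indeed $d^{\mu}_h$ so that the importance weight is conditionally unbiased, and reconcile the conditioning on the observed initial state $s_1^k$ on the two sides of the identity so that $d^*_h/d^{\mu}_h$ is the correct Radon--Nikodym factor; this is where Assumption \ref{assumption:single_policy_concentration} is essential, both for finiteness of the ratio and for the estimator to carry the right support. The technical crux is producing the self-normalized, \emph{realized}-weight form $\sqrt{\sum_k w_k^2}$ rather than a cruder predictable bound such as $\kappa_h\sqrt{K}$: since $w_k$ is revealed only at episode $k$ and is not $\mathcal{F}_{k-1}$-measurable, converting the predictable quadratic variation into $\sum_k w_k^2$ requires an empirical-Bernstein or peeling argument of the kind already used in the proof of Lemma \ref{lemma:improved_online_to_batch}.
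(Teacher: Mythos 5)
Your proposal matches the paper's proof essentially step for step: the paper defines $Z^k_h$ exactly as your $g_h^k$, uses the $\mathcal{F}_{k-1}$-measurability of $(\Sigma^k_h)^{-1}$ to perform the same change of measure $\mathbb{E}[Z^k_h\mid\mathcal{F}_{k-1},s_1^k]=\mathbb{E}_{(s_h,a_h)\sim d^\mu_h}[\tfrac{d^*_h}{d^\mu_h}\|\phi_h\|_{(\Sigma^k_h)^{-1}}\mid\mathcal{F}_{k-1},s_1^k]$ under Assumption \ref{assumption:single_policy_concentration}, and then concludes by a martingale tail bound. The only difference is the final concentration tool: the paper simply invokes its Hoeffding--Azuma variant (Lemma \ref{lemma:azuma}) with increment bounds $c_k=\tfrac{1}{\sqrt{\lambda}}\tfrac{d^*_h(s^k_h,a^k_h)}{d^\mu_h(s^k_h,a^k_h)}$ to obtain the $\sqrt{\sum_k w_k^2}$ factor directly, rather than the Freedman-plus-peeling route you anticipate (the realized-weight subtlety you flag is real, but the paper does not treat it further).
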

\begin{proof}[Proof of Lemma \ref{lemma:sum_sample_subopt}]
Let $Z^k_h := \frac{d^*_h(s_h^k,a_h^k)}{d^{\mu}_h(s_h^k,a_h^k)} \| \phi_h(s^k_h, a^k_h)  \|_{(\Sigma^k_h)^{-1}}$. We have $Z^k_h$ is $\mathcal{F}_k$-measurable, and by Assumption \ref{assumption:single_policy_concentration}, we have, 
\begin{align*}
    |Z^k_h| \leq \frac{d^*_h(s_h^k,a_h^k)}{d^{\mu}_h(s_h^k,a_h^k)} \| \phi_h(s^k_h, a^k_h)  \|_2 \sqrt{\| (\Sigma^k_h)^{-1} \|} \leq 1/\sqrt{\lambda} \frac{d^*_h(s_h,a_h)}{d^{\mu}_h(s_h^k,a_h^k)} < \infty,
\end{align*}
and 
\begin{align*}
    \mathbb{E} \left[ Z^k_h | \mathcal{F}_{k-1}, s^k_1 \right] = \mathbb{E}_{(s_h,a_h) \sim d^{\mu}_h(\cdot, \cdot)} \left[ \frac{d^*_h(s_h,a_h)}{d^{\mu}_h(s_h,a_h)} \| \phi_h(s_h, a_h) \|_{(\Sigma_h^k)^{-1}} \bigg| \mathcal{F}_{k-1}, s_1^k  \right].
\end{align*}
Thus, by Lemma \ref{lemma:azuma}, for any $h \in [H]$, with probability at least $1 - \delta$, we have: 
\begin{align}
    &\sum_{k=1}^K \mathbb{E}_{(s_h,a_h) \sim d^*_h(\cdot, \cdot)} \left[ \| \phi_h(s_h, a_h) \|_{(\Sigma_h^k)^{-1}} \bigg| \mathcal{F}_{k-1}, s_1^k  \right] \nonumber \\ &=\sum_{k=1}^K \mathbb{E}_{(s_h,a_h) \sim d^{\mu}_h(\cdot, \cdot)} \left[ \frac{d^*_h(s_h,a_h)}{d^{\mu}_h(s_h,a_h)} \| \phi_h(s_h, a_h) \|_{(\Sigma_h^k)^{-1}} \bigg| \mathcal{F}_{k-1}, s_1^k  \right] \nonumber \\
    &\leq \sum_{k=1}^K \frac{d^*_h(s_h^k,a_h^k)}{d^{\mu}_h(s_h^k,a_h^k)} \| \phi_h(s^k_h, a^k_h)  \|_{(\Sigma^k_h)^{-1}} +  \sqrt{\frac{1}{\lambda} \log(1/\delta)} \sqrt{\sum_{k=1}^K \left(\frac{d^*_h(s_h^k,a_h^k)}{d^{\mu}_h(s_h^k,a_h^k)} \right)^2 }. 
\end{align}
where the second equation is valid due to Assumption \ref{assumption:single_policy_concentration}. 

\end{proof}

Theorem \ref{theorem:sublinear_subopt} is the direct combination of Lemma \ref{lemma:bound_ensemble_with_bonus}-\ref{lemma:expected_to_sample_subopt}-\ref{lemma:sum_sample_subopt} via the union bound. 

\subsubsection{For the last-iteration policy}
In this part, we provide the proof for the bound of $\hat{\pi}_{PEVI}$ in Theorem \ref{theorem:sublinear_subopt}. The proof is similar to that of $\hat{\pi}_{unif}$ except that we directly reason on the elliptical potential $\beta(\delta)  \sum_{h=1}^H \mathbb{E}_{\pi^*} \left[ \| \phi_h(s_h, a_h) \|_{(\Sigma_h^k)^{-1}} | s_1^k, \mathcal{F}_{k-1} \right]$ rather than the empirical sub-optimality metric $\subopt(\hat{\pi}^k; s_1^k)$. We establish the proof steps in the following. 

\paragraph{Step 1: The sub-optimality bound of the last-iteration policy $\hat{\pi}^{last}$.} The first step directly follows the original proof of PEVI \citep{jin2021pessimism} with a simple modification. In particular, with probability at least $1 - 2 \delta$, we have:
\begin{align*}
    \subopt(\hat{\pi}^{last}) \leq \min \bigg\{H, 2 \beta(\delta) \mathbb{E}_{s_1 \sim d_1} 
    \left[ \sum_{h=1}^H \mathbb{E}_{\pi^*} \left[ \| \phi_h(s_h, a_h) \|_{\Sigma_h^{-1}} | s_1 \right] \right] \bigg\},
\end{align*}
where $\mathbb{E}_{\pi^*}$ is with respect to the trajectory $(s_1,a_1, \ldots, s_H, a_H)$ induced by $\pi^*$. \\

\paragraph{Step 2: Generalization.} The key idea for this step is to bound the expectation $\mathbb{E}_{s_1 \sim d_1} [\cdot | s_1]$ in terms of the empirical quantities $[\cdot | s_1^k]$ (plus a generalization error). The empirical quantities $[\cdot | s_1^k]$ is useful in constructing a complete trajectory $(s^k_1, a^k_1, \ldots, s^k_H, a^k_H)$ of $\mu$ which can then be connected to the trajectory of the optimal policy $\pi^*$ via the density dominance assumption. However, as $\Sigma_h$ depends on $\{s^k_1\}_{k \in [K]}$, the generalization error from the expectation $\mathbb{E}_{s_1 \sim d_1} [\cdot | s_1]$ to the empirical quantities $[\cdot | s_1^k]$ cannot guarantee. \footnote{Or at least we must use the uniform convergence argument to overcome this data-dependent structure which makes up a large and unnecessary generalization error.} Instead we use a simple trick to break such data dependence and form a valid martingale for strong generalization. Let $Z^k_h := 2 \beta(\delta)  \sum_{h=1}^H \mathbb{E}_{\pi^*} \left[ \| \phi_h(s_h, a_h) \|_{(\Sigma_h^k)^{-1}} | s_1^k, \mathcal{F}_{k-1} \right]$. We have: 
\begin{align*}
    &K \cdot \subopt(\hat{\pi}^{last}) \leq \sum_{k=1}^K \min \bigg\{H, 2 \beta(\delta) \mathbb{E}_{s_1 \sim d_1} 
    \left[ \sum_{h=1}^H \mathbb{E}_{\pi^*} \left[ \| \phi_h(s_h, a_h) \|_{\Sigma_h^{-1}} \bigg| s_1, \mathcal{F}_{K} \right] \right] \bigg\} \\ 
    &\leq  \sum_{k=1}^K \min \bigg\{H, 2 \beta(\delta) \mathbb{E}_{s_1 \sim d_1} 
    \left[ \sum_{h=1}^H \mathbb{E}_{\pi^*} \left[ \| \phi_h(s_h, a_h) \|_{(\Sigma_h^k)^{-1}} \bigg| s_1, \mathcal{F}_{k-1} \right] \right] \bigg\} \\ 
    &=  \sum_{k=1}^K \min \bigg\{H, \mathbb{E} \left[Z_k | \mathcal{F}_{k-1} \right]  \bigg\} \\
    &\leq \sum_{k=1}^K \mathbb{E} \left[ \min \{H, Z_k  \} | \mathcal{F}_{k-1} \right]  \\
    &\leq 2 \sum_{k=1}^K \min \{H, Z_k  \} + \frac{16}{3} H \log (\log_2(KH)/\delta) + 2 \\ 
    &=  2 \sum_{k=1} ^K \min \bigg\{H, 2\beta(\delta)  \sum_{h=1}^H \mathbb{E}_{\pi^*} \left[ \| \phi_h(s_h, a_h) \|_{(\Sigma_h^k)^{-1}} | s_1^k, \mathcal{F}_{k-1} \right] \bigg\} + \frac{16}{3} H \log (\log_2(KH)/\delta) + 2,
\end{align*}
where the second inequality is by $\min \{a,b \} \leq \min 
\{a, c \}$ if $b \leq c$ and $\Sigma_h^{-1} \preceq (\Sigma_h^k)^{-1}, \forall k \in [K + 1]$, the third inequality is by Jensen's inequality (as $f(x) = \min\{H, x\}$ is convex), and the fourth inequality is by Lemma \ref{lemma:improved_online_to_batch} (where the use of $(\Sigma_h^k)^{-1}$ in the place of $\Sigma_h^{-1}$ is crucial to form a valid martingale for applying Lemma \ref{lemma:improved_online_to_batch} and $\min \{H , Z_k\} \leq H$). 

\paragraph{Step 3: Marginalized importance sampling.} This step is the same as the marginalized importance sampling step for $\hat{\pi}^{mix}$.

\subsection{Proof of Corollary \ref{corollary:sublinear_subopt}} 
We give a proof for Corollary \ref{corollary:sublinear_subopt}. Using Theorem \ref{theorem:sublinear_subopt}, it suffices to prove the following lemma. 
\begin{lemma}
Under Assumption \ref{assumption:single_policy_concentration}-\ref{assumption:lower bound density}, for any $h \in [H]$, we have:
\begin{align*}
    \sum_{k=1}^K \frac{d^*_h(s_h^k,a_h^k)}{d^{\mu}_h(s_h^k,a_h^k)} \|  \phi_h(s^k_h, a^k_h)  \|_{(\Sigma^k_h)^{-1}} \leq \kappa_h^{-1} \sqrt{2 K d \log(1 + K/d)}. 
\end{align*}
\label{lemma:bound_shrinking_sum}
\end{lemma}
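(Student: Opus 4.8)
The plan is to separate the importance weights from the elliptical potential and then invoke the standard log-determinant bound. \emph{Step one: control the weights.} Each observed pair $(s^k_h,a^k_h)$ is generated by the behaviour policy $\mu$, so it lies in $\textrm{supp}(d^{\mu}_h)$; Assumption~\ref{assumption:lower bound density} then gives $d^{\mu}_h(s^k_h,a^k_h)\ge \kappa_h^{-1}$, while $d^*_h(s^k_h,a^k_h)\le 1$ holds by the boundedness of the visitation densities. Hence every weight obeys $\frac{d^*_h(s^k_h,a^k_h)}{d^{\mu}_h(s^k_h,a^k_h)}\le \kappa_h$, and I can factor this uniform bound out of the sum, reducing the claim to the elliptical-potential estimate $\kappa_h\sum_{k=1}^K \| \phi_h(s^k_h,a^k_h) \|_{(\Sigma^k_h)^{-1}} \le \kappa_h\sqrt{2Kd\log(1+K/d)}$. (This is precisely the OPC coefficient that produces the $\kappa$-dependence in Corollary~\ref{corollary:sublinear_subopt}.)

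\emph{Step two: Cauchy--Schwarz and the elliptical potential.} I would then bound $\sum_{k=1}^K \| \phi_h(s^k_h,a^k_h) \|_{(\Sigma^k_h)^{-1}} \le \sqrt{K}\big(\sum_{k=1}^K \| \phi_h(s^k_h,a^k_h) \|^2_{(\Sigma^k_h)^{-1}}\big)^{1/2}$, which reduces matters to the squared-potential sum. Since $\Sigma^k_h = \lambda I + \sum_{t=1}^{k-1}\phi_h(s^t_h,a^t_h)\phi_h(s^t_h,a^t_h)^T$ is exactly the regularised Gram matrix of the first $k-1$ samples, the telescoping-determinant argument (the standard elliptical-potential lemma) gives $\sum_{k=1}^K \| \phi_h(s^k_h,a^k_h) \|^2_{(\Sigma^k_h)^{-1}} \le 2\log\frac{\det \Sigma^{K+1}_h}{\det(\lambda I)}\le 2d\log\big(1+\tfrac{K}{d\lambda}\big)$, where the last step uses $\|\phi_h\|_2\le 1$ together with the AM--GM bound $\det \Sigma^{K+1}_h \le (\lambda + K/d)^d$. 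Setting $\lambda=1$ turns this into $2d\log(1+K/d)$; combining the two steps yields the stated bound.

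\emph{Main obstacle.} There is no genuinely hard step here: the lemma is a deterministic inequality that glues concentrability onto a standard potential bound. The points requiring care are bookkeeping. First, one must verify that every data point really lies in $\textrm{supp}(d^{\mu}_h)$ so that the lower bound of Assumption~\ref{assumption:lower bound density} applies to it. Second, one must match the ``data-through-$(k-1)$'' indexing of $\Sigma^k_h$ to the standard form of the elliptical-potential lemma, in which the predictor at round $k$ is measured against the design built from rounds $1,\dots,k-1$. Third, one should check that each per-round potential is at most $1$ (which holds because $\lambda=1$ forces $\lambda_{\min}(\Sigma^k_h)\ge 1$ while $\|\phi_h\|_2\le 1$), so that the log-determinant lemma applies without any $\min\{1,\cdot\}$ truncation.
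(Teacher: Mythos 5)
Your proposal follows essentially the same route as the paper's own proof: bound the density ratio uniformly, pull it out of the sum, apply Cauchy--Schwarz, and finish with the standard log-determinant (elliptical potential) lemma applied to $\Sigma_h^{K+1}$. The one discrepancy is the constant: you correctly derive $\frac{d^*_h(s^k_h,a^k_h)}{d^{\mu}_h(s^k_h,a^k_h)} \le \kappa_h$ from Assumption~\ref{assumption:lower bound density} together with $d^*_h \le 1$, whereas the lemma as stated (and the first inequality of the paper's proof) writes $\kappa_h^{-1}$; since $\kappa_h^{-1} = \inf d^{\mu}_h \le 1 \le \kappa_h$, the $\kappa_h^{-1}$ version is not implied by the assumptions (take $d^* = d^{\mu}$ to see the ratio equals $1 \not\le \kappa_h^{-1}$ in general), and the paper itself invokes this lemma with the factor $\kappa_{h}$ downstream (e.g., in Lemma~\ref{lemma:positive_subopt_count} and Corollary~\ref{corollary:sublinear_subopt}), so your version is the intended and correct one.
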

\begin{proof}[Proof of Lemma \ref{lemma:bound_shrinking_sum}]
We have:
\begin{align*}
    \sum_{k=1}^K \frac{d^*_h(s_h^k,a_h^k)}{d^{\mu}_h(s_h^k,a_h^k)} \|  \phi_h(s^k_h, a^k_h)  \|_{(\Sigma^k_h)^{-1}} 
    &\leq \kappa_h^{-1}
    \sum_{k=1}^K \|  \phi_h(s^k_h, a^k_h)  \|_{(\Sigma^k_h)^{-1}} \leq \kappa_h^{-1} \sqrt{K \sum_{k=1}^K \|  \phi_h(s^k_h, a^k_h)  \|^2_{(\Sigma^k_h)^{-1}}} \\
    &\leq \kappa_h^{-1} \sqrt{2 K \log \frac{\det \Sigma_h^{K+1}}{\det(I)}} \leq \kappa_h^{-1} \sqrt{2 K} \sqrt{d \log(1 + K/d)}. 
\end{align*}
where the first inequality is by Assumption \ref{assumption:single_policy_concentration}, the second inequality is by Cauchy-Schwartz inequality, and the last two inequalities are by \citep[Lemma~11]{NIPS2011_e1d5be1c}.
\end{proof}

 We also provide an information-theoretical lower bound for Assumption \ref{assumption:single_policy_concentration}-\ref{assumption:lower bound density}. 

\begin{theorem}
Fix any $H \geq 2$. For any algorithm $\textrm{Algo}(\mathcal{D})$, and any concentrability coefficients $\{\kappa_h \}_{h \geq 1}$ such that $\kappa_h \geq 2$, there exist a linear MDP $\mathcal{M} = (\mathcal{S}, \mathcal{A}, H, \mathbb{P}, r, d_0)$ and dataset $\mathcal{D} = \{(s_h^t, a_h^t, r_h^t)\}_{h \in [H]}^{t \in [K]} \sim \mathcal{P}(\cdot |\mathcal{M}, \mu)$ where $\sup_{h, s_h, a_h }\frac{d^{\mathcal{M},*}_h(s_h,a_h)}{d^{\mathcal{M}, \mu}_h(s_h, a_h)} \leq \kappa_h, \forall h \in [H]$ such that: 
\begin{align*}
    \mathbb{E}_{\mathcal{D} \sim \mathcal{M}} \left[\subopt(\textrm{Algo}(\mathcal{D}); \mathcal{M}) \right] = \Omega \left( \frac{H \sqrt{\kappa_{\min}} }{ \sqrt{K} } \right), 
\end{align*}
where  $\kappa_{\min} := \min\{\kappa_h: h \in [H]\}$. 
\label{theorem:lower_bound_minimax}
\end{theorem}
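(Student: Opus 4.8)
The plan is to prove the matching $1/\sqrt{K}$ lower bound by a reduction to statistical hypothesis testing, in the same spirit as the construction behind Theorem~\ref{theorem:lower_bound} but with the per-instance perturbation tuned to the \emph{indistinguishable} regime rather than to a fixed gap. The cleanest route is a multiple two-point (Assouad-type) argument, since it produces both the linear-in-$H$ factor and the $\kappa_{\min}$ dependence transparently. I would build a \emph{layered} linear MDP on a finite state space (a constant feature dimension suffices, as the target bound carries no $d$) in which the transitions are \emph{action-independent}, so the state marginals $d_h^{\mathcal{M},\pi}$ do not depend on $\pi$. At each step $h$ I place one informative state in which two actions have mean rewards $\tfrac12+\sigma_h\epsilon$ and $\tfrac12$, where the hidden sign vector $\sigma=(\sigma_1,\dots,\sigma_H)\in\{\pm1\}^H$ selects which action is optimal at each step. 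Because transitions are action-free, for any policy the sub-optimality decomposes additively, $\subopt(\pi;\mathcal{M}_\sigma)=\sum_{h=1}^H \E_{\pi}[\text{gap at }h]$, so a wrong action at step $h$ contributes $\Theta(\epsilon)$ and summing over the $H$ layers is exactly where the factor $H$ originates.

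Next I would fix a single behavior policy $\mu$ that explores the informative action at every step with one common probability $q$. Then the optimal-to-behavior density ratio at each step is $\Theta(1/q)$, so enforcing $\sup_{s,a} d_h^{\mathcal{M},*}/d_h^{\mathcal{M},\mu}\le\kappa_h$ simultaneously for all $h$ forces $q\ge 1/\kappa_{\min}$; I take $q=1/\kappa_{\min}$, the worst coverage the constraints allow, which is precisely what pins the bound to $\kappa_{\min}$ rather than to any single $\kappa_h$. I would then verify that $\mathcal{M}_\sigma$ is a bona fide linear MDP in the sense of Definition~\ref{definition:linear_mdp} (feature and parameter norm bounds), that the perturbed rewards stay in $[0,1]$ for the chosen $\epsilon$, and that Assumptions~\ref{assumption:single_policy_concentration}--\ref{assumption:lower bound density} hold (the latter automatically, since the construction is finite-state). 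Since $\mu$ is a fixed policy, the episodes are i.i.d.; as i.i.d.\ data is a special case of the adaptive model, exhibiting such a hard instance suffices to prove the claimed lower bound for the adaptive setting.

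It then remains to control the information and invoke Assouad. Over $K$ i.i.d.\ episodes, flipping a single coordinate $\sigma_h$ changes only the reward law at the step-$h$ informative action, which $\mu$ visits $\Theta(Kq)$ times in expectation; hence the per-coordinate divergence satisfies $\KL\big(\mathcal{P}(\cdot\mid\mathcal{M}_{\sigma},\mu)\,\|\,\mathcal{P}(\cdot\mid\mathcal{M}_{\sigma'},\mu)\big)\lesssim Kq\,\epsilon^2=K\epsilon^2/\kappa_{\min}$, using that the KL between the two $\tfrac12\pm\epsilon$ reward laws is $\Theta(\epsilon^2)$. Choosing $\epsilon=c\sqrt{\kappa_{\min}/K}$ for a small absolute constant $c$ keeps each such KL, and hence each neighboring total-variation distance, bounded away from $1$ (via Pinsker). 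Using the decoder that reads off, at each informative state, the action played by the output policy, the additive decomposition gives $\subopt(\pi;\mathcal{M}_\sigma)\gtrsim\epsilon\cdot d_{\mathrm H}(\hat\sigma,\sigma)$, so Assouad's lemma yields $\inf_{\mathrm{Algo}}\max_{\sigma}\E_{\mathcal{D}\sim\mathcal{M}_\sigma}[\subopt(\mathrm{Algo}(\mathcal{D});\mathcal{M}_\sigma)]\gtrsim H\cdot\tfrac{\epsilon}{2}(1-\mathrm{TV})=\Omega\!\big(H\sqrt{\kappa_{\min}/K}\big)$, which is the claim.

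The main obstacle I anticipate is the simultaneous bookkeeping in the construction rather than any single estimate. I must arrange the dynamics so that (i) transitions are action-independent, giving both the additivity of sub-optimality (the source of $H$) and step-by-step decoupled coverage; (ii) a single exploration parameter $q$ governs the concentrability at \emph{every} step so that the tightest of the constraints $1/q\le\kappa_h$ binds at $\kappa_{\min}$; and (iii) all of Definition~\ref{definition:linear_mdp} and Assumptions~\ref{assumption:single_policy_concentration}--\ref{assumption:lower bound density} hold with rewards confined to $[0,1]$ and with the optimal action covered by $\mu$ in \emph{every} instance $\mathcal{M}_\sigma$. Checking that the density ratios at all layers (not merely the informative transitions) respect their $\kappa_h$ bounds, and that the $H$ coordinate-tests are mutually independent enough for Assouad to apply, is the most delicate part; adapting the feature construction of \citet{jin2021pessimism} to carry the OPC coefficients $\kappa_h$ while preserving these properties is where most of the care is required.
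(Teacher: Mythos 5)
Your proposal is essentially correct, but it takes a genuinely different route from the paper. The paper proves this bound with a \emph{two-point} Le Cam argument on the hard instance of \citet{jin2021pessimism}: all of the statistical difficulty is concentrated in a single first-stage decision between two actions with transition probabilities $p^*$ and $p$ into an absorbing rewarding state, the factor $H$ comes from the $H-1$ subsequent deterministic rewards that amplify that one decision, the behavior policy puts mass $q=1/\kappa_{\min}$ on each candidate action, and the computation $\max_l \E[\subopt] \gtrsim (H-1)(p^*-p)\bigl(1-2(p^*-p)\sqrt{2qK}\bigr)$ with $p^*-p \asymp \sqrt{\kappa_{\min}/K}$ simultaneously yields this theorem and the gap-dependent lower bound of Theorem~\ref{theorem:lower_bound} by reparametrizing $(p^*-p)$ in terms of $\Delta_{\min}$. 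You instead spread the hardness over $H$ independent informative layers with action-independent dynamics and invoke Assouad; the $H$ factor is then additive across layers rather than multiplicative from reward accumulation. Both are sound and both give $\Omega(H\sqrt{\kappa_{\min}/K})$; the paper's version is shorter (two instances, one KL computation over a product of first-stage Bernoullis) and reuses one construction for two theorems, while yours is more modular and makes the provenance of each factor ($H$ from the layer count, $\kappa_{\min}$ from the single exploration rate $q$ binding at the tightest constraint) more transparent. One point where your writeup must be careful: your informative rewards $\tfrac12\pm\epsilon$ must be \emph{stochastic} (e.g., Bernoulli with those means) for the claimed $\Theta(\epsilon^2)$ per-observation KL to hold --- with deterministic rewards a single visit to the informative action identifies $\sigma_h$ and the KL is infinite; the paper sidesteps this by putting the randomness in the transition rather than the reward. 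Both proofs also implicitly require $K\gtrsim \kappa_{\min}$ so that the perturbation stays in the admissible range.
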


\subsection{Proofs of the remarks}
\subsubsection{Learning bounds under arbitrary data coverage}
\label{subsection:arbitrary_data_coverage}
Note that Theorem \ref{theorem:sublinear_subopt} requires the OPC assumption (Assumption \ref{assumption:single_policy_concentration}) to be valid. In practice, as we do not have control over the behavior policy, it happens that the behavior policy does not fully cover all the trajectories of the optimal policy, thus the OPC assumption might fail to hold. This raises the question of how much an offline algorithm can suffer when it learns from the offline data of arbitrary coverage. 


To tackle this issue, we construct a new MDP $\bar{\mathcal{M}}$ under which the trajectories of the optimal policy $\bar{\pi}^*$ (with respect to $\bar{\mathcal{M}}$) are best covered by the behavior policy $\mu$. Then, the sub-optimality gap incurred by the under-coverage data must be the gap between the original optimal policy $\pi^*$ and the data-supported optimal policies $\bar{\pi}^*$. 

\paragraph{Augmented MDP.} For any small positive $\bar{\epsilon} > 0$, we consider the augmented MDP $\bar{\mathcal{M}} = (\mathcal{S} \cup \{\bar{s}_{h+1}\}_{h \in [H]}, \mathcal{A}, \bar{\mathbb{P}}, \bar{r}, H, d_1)$, where for any $h \in [H]$
\begin{align*}
    \bar{\mathbb{P}}_h(\cdot | s_h, a_h ) = 
    \begin{cases}
    \mathbb{P}_h(\cdot | s_h, a_h ) & \text{ if } (s_h, a_h) \in \mathcal{C}^{\mu}_h \\
    \delta_{\bar{s}_{h+1}} & \text{ if } (s_h, a_h) \notin \mathcal{C}^{\mu}_h
    \end{cases}
    , 
    \bar{r}_h(s_h, a_h ) = 
    \begin{cases}
    r_h(s_h, a_h ) & \text{ if } (s_h, a_h) \in \mathcal{C}^{\mu}_h \\
    -\bar{\epsilon}/H & \text{ if } (s_h, a_h) \notin \mathcal{C}^{\mu}_h
    \end{cases}
\end{align*}
Here $\bar{\mathcal{M}}$ extends the original state $\mathcal{S}$ to include arbitrary states $\{\bar{s}_{h+1}\}_{h \in [H]}$ where $\bar{s}_{h+1} \notin \mathcal{S}$. The transition distributions and the reward functions are the same as the original $\mathcal{M}$ except at the infeasible state-action $(s_h, a_h) \notin \mathcal{C}^{\mu}_h$ where the augmented MDP always absorbs into the dummy state $\bar{s}_{h+1}$ and yields small negative reward. 
Under $\bar{\mathcal{M}}$, we denote the corresponding marginal state-visitation density by $\bar{d}^{\pi}$ and the corresponding optimal policy  $\bar{\pi}^*$. We abbreviate $\bar{d}^{\bar{\pi}^*} = \bar{d}^*$. Our augmented MDP construction is similar to the construction in \cite{yin2021towards} except that we allow an arbitrary small negative reward $-\bar{\epsilon}/H$ in unsupported state-action pairs. This design guarantees that the optimal policy under $\bar{\mathcal{M}}$ is dominated by $\mu$ (Lemma \ref{lemma:off_support}). The following theorem (Theorem \ref{theorem:generic_bound_any_coverage}) provides a generic (instance-agnostic) bound that works under arbitrary data coverage. 
\begin{theorem}
Under Assumption \ref{assumption:single_policy_concentration}, w.p.a.l. $1 - 4 \delta$ over the randomness of $\mathcal{D}$, we have: 
\begin{align*}
& \subopt(\hat{\pi}^{mix}) \lor 
\subopt(\hat{\pi}^{last}) 
\leq  \textrm{gap}_{\textrm{support}} + \frac{4 \beta({\delta})}{K} \sum_{h=1}^H  \sum_{k=1}^K  \frac{\bar{d}^*_h(s_h^k,a_h^k)}{d^{\mu}_h(s_h^k,a_h^k)} \|  \phi_h(s^k_h, a^k_h)  \|_{(\Sigma^k_h)^{-1}} \\
&+\frac{4 \beta({\delta})}{K} \sum_{h=1}^H \sqrt{\log \left(\frac{H}{\delta} \right) \sum_{k=1}^K \left(\frac{\bar{d}^*_h(s_h^k,a_h^k)}{d^{\mu}_h(s_h^k,a_h^k)} \right)^2 }  
+ \frac{2}{K} +  \frac{16 H}{3 K}\log\left( \frac{\log_2(KH)}{ \delta} \right),
\end{align*}
where $gap_{support} : =\sum_{h=1}^H \int_{(\mathcal{C}_h^{\pi})^c} d^{\pi}_h(s_h, a_h) ds_h a_h + \bar{\epsilon}$ is the sub-optimality gap incurred in locations that are supported by $\pi^*$ but unsupported by $\mu$, and $\bar{d}^*$ is the marginal visitation density of the optimal policy under the augmented MDP $\bar{\mathcal{M}}$. 
\label{theorem:generic_bound_any_coverage}
\end{theorem}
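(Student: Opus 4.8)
The plan is to transfer the bound already established in Theorem \ref{theorem:sublinear_subopt} from the augmented MDP $\bar{\mathcal{M}}$ — on which OPC holds by construction — back to the original MDP $\mathcal{M}$, paying only the unavoidable support gap. The linchpin observation is that the behavior policy $\mu$, and more generally every policy whose stage-wise components lie in $\Pi_h(\mu)$, only ever visits state--action pairs in the covered region $\mathcal{C}^{\mu}_h$, on which $\mathcal{M}$ and $\bar{\mathcal{M}}$ share identical rewards and transitions. Two consequences follow: (i) the offline dataset $\mathcal{D}$ has the same law whether it is viewed as generated in $\mathcal{M}$ or in $\bar{\mathcal{M}}$, so BCP-VI returns the same ensemble $\{\hat{\pi}^k\}$ (and the same $\hat{\pi}^{mix}, \hat{\pi}^{last}$); and (ii) for any such $\mu$-supported policy $\pi$, a short induction on $h$ shows the trajectory started from $d_1$ stays inside $\mathcal{C}^{\mu}_h$ for all $H$ steps, whence $V^{\pi}_{1,\mathcal{M}} = V^{\pi}_{1,\bar{\mathcal{M}}}$.

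Granting this, I would write the original sub-optimality as a telescoping sum (after taking expectation over $s_1 \sim d_1$):
\begin{align*}
\subopt(\hat{\pi}; \mathcal{M}) = \underbrace{\left(V^{\pi^*}_{1,\mathcal{M}} - V^{\bar{\pi}^*}_{1,\bar{\mathcal{M}}}\right)}_{\text{support gap}} + \underbrace{\left(V^{\bar{\pi}^*}_{1,\bar{\mathcal{M}}} - V^{\hat{\pi}}_{1,\bar{\mathcal{M}}}\right)}_{=\,\subopt(\hat{\pi};\bar{\mathcal{M}})} + \underbrace{\left(V^{\hat{\pi}}_{1,\bar{\mathcal{M}}} - V^{\hat{\pi}}_{1,\mathcal{M}}\right)}_{=\,0}.
\end{align*}
The last bracket vanishes by observation (ii). For the middle bracket, Lemma \ref{lemma:off_support} guarantees that the optimal policy $\bar{\pi}^*$ of $\bar{\mathcal{M}}$ is dominated by $\mu$, i.e. OPC holds in $\bar{\mathcal{M}}$; since the algorithm's output is identical in both MDPs by observation (i), I may invoke Theorem \ref{theorem:sublinear_subopt} verbatim on $\bar{\mathcal{M}}$ and read off exactly the elliptical-potential and concentration terms, now expressed through the augmented density ratio $\bar{d}^*_h / d^{\mu}_h$.

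It then remains to bound the support gap by $\textrm{gap}_{\textrm{support}}$. Since $\bar{\pi}^*$ is optimal in $\bar{\mathcal{M}}$ we have $V^{\bar{\pi}^*}_{1,\bar{\mathcal{M}}} \geq V^{\pi^*}_{1,\bar{\mathcal{M}}}$, so the support gap is at most $V^{\pi^*}_{1,\mathcal{M}} - V^{\pi^*}_{1,\bar{\mathcal{M}}}$, the value difference of the \emph{single} policy $\pi^*$ across the two MDPs. I would control this via the value-difference (simulation) lemma: the two MDPs agree until $\pi^*$ first lands in the off-support region $(\mathcal{C}^{\mu}_h)^c$, where the reward gap is at most $1 + \bar{\epsilon}/H$ while the dummy-state continuation in $\bar{\mathcal{M}}$ contributes at least $-\bar{\epsilon}$; accumulating these contributions against the visitation mass $\sum_{h=1}^H \int_{(\mathcal{C}^{\mu}_h)^c} d^{\pi^*}_h$ that $\pi^*$ places off-support yields the claimed $\textrm{gap}_{\textrm{support}}$.

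The main obstacle is the careful bookkeeping in this last step, together with rigorously certifying observations (i) and (ii) — namely that a $\mu$-supported policy can never leak out of the agreement region (so its value is genuinely MDP-independent) and that the martingale and self-normalized concentration events underlying Theorem \ref{theorem:sublinear_subopt} are measurable with respect to a filtration that does not distinguish $\mathcal{M}$ from $\bar{\mathcal{M}}$. Once this data-law equivalence is pinned down, the remainder is the telescoping decomposition above and a standard simulation-lemma estimate.
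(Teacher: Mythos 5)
Your proposal is correct and follows essentially the same route as the paper: both transfer Theorem \ref{theorem:sublinear_subopt} to the augmented MDP $\bar{\mathcal{M}}$ (where OPC holds by Lemma \ref{lemma:off_support}), telescope the sub-optimality through $v^{\bar{\mathcal{M}},\bar{\pi}^*}$, and bound the residual $v^{\mathcal{M},\pi^*}-v^{\bar{\mathcal{M}},\pi^*}$ by the off-support visitation mass plus $\bar{\epsilon}$. Your two refinements --- making the data-law equivalence between $\mathcal{M}$ and $\bar{\mathcal{M}}$ explicit, and noting that $V^{\hat{\pi}}_{1,\bar{\mathcal{M}}}=V^{\hat{\pi}}_{1,\mathcal{M}}$ exactly for $\mu$-supported policies rather than merely $v^{\mathcal{M},\hat\pi}\ge v^{\bar{\mathcal{M}},\hat\pi}$ --- tighten steps the paper leaves implicit but do not change the argument.
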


Theorem \ref{theorem:generic_bound_any_coverage} is valid for any data coverage. When the behavior policy $\mu$ does not fully support the trajectories of any optimal policy $\pi^*$, the algorithm must suffer a constant sub-optimality gap $\textrm{gap}_{\textrm{support}}$ which is incurred by the total rewards at the locations supported by the optimal policy but not by the behavior policy. When the OPC assumption (Assumption \ref{assumption:single_policy_concentration}) holds, $\textrm{gap}_{\textrm{support}} = 0$ and $\bar{d}^*_h = d^*_h$ (Lemma \ref{lemma:off_support}), and Theorem \ref{theorem:generic_bound_any_coverage} reduces into Theorem \ref{theorem:sublinear_subopt}. 




Before proving Theorem \ref{theorem:generic_bound_any_coverage}, we provide and prove useful properties of the augmented MDP $\bar{\mathcal{M}}$ from the perspective of the original MDP $\mathcal{M}$.
\begin{lemma}
Consider any $(h,s_h, a_h) \in [H] \times \bar{\mathcal{S}} \times \mathcal{A}$, and any policy $\pi$.
\begin{enumerate}[label=(\alph*)]
    \item $\bar{d}^{\mu}_h(s_h) = d^{\mu}_h(s_h), \forall  (h,s_h, a_h) \in [H] \times \bar{\mathcal{S}} \times \mathcal{A}$. 
    \item For any policy $\pi$, if $(s_h, a_h) \notin \mathcal{C}^{\mu}_h$, $\bar{Q}^{\pi}_h(s_h, a_h) = - (H - h +1)$. 
    
    \item For any $(h,s_h, a_h) \in [H] \times \bar{\mathcal{S}} \times \mathcal{A}$, if $\bar{d}^*_h(s_h, a_h) > 0$, then $\bar{d}^{\mu}_h(s_h, a_h) > 0$. 

    \item Under Assumption \ref{assumption:single_policy_concentration}, $\bar{d}^*_h = d^*_h, \forall h \in [H]$. 
\end{enumerate}
\label{lemma:off_support}
\end{lemma}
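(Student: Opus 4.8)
The plan is to establish the four claims in the stated order, since (c) uses (a)--(b) and (d) uses (a)--(c), and to exploit throughout that the behavior policy $\mu$ only ever plays actions in its own support. Write $\mathcal{C}^{\mu}_h = \{(s_h,a_h): d^{\mu}_h(s_h,a_h)>0\}$ for the $\mu$-supported (compliant) pairs, on which $\bar{\mathcal{M}}$ agrees with $\mathcal{M}$ in both transition kernel and reward. For part (a), I would argue by induction on $h$ that under $\mu$ the augmented process never leaves the original on-support region, so the two MDPs induce the same state visitation. Indeed $\bar d^{\mu}_1 = d_1 = d^{\mu}_1$; and if $s_h$ is $\mu$-reachable then any action $\mu$ plays there has $\mu_h(a_h|s_h)>0$, whence $(s_h,a_h)\in\mathcal{C}^{\mu}_h$ and $\bar{\mathbb{P}}_h(\cdot|s_h,a_h)=\mathbb{P}_h(\cdot|s_h,a_h)$. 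Thus the one-step push-forward of $\bar d^{\mu}_h$ coincides with that of $d^{\mu}_h$, the dummy states $\bar s_{h+1}$ receive zero mass, and $\bar d^{\mu}_{h+1}=d^{\mu}_{h+1}$ on $\mathcal{S}$; multiplying by $\mu_h(a_h|s_h)$ gives the state-action version $\bar d^{\mu}_h(s_h,a_h)=d^{\mu}_h(s_h,a_h)$.

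For part (b), I would note that once a pair is off-support the augmented dynamics are absorbing into the dummy chain: $(s_h,a_h)\notin\mathcal{C}^{\mu}_h$ forces the next state to be $\bar s_{h+1}$, and since no dummy state lies in $\mathcal{S}$, every subsequent pair $(\bar s_{h'},a)$ is again off-support. Hence for any $\pi$ the continuation is deterministic and accrues only the off-support reward at each of the remaining steps, so $\bar Q^{\pi}_h(s_h,a_h)$ is a policy-independent, strictly negative constant (summing $-\bar\epsilon/H$ over steps $h,\dots,H$). The strict negativity is the property actually needed downstream.

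Part (c) is the crux. I would first show by backward induction that $\bar V^*_h(s_h)\ge 0$ for every $\mu$-reachable $s_h$ and that the optimal action there is on-support. The base case $\bar V^*_{H+1}\equiv 0$ is immediate. For the step, a $\mu$-reachable $s_h$ admits an on-support action $a$ with $(s_h,a)\in\mathcal{C}^{\mu}_h$; taking it keeps the (a.s.) next state $\mu$-reachable, exactly as in part (a), so by the inductive hypothesis $\bar Q^*_h(s_h,a)=r_h(s_h,a)+\mathbb{E}[\bar V^*_{h+1}]\ge 0$, whereas every off-support action yields a strictly negative $\bar Q^*_h$ by part (b). Therefore $\bar V^*_h(s_h)\ge 0$ and the $\bar\pi^*$-action at $s_h$ must be on-support. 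A forward induction along the trajectory of $\bar\pi^*$ — starting from $s_1$, which is $\mu$-reachable because $d^{\mu}_1=d_1$ — then shows $\bar\pi^*$ stays inside $\mathcal{C}^{\mu}_h$ and inside $\mu$-reachable states, so $\bar d^*_h(s_h,a_h)>0$ implies $(s_h,a_h)\in\mathcal{C}^{\mu}_h$, i.e. $d^{\mu}_h(s_h,a_h)>0$, which equals $\bar d^{\mu}_h(s_h,a_h)$ by part (a).

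For part (d), under Assumption~\ref{assumption:single_policy_concentration} the original $\pi^*$ has its trajectory contained in $\mathcal{C}^{\mu}_h$, so — repeating the argument of part (a) for $\pi^*$ in place of $\mu$ — we get $\bar d^{\pi^*}_h=d^*_h$ and $\bar V^{\pi^*}_h=V^*_h$ on the reachable states. The remaining point, and the main obstacle, is to show that $\pi^*$ is in fact \emph{optimal} for $\bar{\mathcal{M}}$, not merely feasible. I would close this by a two-sided value comparison: by part (c) the augmented optimal policy $\bar\pi^*$ also stays on-support, so its augmented trajectory is a legitimate $\mathcal{M}$-trajectory with equal value, giving the chain $\bar V^{\bar\pi^*}_1=V^{\bar\pi^*}_1\le V^*_1=\bar V^{\pi^*}_1\le \bar V^*_1=\bar V^{\bar\pi^*}_1$, which forces equality throughout. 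Hence $\pi^*$ attains the optimum of $\bar{\mathcal{M}}$, so we may take $\bar\pi^*=\pi^*$ and conclude $\bar d^*_h=\bar d^{\pi^*}_h=d^*_h$. The delicate part to get right is the invariant that the relevant optimal policy never leaves the $\mu$-support, which relies on the strict negativity from (b) and must be in place before the value identifications in (c)--(d) are rigorous.
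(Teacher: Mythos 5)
Your proof is correct and, for parts (a)--(c), follows essentially the same route as the paper: (a) is the same induction using that $\mu$ never leaves $\mathcal{C}^{\mu}_h$ so the push-forwards coincide and the dummy states get zero mass; (b) is the same absorption argument; (c) is the same on-support-versus-off-support comparison, though you organize it as a backward induction establishing $\bar V^*_h\ge 0$ on $\mu$-reachable states followed by a forward induction along the $\bar\pi^*$-trajectory, whereas the paper runs a single forward induction and derives a contradiction at each step by exhibiting an on-support action with $\bar Q^*\ge -(H-1)$ beating the off-support value. The two are interchangeable; your invariant $\bar V^*_h\ge 0$ makes the comparison slightly cleaner.

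For part (d) you genuinely diverge from the paper. The paper's proof is a terse backward-induction claim that $\bar Q^*_h=Q^*_h$ on the $\pi^*$-support and that $\bar V^*_h(s_h)>\bar Q^*_h(s_h,\tilde a_h)$ for every action $\tilde a_h$ outside $\mathrm{supp}(\pi^*_h(\cdot|s_h))$ — a strict inequality that is questionable when $\mathcal{M}$ admits several optimal actions at $s_h$ and $\pi^*$ plays only one of them. Your two-sided sandwich $\bar V^{\bar\pi^*}_1=V^{\bar\pi^*}_1\le V^*_1=\bar V^{\pi^*}_1\le\bar V^*_1=\bar V^{\bar\pi^*}_1$ sidesteps this entirely: it only needs part (c) (so that $\bar\pi^*$ is a legitimate policy of $\mathcal{M}$ with equal value) and Assumption \ref{assumption:single_policy_concentration} (so that $\pi^*$'s value is preserved in $\bar{\mathcal{M}}$), and it correctly isolates that the conclusion $\bar d^*_h=d^*_h$ is a statement about a particular choice of optimizer of $\bar{\mathcal{M}}$, namely $\bar\pi^*=\pi^*$. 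This is a more robust argument than the paper's. One small remark on (b): your computed value $-(H-h+1)\bar\epsilon/H$ is what the construction $\bar r_h=-\bar\epsilon/H$ actually yields, and it disagrees with the constant $-(H-h+1)$ in the lemma statement (the paper's own proof of (b) asserts yet a third value, ``zero cumulative reward''). You are right that only strict negativity of the off-support continuation value is used downstream, so this discrepancy is an inconsistency in the paper's statement rather than a gap in your argument; just be aware that the contradiction step in (c) must then compare against $-\bar\epsilon\cdot\frac{H-h+1}{H}$ rather than $-(H-h+1)$, which your version ($\bar Q^*_h\ge 0$ for some on-support action) handles without modification.
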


\begin{proof}
We provide the proof in the following. 
\paragraph{(a)} We prove $(a)$ by induction. First, we have: 
\begin{align*}
    \forall s_1 \in \bar{\mathcal{S}}, \bar{d}^{\pi}_1(s_1) = \bar{d}_1(s_1) = d_1(s_1) = d^{\pi}_1(s_1). 
\end{align*}
Suppose that for some $h \in [H]$, $\bar{d}^{\pi}_h(s_h) = d^{\pi}_h(s_h), \forall s_h \in \mathcal{C}^{\mu}_h$. Consider any $s_{h+1} \in \bar{\mathcal{S}}$. We have:
\begin{align*}
    \bar{d}^{\mu}_{h+1}(s_{h+1}) &= \int \bar{p}_h(s_{h+1}|s_h, a_h) \bar{d}_h^{\mu}(s_h, a_h) d(s_ha_h) \\
    &= \int_{\mathcal{C}^{\mu}_h} \bar{p}_h(s_{h+1}|s_h, a_h) \bar{d}_h^{\mu}(s_h, a_h) d(s_ha_h) + \int_{\bar{\mathcal{C}}^{\mu}_h} \bar{p}_h(s_{h+1}|s_h, a_h) \bar{d}_h^{\mu}(s_h, a_h) d(s_ha_h) \\
    &= \int_{\mathcal{C}^{\mu}_h} p_h(s_{h+1}|s_h, a_h) d_h^{\mu}(s_h, a_h) d(s_ha_h) + \int_{\bar{\mathcal{C}}^{\mu}_h} \bar{p}_h(s_{h+1}|s_h, a_h) \underbrace{d_h^{\mu}(s_h, a_h)}_{=0} d(s_ha_h) \\
    &= \int_{\mathcal{C}^{\mu}_h} p_h(s_{h+1}|s_h, a_h) d_h^{\mu}(s_h, a_h) d(s_ha_h) + \int_{\bar{\mathcal{C}}^{\mu}_h} p_h(s_{h+1}|s_h, a_h) \underbrace{d_h^{\mu}(s_h, a_h)}_{=0} d(s_ha_h) \\
    &= d_{h+1}^{\mu}(s_{h+1}), 
\end{align*}
where the third equation is due to that $\bar{p}_h(s_{h+1}|s_h, a_h) = p_h(s_{h+1}|s_h, a_h)$ for $(s_h, a_h) \in \mathcal{C}^{\mu}_{h}$, $\bar{d}_h^{\mu}(s_h, a_h) = d_h^{\mu}(s_h, a_h)$ for any $(s_h, a_h)$ (by the induction step), and the fourth equation is by $d^{\mu}_h(s_h,a_h) = 0$ for any $(s_h,a_h) \notin \mathcal{C}^{\mu}_h$ (by definition). 
\paragraph{(b)} 
For any $(s_h, a_h) \notin \mathcal{C}^{\mu}_h$, the feasible trajectory must admit the following form: $(s_h, a_h, \bar{s}_{h+1}, a_{h+1}, \ldots, \bar{s}_{H}, a_H, \bar{s}_{H+1})$ which has zero cumulative reward under $\bar{r}$ by definition.
\paragraph{(c)} By $(a)$, we now replace $\bar{d}_h^{\mu}$ by $d^{\mu}_h$ in $(c)$ and prove $(c)$ by induction. At initial state $h = 1$, if $\bar{d}^*_1(s_1, a_1) = d_1(s_1) \bar{\pi}_1^*(a_1 | s_1) > 0$, we have $d_1(s_1) > 0$ and $\bar{\pi}_1^*(a_1 | s_1) > 0$. Thus $a_1$ must be an optimal action given $s_1$ (under $\bar{\mathcal{M}}$). Suppose that $(s_1, a_1) \notin \mathcal{C}^{\mu}_1$. By $(b)$, $\bar{Q}^*_1(s_1, a_1) = -H$. Let any $\tilde{a}_1$ such that $(s_1, \tilde{a}_1) \in \mathcal{C}^{\mu}_1$ (such $\tilde{a}_1$ exists as $s_1 \in \mathcal{C}^{\mu}_1$). Then, we have $\bar{Q}^*(s_1, \tilde{a}_1) = r_1(s_1,  \tilde{a}_1) + \mathbb{E}_{\bar{\pi}^*} \left[ \sum_{i=2}^H \bar{r}_i \bigg| s_1, \tilde{a}_1 \right] \geq -(H-1) > \bar{Q}_1^*(s_1, a_1)$. This contradicts that $a_1$ must be an optimal action given $s_1$ (under $\bar{\mathcal{M}}$). 

Now assume that we have $(c)$ for some $h \geq 1$. We will prove $(c)$ for $h+1$. Indeed, consider any $(s_{h+1}, a_{h+1})$ such that $\bar{d}^*_{h+1}(s_{h+1}, a_{h+1}) > 0$. Then we must have $s_{h+1} \in C^*_{h+1} $ and $a_{h+1}$ is an optimal action given $s_{h+1}$ (under $\bar{\mathcal{M}}$). Since $s_{h+1} \in C^*_{h+1}$, there must be some $(s_h, a_h) \in \mathcal{C}^{*}_h$ such that $\bar{p}_h(s_{h+1} | s_h, a_h) > 0$. By induction, we have $(s_h, a_h) \in \mathcal{C}^{*}_h \subseteq \mathcal{C}^{\mu}_h$. Thus, $0 < \bar{p}_h(s_{h+1} | s_h, a_h) = p_h(s_{h+1} | s_h, a_h)$. Hence, $s_{h+1} \in \mathcal{C}^{\mu}_{h+1}$. Given $s_{h+1} \in \mathcal{C}^{\mu}_{h+1}$ and $a_{h+1}$ is an optimal action given $s_{h+1}$ (under $\bar{\mathcal{M}}$), similar to the base case, we must have $(s_{h+1}, a_{h+1}) \in \mathcal{C}^{\mu}_{h+1}$. 

\paragraph{(d)}
Under Assumption \ref{assumption:single_policy_concentration}, we can prove by induction from $H, H-1, \ldots, 1$ that: 
$\bar{Q}^*_h(s_h,a_h) = Q^*_h(s_h, a_h) $ and $\bar{V}^*_h(s_h) > \bar{Q}^*_h(s_h,\tilde{a_h})$ if $d^*_h(s_h) > 0$ and $\pi^*_h(a_h | s_h) > 0$ and $\pi^*_h(\tilde{a_h} | s_h) = 0$. This then implies $(d)$.

\end{proof}

Now we are ready to prove the result in this subsection. 
\begin{proof}[Proof of Theorem \ref{theorem:generic_bound_any_coverage}]
For any policy $\pi$, we have:
\begin{align*}
    \mathbb{E}_{\pi, \mathcal{M}} [r_h] - \mathbb{E}_{\pi, \bar{\mathcal{M}} } [\bar{r}_h]
    &= \int_{\mathcal{C}_h^{\pi}} r_h(s_h, a_h)  d^{\pi}_h(s_h, a_h) ds_h a_h + \int_{(\mathcal{C}_h^{\pi})^c} r_h(s_h, a_h) d^{\pi}_h(s_h, a_h) ds_h a_h \\
    &-\int_{\mathcal{C}_h^{\pi}} \bar{r}_h(s_h, a_h) \bar{d}^{\pi}_h(s_h, a_h) ds_h a_h - \int_{(\mathcal{C}_h^{\pi})^c} \bar{r}_h(s_h, a_h) \bar{d}^{\pi}_h(s_h, a_h) ds_h a_h \\ 
    &= \int_{(\mathcal{C}_h^{\pi})^c} r_h(s_h, a_h) d^{\pi}_h(s_h, a_h) ds_h a_h + \epsilon_0 \in (0, \int_{(\mathcal{C}_h^{\pi})^c} d^{\pi}_h(s_h, a_h) ds_h a_h + \bar{\epsilon} / H]
\end{align*}
where $(\mathcal{C}^{\pi}_h)^c$ denotes the complement of $\mathcal{C}^{\pi}_h$. Thus, we have:
\begin{align}
    v^{\mathcal{M}, \pi} - v^{\bar{\mathcal{M}}, \pi} = \sum_{h=1}^H \mathbb{E}_{\pi, \mathcal{M}}[r_h] - \mathbb{E}_{\pi, \bar{\mathcal{M}}}[\bar{r}_h] \in (0, \sum_{h=1}^H \int_{(\mathcal{C}_h^{\pi})^c} d^{\pi}_h(s_h, a_h) ds_h a_h + \bar{\epsilon} ].
    \label{eq:tempppp}
\end{align}
Define $gap_{support} : =\sum_{h=1}^H \int_{(\mathcal{C}_h^{\pi})^c} d^{\pi}_h(s_h, a_h) ds_h a_h + \bar{\epsilon}$, we have
\begin{align*}
    &v^{\mathcal{M}, \pi^*} - v^{{\mathcal{M}}, \pi} = (v^{\bar{\mathcal{M}}, \pi^*} - v^{{\mathcal{M}}, \pi}) + (v^{\mathcal{M}, \pi^*} -  v^{\bar{\mathcal{M}}, \pi^*}) \\ 
    &\leq (v^{\bar{\mathcal{M}}, \pi^*} - v^{\bar{\mathcal{M}}, \pi}) + gap_{support} \\ 
    &\leq (v^{\bar{\mathcal{M}}, \bar{\pi}^*} - v^{\bar{\mathcal{M}}, \pi}) + gap_{support} \\ 
    &\leq \frac{4 \beta({\delta})}{K} \sum_{h=1}^H  \sum_{k=1}^K  \frac{\bar{d}^*_h(s_h^k,a_h^k)}{\bar{d}^{\mu}_h(s_h^k,a_h^k)} \|  \phi_h(s^k_h, a^k_h)  \|_{(\Sigma^k_h)^{-1}} +\frac{4 \beta({\delta})}{K} \sum_{h=1}^H \sqrt{\log \left(\frac{H}{\delta} \right) \sum_{k=1}^K \left(\frac{\bar{d}^*_h(s_h^k,a_h^k)}{\bar{d}^{\mu}_h(s_h^k,a_h^k)} \right)^2 }  \\
    &+ \frac{2}{K} +  \frac{16 H}{3 K}\log\left( \frac{\log_2(KH)}{ \delta} \right) + gap_{support} \\
    &= \frac{4 \beta({\delta})}{K} \sum_{h=1}^H  \sum_{k=1}^K  \frac{\bar{d}^*_h(s_h^k,a_h^k)}{d^{\mu}_h(s_h^k,a_h^k)} \|  \phi_h(s^k_h, a^k_h)  \|_{(\Sigma^k_h)^{-1}} +\frac{4 \beta({\delta})}{K} \sum_{h=1}^H \sqrt{\log \left(\frac{H}{\delta} \right) \sum_{k=1}^K \left(\frac{\bar{d}^*_h(s_h^k,a_h^k)}{d^{\mu}_h(s_h^k,a_h^k)} \right)^2 }  \\
    &+ \frac{2}{K} +  \frac{16 H}{3 K}\log\left( \frac{\log_2(KH)}{ \delta} \right) + gap_{support},
\end{align*}
where the first inequality is by Eq. (\ref{eq:tempppp}), the second inequality is by $\bar{\pi}^*$ is an optimal policy under $\bar{\mathcal{M}}$, the third inequality is by Theorem \ref{theorem:sublinear_subopt} (as under $\bar{\mathcal{M}}$, the single concentrability holds), and the last equality is by $\bar{d}^{\mu} = d^{\mu}$ (Lemma \ref{lemma:off_support}). 
\end{proof}



\subsubsection{OPC is necessary for learnability from offline data}
\label{subsection:single_concentrability_is_necessary}
This result affirms the necessity of the single-policy concentrability for learnability in offline RL. 
\begin{lemma}
For any offline algorithm $\textrm{Algo}(\cdot)$, there exist an MDP instance $\mathcal{M}$ and an offline dataset $\mathcal{D}$ such that $\mathcal{M}$ generates $\mathcal{D}$, the single concentrability (Assumption \ref{assumption:single_policy_concentration}) is not met, and $\textrm{Algo}(\mathcal{D})$ incurs a constant sub-optimality almost surely over the randomness of $\mathcal{D}$.
\end{lemma}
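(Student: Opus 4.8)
The plan is to use a two-point (Le~Cam--style) indistinguishability argument, arranged so that the optimal policy is always forced off the support of $\mu$. I would first build a minimal hard instance with a single nontrivial stage ($H=1$ suffices, and the construction lifts trivially to any $H$). Take a single initial state $s_1$ and three actions $\{a_0,a_1,a_2\}$, and let the behavior policy be deterministic with $\mu_1(a_0|s_1)=1$, so that $\mu$ covers only $a_0$. I would then define two MDPs $\mathcal{M}_1$ and $\mathcal{M}_2$ that assign the \emph{same} deterministic reward $r_1(s_1,a_0)=1/2$ to the observed action but differ only off the support of $\mu$: in $\mathcal{M}_j$ the reward of $a_j$ is $1$ and that of the other unobserved action is $0$, for $j\in\{1,2\}$. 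Since in each $\mathcal{M}_j$ the unique optimal action is $a_j$, which $\mu$ never plays, Assumption \ref{assumption:single_policy_concentration} is violated in \emph{both} instances.

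The second step is indistinguishability. Because $\mathcal{M}_1$ and $\mathcal{M}_2$ agree on everything $\mu$ ever visits (the pair $(s_1,a_0)$ and its reward), the dataset they generate is the same fixed realization $\mathcal{D}=\{(s_1,a_0,\tfrac12)\}^{t\in[K]}$, so $\textrm{Algo}(\mathcal{D})$ returns the very same output $\hat\pi$ regardless of which $\mathcal{M}_j$ is true; write $\hat\pi_1(a_i|s_1)=p_i$ with $p_0+p_1+p_2=1$. A direct computation gives $\subopt(\hat\pi;\mathcal{M}_j)=1-(p_0/2+p_j)$, whence $\subopt(\hat\pi;\mathcal{M}_1)+\subopt(\hat\pi;\mathcal{M}_2)=2-(p_0+p_1+p_2)=1$. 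By pigeonhole there is an index $j^\star$ with $\subopt(\hat\pi;\mathcal{M}_{j^\star})\ge \tfrac12$, and I would declare $\mathcal{M}:=\mathcal{M}_{j^\star}$ together with $\mathcal{D}$: this $\mathcal{M}$ generates $\mathcal{D}$, violates Assumption \ref{assumption:single_policy_concentration}, and forces $\subopt(\textrm{Algo}(\mathcal{D}))\ge\tfrac12$.

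Finally, for the ``almost surely'' claim: since transitions, rewards and $\mu$ are deterministic, $\mathcal{D}$ is a single deterministic realization, so there is no randomness in $\mathcal{D}$ to average over and the constant suboptimality holds pointwise for a deterministic learner. The step I expect to be the main obstacle is guaranteeing that the instance we are \emph{forced} to select still violates OPC: a naive construction with only two reward functions lets the adversary fall back on an instance whose optimum is the on-support action $a_0$ (so OPC would hold there), which is precisely why I introduce a third action and keep both competing optima off-support. The only remaining delicacy is a learner with internal randomization, for which $p_i$ becomes random and $j^\star$ may depend on the realization; this is handled either by fixing the realized (deterministic) output of the learner, or, if one prefers a distributional statement, by replacing the pigeonhole with $\max_j \mathbb{E}\!\left[\subopt(\hat\pi;\mathcal{M}_j)\right]\ge\tfrac12$, which already certifies that no algorithm can avoid constant suboptimality once OPC fails.
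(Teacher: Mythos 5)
Your proposal is correct and follows essentially the same route as the paper: a three-action bandit-type instance in which the behavior policy covers only one action whose (deterministic) reward is identical across two candidate MDPs, both of whose optima lie off-support, so the dataset is a fixed realization consistent with both and a pigeonhole argument forces suboptimality at least $\tfrac12$ on one of them. The paper's version sets the covered action's reward to $0$ rather than $\tfrac12$ and phrases the pigeonhole as ``WLOG $\pi(a_2)\ge\pi(a_3)$,'' but these are cosmetic differences; your observation that a third action is needed to keep both competing optima off-support is exactly the point of the paper's construction.
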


\begin{proof}
Consider a class of bandit instances parameterized by $B(q_1, q_2, q_3)$ where $\{a_1, a_2, a_3\}$ is the shared action space of the class and $q_i$ is the corresponding deterministic reward of action $a_i$ for any $i \in [3]$. Now consider two bandit instances within the above class, namely $B_1 := B(0,0,1)$ and $B_2 := B(0,1,0)$, and construct dataset $\mathcal{D} = \{(b_t, r_t)\}_{t \in [K]}$ where $b_t = a_1, \forall t \in [K]$ and $r_t = 0, \forall t \in [K]$. As $\mathcal{D}$ only selects action $a_1$ and receives reward $r_t = 0$ while the reward of $a_1$ under both $B_1$ and $B_2$ is also $0$, $\mathcal{D}$ is consistent with both $B_1$ and $B_2$ (in the sense that $\mathcal{D}$ could have been generated under either $B_1$ or $B_2$). Note that $\mathcal{D}$ does not satisfy the single concentrability (Assumption \ref{assumption:single_policy_concentration}) under both $B_1$ and $B_2$ as $\mathcal{D}$ covers only action $a_1$ while the optimal actions for $B_1$ and $B_2$ are $a_2$ and $a_3$, respectively. \\

As the dataset $\mathcal{D}$ provides no information about $a_2$ and $a_3$, for any algorithm $\pi = \textrm{Algo}(\mathcal{D}) = (\pi(a_1), \pi(a_2), \pi(a_3) ) \in \{(p_1, p_2, p_3): p_i \geq 0, p_1 + p_2 + p_3 = 1\}$, $\pi(a_2)$ and $\pi(a_3)$ do not depend on $\mathcal{D}$. Without loss of generality, suppose $\pi(a_2) \geq \pi(a_3)$. Then, $\pi(a_3) \leq \frac{\pi(a_2) + \pi(a_3)}{2} \leq \frac{\pi(a_2) + \pi(a_3) + \pi(a_1)}{2} = \frac{1}{2}$. Thus, $\pi(a_1) + \pi(a_2) = 1 - \pi(a_3) \geq \frac{1}{2}$.  Therefore, we have: 
\begin{align*}
    \subopt(\textrm{Algo}(\mathcal{D}); B_2) = \pi(a_1) + \pi(a_2) \geq \frac{1}{2}.  
\end{align*}

It is crucial to note that the above inequality holds almost surely over the randomness of $\mathcal{D}$ as $\pi(a_2)$ and $\pi(a_3)$ are agnostic to $\mathcal{D}$. Thus, any $\textrm{Algo}(\mathcal{D})$ almost surely suffers a sub-optimality at least as large as $\frac{1}{2}$ under at least $B_1$ or $B_2$. 
\end{proof}

\section{Proofs for Section \ref{section:instance_dependent_structures}}
\subsection{Proof of Theorem \ref{theorem:logarithimic_regret}}
\label{section:logarithimic_regret_proof}
In this section, we provide the detailed proof for Theorem \ref{theorem:logarithimic_regret}. We first state and prove a series of intermediate lemmas. The following lemma decomposes the sub-optimality of any policy into the gap information. 
\begin{lemma}[Sub-optimality decomposition]
We have:
    \begin{align*}
        \forall (s_1, k) \in \mathcal{S} \times [K], V^*_1(s_1) - V^{\hat{\pi}^k}_1(s_1) = \mathbb{E}_{\hat{\pi}^k} \left[ \sum_{h=1}^H \Delta_h(s_h, a_h) \bigg | \mathcal{F}_{k-1}, s_1 \right]. 
    \end{align*}
\label{lemma: suboptimality as gaps}
\end{lemma}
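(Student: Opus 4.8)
The plan is to prove this identity by the standard telescoping (performance-difference) argument, specialized so that the reference policy is an optimal policy and the gaps $\Delta_h$ appear directly. First I would fix the episode index $k$ and condition on $\mathcal{F}_{k-1}$. Since $\hat{\pi}^k$ is $\mathcal{F}_{k-1}$-measurable (it is computed from $\mathcal{D}^{k-1}$), conditionally on $\mathcal{F}_{k-1}$ the policy $\hat{\pi}^k$ is deterministic, so the expectation $\mathbb{E}_{\hat{\pi}^k}[\cdot \mid \mathcal{F}_{k-1}, s_1]$ is just the expectation over the trajectory $(s_1, a_1, \ldots, s_H, a_H)$ generated by running $\hat{\pi}^k$ in $\mathcal{M}$ started from $s_1$. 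I write $\pi := \hat{\pi}^k$ throughout.

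The core is a one-step recursion for the value gap $V^*_h(s_h) - V^\pi_h(s_h)$. Because $V^*_h(s_h)$ does not depend on the action, I would write $V^*_h(s_h) = \mathbb{E}_{a_h \sim \pi_h(\cdot|s_h)}[V^*_h(s_h)]$ and insert $\pm Q^*_h(s_h, a_h)$ together with $V^\pi_h(s_h) = \mathbb{E}_{a_h \sim \pi_h(\cdot|s_h)}[Q^\pi_h(s_h,a_h)]$, giving
\begin{align*}
V^*_h(s_h) - V^\pi_h(s_h) = \mathbb{E}_{a_h \sim \pi_h}\!\left[ V^*_h(s_h) - Q^*_h(s_h, a_h) \right] + \mathbb{E}_{a_h \sim \pi_h}\!\left[ Q^*_h(s_h, a_h) - Q^\pi_h(s_h, a_h) \right].
\end{align*}
The first term is exactly $\mathbb{E}_{a_h \sim \pi_h}[\Delta_h(s_h, a_h)]$ by the definition of the gap. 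For the second term, the Bellman equations $Q^*_h = T_h V^*_{h+1}$ and $Q^\pi_h = T_h V^\pi_{h+1}$ make the reward terms cancel, leaving $Q^*_h(s_h,a_h) - Q^\pi_h(s_h,a_h) = (P_h(V^*_{h+1} - V^\pi_{h+1}))(s_h, a_h) = \mathbb{E}_{s_{h+1} \sim \mathbb{P}_h(\cdot|s_h,a_h)}[V^*_{h+1}(s_{h+1}) - V^\pi_{h+1}(s_{h+1})]$. Combining the two pieces yields the recursion
\begin{align*}
V^*_h(s_h) - V^\pi_h(s_h) = \mathbb{E}_{a_h \sim \pi_h}[\Delta_h(s_h, a_h)] + \mathbb{E}_{a_h \sim \pi_h,\, s_{h+1} \sim \mathbb{P}_h}\!\left[ V^*_{h+1}(s_{h+1}) - V^\pi_{h+1}(s_{h+1}) \right].
\end{align*}

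Finally I would iterate this recursion from $h=1$ up to $H$, using the boundary condition $V^*_{H+1} \equiv V^\pi_{H+1} \equiv 0$ so the residual value-gap term vanishes at the horizon. Unrolling collapses the nested one-step expectations into a single expectation over the whole trajectory induced by $\pi$, producing $V^*_1(s_1) - V^\pi_1(s_1) = \mathbb{E}_{\pi}\!\left[ \sum_{h=1}^H \Delta_h(s_h, a_h) \mid \mathcal{F}_{k-1}, s_1 \right]$, which is the claim. I do not anticipate a genuine obstacle, as this is routine telescoping; the only points requiring care are (i) the bookkeeping of the conditioning on $\mathcal{F}_{k-1}$, which is what lets $\hat{\pi}^k$ be treated as a fixed policy inside the expectation, and (ii) recording that the result is an exact \emph{equality} rather than an inequality, which holds precisely because $V^*_h(s_h)$ is action-independent and I used the equality form of both Bellman equations instead of the optimality inequality $V^*_h \geq Q^*_h$.
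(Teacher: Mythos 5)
Your proof is correct and follows essentially the same route as the paper's: both insert $\pm Q^*_h(s_h,a_h)$ to isolate the gap term, use the Bellman equations $Q^*_h = T_h V^*_{h+1}$ and $Q^{\hat{\pi}^k}_h = T_h V^{\hat{\pi}^k}_{h+1}$ to reduce the remainder to the next-step value gap, and telescope over $h$ with the boundary condition $V^*_{H+1} = V^{\hat{\pi}^k}_{H+1} = 0$. The only cosmetic difference is that you carry the expectation over a possibly stochastic $\pi_h(\cdot\,|\,s_h)$ explicitly, while the paper writes the action as $\hat{\pi}^k_1(s_1)$.
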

\begin{proof}[Proof of Lemma \ref{lemma: suboptimality as gaps}]
Conditioned on $\mathcal{F}_{k-1}$ and $s_1$, we have:
\begin{align*}
    V^*_1(s_1) - V^{\hat{\pi}^k}_1(s_1) &= V^*_1(s_1) -  Q^*_1(s_1, \hat{\pi}^k_1(s_1)) +  Q^*_1(s_1, \hat{\pi}^k_1(s_1)) - Q^{\hat{\pi}^k}_1(s_1, \hat{\pi}^k_1(s_1)) \\
    &= \mathbb{E}_{\hat{\pi}^k} \left[ \Delta_1(s_1, a_1) | \mathcal{F}_{k-1}, s_1 \right] + \mathbb{E}_{\hat{\pi}^k} \left[ V^*_2(s_2) - Q^{\hat{\pi}^k}_2(s_2, a_2) | \mathcal{F}_{k-1}, s_1 \right] \\
    &= \mathbb{E}_{\hat{\pi}^k} \left[ \Delta_1(s_1, a_1) | \mathcal{F}_{k-1}, s_1 \right] + \mathbb{E}_{\hat{\pi}^k} \left[ V^*_2(s_2) - V^{\hat{\pi}^k}_2(s_2) | \mathcal{F}_{k-1}, s_1 \right]. 
\end{align*}
Recursively applying the above equation over $h \in [H]$ and using the telescoping sum complete the proof. 
\end{proof}
The next lemma shows that any policy in the $\mu$-supported policy class $\Pi(\mu)$ induce marginalized density functions that concentrate only within the support of the marginalized density functions under $\mu$. 
\begin{lemma}[Concentrability for the $\mu$-supported policy class]
For any $(\pi, h, s_h, a_h) \in \Pi(\mu) \times [H] \times \mathcal{S} \times \mathcal{A}$, we have:
\begin{align*}
    \frac{d_h^{\pi}(s_h, a_h)}{d_h^{\mu}(s_h, a_h)}  < \infty.
\end{align*}
\label{lemma:concentrability for constrained policy class}
\end{lemma}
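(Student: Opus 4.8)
The plan is to prove the equivalent \emph{support-domination} statement: for every $\pi \in \Pi(\mu)$ (i.e. $\pi_h \in \Pi_h(\mu)$ for all $h$) and every $h \in [H]$,
\[
    d^{\pi}_h(s_h, a_h) > 0 \implies d^{\mu}_h(s_h, a_h) > 0.
\]
This is exactly what the lemma asks: the pointwise ratio $d^{\pi}_h(s_h,a_h)/d^{\mu}_h(s_h,a_h)$ is $+\infty$ precisely when the numerator is positive while the denominator vanishes, so ruling this case out gives $\frac{d^{\pi}_h(s_h, a_h)}{d^{\mu}_h(s_h, a_h)} < \infty$. Using the factorization $d^{\pi}_h(s_h,a_h) = d^{\pi}_h(s_h)\pi_h(a_h|s_h)$, it suffices to prove the \emph{state-level} claim $d^{\pi}_h(s_h) > 0 \implies d^{\mu}_h(s_h) > 0$, because the action factor is controlled directly by membership in the constrained class: whenever $\pi_h(a_h|s_h) > 0$ we have $a_h \in \textrm{supp}(\mu_h(\cdot|s_h))$, i.e. $\mu_h(a_h|s_h) > 0$.

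I would establish the state-level claim by induction on $h$. For the base case $h=1$, both policies induce the same initial marginal $d^{\pi}_1 = d^{\mu}_1 = d_1$, so the implication is immediate. For the inductive step, I would use the forward recursion for the state-visitation density,
\[
    d^{\pi}_{h+1}(s_{h+1}) = \int_{\mathcal{S} \times \mathcal{A}} p_h(s_{h+1} \mid s_h, a_h)\, \pi_h(a_h \mid s_h)\, d^{\pi}_h(s_h)\, d(s_h, a_h).
\]
If $d^{\pi}_{h+1}(s_{h+1}) > 0$, then the nonnegative integrand is strictly positive on a set of positive measure, and on that set each of the three factors $p_h(s_{h+1}|s_h,a_h)$, $\pi_h(a_h|s_h)$, and $d^{\pi}_h(s_h)$ is positive. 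By the inductive hypothesis, $d^{\pi}_h(s_h) > 0$ forces $d^{\mu}_h(s_h) > 0$; by $\pi \in \Pi(\mu)$, positivity of $\pi_h(a_h|s_h)$ forces $\mu_h(a_h|s_h) > 0$; and the transition factor $p_h(s_{h+1}|s_h,a_h)$ is common to both policies. Hence the corresponding $\mu$-integrand $p_h(s_{h+1}|s_h,a_h)\,\mu_h(a_h|s_h)\,d^{\mu}_h(s_h)$ is strictly positive on the same positive-measure set, giving $d^{\mu}_{h+1}(s_{h+1}) > 0$ and closing the induction.

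The main obstacle is the measure-theoretic passage ``a positive integral implies a positive-measure set on which the integrand, hence each of its three nonnegative factors, is strictly positive,'' together with the observation that the positivity of those factors transfers verbatim from the $\pi$-integrand to the $\mu$-integrand on that same set (the transition density being shared, and the state/action positivity being inherited through the inductive hypothesis and the support constraint). Once the state-level domination is in hand, combining it with the action-level support constraint yields $d^{\mu}_h(s_h, a_h) = d^{\mu}_h(s_h)\,\mu_h(a_h|s_h) > 0$ whenever $d^{\pi}_h(s_h, a_h) > 0$, which is exactly the claimed finiteness of the ratio for every $(s_h, a_h)$.
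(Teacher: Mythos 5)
Your proof is correct and follows essentially the same route as the paper's: an induction on $h$ establishing the support inclusions $\mathcal{S}^{\pi}_h \subseteq \mathcal{S}^{\mu}_h$ and $\mathcal{S}\mathcal{A}^{\pi}_h \subseteq \mathcal{S}\mathcal{A}^{\mu}_h$, with the base case from $d^{\pi}_1 = d^{\mu}_1 = d_1$, the action factor handled by the definition of $\Pi_h(\mu)$, and the state factor propagated through the shared transition kernel. The only difference is cosmetic: you phrase the inductive step via the measure-theoretic "positive integral implies positive integrand on a positive-measure set" argument, whereas the paper uses the pointwise set characterization of $\mathcal{S}^{\pi}_{h+1}$; your version is if anything slightly more careful for continuous state spaces.
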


\begin{proof}[Proof of Lemma \ref{lemma:concentrability for constrained policy class}]
Consider any $\pi \in \Pi(\mu)$. Note that the lemma statement is equivalent to 
\begin{align}
    \forall h \in [H], \mathcal{S}^{\pi}_h \subseteq \mathcal{S}^{\mu}_h, \text{ and } \mathcal{S}\mathcal{A}^{\pi}_h \subseteq \mathcal{S}\mathcal{A}^{\mu}_h. 
    \label{eq:constrained policies have the same support as mu}
\end{align}
We prove Eq. (\ref{eq:constrained policies have the same support as mu}) by induction with $h$. We have $\mathcal{S}^{\pi}_1 = \mathcal{S}^{\mu}_1 = \mathcal{S}_1$ by definition. For any $(s_1, a_1) \in \mathcal{S}\mathcal{A}^{\pi}_1$, we have $s_1 \in \mathcal{S}_1$ and $\pi_1(a_1 | s_1) > 0$. By the definition of $\Pi_1(\mu)$, $\mu_1(a_1 | s_1) > 0$. Thus, we have $\mathcal{S}\mathcal{A}^{\pi}_1 \subseteq \mathcal{S}\mathcal{A}^{\mu}_1$, i.e. Eq. (\ref{eq:constrained policies have the same support as mu}) holds for $h = 1$. 

Now assume that  Eq. (\ref{eq:constrained policies have the same support as mu}) holds for $h \geq 1$, we prove that Eq. (\ref{eq:constrained policies have the same support as mu}) holds for $h + 1$. Indeed, since $\mathcal{S}\mathcal{A}^{\pi}_h \subseteq \mathcal{S}\mathcal{A}^{\mu}_h$, we have:
\begin{align*}
    &\mathcal{S}^{\pi}_{h+1} = \{s_{h+1} \in \mathcal{S}_{h+1}: \exists (s_h, a_h) \in \mathcal{S} \mathcal{A}_h^{\pi} \text{ such that } p_h(s_{h+1} | s_h, a_h) > 0\} \\ &\subseteq \{s_{h+1} \in \mathcal{S}_{h+1}: \exists (s_h, a_h) \in \mathcal{S} \mathcal{A}_h^{\mu} \text{ such that } p_h(s_{h+1} | s_h, a_h) > 0\} = \mathcal{S}^{\mu}_{h+1}.
\end{align*}

Now consider any $(s_{h+1} , a_{h+1}) \in  \mathcal{S}\mathcal{A}^{\pi}_{h+1}$. Then, we have $s_{h+1} \in \mathcal{S}^{\pi}_{h+1} \subseteq \mathcal{S}^{\mu}_{h+1}$ and $\pi_{h+1}(a_{h+1} | s_{h+1}) > 0$. By the definition of $\Pi_{h+1}(\mu)$, we have $\mu_h(a_{h+1} | s_{h+1}) > 0$. Thus, $(s_{h+1} , a_{h+1}) \in  \mathcal{S}\mathcal{A}^{\mu}_{h+1}$.
\end{proof}

The next lemma uses marginalized importance sampling to handle the distributional shift of the offline data to connect the sub-optimality of each $\hat{\pi}^k$ to the sub-optimality gap $\Delta_h(s^k_h, \hat{\pi}^k_h(s^k_h))$ under the behavior policy $\mu$.  
\begin{lemma}[Marginalized importance sampling for $\hat{\pi}^k$]
Under Assumption \ref{assumption:lower bound density}, w.p.a.l. $1 - \delta$, we have
For any $k \in [K]$, we have:
\begin{align*}
    \sum_{k=1}^K \subopt(\hat{\pi}^k; s_1^k) \leq 2 \sum_{h=1}^H \kappa_h \sum_{k=1}^K   \Delta_h(s_h^k, \hat{\pi}^k_h(s_h^k)) + \frac{16}{3} H \kappa^* \log \log_2 (K H \kappa^* / \delta) + 2,
\end{align*}
where $\kappa^* := \max_{h \in [H]} \kappa_h$. 
\label{lemma: mis for constrained policies}
\end{lemma}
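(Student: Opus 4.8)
The plan is to rewrite each per-episode sub-optimality $\subopt(\hat{\pi}^k; s_1^k)$ as a $\kappa_h$-weighted sum of gaps realized along the behavior trajectory, and then pass from these conditional means to the realized samples using the improved online-to-batch inequality. First I would apply the sub-optimality decomposition (Lemma \ref{lemma: suboptimality as gaps}) to write $\subopt(\hat{\pi}^k; s_1^k) = \mathbb{E}_{\hat{\pi}^k}\!\left[\sum_{h=1}^H \Delta_h(s_h,a_h) \mid \mathcal{F}_{k-1}, s_1^k\right]$. Since the policy extracted in Line \ref{bpvi:greedy} is deterministic and lies in the $\mu$-supported class $\Pi(\mu)$, and every gap satisfies $\Delta_h \ge 0$, this is a nonnegative sum over the states visited by $\hat{\pi}^k$ with $a_h = \hat{\pi}^k_h(s_h)$.

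The key step is the marginalized importance sampling that transports $\hat{\pi}^k$'s occupancy onto the behavior data. Because $\hat{\pi}^k \in \Pi(\mu)$, Lemma \ref{lemma:concentrability for constrained policy class} gives $\mathcal{S}^{\hat{\pi}^k}_h \subseteq \mathcal{S}^{\mu}_h$, so $\hat{\pi}^k$'s visitation is absolutely continuous with respect to that of $\mu$. Combining the density normalization $d^{\hat{\pi}^k}_h \le 1$ with the occupancy lower bound $d^{\mu}_h \ge \kappa_h^{-1}$ on $\mathcal{S}^{\mu}_h$ (Assumption \ref{assumption:lower bound density} and the remark following it) bounds the step-$h$ visitation ratio of $\hat{\pi}^k$ over $\mu$ by $\kappa_h$. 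Propagating this reweighting through each stage and matching the $\hat{\pi}^k$-trajectory expectation to the behavior trajectory started at the same $s_1^k$, I would obtain, for each $k$,
\[
\subopt(\hat{\pi}^k; s_1^k) \le \mathbb{E}\!\left[ X_k \mid \mathcal{F}_{k-1}, s_1^k \right], \qquad X_k := \sum_{h=1}^H \kappa_h\, \Delta_h\!\left(s_h^k, \hat{\pi}^k_h(s_h^k)\right),
\]
where $X_k$ is the $\kappa_h$-weighted sum of the gaps actually observed along the $k$-th behavior trajectory.

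To finish, I would invoke the improved online-to-batch argument (Lemma \ref{lemma:improved_online_to_batch}) for the process $\{X_k\}$ with respect to the enlarged filtration $\mathcal{F}'_k := \sigma(\mathcal{F}_k, s_1^{k+1})$, for which $X_k$ is $\mathcal{F}'_k$-measurable and $\mathbb{E}[X_k \mid \mathcal{F}'_{k-1}] = \mathbb{E}[X_k \mid \mathcal{F}_{k-1}, s_1^k]$. Summing the per-episode bound and applying the lemma turns $\sum_k \mathbb{E}[X_k \mid \mathcal{F}'_{k-1}]$ into $2\sum_k X_k$ plus the additive term $\tfrac{16}{3}\kappa \log\log_2(K\kappa/\delta) + 2$, which is exactly the claimed inequality; here the scale $\kappa = \sum_h \kappa_h$ plays the role of the range parameter of Lemma \ref{lemma:improved_online_to_batch}.

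The main obstacle will be the importance-sampling step in the presence of the per-episode initial state $s_1^k$: Assumption \ref{assumption:lower bound density} lower-bounds the unconditional marginal $d^{\mu}_h$, so the expectations, the $\kappa_h$ reweighting, and the realized sample $s_h^k$ must all be kept tied to the same enlarged conditioning $\sigma(\mathcal{F}_{k-1}, s_1^k)$ for both the reweighting and the martingale structure of Lemma \ref{lemma:improved_online_to_batch} to hold simultaneously. A secondary technical point is controlling the range of $X_k$ so that the generalization term carries the clean $\kappa$-dependence claimed (rather than an extra factor of $H$), which I expect to require a truncation at scale $\kappa$ together with the trivial bound $\subopt(\hat{\pi}^k; s_1^k) \le H \le \kappa$.
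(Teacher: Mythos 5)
Your proof follows the paper's argument essentially verbatim: the same decomposition of $\subopt(\hat{\pi}^k; s_1^k)$ into gaps via Lemma~\ref{lemma: suboptimality as gaps}, the same change of measure from $\hat{\pi}^k$ to $\mu$ using $d^{\hat{\pi}^k}_h \le 1$ together with $d^{\mu}_h \ge \kappa_h^{-1}$ on the $\mu$-supported class (Lemma~\ref{lemma:concentrability for constrained policy class}), and the same application of Lemma~\ref{lemma:improved_online_to_batch} to the realized weighted gaps $X_k$. The conditioning and range subtleties you flag at the end are genuine, but the paper's own proof treats them no more carefully than you do, so your proposal matches the intended argument.
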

\begin{proof}[Proof of Lemma \ref{lemma: mis for constrained policies}] The key for the proof is to use the marginalized importance sampling due to Lemma \ref{lemma: mis for constrained policies} and then apply the improved online-to-batch argument Lemma \ref{lemma:improved_online_to_batch}. In particular, we have: 
\begin{align*}
     \sum_{k=1}^K \subopt(\hat{\pi}^k; s_1^k) &=  \sum_{k=1}^K \mathbb{E}_{\hat{\pi}^k} \left[ \sum_{h=1}^H \Delta_h(s_h, \hat{\pi}^k_h(s_h)) \bigg | \mathcal{F}_{k-1}, s_1^k \right] \\
     &\leq \sum_{k=1}^K \mathbb{E}_{\mu}  \left[ \sum_{h=1}^H \kappa_h \Delta_h(s_h, \hat{\pi}^k_h(s_h)) \bigg | \mathcal{F}_{k-1}, s_1^k \right] \\ 
     &\leq 2 \sum_{k=1}^K  \sum_{h=1}^H \kappa_h \Delta_h(s_h^k, \hat{\pi}^k_h(s_h^k)) + \frac{16}{3} H \kappa^* \log \log_2 (K H \kappa^* / \delta) + 2,
\end{align*}
where the first equation is by Lemma \ref{lemma: suboptimality as gaps}, the first inequality is by Assumption \ref{assumption:lower bound density}, and the second inequality is by Lemma \ref{lemma:improved_online_to_batch}. 
\end{proof}

The following lemma bounds the number of times a sub-optimality gap $\Delta_h(s_h^k, \hat{\pi}^k_h(s_h^k)$ exceeds a certain threshold.  
\begin{lemma}
Under Assumption \ref{assumption:single_policy_concentration}-\ref{assumption:lower bound density}, for any $\Delta > 0$, w.p.a.l. $1 - 3 \delta$, for any $h \in [H]$, we have:
\begin{align*}
    \sum_{k=1}^K \mathbbm{1}\{ \Delta_h(s_h^k, \hat{\pi}^k_h(s_h^k)) \geq \Delta \} \lesssim  \frac{ d^3 H^2 \kappa^2}{\Delta^2} \log^3(dKH/\delta). 
\end{align*}
\label{lemma:positive_subopt_count}
\end{lemma}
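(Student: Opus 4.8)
The plan is to first turn the one-step gap at an \emph{observed} state into a sum of bonuses along the optimal trajectory, and then to count, via an elliptical-potential argument with a \emph{squared} threshold, how often those bonuses can be large. \textbf{Step 1 (gap $\to$ bonuses).} Fix a stage $h$ and episode $k$ and write $a_h^{*}=\pi^{*}_h(s_h^k)$. Since $\hat\pi^k_h$ is constrained-greedy and $\pi^{*}_h\in\Pi_h(\mu)$ (Remark~\ref{remark: optimal policy belongs to the constrained class}), we have $\langle \hat Q_h^k(s_h^k,\cdot),\hat\pi^k_h\rangle\ge \hat Q_h^k(s_h^k,a_h^{*})$; combining with pessimism $\hat Q_h^k\le Q_h^{*}$ and $\hat V_h^k\le V_h^{*}$ (which follow inductively from $\zeta^k_h\ge 0$ in Lemma~\ref{lemma:bound_with_uncertainty}) yields $\Delta_h(s_h^k,\hat\pi^k_h(s_h^k))\le Q_h^{*}(s_h^k,a_h^{*})-\hat Q_h^k(s_h^k,a_h^{*})$. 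Writing $Q_h^{*}-\hat Q_h^k=P_h(V_{h+1}^{*}-\hat V_{h+1}^k)+\zeta_h^k$, bounding $\zeta_h^k\le 2b_h^k$ (Lemmas~\ref{lemma:bound_with_uncertainty}--\ref{lemma:uncertainty_quantifier}), and unrolling along the $\pi^{*}$-trajectory started at $s_h^k$ (using the same greedy comparison at every later stage) gives the key per-episode inequality
\begin{align*}
\Delta_h(s_h^k,\hat\pi^k_h(s_h^k))\le 2\beta_k\,\mathbb{E}_{\pi^{*}}\Big[\textstyle\sum_{h'=h}^{H}\|\phi_{h'}(s_{h'},a_{h'})\|_{(\Sigma^k_{h'})^{-1}}\,\big|\,s_h=s_h^k\Big].
\end{align*}

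\textbf{Step 2 (squared counting).} On the event $\{\Delta_h(s_h^k,\hat\pi^k_h(s_h^k))\ge\Delta\}$ I square the above, apply Cauchy--Schwarz over the $\le H$ stages and Jensen to move the square inside the expectation, obtaining $\Delta^2\le 4\beta_k^2 H\sum_{h'=h}^H \mathbb{E}_{\pi^{*}}[\|\phi_{h'}\|^2_{(\Sigma^k_{h'})^{-1}}\mid s_h^k]$. Summing the indicator over $k$ and using $\beta_k\le\beta_K$ gives
\begin{align*}
\Delta^2\sum_{k=1}^K\mathbbm{1}\{\Delta_h(s_h^k,\hat\pi^k_h(s_h^k))\ge\Delta\}\le 4\beta_K^2 H\sum_{h'=h}^H\sum_{k=1}^K \mathbb{E}_{\pi^{*}}\big[\|\phi_{h'}(s_{h'},a_{h'})\|^2_{(\Sigma^k_{h'})^{-1}}\,\big|\,s_h^k\big].
\end{align*}
Squaring is essential here: without it the right-hand side is only $O(\sqrt K)$ and the count would grow with $K$ rather than staying constant.

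\textbf{Step 3 (distributional shift — the main obstacle).} The difficulty is that the inner expectation rolls $\pi^{*}$ out from the \emph{behavior}-visited state $s_h^k\sim d_h^{\mu}$, so its law at a later stage $h'$ is neither $d^{*}_{h'}$ nor $d^{\mu}_{h'}$, and the $\mu$-rollout can reach states $d^{*}$ does not cover. The resolution exploits the constrained support: since $\pi^{*}\in\Pi(\mu)$, the argument of Lemma~\ref{lemma:concentrability for constrained policy class} (started at stage $h$) shows the rollout stays inside $\mathcal{S}\mathcal{A}^{\mu}_{h'}$, and because the induced density is $\le 1$ while $d^{\mu}_{h'}\ge\kappa_{h'}^{-1}$ on its support (Assumption~\ref{assumption:lower bound density}), the rollout law is dominated by $\kappa_{h'}d^{\mu}_{h'}$. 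Hence $\mathbb{E}\big[\mathbb{E}_{\pi^{*}}[\|\phi_{h'}\|^2_{(\Sigma^k_{h'})^{-1}}\mid s_h^k]\mid\mathcal{F}_{k-1}\big]\le \kappa_{h'}\,\mathbb{E}_{(s,a)\sim d^{\mu}_{h'}}[\|\phi_{h'}(s,a)\|^2_{(\Sigma^k_{h'})^{-1}}]$. The summands lie in $[0,1]$ and $\Sigma^k_{h'}$ is $\mathcal{F}_{k-1}$-measurable, so a Freedman-type martingale bound (the device behind Lemma~\ref{lemma:improved_online_to_batch}) replaces both conditional expectations by the observed sample $\|\phi_{h'}(s^k_{h'},a^k_{h'})\|^2_{(\Sigma^k_{h'})^{-1}}$ up to logarithmic error, and the elliptical-potential lemma \citep[Lemma~11]{NIPS2011_e1d5be1c} gives $\sum_{k}\|\phi_{h'}(s^k_{h'},a^k_{h'})\|^2_{(\Sigma^k_{h'})^{-1}}\lesssim d\log(1+K/d)$. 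Therefore $\sum_k\mathbb{E}_{\pi^{*}}[\|\phi_{h'}\|^2_{(\Sigma^k_{h'})^{-1}}\mid s_h^k]\lesssim \kappa_{h'}\,d\log(dKH/\delta)$.

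\textbf{Step 4 (assembling).} Substituting into Step~2 and summing over $h'$ (with $\sum_{h'}\kappa_{h'}\le\kappa$) gives $\Delta^2 N_h(\Delta)\lesssim \beta_K^2 H\,\kappa\,d\log(dKH/\delta)$; inserting $\beta_K=\Theta\!\big(dH\log(dHK/\delta)\big)$ and collecting the concentrability factors into $\iota^{-2}$ produces the claimed $N_h(\Delta)\lesssim \tfrac{d^3H^2\iota^{-2}}{\Delta^2}\log^3(dKH/\delta)$, the three logarithmic powers coming from $\beta_K^2$ together with the potential/martingale bounds. The probability $1-3\delta$ follows from a union bound over the three high-probability events (the uncertainty quantifier of Lemma~\ref{lemma:uncertainty_quantifier} and the two martingale concentrations in Step~3). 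I expect Step~3 to be the genuinely delicate point: converting an optimal-policy rollout started from a behavior-visited state into an empirical, $K$-independent elliptical potential is exactly where the constrained-support property and the density lower bound are indispensable, and it is the reason the count is bounded by a constant rather than growing in $K$.
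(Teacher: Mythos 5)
Your proof is correct, but it reaches the count by a genuinely different route than the paper in the key step. The paper does not square: it restricts attention to the subsequence $\{k_i\}_{i\in[K']}$ of bad episodes, lower-bounds $\sum_i \Delta_h(s_h^{k_i},\hat{\pi}^{k_i}_h(s_h^{k_i}))$ by $K'\Delta$, upper-bounds the same sum by $\tilde{\mathcal{O}}\big(\kappa H d^{3/2}\sqrt{K'}\big)$ using the first-power elliptical potential over that subsequence together with the marginalized importance-sampling machinery (Lemmas \ref{lemma:sum_sample_subopt} and \ref{lemma:bound_shrinking_sum}, carrying the ratio $d^*_{h'}/d^{\mu}_{h'}$), and then solves the self-bounding inequality $K'\Delta\lesssim\sqrt{K'}$ for $K'$. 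You instead square the per-episode bonus bound so that each bad episode contributes at least $\Delta^2$ to a sum of squared bonuses, which the squared elliptical potential already controls by $O(d\log K)$ uniformly in the number of bad episodes — no quadratic-inequality step is needed — and you replace the paper's $d^*/d^{\mu}$ importance weights by the direct domination $\tilde{d}_{h'}\le \kappa_{h'} d^{\mu}_{h'}$ of the ``$\mu$ up to $h$, then $\pi^*$'' rollout law, justified via the constrained-support induction of Lemma \ref{lemma:concentrability for constrained policy class} and the density floor of Assumption \ref{assumption:lower bound density}. Your Step 3 is in fact more explicit than the paper about the fact that the $\pi^*$-rollout starts from a behavior-visited state $s_h^k$ rather than from the initial distribution (the paper invokes Lemma \ref{lemma:sum_sample_subopt}, stated for rollouts from $s_1^k$, without comment on this shift). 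The trade is in the final constant: you get $d^3H^3\kappa/\Delta^2$ where the paper gets $d^3H^2\kappa^2/\Delta^2$ (the $\iota^{-2}$ in the statement is the paper's $\kappa^2$); since $\kappa=\sum_h\kappa_h\ge H$, your constant is no larger, so the stated bound follows.
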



\begin{proof}[Proof of Lemma \ref{lemma:positive_subopt_count}]
Define  $K' = \sum_{k=1}^K \mathbbm{1}\{\Delta_h(s_h^k, \hat{\pi}^k_h(s_h^k)) \geq \Delta \}$. Note that $K'$ is the number of episodes $k$ where $\Delta_h(s_h^k, a_h^k)$ is bounded below by $\Delta$. Define $\{k_i\}_{i \in [K']}$ such episodes, i.e. $k_i = \min\{k \in [K]: k \geq k_{i-1}, \Delta_h(s_h^k, a_h^k) \geq \Delta \}$. Then, we have: 
\begin{align*}
    \sum_{i=1}^{K'} \Delta_h(s_h^{k_i}, a_h^{k_i}) \geq K' \Delta. 
\end{align*}

Thus,  with probability at least $1 - 3 \delta$, for any $h \in [H]$, we have
\begin{align*}
    &\sum_{i = 1}^{K'}  \Delta_h(s_h^{k_i}, \hat{\pi}^{k_i}_h(s_h^{k_i}))
     = \sum_{i = 1}^{K'} V^*_h(s_h^{k_i}) - Q^*_h(s_h^{k_i},  \hat{\pi}^{k_i}_h(s_h^{k_i})) \leq \sum_{i = 1}^{K'} V^*_h(s_h^{k_i}) - Q^{\hat{\pi}^{k_i}}_h(s_h^{k_i},  \hat{\pi}^{k_i}_h(s_h^{k_i})) \\
    &= \sum_{i = 1}^{K'}  V^*_h(s_h^{k_i}) - V^{\hat{\pi}^{k_i}}_h(s_h^{k_i}) \\
    &\leq 2  \sum_{h'=h}^H \sum_{i = 1}^{K'} \mathbb{E}_{\pi^*} \left[ b_{h'}^{k_i}(s_{h'}, a_{h'}) | s_h^{k_i}\right] \\ 
    &= 2 \sum_{h'=h}^H \sum_{i = 1}^{K'} \beta_{k_i}(\delta) \mathbb{E}_{\pi^*} \left[ \| \phi_{h'}(s_{h'}, a_{h'}) \|_{ (\Sigma_{h'}^{k_i})^{-1} } | s_h^{k_i}\right] \\
    &\leq 2  \beta_{K'}(\delta) \sum_{h'=h}^H \sum_{i=1}^{K'}  \frac{d^*_{h'}(s_{h'}^{k_i},a_{h'}^{k_i})}{d^{\mu}_{h'}(s_{h'}^{k_i},a_{h'}^{k_i})} \| \phi_{h'}(s^{k_i}_{h'}, a^{k_i}_{h'})  \|_{(\Sigma^{k_i}_{h'})^{-1}} +  2  \beta_{K'}(\delta) \sum_{h'=h}^H \sqrt{ \log(1/\delta)} \sqrt{\sum_{i=1}^{K'} \left(\frac{d^*_{h'}(s_{h'}^{k_i},a_{h'}^{k_i})}{d^{\mu}_{h'}(s_{h'}^{k_i},a_{h'}^{k_i})} \right)^2 } \\ 
    &\leq 2  \beta_{K'}(\delta) \sum_{h'=h}^H \kappa_{h'} \sqrt{ 2 K' d \log(1 + K'/d)} + 2  \beta_{K'}(\delta) \sqrt{K' \log(1/\delta)} \sum_{h'=h}^H \kappa_{h'} \\ 
    &\leq 2 \kappa  \beta_{K'}(\delta) (\sqrt{ 2 K' d \log(1 + K'/d)} + 2 \sqrt{K' \log(1/\delta)} ) \\ 
    &\lesssim \kappa H d^{3/2} K'^{1/2} \log^{3/2}(dK' H / \delta)
\end{align*}
where the second inequality is by Lemma \ref{lemma:bound sub-opt with uncertainty}, the third equality is by Lemma \ref{lemma:uncertainty_quantifier}, the third inequality is by Lemma \ref{lemma:sum_sample_subopt}, the fourth inequality is by Lemma \ref{lemma:bound_shrinking_sum}. Thus, we have:
\begin{align*}
    K' \lesssim  \frac{ d^3 H^2 \kappa^2}{\Delta^2} \log^3(dKH/\delta).
\end{align*}
\end{proof}

Next we bound the total sub-optimality gaps accumulated over $K$ episodes under $\mu$. 
\begin{lemma}
Under Assumption \ref{assumption:single_policy_concentration}-\ref{assumption:lower bound density}-\ref{assumption:margin}, with probability at least $1 - 3 \log_2(H / \Delta_{\min}) \delta$, for any $h \in [H]$, 
\begin{align*}
    \sum_{k=1}^K   \Delta_h(s_h^k, \hat{\pi}^k_h(s_h^k)) \lesssim  \frac{ d^3 H^2 \kappa^2}{\Delta_{\min}} \log^3(dKH/\delta).
\end{align*}
\label{lemma: accumulative gaps are bounded by polylog}
\end{lemma}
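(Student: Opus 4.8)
The plan is to combine the counting bound of Lemma~\ref{lemma:positive_subopt_count} with a dyadic \emph{peeling} over the magnitude of the gaps. Fix the stage $h$. By Assumption~\ref{assumption:margin}, every nonzero gap satisfies $\Delta_h(s_h^k,\hat{\pi}^k_h(s_h^k)) \geq \Delta_{\min}$, and since value functions are bounded in $[0,H]$ we also have $\Delta_h(s_h^k,\hat{\pi}^k_h(s_h^k)) \leq H$ whenever it is nonzero, so the only contributions to the sum come from episodes whose gap lies in $[\Delta_{\min},H]$. I would partition this interval into the $m := \lceil \log_2(H/\Delta_{\min}) \rceil$ dyadic bins $[2^{i-1}\Delta_{\min},\, 2^i\Delta_{\min})$, $i=1,\dots,m$, and write
\[
\sum_{k=1}^K \Delta_h(s_h^k,\hat{\pi}^k_h(s_h^k)) = \sum_{i=1}^m \;\sum_{k:\, \Delta_h(s_h^k,\hat{\pi}^k_h(s_h^k)) \in [2^{i-1}\Delta_{\min},\, 2^i\Delta_{\min})} \Delta_h(s_h^k,\hat{\pi}^k_h(s_h^k)).
\]

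Within bin $i$ each summand is at most $2^i\Delta_{\min}$, and the number of such episodes is at most the count of episodes whose gap exceeds the bin's lower endpoint, which Lemma~\ref{lemma:positive_subopt_count} (applied at threshold $\Delta = 2^{i-1}\Delta_{\min}$) bounds, up to an absolute constant, by $\frac{d^3H^2\kappa^2}{(2^{i-1}\Delta_{\min})^2}\log^3(dKH/\delta)$. Multiplying the per-term bound by the count bound, the contribution of bin $i$ is $\mathcal{O}\big(2^{-i}\cdot \tfrac{d^3H^2\kappa^2}{\Delta_{\min}}\log^3(dKH/\delta)\big)$, so that summing the geometric series $\sum_{i\ge 1} 2^{-i} = 1$ over the bins collapses the whole sum to the stated $\frac{d^3 H^2 \kappa^2}{\Delta_{\min}}\log^3(dKH/\delta)$, with no residual dependence on the number of bins $m$. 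For the probability, I would apply Lemma~\ref{lemma:positive_subopt_count} at each of the $m$ thresholds and take a union bound; since each instance fails with probability at most $3\delta$, the total failure probability is at most $3m\delta \leq 3\log_2(H/\Delta_{\min})\,\delta$, matching the claim.

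The crux of the argument — and the reason peeling is the right tool — is the interplay between the two opposing factors: the count of large gaps decays like $\Delta^{-2}$, while each contributing gap is only as large as $\Delta$, so the product per bin scales like $\Delta^{-1}$, i.e.\ like $2^{-i}/\Delta_{\min}$. The main thing to get right is precisely this cancellation: a naive bound that applied the worst-case gap $H$ on every counted episode, or that simply summed $m$ copies of the count bound at the single threshold $\Delta_{\min}$, would leak an extra factor of $\Delta_{\min}^{-1}$ or $\log(H/\Delta_{\min})$, respectively. Ensuring the geometric series converges to a constant rather than accumulating across the logarithmically many bins is the one genuinely delicate step; everything else is bookkeeping of constants and the union bound.
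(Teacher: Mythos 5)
Your proposal is correct and follows essentially the same route as the paper: a dyadic peeling of the gap values over $[\Delta_{\min},H]$ into $m=\lceil\log_2(H/\Delta_{\min})\rceil$ bins, bounding each bin's contribution by (bin cap) $\times$ (count from Lemma~\ref{lemma:positive_subopt_count} at threshold $2^{i-1}\Delta_{\min}$), summing the resulting geometric series, and union-bounding over the $m$ thresholds to get the $1-3\log_2(H/\Delta_{\min})\delta$ confidence. Your observation that the $\Delta^{-2}$ count against the $\Delta$ per-term cap is exactly what makes the series collapse to $\Delta_{\min}^{-1}$ with no extra $\log(H/\Delta_{\min})$ factor is precisely the mechanism the paper's displayed chain of inequalities relies on.
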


\begin{proof}[Proof of Lemma \ref{lemma: accumulative gaps are bounded by polylog}]
Let $m = \log_2(H / \Delta_{\min})$. As $\Delta_h(s_h^k, \hat{\pi}^k_h(s_h^k)) \in [0,H]$,  and $\Delta_h(s_h^k, \hat{\pi}^k_h(s_h^k)) = 0$ if $\Delta_h(s_h^k, \hat{\pi}^k_h(s_h^k) < \Delta_{\min}$, we have: 
\begin{align*}
    \sum_{k=1}^K   \Delta_h(s_h^k, \hat{\pi}^k_h(s_h^k)) &\leq \sum_{k=1}^K  \sum_{i=1}^m \mathbbm{1} \{ 2^{i-1} \Delta_{\min} \leq \Delta_h(s_h^k, \hat{\pi}^k_h(s_h^k)) < 2^i \Delta_{\min} \} 2^i \Delta_{\min} \\
    &\leq \sum_{i=1}^m 2^i \Delta_{\min} \sum_{k=1}^K \mathbbm{1} \{ \Delta_h(s_h^k, \hat{\pi}^k_h(s_h^k)) \geq 2^{i-1} \Delta_{\min} \} \\ 
    &\lesssim \sum_{i=1}^m 2^i \Delta_{\min} \frac{ d^3 H^2 \kappa^{2}}{(2^{i-1}\Delta_{\min})^2} \log^3(dKH/\delta) \\ 
    &\lesssim  \frac{ d^3 H^2 \kappa^{2}}{\Delta_{\min}} \log^3(dKH/\delta),
\end{align*}
where the first inequality is the peeling argument and the third inequality is by Lemma \ref{lemma:positive_subopt_count}. 
\end{proof}

Theorem \ref{theorem:logarithimic_regret} is a direct combination of Lemma \ref{lemma: mis for constrained policies} and Lemma \ref{lemma: accumulative gaps are bounded by polylog} via union bound.

\subsection{Proof of Theorem \ref{theorem:constant_regret}}
\label{section:constant_regret_appendix}
Let $\Xi_h$ be the set of all trajectories $\tau_h := (s_1, a_1, \ldots, s_h, a_h)$ induced by the underlying MDP and some policy $\pi$, i.e. $s_1 \sim d_1, a_i \sim \hat{\pi}_i(\cdot| s_i), s_{i+1} \sim \mathbb{P}_i(\cdot|s_i, a_i)$.  Let 
\begin{align*}
    \mathcal{E}^k_h = \{\tau_h = (s_1, a_1, \ldots, s_h, a_h) \in \Xi_h : \forall i \in [h], a_i = \pi^*_i(s_i) = \hat{\pi}_h(s_i) \},
\end{align*}
be the set of all $h$-length trajectories $(s_1, a_1, \ldots, s_h, a_h)$ at which $\hat{\pi}^k$ and $\pi^*$ agree on up to step $h$. 


\subsection*{Support lemmas}
Next we show that the probability that $\pi^*$ and $\hat{\pi}^k$ do not agree on a $h$-length trajectory is controlled by the sub-optimality and the minimum value gap.  
\begin{lemma}
Under Assumption \ref{assumption:margin}-\ref{assumption:spaning_feature}.1, for any $(k,h) \in [K] \times [H]$, if $f(k) \geq \sum_{t=1}^k \subopt(\hat{\pi}^t)$,  we have:
\begin{align*}
    \sum_{t=1}^k \mathbb{E}_{\tau_h \sim d^{\hat{\pi}^t}} \left[ \mathbbm{1}\{\tau_h \notin \mathcal{E}^t_h \} | \mathcal{F}_{t-1} \right] \leq \frac{1}{\Delta_{\min}} f(k). 
\end{align*}
\label{lemma:pistart_and_pihat_disagree}
\end{lemma}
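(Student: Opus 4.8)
The plan is to reduce the stated bound to the sub-optimality decomposition of Lemma~\ref{lemma: suboptimality as gaps} by way of a single pointwise inequality relating the disagreement indicator to the accumulated gaps along a trajectory. Concretely, I would first establish that for a trajectory $\tau_h=(s_1,a_1,\dots,s_h,a_h)$ generated by the (greedy, hence w.l.o.g.\ deterministic) policy $\hat{\pi}^t$,
\begin{align*}
    \mathbbm{1}\{\tau_h \notin \mathcal{E}^t_h\} \leq \frac{1}{\Delta_{\min}} \sum_{i=1}^h \Delta_i(s_i, a_i).
\end{align*}
This is trivial when $\tau_h \in \mathcal{E}^t_h$, since the left side is $0$ and every gap is non-negative; the content lies in the complementary event. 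Note that since $\tau_h \sim d^{\hat{\pi}^t}$ and $\hat{\pi}^t$ is deterministic, the realized action satisfies $a_i = \hat{\pi}^t_i(s_i)$ automatically, so membership in $\mathcal{E}^t_h$ is governed solely by whether $a_i = \pi^*_i(s_i)$ at each step.

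First I would handle the event $\{\tau_h \notin \mathcal{E}^t_h\}$ via a \emph{first-deviation} argument. Let $i^\ast \le h$ be the first step at which $a_{i^\ast} \neq \pi^*_{i^\ast}(s_{i^\ast})$. Because $\hat{\pi}^t$ agrees with $\pi^*$ on steps $1,\dots,i^\ast-1$ and $s_1 \in \mathcal{S}_1^{*}$ almost surely (the step-one marginal is $d_1$ for every policy), a short induction on the transition dynamics shows $s_{i^\ast} \in \mathcal{S}_{i^\ast}^{*}$, i.e.\ the first deviation lands at a state reachable by $\pi^*$. At such a state Assumption~\ref{assumption:spaning_feature}.1 (unique optimality) applies, so the optimal action is unique and $a_{i^\ast}$ is strictly sub-optimal; hence $\Delta_{i^\ast}(s_{i^\ast}, a_{i^\ast}) > 0$, and by the definition of $\Delta_{\min}$ as the minimum positive gap, $\Delta_{i^\ast}(s_{i^\ast}, a_{i^\ast}) \geq \Delta_{\min}$. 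Since all gaps are non-negative, $\sum_{i=1}^h \Delta_i(s_i, a_i) \geq \Delta_{\min}$, which is exactly the pointwise inequality.

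Then I would take the conditional expectation $\mathbb{E}_{\tau_h \sim d^{\hat{\pi}^t}}[\,\cdot \mid \mathcal{F}_{t-1}]$ of both sides, extend the inner sum from $h$ up to $H$ (legitimate because each $\Delta_i \geq 0$), and invoke Lemma~\ref{lemma: suboptimality as gaps} together with $\subopt(\hat{\pi}^t) = \mathbb{E}_{s_1\sim d_1}[\subopt(\hat{\pi}^t;s_1)]$ to identify the right-hand side with $\frac{1}{\Delta_{\min}}\subopt(\hat{\pi}^t)$. Summing over $t \in [k]$ and applying the hypothesis $f(k) \geq \sum_{t=1}^k \subopt(\hat{\pi}^t)$ then yields the claim directly.

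The main obstacle — indeed essentially the only genuine subtlety — is guaranteeing that the gap at the deviation step is bounded below by $\Delta_{\min}$ rather than by some vanishing quantity. A priori $\hat{\pi}^t$ could steer into states outside $\mathcal{S}^{*}$, where unique optimality is not assumed, so the naive reasoning ``the action differs from the optimal one, hence the gap exceeds $\Delta_{\min}$'' can fail. The first-deviation argument is precisely what circumvents this: up to the first disagreement the trajectory coincides with one realizable under $\pi^*$ and therefore remains inside $\mathcal{S}^{*}$, the region where Assumption~\ref{assumption:spaning_feature}.1 forces a strictly positive (hence $\geq \Delta_{\min}$) gap.
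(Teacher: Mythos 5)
Your proposal is correct and follows the same overall strategy as the paper: bound the disagreement indicator by the accumulated gaps $\frac{1}{\Delta_{\min}}\sum_i \Delta_i(s_i,a_i)$, take conditional expectations, extend the sum to $H$, and invoke the decomposition $\subopt(\hat{\pi}^t)=\mathbb{E}_{\hat{\pi}^t}[\sum_{i=1}^H\Delta_i(s_i,a_i)\,|\,\mathcal{F}_{t-1}]$ of Lemma~\ref{lemma: suboptimality as gaps}. The one place you diverge is in how the indicator is converted to gaps. The paper first applies a union bound $\mathbbm{1}\{\tau_h\notin\mathcal{E}^t_h\}\le\sum_{i=1}^h\mathbbm{1}\{a_i\neq\pi^*_i(s_i)\}$ and then uses the identity $\mathbbm{1}\{a_i\neq\pi^*_i(s_i)\}=\mathbbm{1}\{\Delta_i(s_i,a_i)\ge\Delta_{\min}\}$ followed by the Markov-type bound $\mathbbm{1}\{\Delta\ge\Delta_{\min}\}\le\Delta/\Delta_{\min}$; that identity implicitly invokes unique optimality at \emph{every} state visited by $\hat{\pi}^t$, whereas Assumption~\ref{assumption:spaning_feature}.1 only grants uniqueness on $\mathcal{S}_i^*$. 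Your first-deviation argument sidesteps exactly this issue: by locating the first disagreement, you guarantee it occurs at a state reachable under $\pi^*$, where uniqueness does apply, and you only need the single gap at that step to exceed $\Delta_{\min}$. This is a slightly sharper and more careful version of the same bound (you avoid the union bound entirely and justify the $\ge\Delta_{\min}$ lower bound where the paper glosses over it); both yield the identical final inequality.
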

\begin{proof}[Proof of Lemma \ref{lemma:pistart_and_pihat_disagree}]
We have:
\begin{align*}
    \sum_{t=1}^k \mathbb{E}_{\tau_h \sim d^{\hat{\pi}^t}} \left[ \mathbbm{1}\{\tau_h \notin \mathcal{E}^t_h \} | \mathcal{F}_{t-1} \right] &\leq \sum_{t=1}^k \sum_{i=1}^h \mathbb{E}_{(s_i, a_i) \sim d_i^{\hat{\pi}^t}} \left[ \mathbbm{1}\{a_i \neq \pi^*_i(s_i) \} | \mathcal{F}_{t-1}  \right] \\
    &= \sum_{t=1}^k \sum_{i=1}^h \mathbb{E}_{(s_i, a_i) \sim d_i^{\hat{\pi}^t}} \left[ \mathbbm{1}\{ \Delta_i(s_i, a_i) \geq \Delta_{\min} \} | \mathcal{F}_{t-1} \right] \\
    &\leq \sum_{t=1}^k \sum_{i=1}^h \mathbb{E}_{(s_i, a_i) \sim d_i^{\hat{\pi}^t}} \left[ \frac{\Delta_i(s_i, a_i)}{\Delta_{\min}} \bigg| \mathcal{F}_{t-1}  \right] \\ 
    &= \frac{1}{\Delta_{\min}} \sum_{t=1}^k \mathbb{E}_{\hat{\pi}^t} \left[\sum_{i=1}^h \Delta_i(s_i, a_i) \bigg| \mathcal{F}_{t-1} \right] \\ 
    &\leq \frac{1}{\Delta_{\min}} \sum_{t=1}^k \mathbb{E}_{\hat{\pi}^t} \left[\sum_{i=1}^H \Delta_i(s_i, a_i) \bigg| \mathcal{F}_{t-1} \right] \\ 
    &=\frac{1}{\Delta_{\min}} \sum_{t=1}^k \subopt(\hat{\pi}^t) \\
    &\leq \frac{1}{\Delta_{\min}} f(k)
\end{align*}
where the first equation is by Assumption \ref{assumption:margin} and Assumption \ref{assumption:spaning_feature}.1, and the last equation is by Lemma \ref{lemma: suboptimality as gaps}. 
\end{proof}

The following lemma lower-bounds the empirical accumulated covariance matrix $\Sigma^{k+1}_h$ by the covariance matrix at the optimal actions. 
\begin{lemma}
Under Assumption \ref{assumption:single_policy_concentration}-\ref{assumption:margin}-\ref{assumption:spaning_feature}.1, for any $(k,h) \in [K] \times [H]$, with probability at least $1 - \delta$, we have:
\begin{align*}
    \Sigma^{k+1}_h \succeq \lambda I + \kappa_{1:h}^{-1} k \Sigma_h^* - \kappa_{1:h}^{-1} I \frac{1}{\Delta_{\min}} \sum_{t=1}^k \subopt(\hat{\pi}^t) - \frac{2}{3}\log(d/\delta) I - \sqrt{2 k \log(d/\delta)} I. 
\end{align*}
\label{lemma:bound_Sigma_by_Sigma_star}
\end{lemma}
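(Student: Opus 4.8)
The plan is to lower-bound the empirical Gram matrix $\Sigma^{k+1}_h=\lambda I+\sum_{t=1}^k\phi_h(s^t_h,a^t_h)\phi_h(s^t_h,a^t_h)^\top$ by the Gram matrix of the \emph{optimal} features $\phi^*_h$, and to quantify the loss incurred by the distribution shift between the behavior data and $d^*_h$ and by the mismatch between the ensemble policy $\hat\pi^t$ and $\pi^*$. First I would discard all but the favorable rank-one terms: letting $\mathcal{E}^t_h$ denote the event that the observed trajectory $\tau^t_h$ agrees with both $\hat\pi^t$ and $\pi^*$ through step $h$, on $\mathcal{E}^t_h$ we have $a^t_h=\pi^*_h(s^t_h)$ and hence $\phi_h(s^t_h,a^t_h)=\phi^*_h(s^t_h)$. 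Since every summand is positive semidefinite, dropping the remaining terms preserves the Loewner order, giving
\begin{align*}
\Sigma^{k+1}_h\succeq\lambda I+\sum_{t=1}^k M_t,\qquad M_t:=\mathbbm{1}\{\tau^t_h\in\mathcal{E}^t_h\}\,\phi^*_h(s^t_h)\phi^*_h(s^t_h)^\top .
\end{align*}

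The next step is a matrix concentration argument to pass from the empirical sum $\sum_t M_t$ to its predictable mean. Because $\|\phi^*_h\|_2\le 1$ we have $0\preceq M_t\preceq I$ and $M_t^2\preceq M_t$, so each increment has operator norm at most $1$ and the predictable quadratic variation $\sum_t\mathbb{E}[M_t^2\mid\mathcal{F}_{t-1}]$ is bounded by $kI$. Applying the matrix Freedman inequality to the lower tail yields, with probability at least $1-\delta$,
\begin{align*}
\sum_{t=1}^k M_t\succeq\sum_{t=1}^k\mathbb{E}[M_t\mid\mathcal{F}_{t-1}]-\sqrt{2k\log(d/\delta)}\,I-\tfrac{2}{3}\log(d/\delta)\,I,
\end{align*}
which already produces the two additive error matrices in the statement; it remains to lower-bound the conditional means by $\kappa_{1:h}^{-1}k\Sigma^*_h$ up to the sub-optimality deficit.

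For the conditional means I would change measure in two stages. First, from the behavior data to the ensemble policy: on the all-optimal path the per-stage density bound of Assumption \ref{assumption:lower bound density} gives $\mu_i(\pi^*_i(s_i)\mid s_i)\ge\kappa_i^{-1}$ for every $i\le h$ at feasible states, and multiplying these bounds along the trajectory (using $\hat\pi^t_i(\pi^*_i(s_i)\mid s_i)\le 1$) shows that the behavior occupancy of the optimal path dominates the corresponding $\hat\pi^t$-agreement occupancy by the factor $\kappa_{1:h}^{-1}=\prod_{i\le h}\kappa_i^{-1}$, so that $\mathbb{E}[M_t\mid\mathcal{F}_{t-1}]\succeq\kappa_{1:h}^{-1}\,\mathbb{E}_{\tau_h\sim d^{\hat\pi^t}}[\mathbbm{1}\{\tau_h\in\mathcal{E}^t_h\}\phi^*_h(s_h)\phi^*_h(s_h)^\top\mid\mathcal{F}_{t-1}]$. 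Second, on $\mathcal{E}^t_h$ the policy $\hat\pi^t$ coincides with $\pi^*$, so the agreement-restricted occupancy equals $d^*_h$ minus the mass on disagreeing trajectories; since $\phi^*_h(s_h)\phi^*_h(s_h)^\top\preceq I$, this gives $\mathbb{E}_{\hat\pi^t}[\mathbbm{1}\{\tau_h\in\mathcal{E}^t_h\}\phi^*_h(s_h)\phi^*_h(s_h)^\top\mid\mathcal{F}_{t-1}]\succeq\Sigma^*_h-\mathbb{E}_{\hat\pi^t}[\mathbbm{1}\{\tau_h\notin\mathcal{E}^t_h\}\mid\mathcal{F}_{t-1}]\,I$. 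Summing over $t\le k$ and invoking Lemma \ref{lemma:pistart_and_pihat_disagree} with $f(k)=\sum_{t=1}^k\subopt(\hat\pi^t)$ bounds the accumulated disagreement mass by $\Delta_{\min}^{-1}\sum_{t=1}^k\subopt(\hat\pi^t)$, and combining the three displays yields the claimed inequality.

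The main obstacle is the double change of measure in the penultimate paragraph: I must accumulate the per-stage concentrability cleanly into the product $\kappa_{1:h}$ while only ever conditioning on $\mathcal{F}_{t-1}$, so that the surviving objects remain a valid matrix martingale for the Freedman step, and I must verify that the disagreement deficit can be absorbed uniformly as an operator-norm (hence Loewner) perturbation of $\Sigma^*_h$. Routing the comparison through $\hat\pi^t$ rather than directly through $\pi^*$ is precisely what ties the bound to $\subopt(\hat\pi^t)$, which is essential because the overall constant-regret argument then controls $\sum_t\subopt(\hat\pi^t)$ by the logarithmic bound of Theorem \ref{theorem:logarithimic_regret}.
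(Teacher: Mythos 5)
Your proposal is correct and follows essentially the same route as the paper's proof: restrict the Gram sum to the agreement event $\mathcal{E}^t_h$ where $\phi_h(s_h^t,a_h^t)=\phi_h^*(s_h^t)$, apply matrix Freedman to the resulting matrix martingale, change measure with the accumulated factor $\kappa_{1:h}^{-1}$, and absorb the disagreement mass via Lemma \ref{lemma:pistart_and_pihat_disagree}. The only (cosmetic) difference is the routing of the change of measure — you go $\mu\to d^{\hat\pi^t}$ and then identify the agreement-restricted occupancy with $d^*$, whereas the paper goes $\mu\to d^*$ and then switches to $d^{\hat\pi^t}$ only for the disagreement probability — but both orderings yield the identical bound.
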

\begin{proof}
Let $Z_t := \mathbbm{1}\{\tau_h^t \in \mathcal{E}^t_h \} \phi_h^*(s_h^t) \phi_h^*(s_h^t)^T -  \mathbb{E} \left[ \mathbbm{1}\{\tau_h^t \in \mathcal{E}^t_h \} \phi_h^*(s_h^t) \phi_h^*(s_h^t)^T | \mathcal{F}_{t-1} \right]$ and $\kappa_{1:h} := \prod_{i=1}^h \kappa_i$. We have:
\begin{align*}
&\Sigma_h^{k+1} - \lambda I = \sum_{t=1}^{k} \phi_h(s_h^t, a_h^t) \phi_h(s^t_h, a^t_h)^T \\
&\succeq \sum_{t=1}^{k} \mathbbm{1}\{\tau_h^t \in \mathcal{E}^t_h \} \phi_h(s_h^t, a_h^t) \phi_h(s^t_h, a^t_h)^T \\ 
&\overset{(a)}{=}  \sum_{t=1}^{k} \mathbbm{1}\{\tau_h^t \in \mathcal{E}^t_h \} \phi_h^*(s_h^t) \phi_h^*(s_h^t)^T \\ 
&=\sum_{t=1}^k \mathbb{E}_{\tau_h \sim d^{\mu}} \left[ \mathbbm{1}\{\tau_h \in \mathcal{E}^t_h \} \phi_h^*(s_h) \phi_h^*(s_h)^T |\mathcal{F}_{t-1} \right] + \sum_{t=1}^k Z_t \\ 
&\overset{(b)}{\succeq}  \sum_{t=1}^k  \kappa_{1:h}^{-1} \mathbb{E}_{\tau_h \sim d^*} \left[ \mathbbm{1}\{\tau_h \in \mathcal{E}^t_h \} \phi_h^*(s_h) \phi_h^*(s_h)^T | \mathcal{F}_{t-1} \right] + \sum_{t=1}^k Z_t \\
&=  \sum_{t=1}^k \kappa_{1:h}^{-1} \mathbb{E}_{\tau_h \sim d^*} \left[ \phi_h^*(s_h) \phi_h^*(s_h)^T | \mathcal{F}_{t-1} \right] -  \sum_{t=1}^k  \kappa_{1:h}^{-1} \mathbb{E}_{\tau_h \sim d^*} \left[ \mathbbm{1}\{\tau_h \notin \mathcal{E}_h \} \phi_h^*(s_h) \phi_h^*(s_h)^T | \mathcal{F}_{t-1} \right] + \sum_{t=1}^k Z_t \\
&\overset{(c)}{\succeq}  \sum_{t=1}^k \kappa_{1:h}^{-1} \Sigma^*_h -  \kappa_{1:h}^{-1} I  \sum_{t=1}^k \mathbb{E}_{\tau_h \sim d^*} \left[ \mathbbm{1}\{\tau_h \notin \mathcal{E}^t_h \} | \mathcal{F}_{t-1} \right] + \sum_{t=1}^k Z_t \\
&=  \kappa_{1:h}^{-1} k \Sigma^*_h -  \kappa_{1:h}^{-1} I \sum_{t=1}^k \mathbb{E}_{\tau_h \sim d^*} \left[ 1 - \mathbbm{1}\{\tau_h \in \mathcal{E}^t_h \} | \mathcal{F}_{t-1} \right] + \sum_{t=1}^k Z_t \\
&=  \kappa_{1:h}^{-1} k \Sigma^*_h -  \kappa_{1:h}^{-1} I \sum_{t=1}^k (1 -\mathbb{E}_{\tau_h \sim d^*} \left[\mathbbm{1}\{\tau_h \in \mathcal{E}^t_h \} | \mathcal{F}_{t-1} \right]) + \sum_{t=1}^k Z_t \\
&\overset{(d)}{=}   \kappa_{1:h}^{-1} k \Sigma^*_h -  \kappa_{1:h}^{-1} I \sum_{t=1}^k (1 -\mathbb{E}_{\tau_h \sim d^{\hat{\pi}^t}} \left[\mathbbm{1}\{\tau_h \in \mathcal{E}^t_h \} | \mathcal{F}_{t-1} \right]) + \sum_{t=1}^k Z_t \\
&=  \kappa_{1:h}^{-1} k \Sigma^*_h -   \kappa_{1:h}^{-1} I \underbrace{\sum_{t=1}^k \mathbb{E}_{\tau_h^t \sim d^{\hat{\pi}^t}} \left[ \mathbbm{1}\{\tau_h \notin \mathcal{E}^t_h \} | \mathcal{F}_{t-1} \right]}_{(i)} + \underbrace{\sum_{t=1}^k Z_t}_{(ii)} 
\end{align*}
where $(a)$ is by the definition of $\mathcal{E}^k_h$, $(b)$ is by that under Assumption \ref{assumption:single_policy_concentration}, we have
\begin{align*}
    \frac{d^{\mu}(\tau_h)}{d^*(\tau_h)} = \frac{d^{\mu}_1(s_1, a_1) \mathbb{P}_1(s_2|s_1,a_1) \ldots \mathbb{P}_{h-1}(s_h|s_{h-1},a_{h-1}) d^{\mu}_h(s_h, a_h)}{d^*_1(s_1, a_1) \mathbb{P}_1(s_2|s_1,a_1) \ldots \mathbb{P}_{h-1}(s_h|s_{h-1},a_{h-1}) d^*_h(s_h, a_h)} = \prod_{i=1}^h \frac{ d^{\mu}_i(s_i, a_i)}{d^*_i(s_i, a_i)} \geq \kappa_{1:h}^{-1},
\end{align*}
$(c)$ is by that $\phi_h^*(s_h) \phi_h^*(s_h)^T \preceq I \cdot \| \phi_h^*(s_h) \phi_h^*(s_h)^T \| \leq I \| \phi_h^*(s_h) \|_2^2 = I$, and that 
\begin{align*}
    \mathbb{E}_{\tau_h^t \sim d^*} \left[  \phi_h^*(s_h^t) \phi_h^*(s_h^t)^T | \mathcal{F}_{t-1} \right] = \mathbb{E}_{(s_h, a_h) \sim d^*_h} \left[  \phi_h^*(s_h) \phi_h^*(s_h)^T \right] = \Sigma^*_h, 
\end{align*}
and $(d)$ is by that $d^*(\tau_h | \mathcal{E}^t_h) = d^{\hat{\pi}^t}(\tau_h | \mathcal{E}^t_h)$. 

Term $(i)$ is bounded by Lemma \ref{lemma:pistart_and_pihat_disagree}. For term $(ii)$, note that $Z_t$ is $\mathcal{F}_t$-measurable, $\mathbb{E}\left[ Z_t | \mathcal{F}_{t-1} \right] = 0$, and $ \|Z_t \|_2 \leq 1$. Thus, by Lemma \ref{lemma:matrix_freedman}, with probability at least $1 - \delta$, we have:
\begin{align*}
    (ii) = \sum_{t=1}^k Z_t \geq - \frac{2}{3}\log(d/\delta) I - \sqrt{2 k \log(d/\delta)} I . 
\end{align*}
\end{proof}

\begin{lemma}
Let $\lambda_h^+$ be the smallest positive eigenvalue of $\Sigma_h^*$, $\kappa_{1:h} := \prod_{i=1}^h \kappa_h$, and define
\begin{align*}
    \bar{k}_h = \tilde{\Omega} \left( \frac{d^6 H^4 \kappa^6}{\Delta_{\min}^4 (\lambda_h^+)^2} + \frac{ \kappa_{1:h}}{\lambda_h^+} \right) \land \tilde{\Omega} \left( \frac{\kappa_{1:h}^2 \kappa^2 H^2 d^3}{ (\lambda_h^+)^2} \right). 
\end{align*}
Under Assumption \ref{assumption:single_policy_concentration}-\ref{assumption:margin}-\ref{assumption:spaning_feature}.1, w.p.a.l. $1 - 2\delta$, for any $h \in [H]$, any $k \geq \max_{h \in [H]}\bar{k}_h$, and any $v \in \textrm{col}(\Sigma^*_h)$ such that $\|v \|_2 \leq 1$, we have:
\begin{align*}
    \|v\|_{(\Sigma^k_h)^{-1}}= \mathcal{O} \left( \sqrt{\frac{\kappa_{1:h}}{(\lambda_h^{+})^3 k}} \right).
\end{align*}
\label{lemma:bound_feature_norm_by_smallest_positive_eigen_of_Sigma_star}
\end{lemma}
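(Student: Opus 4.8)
The plan is to feed the operator lower bound of Lemma~\ref{lemma:bound_Sigma_by_Sigma_star} into a monotonicity comparison for matrix inverses, and to exploit that $v$ lives in $\mathrm{col}(\Sigma_h^*)$ so that the whole argument can be restricted to that subspace. First I would discard all off-support directions: writing $\Sigma_h^k \succeq \lambda I + G_h^k$ with $G_h^k := \sum_{t<k}\mathbbm{1}\{\tau_h^t\in\mathcal{E}_h^t\}\phi_h^*(s_h^t)\phi_h^*(s_h^t)^\top \succeq 0$, the key structural observation is that, by the spanning-feature property (Assumption~\ref{assumption:spaning_feature}.2), every observed optimal feature $\phi_h^*(s_h^t)$ with $s_h^t\in\mathcal{S}_h^\mu$ lies in $\mathrm{span}\{\phi_h^*(s): s\in\mathcal{S}_h^*\}=\mathrm{col}(\Sigma_h^*)$. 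Hence $G_h^k = P\,G_h^k\,P$, where $P$ is the orthogonal projection onto $\mathrm{col}(\Sigma_h^*)$, so $\lambda I + G_h^k$ is block diagonal along $\mathrm{col}(\Sigma_h^*)\oplus\mathrm{null}(\Sigma_h^*)$. Since $\lambda I + G_h^k \succ 0$, the monotonicity rule $A\succeq B\succ0\Rightarrow v^\top A^{-1}v\le v^\top B^{-1}v$ reduces bounding $\|v\|_{(\Sigma_h^k)^{-1}}$ for $v\in\mathrm{col}(\Sigma_h^*)$ to inverting $\lambda P + G_h^k$ inside the column space only, which is exactly where the null-space perturbations become harmless.

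Next I would invoke Lemma~\ref{lemma:bound_Sigma_by_Sigma_star} (applied after the immaterial shift $k\mapsto k-1$), which, after peeling off $\lambda I$, yields $G_h^k \succeq \kappa_{1:h}^{-1}k\,\Sigma_h^* - E\,I$ with the scalar error $E := \kappa_{1:h}^{-1}\Delta_{\min}^{-1}\sum_{t\le k}\subopt(\hat\pi^t) + \tfrac23\log(d/\delta) + \sqrt{2k\log(d/\delta)}$. Projecting onto $\mathrm{col}(\Sigma_h^*)$ and using $\Sigma_h^*\succeq\lambda_h^+ P$ there gives $\lambda P + G_h^k \succeq (\lambda + \kappa_{1:h}^{-1}k\lambda_h^+ - E)\,P$ on the subspace. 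Consequently, as soon as $E\le\tfrac12\kappa_{1:h}^{-1}k\lambda_h^+$, I obtain $\|v\|_{(\Sigma_h^k)^{-1}}^2 \le 2\kappa_{1:h}/(k\lambda_h^+)$ for all $\|v\|_2\le1$; this already decays as $1/\sqrt{k}$, and since $\lambda_h^+\le\lambda_{\max}(\Sigma_h^*)\le1$ it is bounded by the advertised $\mathcal{O}\bigl(\sqrt{\kappa_{1:h}/((\lambda_h^+)^3 k)}\bigr)$.

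The remaining and principal work is to certify $E\le\tfrac12\kappa_{1:h}^{-1}k\lambda_h^+$ once $k\ge\bar k_h$. The stochastic term $\sqrt{2k\log(d/\delta)}$ grows only like $\sqrt k$ and is dominated by the linear-in-$k$ main term as soon as $k\gtrsim \kappa_{1:h}^2\log(d/\delta)/(\lambda_h^+)^2$, and the additive $\tfrac23\log(d/\delta)$ is absorbed by the same comparison. The delicate term is $\Delta_{\min}^{-1}\sum_{t\le k}\subopt(\hat\pi^t)$, which I would bound in two complementary ways matching the two branches of the minimum defining $\bar k_h$: one route uses the gap-dependent guarantee of Theorem~\ref{theorem:logarithimic_regret}, so that $\sum_{t\le k}\subopt(\hat\pi^t)=\tilde{\mathcal{O}}(d^3H^2\kappa^3/\Delta_{\min})$ is essentially constant in $k$; the other uses the minimax elliptical-potential bound (Lemma~\ref{lemma:bound_shrinking_sum}/Corollary~\ref{corollary:sublinear_subopt}). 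Taking whichever yields the weaker requirement produces $\bar k_h = \min\{\cdots\}$, after which each term of $E$ is a constant fraction of $\tfrac12\kappa_{1:h}^{-1}k\lambda_h^+$; a union bound over $h\in[H]$ and over the events of Lemma~\ref{lemma:bound_Sigma_by_Sigma_star} then finishes the proof. I expect the hard part to be this threshold bookkeeping — making the accumulated-sub-optimality term subdominant without circularity and keeping the polylogarithmic factors consistent — whereas the projection-plus-monotonicity step is the conceptual heart and is clean.
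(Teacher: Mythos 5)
Your route is genuinely different from the paper's and, where it applies, sharper. The paper lower-bounds $\Sigma_h^{k+1}$ by $A_h^k = \lambda I + \kappa_{1:h}^{-1}k\,\Sigma_h^* - (\cdots)I$, notes that $A_h^k$ shares eigenvectors with $\Sigma_h^*$, and controls $\|x\|_{(A_h^k)^{-1}}$ for $x\in\mathrm{col}(\Sigma_h^*)$ via the Kantorovich-type inequality of Lemma~\ref{lemma:Kantovich} together with Lemma~\ref{lemma:smallest_positive_eigenvalue_via_column_space}; the detour through $\|x\|_{(A_h^k)^{-1}}\|x\|_{A_h^k}\le \lambda'_d/\bar\lambda_h^+$ is exactly what produces the $(\lambda_h^+)^{-3/2}$ in the statement. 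You instead compare $\Sigma_h^k$ with the positive-definite matrix $\lambda I+G_h^k$, use block-diagonality along $\mathrm{col}(\Sigma_h^*)\oplus\mathrm{null}(\Sigma_h^*)$ to invert only on the column space, and arrive at $\|v\|^2_{(\Sigma_h^k)^{-1}}\le 2\kappa_{1:h}/(k\lambda_h^+)$, i.e.\ only a $(\lambda_h^+)^{-1/2}$ dependence after taking square roots — strictly stronger than the claim. Your comparison matrix is also honestly positive definite, which sidesteps the delicate point that the paper's $A_h^k$ has negative eigenvalues on $\mathrm{null}(\Sigma_h^*)$ for large $k$, so the inverse-monotonicity step there implicitly needs the same column-space restriction you make explicit. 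The threshold bookkeeping via the two choices of $f(k)$ (Theorem~\ref{theorem:logarithimic_regret} versus Corollary~\ref{corollary:sublinear_subopt}) is identical in spirit to the paper's.

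One caveat: your block-diagonality claim $G_h^k=P\,G_h^k\,P$ rests on Assumption~\ref{assumption:spaning_feature}.2, so that the surviving features $\phi_h^*(s_h^t)$ with $s_h^t\in\mathcal{S}_h^\mu$ lie in $\mathrm{col}(\Sigma_h^*)$, whereas the lemma is stated under Assumption~\ref{assumption:spaning_feature}.1 only. Without part 2 you cannot discard the off-diagonal blocks of $G_h^k$ (the ordering $M\succeq PMP$ fails for general p.s.d.\ $M$), and the restriction-to-subspace inversion breaks down, since the inverse of a non-block-diagonal matrix mixes the two subspaces. Because the lemma is only ever invoked inside Theorem~\ref{theorem:constant_regret}, where Assumption~\ref{assumption:spaning_feature}.2 is in force, nothing downstream is affected; but to prove the lemma exactly as stated you would either need to add that hypothesis or revert to the paper's shared-eigenvector argument.
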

\begin{proof}
By Lemma \ref{lemma:bound_Sigma_by_Sigma_star} and the union bound, with probability at least $1 - \delta$, for any $(k,h) \in [K] \times [ H]$, we have: $\Sigma^{k+1}_h \succeq A^k_h$ where
\begin{align*}
    A^k_h := \lambda I + \kappa_{1:h}^{-1} k \Sigma_h^* - \kappa_{1:h}^{-1} I \frac{1}{\Delta_{\min}} f(k) - \frac{2}{3}\log(dKH/\delta) I - \sqrt{2 k \log(dKH/\delta)} I, 
\end{align*}
where $f(k)$ is any upper bound of $\sum_{t=1}^k\subopt(\hat{\pi}^t)$. We can choose $f(k) = \tilde{\mathcal{O}} \left( \kappa H d^{3/2}\sqrt{k} \right)$ by Corollary \ref{corollary:sublinear_subopt}, or $f(k) = \tilde{\mathcal{O}} \left( \frac{ d^3 H^2 \kappa^{3}}{\Delta_{\min} } \right)$ by Theorem \ref{theorem:logarithimic_regret}. 
We fix $h$. Let $0 \leq \lambda_1 \leq \ldots \leq \lambda_d$ be the eigenvalues of $\Sigma^*_h$ with the corresponding orthonormal vectors $u_1, \ldots, u_d$, and let $\lambda_h^+$ the smallest positive eigenvalue of $\Sigma^*_h$. Then, $A^k_h$ has the eigenvalues $\lambda'_1 \leq \ldots \leq \lambda'_d$ with the same corresponding orthonormal vectors $u_1, \ldots, u_d$, where:
\begin{align*}
    \lambda'_i : = 1  + \kappa_{1:h}^{-1} k \lambda_i - \kappa_{1:h}^{-1}  \frac{1}{\Delta_{\min}} f(k) - \frac{2}{3}\log(dkH/\delta)  - \sqrt{2 k \log(dkH/\delta)} 
\end{align*}

It is easy to verify that if $k \geq k_h$, we have:
\begin{align*}
    1   - \kappa_{1:h}^{-1}  \frac{1}{\Delta_{\min}} f(k) - \frac{2}{3}\log(dkH/\delta)  < \sqrt{2 k \log(dkH/\delta)}, \text{ and } \\
    1  + \kappa_{1:h}^{-1} k \lambda^+_h - \kappa_{1:h}^{-1}  \frac{1}{\Delta_{\min}} f(k) - \frac{2}{3}\log(dkH/\delta)  - \sqrt{2 k \log(dkH/\delta)} > 0.
\end{align*}
Thus, for any $k \geq k_h$,  $\lambda'_i \neq 0, \forall i \in [d]$, i.e. $A^k_h$ is invertible,  we have:
\begin{align*}
   \bar{\lambda}_h^+ := 1  + \kappa_{1:h}^{-1} k \lambda_h^+ - \kappa_{1:h}^{-1}  \frac{1}{\Delta_{\min}} f(k) - \frac{2}{3}\log(dkH/\delta)  - \sqrt{2 k \log(dkH/\delta)}
\end{align*}
is the smallest positive eigenvalue of $A^k_h$, and hence for any $v \in \textrm{span}\{\phi^*_h(s) | d^*_h(s) > 0\}$, such that $\|v \|_2 \leq 1$, let $x = v / \| v \|_2$. We have:
\begin{align*}
    \| v \|_{(\Sigma^{k+1}_h)^{-1}} \leq \| x \|_{(\Sigma^{k+1}_h)^{-1}} \leq \| x \|_{(A^{k}_h)^{-1}} \leq \frac{\lambda'_d}{\bar{\lambda}_h^+} \frac{1}{\| x \|_{A^{k}_h}} \leq \frac{\lambda'_d}{(\bar{\lambda}_h^+)^{3/2}} \frac{1}{\|x\|_2} = \frac{\lambda'_d}{(\bar{\lambda}_h^+)^{3/2}} = \mathcal{O} \left( \sqrt{\frac{\kappa_{1:h}}{(\lambda_h^{+})^3 k}} \right)
\end{align*}
where the first inequality is by $\| v\|_2 \leq 1$, the second inequality is by $(\Sigma^{k+1}_h)^{-1} \preceq (A^k_h)^{-1}$, the third inequality is by Lemma \ref{lemma:Kantovich}, and the fourth inequality is by Lemma \ref{lemma:smallest_positive_eigenvalue_via_column_space}, $\| v \|_{A^{k}_h} \geq \| v \|_2 \sqrt{\bar{\lambda}_h^+}$. Choosing $\bar{k} = \max_{h} \bar{k}_h$ completes the proof.
\end{proof}

\subsection*{Proof of Theorem \ref{theorem:constant_regret}}

\begin{proof}[Proof of Theorem \ref{theorem:constant_regret}]
Let $\mathcal{E}$ be the event that the inequalities in Corollary \ref{corollary:sublinear_subopt}, Theorem \ref{theorem:logarithimic_regret}, and Lemma \ref{lemma:bound_feature_norm_by_smallest_positive_eigen_of_Sigma_star} hold simultaneously. We now consider event $\mathcal{E}$ for the rest of the proof. Consider any state $s_h \in \mathcal{S}^{\mu}_h$. By Assumption \ref{assumption:spaning_feature}.2, we have $\phi_h^*(s_h) \in \textrm{col}(\Sigma^*_h)$. Thus, by Lemma \ref{lemma:bound_feature_norm_by_smallest_positive_eigen_of_Sigma_star}, if $k \geq \bar{k}_h$  
\begin{align*}
    2 \beta_k(\delta) \| \phi_h(s_h, a_h) \|_{(\Sigma_h^k)^{-1}} = \tilde{\mathcal{O}} \left( dH \sqrt{\frac{\kappa_{1:h}}{(\lambda_h^{+})^3 k}} \right). 
\end{align*}
We choose $k$ such that $k \geq \max_{h} \bar{k}_h$ and 
\begin{align*}
    \tilde{\Omega} \left( dH \sqrt{\frac{\kappa_{1:h}}{(\lambda_h^{+})^3 k}} \right) \leq \frac{\Delta_{\min}}{H}, 
\end{align*}
i.e. 
\begin{align*}
    k \geq \tilde{\Omega} \left( \frac{d^2 H^4 \kappa_{1:h}}{\Delta_{\min}^2 (\lambda_h^+)^3} \right), \forall h. 
\end{align*}
Then, we have: 
\begin{align*}
    \Delta_h(s_h, \hat{\pi}^k(s_h) ) &= V^*_h(s_h) - Q^*(s_h, \hat{\pi}^k(s_h) ) \\ 
    &\leq V^*_h(s_h) - Q^{\hat{\pi}^k}(s_h, \hat{\pi}^k(s_h) ) \\ 
    &\leq 2 \beta_k(\delta) \mathbb{E}_{\pi^*} \left[ \sum_{h'=h}^H \| \phi_h(s_h, a_h) \|_{(\Sigma_{h'}^k)^{-1}} | \mathcal{F}_{k-1}, s_h \right] \\
    &< (H - h + 1) \frac{\Delta_{\min}}{H} \leq \Delta_{\min}. 
\end{align*}
Thus, $\Delta_h(s_h, \hat{\pi}^k(s_h) ) = 0, \forall h$. Therefore, for any initial state $s_1 \sim d_1$, we have:
\begin{align*}
    \subopt(\hat{\pi}^k; s_1)  &= \mathbb{E}_{\hat{\pi}^k} \left[ \sum_{h=1}^H \Delta_h(s_h, a_h) \bigg | \mathcal{F}_{k-1}, s_1 \right] \\
    &= \mathbb{E}_{\hat{\pi}^k} \left[ \sum_{h=1}^H \Delta_h(s_h, \hat{\pi}^k_h(s_h)) \bigg | \mathcal{F}_{k-1}, s_1 \right] \\ 
    &\leq \mathbb{E}_{\mu} \left[ \sum_{h=1}^H \kappa_h \Delta_h(s_h, \hat{\pi}^k_h(s_h)) \bigg | \mathcal{F}_{k-1}, s_1 \right] = 0
\end{align*}
where the first equation is by Lemma \ref{lemma: suboptimality as gaps} and the inequality is by Lemma \ref{lemma:concentrability for constrained policy class} and Assumption \ref{assumption:lower bound density}.


\end{proof}

\subsection{Proof of Theorem \ref{theorem:lower_bound}}
\label{section:proof_lowerbound}
In this section, we give the proof of Theorem \ref{theorem:lower_bound}. To prove the lower bound, we construct the hard MDP instances introduced by \cite{jin2021pessimism}. The key difference is that in our construction, we need to carefully design the behavior policy $\mu$ to incorporate the optimal-policy concentrability $\{\kappa_{h}\}_{h \in [H]}$ and the minimum positive action gap $\Delta_{\min}$ into the lower bound. As a byproduct, we also construct a minimax lower bound without incorporating $\Delta_{\min}$.  

\subsection*{Construction of a hard instance}
\label{subsection:MDP_parameterization}
We construct a class of MDPs parameterized by $M(p_1, p_2)$ with horizon $H \geq 2$, action space $\mathcal{A} = \{b_i\}_{i = 1}^A$ (where $A \geq 2$), state space $\mathcal{S} = \{x_0, x_1, x_2\}$, initial state distribution $d_1(x_0) = 1$, transition kernels $\mathbb{P}_1(x_1 | x_0, p_i) = p_i$, $\mathbb{P}_1(x_2 | x_0, p_i) = 1 - p_i, \forall i \in [A]$ where $p_i := \min \{p_1, p_2\}, \forall i \geq 3$, $\mathbb{P}_h(x_1|x_1, a) = 1, \forall (h,a) \in [H] \times \mathcal{A}$, and reward functions $r_h(s_h,a_h) = 1\{s_h = x_1, h \geq 2\}$. It is not hard to see that the optimal action at the first stage is $b_{i^*}$ where $i^* = \argmax \{p_i: i \in \{1,2\}\}$ and the optimal action at any stage $h \geq 2$ is any action $a \in \mathcal{A}$. The diagram of $M(p_1, p_2)$ is depicted in Figure \ref{fig:hard_mdp}. It is easy to see that this MDP satisfies both the definition of a linear MDP (see Definition \ref{definition:linear_mdp}) and of a mixture linear MDP (see Definition \ref{definition: linear mixture mdp}). 
\begin{figure}[h!]
    \centering
    \includegraphics{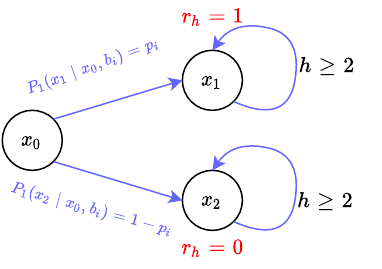}
    \caption{The diagram of the hard MDP introduced by \cite{jin2021pessimism}. }
    \label{fig:hard_mdp}
\end{figure}
By direct computation, we have: 
\begin{align*}
&\begin{cases}
    V^*_1(x_0) &= \max \{p_1, p_2 \} (H - 1), \\
    Q^*_1(x_0, b_i) &= p_i (H - 1), \forall i \in [A], \\
    Q^*_h(x_1, a) &= H - h, \forall (h, a) \in \{2, \ldots, H\} \times \mathcal{A}, \\ 
    Q^*_h(x_2, a) &= 0, \forall (h, a) \in \{2, \ldots, H\} \times \mathcal{A}.
\end{cases} \\
&\begin{cases}
    V^{\pi}_1(x_0) &= \sum_{i=1}^A \pi_1(b_i | x_0) p_i (H-1), \\
    Q^{\pi}_1(x_0, b_i) &= p_i (H - 1), \forall i \in [A], \\ 
    Q^{\pi}_h(x_1, a) &= H - h, \forall (h, a) \in \{2, \ldots, H\} \times \mathcal{A}, \\ 
    Q^{\pi}_h(x_2, a) &= 0, \forall (h, a) \in \{2, \ldots, H\} \times \mathcal{A}.   
\end{cases} \\ 
&\begin{cases}
    d^*_1(x_0) &= 1, \\ 
    d^*_h(x_1) &= \max \{p_1, p_2\}, \forall h \geq 2, \\
    d^*_h(x_2) &= 1 - \max \{p_1, p_2\}, \forall h \geq 2.
\end{cases} \\
&\begin{cases}
    d^{\pi}_1(x_0) &= 1, \\ 
    d^{\pi}_h(x_1) &= \sum_{i=1}^A \pi_1(b_i | x_0) p_i, \\ 
    d^{\pi}_h(x_2) &= \sum_{i=1}^A \pi_1(b_i|x_0) (1 - p_i).
\end{cases}
\end{align*}
Via direct computation, we have the minimum gap as $\Delta_{\min} = |p_1 - p_2| (H - 1)$. 

\subsection*{Proof of Theorem \ref{theorem:lower_bound}}
\begin{proof}[Proof of Theorem \ref{theorem:lower_bound}]
We consider two MDP instances $M_1 := M(p^*, p)$ and $M_2 := M(p, p^*)$ (the parameterization is defined in Subsection \ref{subsection:MDP_parameterization}) where $p^* > p$, whose optimal actions in the first stage are $b_1$ and $b_2$, respectively, where $b_1 \neq b_2$. The intuition for the hardness of these two instances is that any policy is sub-optimal in at least one of the instances.  We have: 
\begin{align*}
    \subopt(\pi; M_1) &= (H-1)(p^* - p)(1 - \pi_1(b_1|x_0)), \\ 
    \subopt(\pi; M_2) &= (H-1)(p^* - p)(1 - \pi_1(b_2|x_0)).
\end{align*}

Consider any policy $\pi = \textrm{Algo}(\mathcal{D})$ and let $a_1 \sim \pi_1(\cdot| x_0)$ (note that $a_1$ is a random variable). We have: 
\begin{align*}
    &2 \max_{l \in \{1,2\}} \mathbb{E}_{\mathcal{D} \sim M_l} \left[\subopt(\textrm{Algo}(\mathcal{D}); M_l) \right] \\
    &\geq \mathbb{E}_{\mathcal{D} \sim M_l} \left[ \subopt(\textrm{Algo}(\mathcal{D}); M_1) \right] + \mathbb{E}_{\mathcal{D} \sim M_2} \left[ \subopt(\textrm{Algo}(\mathcal{D}); M_2) \right] \\ 
    &= (H-1) (p^*- p)\left( \mathbb{E}_{\mathcal{D} \sim M_1} \left[ 1 - \pi_1(b_1|x_0) \right] + \mathbb{E}_{\mathcal{D} \sim M_2} \left[ 1 - \pi_1(b_2|x_0) \right] \right) \\ 
    &\geq (H-1) (p^*- p)\left( \mathbb{E}_{\mathcal{D} \sim M_1} \left[ 1 - \pi_1(b_1|x_0) \right] + \mathbb{E}_{\mathcal{D} \sim M_2} \left[ \pi_1(b_1|x_0) \right] \right) \\ 
    &= (H - 1) (p^* - p) \left( \mathbb{E}_{D \sim M_1} \left[ 1\{a_1 \neq b_1 \} \right] +  \mathbb{E}_{D \sim M_2} \left[ 1\{a_1 = b_1\} \right]  \right) \\
    &\geq (H - 1) (p^* - p)(1 - \textrm{TV}(P_{M_1}, P_{M_2})) \\ 
    &\geq (H -1) (p^* - p)(1 - \sqrt{\kl(P_{\mathcal{D} \sim M_1} \| P_{\mathcal{D} \sim M_2})/2}),
\end{align*}
where the third inequality is by the definition of the total variation distance $\textrm{TV}(P,Q) = \sup \{|P(B) - Q(B)|: \forall B \text{ is measurable}\}$, and the last inequality is by Donsker's inequality. 
\paragraph{Construction of behavior policy.} To construct the behaviour policy $\mu$ that satisfies $\sup_{h, s_h, a_h }\frac{d^{*}_h(s_h,a_h)}{d^{\mu}_h(s_h, a_h)} \leq \kappa_h, \forall h \in [H]$ in both $M(p^*, p)$ and $M(p, p^*)$, we consider $\mu_h(a | x_i) = \frac{1}{A}, \forall (h,a,i) \in \{2, \ldots, H\} \times \mathcal{A} \times \{1,2\}$. We also set $\mu_1(b_1 | x_0) = \mu_1(b_2|x_0) = q$, where $q \leq 1/2$ since $b_1 \neq b_2$. By direct computation, we have: 
\begin{align*}
\begin{cases}
     \max_{s_1, a_1} \frac{d^{M_i,*}_1(s_1, a_1)}{d^{M_i,\mu}_1(s_1, a_1)} &= \frac{1}{q} \\ 
  \max_{s_h, a_h} \frac{d^{M_i,*}_h(s_h, a_h)}{d^{M_i,\mu}_h(s_h, a_h)} &\leq \max \left\{ \frac{p^*}{ q(p^* + p)}, \frac{1 - p^*}{ q(2 - p^* - p) } \right\} = \frac{p^*}{ q(p^* + p)} \leq \frac{1}{q} 
\end{cases}
\end{align*}

As $\kappa_h \geq 2$, we can set $q = \frac{1}{\min_{h} \kappa_h} = \frac{1}{ \kappa_{\min}}\in (0, \frac{1}{2}]$, and thus we have $\sup_{h, s_h, a_h }\frac{d^{*}_h(s_h,a_h)}{d^{\mu}_h(s_h, a_h)} \leq \kappa_h, \forall h \in [H]$. 
\paragraph{Computing $\kl(\mathbb{P}_{\mathcal{D} \sim M_1} \| \mathbb{P}_{\mathcal{D} \sim M_2})$.} We consider dataset $\mathcal{D} = \{(s^t_h, a^t_h, r^t_h)\}_{h \in [H]}^{t \in [K]}$ such that $s^t_1 = x_0, \forall t \in [K]$ and $\mathcal{D}$ agrees with the class $\{M(p_1, p_2): p_1 \neq p_2\}$ and behavior policy $\mu$. Under $M(p_1, p_2)$, we have $r^t_1 = 0, \forall t \in [K]$ and $s^t_h = s^t_2, r^t_h = r^t_2, \forall t \in [K], \forall h \geq 2$. Thus, $\mathcal{D}' := \{(s^t_1, a^t_1, s^t_2, r^t_2)\}^{t \in [K]}$ is as informative as $\mathcal{D}$. \footnote{In the sense that knowing $\mathcal{D}'$ implies $\mathcal{D}$. } Thus, $KL(\mathbb{P}_{\mathcal{D} \sim M_1} \| \mathbb{P}_{\mathcal{D} \sim M_2}) = KL(\mathbb{P}_{\mathcal{D}' \sim M_1} \| \mathbb{P}_{\mathcal{D}' \sim M_2})$.

Let $n_l := \sum_{t=1}^K 1\{a^t_1 = b_l\}$ be the number of episodes in $\mathcal{D}'$ with the first-stage action $b_l$ and $Z_l = \{r^t_2: a^t_2 = b_l, t \in [K]\}$ be the set of the second-stage reward in  $\mathcal{D}'$ in such episodes. We have: 
\begin{align*}
    \mathbb{P}_{M_1}(\mathcal{D}') &= \mu_1^{n_1}(b_1 | x_0) (p^*)^{\sum_{z \in Z_1} z} (1 - p^*)^{n_1 - \sum_{z \in Z_1} z} \prod_{i \neq 1}  \mu_1^{n_i}(b_i | x_0) p^{\sum_{z \in Z_i} z} (1 - p)^{n_i - \sum_{z \in Z_i} z}, \\ 
    \mathbb{P}_{M_2}(\mathcal{D}') &= \mu_1^{n_1}(b_2 | x_0) (p^*)^{\sum_{z \in Z_2} z} (1 - p^*)^{n_2 - \sum_{z \in Z_2} z} \prod_{i \neq 2}  \mu_1^{n_i}(b_i | x_0) p^{\sum_{z \in Z_i} z} (1 - p)^{n_i - \sum_{z \in Z_i} z}.
\end{align*}
Thus, we have: 
\begin{align*}
    \log \frac{\mathbb{P}_{M_1}(\mathcal{D}')}{\mathbb{P}_{M_2}(\mathcal{D}')} = \delta_{1,2} \log \frac{p^* (1 - p)}{p (1 - p^*)} + (n_1 - n_2) \log \frac{1 - p^*}{1 - p},
\end{align*}
where $\delta_{1,2} := \sum_{z \in Z_1} z - \sum_{z \in Z_2} z$. Thus, we have: 
\begin{align*}
    KL(\mathbb{P}_{\mathcal{D}' \sim M_1} \| \mathbb{P}_{\mathcal{D}' \sim M_2}) &= \mathbb{E}_{\mathcal{D}' \sim M_1} \left[  \log \frac{\mathbb{P}_{M_1}(\mathcal{D}')}{\mathbb{P}_{M_2}(\mathcal{D}')} \right] \\ 
    &= \mathbb{E}_{\mathcal{D}' \sim M_1} [\delta_{1,2}] \log \frac{p^* (1 - p)}{p (1 - p^*)} + \mathbb{E}_{\mathcal{D}' \sim M_1} [n_1 - n_2] \log \frac{1 - p^*}{1 - p} \\ 
    &= K q (p^* - p) \log \frac{p^* (1 - p)}{p (1 - p^*)} \\ 
    &= K q (p^* - p) \log \left(1 + \frac{p^*-p}{p(1 - p^*)} \right). 
\end{align*} 
\paragraph{Construction of $(p^*, p)$.}
Now we choose $p, p^* \in [\frac{1}{4}, \frac{3}{4}]$ and $p^* - p \leq \frac{1}{16}$. For such $p^*, p$, we have $ \log \left(1 + \frac{p^*-p}{p(1 - p^*)} \right) \leq \log (1 + 16(p^* - p)) \leq 16 (p^* - p)$, where the last inequality is by that $\log(1 + x) \leq x, \forall x \in (0,1]$. Thus, we have $\kl(\mathbb{P}_{\mathcal{D}' \sim M_1} \| \mathbb{P}_{\mathcal{D}' \sim M_2}) \leq 16 q K (p^* - p)^2$. Hence, we have: 
\begin{align}
    2 \max_{l \in \{1,2\}} \mathbb{E}_{\mathcal{D} \sim M_l} \left[\subopt(\textrm{Algo}(\mathcal{D}); M_l) \right] &\geq (H -1) (p^* - p)(1 - \sqrt{\kl(\mathbb{P}_{\mathcal{D} \sim M_1} \| \mathbb{P}_{\mathcal{D} \sim M_2})/2}) \nonumber \\ 
    &\geq (H -1) (p^* - p) (1 - 2 (p^* - p) \sqrt{2 q K}). 
    \label{eq: lower bound of maximum of subopt}
\end{align}
Recall that we choose $q = 1/ \kappa_{\min}$. Now, we simply set: 
\begin{align*}
    p^* - p = \frac{1}{4 \sqrt{2}} \sqrt{\frac{\kappa_{\min}}{K}}. 
\end{align*}
Then, we have $1 - 2 (p^* - p) \sqrt{2 q K} = \frac{1}{2}$.

\paragraph{Minimax lower bound.} By Eq. (\ref{eq: lower bound of maximum of subopt}), with the parameter choice above, we have 
\begin{align*}
    2 \max_{l \in \{1,2\}} \mathbb{E}_{\mathcal{D} \sim M_l} \left[\subopt(\textrm{Algo}(\mathcal{D}); M_l) \right] = \Omega\left( H \sqrt{\frac{\kappa_{\min}}{K}} \right). 
\end{align*}

\paragraph{Gap-dependent lower bound.} To incorporate the gap information into the lower bound, note that $\Delta_{\min} = (p^* - p) (H - 1) = \frac{H - 1}{4 \sqrt{2}}  \sqrt{\frac{\kappa_{\min}}{K}}$. Thus 
\begin{align*}
    \Delta_{\min} = \frac{1}{32} \frac{(H - 1)^2 \kappa_{\min}}{K \Delta_{\min}}. 
\end{align*}
Therefore, by Eq. (\ref{eq: lower bound of maximum of subopt}), we have: 
\begin{align*}
    \max_{l \in \{1,2\}} \mathbb{E}_{\mathcal{D} \sim M_l} \left[\subopt(\textrm{Algo}(\mathcal{D}); M_l) \right] &\geq \frac{1}{2} (H -1) (p^* - p) (1 - 2 (p^* - p) \sqrt{2 q K}) \\
    &= 2^{-7} \frac{(H-1)^2 \kappa_{\min}}{ K \Delta_{\min}} = \Omega \left( \frac{ H^2 \kappa_{\min}}{K \Delta_{\min}} \right). 
\end{align*}
\end{proof}

\section{Auxiliary Lemmas}

\subsection*{MDPs}
\begin{lemma}[Extended Value Difference {\citep[Section~B.1]{cai2020provably}}]
Let $\pi = \{\pi_h\}_{h=1}^H$ and $\pi' = \{\pi'_h\}_{h=1}^H$ be two arbitrary policies and let $\{Q_h\}_{h=1}^H$ be arbitrary functions $\mathcal{S} \times \mathcal{A} \rightarrow \mathbb{R}$. Let $V_h := \langle Q_h, \pi_h \rangle$. Then $\forall s \in \mathcal{S}$, $\forall h \in [H]$,  
\begin{align*}
    V_h(s) - V_h^{\pi'}(s) &= \sum_{i=h}^H \mathbb{E}_{\pi'} \left[ \langle Q_i(s_i, \cdot), \pi_i(\cdot|s_i) - \pi'_i(\cdot|s_i) \rangle | s_h = s\right] \\ 
    &+ \sum_{i=h}^H \mathbb{E}_{\pi'} \left[ Q_i(s_i, a_i) - T_i V_{i+1} (s_i, a_i) | s_h = s\right], 
\end{align*}
where $T_iV := r_i + P_i V$ and $\mathbb{E}_{\pi'}$ is the expectation over the randomness of $(s_h, a_h, \ldots, s_H, a_H)$ induced by $\pi'$. 
\label{lemma:EVD}
\end{lemma}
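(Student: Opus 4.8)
The plan is to prove this purely algebraic identity by a one-step decomposition that telescopes into the two claimed sums, equivalently realized as backward induction on $h$ from $H$ down to $1$. The engine at each stage is an \emph{add-and-subtract} trick. Fix $s$ and $h$. Using the definition $V_h(s) = \langle Q_h(s,\cdot), \pi_h(\cdot|s)\rangle$ and the Bellman equation for $\pi'$, namely $V_h^{\pi'}(s) = \langle Q_h^{\pi'}(s,\cdot), \pi'_h(\cdot|s)\rangle$ with $Q_h^{\pi'} = T_h V_{h+1}^{\pi'}$, I would insert the intermediate quantity $\langle Q_h(s,\cdot), \pi'_h(\cdot|s)\rangle$ to obtain
\begin{align*}
V_h(s) - V_h^{\pi'}(s) = \langle Q_h(s,\cdot), \pi_h(\cdot|s) - \pi'_h(\cdot|s)\rangle + \langle Q_h(s,\cdot) - Q_h^{\pi'}(s,\cdot), \pi'_h(\cdot|s)\rangle.
\end{align*}
The first term is already the policy-mismatch contribution at stage $i = h$ in the statement.

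The key step is to unfold the second inner product using $Q_h^{\pi'} = T_h V_{h+1}^{\pi'}$ together with $T_h V = r_h + P_h V$. Adding and subtracting $T_h V_{h+1}$ and cancelling the shared reward term $r_h$ yields the pointwise identity
\begin{align*}
Q_h(s,a) - Q_h^{\pi'}(s,a) = \bigl[ Q_h(s,a) - (T_h V_{h+1})(s,a) \bigr] + \bigl( P_h (V_{h+1} - V_{h+1}^{\pi'}) \bigr)(s,a),
\end{align*}
where I use that $P_h$ is linear in its value argument. Taking $\langle \cdot, \pi'_h(\cdot|s)\rangle$ produces exactly the stage-$h$ Bellman-error term $\mathbb{E}_{a \sim \pi'_h(\cdot|s)}[Q_h(s,a) - (T_h V_{h+1})(s,a)]$ plus a ``carry'' term $\mathbb{E}_{a \sim \pi'_h(\cdot|s)} \mathbb{E}_{s' \sim \mathbb{P}_h(\cdot|s,a)}[ V_{h+1}(s') - V_{h+1}^{\pi'}(s') ]$. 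The carry term is precisely a value-difference at stage $h+1$, averaged over the one-step transition induced by $\pi'$, so the claim at level $h$ has been reduced to the same claim at level $h+1$.

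I would then unroll this recursion from $h$ to the terminal stage, using the convention $V_{H+1} = V_{H+1}^{\pi'} = 0$ so that the carry term vanishes at $i = H$. The cleanest phrasing is as an induction hypothesis applied to level $h+1$ inside the carry term: composing the nested conditional expectations over $(a_h, s_{h+1}, a_{h+1}, \ldots)$ via the tower property collapses them into the single operator $\mathbb{E}_{\pi'}[\,\cdot \mid s_h = s]$, which reproduces the two stated sums over $i = h, \ldots, H$.

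The main obstacle is notational rather than conceptual: I must keep the operator $\mathbb{E}_{\pi'}$ consistent across the recursion, making explicit that each step absorbs a fresh pair $(a_i \sim \pi'_i,\ s_{i+1} \sim \mathbb{P}_i)$ into the conditional expectation, and I must verify the boundary handling at $i = H$ through the zero terminal value. It is worth emphasizing that the identity is entirely algebraic, requires no concentration or probabilistic machinery, and holds for \emph{arbitrary} functions $Q_h$ rather than the true $Q$-values of any policy — which is exactly the generality that makes it applicable to the pessimistic estimates $\hat{Q}_h^k$ used in Lemma \ref{lemma:bound sub-opt with uncertainty}.
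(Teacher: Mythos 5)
Your proposal is correct and follows essentially the same route as the paper's proof: the same add-and-subtract of $\langle Q_i(s_i,\cdot), \pi'_i(\cdot|s_i)\rangle$, the same identification of the Bellman error $Q_i - T_i V_{i+1}$ plus the carry term $P_i(V_{i+1} - V_{i+1}^{\pi'})$, which the paper then telescopes by summing over $i = h,\ldots,H$ — exactly your recursion unrolled, with the boundary handled by the zero terminal value. Your phrasing as backward induction versus the paper's explicit telescoping sum is a presentational difference only, and your observation that the identity is purely algebraic and holds for arbitrary $Q_h$ matches how the paper uses it downstream.
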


\begin{proof}[Proof of Lemma \ref{lemma:EVD}]
Fix $h \in [H]$. Denote $\xi_i:= Q_i - T_i V_{i+1}$. For $\forall i \in [h, H-1]$, $\forall s_h \in \mathcal{S}$, we have
\begin{align*}
    \mathbb{E}_{\pi'} \left[V_i(s_i) - V_i^{\pi'}(s_i)|s_h \right] 
    &=  \mathbb{E}_{\pi'} \left[ \langle Q_i(s_i, \cdot), \pi_i(\cdot|s_i) \rangle - \langle Q_i^{\pi'}(s_i, \cdot), \pi'_i(\cdot|s_i) \rangle | s_h \right] \nonumber \\ 
    &= \mathbb{E}_{\pi'} \left[ \langle Q_i(s_i, \cdot), \pi_i(\cdot|s_i) - \pi'_i(\cdot|s_i) \rangle + \langle Q_i(s_i, \cdot) - Q_i^{\pi'}(s_i, \cdot), \pi'_i(\cdot|s_i) \rangle | s_h \right] \nonumber \\ 
    &= \mathbb{E}_{\pi'} \left[ \langle Q_i(s_i, \cdot), \pi_i(\cdot|s_i) - \pi'_i(\cdot|s_i) \rangle | s_h \right] \nonumber \\
    &+ \mathbb{E}_{\pi'} \left[ \langle \xi_i(s_i, \cdot) + T_i V_{i+1}(s_i, \cdot) - (r_i(s_i, \cdot) + P_i V^{\pi'}_{i+1}(s_i, \cdot)), \pi'_i(\cdot|s_i) \rangle | s_h \right] \nonumber \\ 
    &=\mathbb{E}_{\pi'} \left[ \langle Q_i(s_i, \cdot), \pi_i(\cdot|s_i) - \pi'_i(\cdot|s_i) \rangle |s_h \right] + \mathbb{E}_{\pi'} \left[ \xi_i(s_i, a_i) | s_h\right] \nonumber \\
    &+ \mathbb{E}_{\pi'} \left[  P_i (V_{i+1} - V^{\pi'}_{i+1})(s_i, a_i) |s_h \right] \nonumber \\
    &= \mathbb{E}_{\pi'} \left[ \langle Q_i(s_i, \cdot), \pi_i(\cdot|s_i) - \pi'_i(\cdot|s_i) \rangle |s_h \right] + \mathbb{E}_{\pi'} \left[ \xi_i(s_i, a_i) | s_h\right] \nonumber \\
    &+ \mathbb{E}_{\pi'} \left[V_{i+1}(s_{i+1}) - V_{i+1}^{\pi'}(s_{i+1})|s_h \right]. 
\end{align*}
Taking $\sum_{i=h}^H$ both sides of the last equation above completes the proof. 

\end{proof}

\begin{lemma}
Let $\hat{\pi} = \{\hat{\pi}_h\}_{h=1}^H$ and $\hat{Q}_h(\cdot, \cdot)$ be arbitrary policy and $Q$-function. Let $\hat{V}_h(s) = \langle \hat{Q}_h(s,\cdot), \hat{\pi}_h(\cdot|s) \rangle$ and $\zeta_h(s,a) := (T_h \hat{V}_{h+1})(s,a) - \hat{Q}_h(s,a)$. For any policy $\pi$ and $h \in [H]$, we have 
\begin{align*}
    V_h^{\pi}(s) - V^{\hat{\pi}}_h(s) &= \sum_{i=h}^H \mathbb{E}_{\pi} \left[ \zeta_i(s_i, a_i) |s_h = s \right] - \sum_{i=h}^H \mathbb{E}_{\hat{\pi}} \left[ \zeta_i(s_i, a_i) |s_h = s \right] \\
    &+ \sum_{i=h}^H \mathbb{E}_{\pi} \left[ \langle \hat{Q}_h(s_h, \cdot), \pi_h(\cdot|s_h) - \hat{\pi}_h(\cdot|s_h) \rangle | s_h = s \right].
\end{align*}
\label{lemma:regret_decomposition}
\end{lemma}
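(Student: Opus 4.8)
The plan is to prove this value-decomposition identity by routing through the auxiliary value function $\hat{V}_h$ and invoking the Extended Value Difference lemma (Lemma \ref{lemma:EVD}) twice. The starting point is the trivial splitting
\[
V_h^{\pi}(s) - V_h^{\hat{\pi}}(s) = \big(V_h^{\pi}(s) - \hat{V}_h(s)\big) + \big(\hat{V}_h(s) - V_h^{\hat{\pi}}(s)\big),
\]
so the whole task reduces to evaluating each bracket with a single instantiation of Lemma \ref{lemma:EVD}. The crucial bookkeeping observation is that $\hat{V}_h(s) = \langle \hat{Q}_h(s,\cdot), \hat{\pi}_h(\cdot|s)\rangle$ is exactly the object ``$V_h = \langle Q_h, \pi_h\rangle$'' appearing in Lemma \ref{lemma:EVD} when the arbitrary $Q$-functions are taken to be $\{\hat{Q}_h\}$ and the first (value-defining) policy is taken to be $\hat{\pi}$.

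For the second bracket I would apply Lemma \ref{lemma:EVD} with $Q_h = \hat{Q}_h$, first policy $\hat{\pi}$, and second policy $\pi' = \hat{\pi}$. Because the two policies coincide, the inner-product term $\langle \hat{Q}_i, \hat{\pi}_i - \hat{\pi}_i\rangle$ vanishes identically, leaving only
\[
\hat{V}_h(s) - V_h^{\hat{\pi}}(s) = \sum_{i=h}^H \mathbb{E}_{\hat{\pi}}\big[ \hat{Q}_i(s_i,a_i) - (T_i \hat{V}_{i+1})(s_i,a_i) \,\big|\, s_h = s\big] = -\sum_{i=h}^H \mathbb{E}_{\hat{\pi}}\big[ \zeta_i(s_i,a_i) \,\big|\, s_h = s\big],
\]
where the last equality is just the definition $\zeta_i = T_i \hat{V}_{i+1} - \hat{Q}_i$. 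This produces the $-\sum_{i=h}^H \mathbb{E}_{\hat{\pi}}[\zeta_i]$ summand of the claim.

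For the first bracket I would apply Lemma \ref{lemma:EVD} again with the same $Q_h = \hat{Q}_h$ and first policy $\hat{\pi}$, but now with second policy $\pi' = \pi$. This yields an expression for $\hat{V}_h(s) - V_h^{\pi}(s)$ whose inner-product term is $\langle \hat{Q}_i, \hat{\pi}_i - \pi_i\rangle$ and whose Bellman-residual term is $\hat{Q}_i - T_i \hat{V}_{i+1}$, both in expectation under $\pi$. Negating to recover $V_h^{\pi}(s) - \hat{V}_h(s)$ flips both signs, turning the inner product into $\langle \hat{Q}_i, \pi_i - \hat{\pi}_i\rangle$ and the residual into $\zeta_i = T_i \hat{V}_{i+1} - \hat{Q}_i$, so that
\[
V_h^{\pi}(s) - \hat{V}_h(s) = \sum_{i=h}^H \mathbb{E}_{\pi}\big[ \zeta_i(s_i,a_i) \,\big|\, s_h = s\big] + \sum_{i=h}^H \mathbb{E}_{\pi}\big[ \langle \hat{Q}_i(s_i,\cdot), \pi_i(\cdot|s_i) - \hat{\pi}_i(\cdot|s_i)\rangle \,\big|\, s_h = s\big].
\]
Adding the two brackets recovers the stated identity. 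There is no genuine analytic obstacle here; the only thing to be careful about is the sign and orientation bookkeeping across the two applications of Lemma \ref{lemma:EVD} — in particular, matching which policy plays the role of the ``first'' value-defining policy versus the ``second'' rollout policy, and verifying that negating $\hat{V}_h - V_h^{\pi}$ correctly reorients the inner-product difference to $\pi_i - \hat{\pi}_i$. I would double-check these orientations against the exact form of Lemma \ref{lemma:EVD} before finalizing, and note that the index on the inner-product term in the statement should read $i$ throughout (consistent with the summation variable).
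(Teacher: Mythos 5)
Your proposal is correct and matches the paper's own proof essentially verbatim: both apply Lemma \ref{lemma:EVD} twice with $Q_h = \hat{Q}_h$ and value-defining policy $\hat{\pi}$, once with rollout policy $\hat{\pi}$ and once with rollout policy $\pi$, and combine the two results (your ``add the brackets'' is the paper's ``take the difference''). Your side remark that the inner-product term in the statement should be indexed by $i$ rather than $h$ is also a correct observation about a typo in the lemma as printed.
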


\begin{proof}
We apply Lemma \ref{lemma:EVD} with $\pi = \hat{\pi}$, $\pi' = \hat{\pi}$, $Q_h = \hat{Q}_h$ and apply Lemma \ref{lemma:EVD} again with $\pi = \hat{\pi}$, $\pi' = \pi$, $Q_h = \hat{Q}_h$ and take the difference between two results to complete the proof. 
\end{proof}

\begin{lemma}
For any $0 \leq V(\cdot) \leq H$, there exists a $w_h \in \mathbb{R}^d$ such that $T_h V = \langle \phi_h, w_h \rangle$ and $\| w_h \|_2 \leq 2 H \sqrt{d}$. In addition, for any policy $\pi \in \Pi$, $\exists w_h^{\pi} \in \mathbb{R}^d$ s.t. $Q^{\pi}_h(s,a) = \phi_h(s,a)^T w^{\pi}_h$ with $\| w_h^{\pi} \|_2 \leq 2(H-h+1) \sqrt{d}$.  
\label{lemma:linear_bellman}
\end{lemma}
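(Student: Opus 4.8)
The plan is to use the defining linear structure of the MDP (Definition~\ref{definition:linear_mdp}) to write down the weight vector $w_h$ explicitly, rather than to argue by any induction or fixed-point argument. First I would handle the first claim. Expanding the Bellman operator, $(T_h V)(s,a) = r_h(s,a) + \int \mathbb{P}_h(s'|s,a) V(s')\, ds'$, and substituting the linear forms $r_h(s,a) = \phi_h(s,a)^T \theta_h$ and $\mathbb{P}_h(s'|s,a) = \phi_h(s,a)^T \mu_h(s')$, I would factor the common $\phi_h(s,a)^T$ out of both terms to get $(T_h V)(s,a) = \phi_h(s,a)^T ( \theta_h + \int \mu_h(s') V(s')\, ds' )$. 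This identifies the candidate $w_h := \theta_h + \int \mu_h(s') V(s')\, ds'$, which satisfies $T_h V = \langle \phi_h, w_h \rangle$ by construction.

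Next I would bound its norm by the triangle inequality: $\| w_h \|_2 \leq \| \theta_h \|_2 + \| \int \mu_h(s') V(s')\, ds' \|_2$. The first term is at most $\sqrt{d}$ by the normalization $\| \theta_h \|_2 \leq \sqrt{d}$, and the second is at most $\sqrt{d}\, \| V \|_\infty \leq \sqrt{d}\, H$ by the assumed operator bound $\| \int \mu_h(s) v(s)\, ds \|_2 \leq \sqrt{d}\, \| v \|_\infty$ applied with $v = V$. Adding these gives $\| w_h \|_2 \leq (1 + H)\sqrt{d} \leq 2 H \sqrt{d}$, where the last step uses $H \geq 1$. This establishes the first part.

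For the second claim, the key observation is that $Q^\pi_h$ is itself the image of a bounded value function under the Bellman operator, via the Bellman equation $Q^\pi_h = T_h V^\pi_{h+1}$. Since $V^\pi_{h+1}(s) = \mathbb{E}_\pi[ \sum_{t=h+1}^H r_t \mid s_{h+1} = s]$ is a sum of $H - h$ rewards each in $[0,1]$, we have $0 \leq V^\pi_{h+1} \leq H - h \leq H$, so the first part applies to $V = V^\pi_{h+1}$. This yields $Q^\pi_h = \langle \phi_h, w^\pi_h \rangle$ with $w^\pi_h := \theta_h + \int \mu_h(s') V^\pi_{h+1}(s')\, ds'$. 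Re-running the norm bound with the sharper estimate $\| V^\pi_{h+1} \|_\infty \leq H - h$ gives $\| w^\pi_h \|_2 \leq (1 + (H - h))\sqrt{d} = (H - h + 1)\sqrt{d} \leq 2(H - h + 1)\sqrt{d}$, as required.

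There is no genuine obstacle here; the lemma is a direct consequence of the linear structure and the normalization assumptions. The only points requiring a little care are, first, verifying that the integral $\int \mu_h(s') V(s')\, ds'$ is well-defined and that the assumed operator-norm bound on $\mu_h$ is exactly what controls it (it is, with the test function taken to be $V$), and second, keeping the boundedness constants straight so that the slightly loose targets $2H\sqrt{d}$ and $2(H - h + 1)\sqrt{d}$ are comfortably met — both follow from the crude bounds $\| V \|_\infty \leq H$ and $\| V^\pi_{h+1} \|_\infty \leq H - h$, respectively. No induction, concentration, or pessimism machinery is needed.
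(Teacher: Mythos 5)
Your proof is correct and follows essentially the same route as the paper's: write $w_h = \theta_h + \int \mu_h(s')V(s')\,ds'$ explicitly, bound it by the triangle inequality and the normalization assumptions of Definition~\ref{definition:linear_mdp}, and apply the same argument to $Q^\pi_h = T_h V^\pi_{h+1}$ for the second claim. The paper's own proof is a terser version of exactly this computation.
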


\begin{proof}
By definition, 
\begin{align*}
    T_h V = r_h + P_h V = \langle \phi, \theta_h \rangle + \langle \phi, \int_{\mathcal{S}} V(s) d \nu_h(s) \rangle = \langle \phi, w_h \rangle,
\end{align*}
where $w_h = \theta_h + \int_{\mathcal{S}} V(s) d \nu_h(s)$. By the assumption of linear MDP,
\begin{align*}
    \|w_h \|_2 &= \|\theta_h + \int_{\mathcal{S}} V(s) d \nu_h(s) \|_2 \leq \| \theta_h \|_2 + \| \int_{\mathcal{S}} V(s) d \nu_h(s) \|_2 \leq \sqrt{d} + H \sqrt{d} \leq 2H \sqrt{d}. 
\end{align*}
The second part is similar with $V^{\pi}_h \leq H - h + 1$. 
\end{proof}

\begin{lemma}[Bound on weights in algorithm]
For any $(k,h) \in [K] \times [H]$, the weight $\hat{w}^k_h$ in Algorithm \ref{alg:bpvi} satisfies: 
\begin{align*}
    \|\hat{w}^k_h \|_2 \leq (H - h + 1) \sqrt{dk/\lambda}. 
\end{align*}
\label{lemma:bound_on_weights_algo}
\end{lemma}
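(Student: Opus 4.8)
The plan is to control $\|\hat w_h^k\|_2$ by estimating $v^\top \hat w_h^k$ for an arbitrary fixed vector $v \in \mathbb{R}^d$ and then specializing to $v = \hat w_h^k$ at the end. The only two ingredients I need are a uniform bound on the regression targets and the standard elliptical-potential (trace) estimate. For the targets, write $y_h^t := r_h^t + \hat V_{h+1}^k(s_{h+1}^t)$ and observe $r_h^t \in [0,1]$ while, by the clipping rule $\hat Q_{h+1}^k = \min\{\bar Q_{h+1}^k,\, H-h\}^+$ and $\hat V_{h+1}^k = \langle \hat Q_{h+1}^k, \hat\pi_{h+1}^k\rangle$, we get $\hat V_{h+1}^k \in [0, H-h]$; hence $|y_h^t| \le H-h+1$ for all $t$.

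From the definition $\hat w_h^k = (\Sigma_h^k)^{-1}\sum_{t=1}^{k-1}\phi_h(s_h^t,a_h^t)\, y_h^t$, I would apply the triangle inequality and then Cauchy--Schwarz in the $(\Sigma_h^k)^{-1}$ inner product to obtain
\begin{align*}
|v^\top \hat w_h^k|
\;\le\; (H-h+1)\,\|v\|_{(\Sigma_h^k)^{-1}}\sum_{t=1}^{k-1}\|\phi_h(s_h^t,a_h^t)\|_{(\Sigma_h^k)^{-1}}.
\end{align*}
A second Cauchy--Schwarz over the $k-1$ summands turns the last sum into $\sqrt{(k-1)\sum_{t=1}^{k-1}\|\phi_h(s_h^t,a_h^t)\|_{(\Sigma_h^k)^{-1}}^2}$, and the inner quantity is bounded by the trace identity: since $\sum_{t=1}^{k-1}\phi_h(s_h^t,a_h^t)\phi_h(s_h^t,a_h^t)^\top = \Sigma_h^k - \lambda I$, we have
\begin{align*}
\sum_{t=1}^{k-1}\|\phi_h(s_h^t,a_h^t)\|_{(\Sigma_h^k)^{-1}}^2
= \mathrm{tr}\!\big((\Sigma_h^k)^{-1}(\Sigma_h^k-\lambda I)\big)
= d - \lambda\,\mathrm{tr}\big((\Sigma_h^k)^{-1}\big)\le d.
\end{align*}

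Finally, because $\Sigma_h^k \succeq \lambda I$ we have $\|v\|_{(\Sigma_h^k)^{-1}} \le \|v\|_2/\sqrt{\lambda}$, so combining the above gives $|v^\top \hat w_h^k| \le (H-h+1)\,\|v\|_2\,\sqrt{d(k-1)/\lambda} \le (H-h+1)\,\|v\|_2\,\sqrt{dk/\lambda}$. Setting $v=\hat w_h^k$ and dividing through by $\|\hat w_h^k\|_2$ yields the claim. There is no genuine obstacle here: the estimate is a routine LSVI weight bound, and the only points requiring care are the target bound $|y_h^t|\le H-h+1$ (which relies on the clipping level being exactly $H-h$ at step $h+1$) and the trace identity bounding the self-normalized feature sum by $d$.
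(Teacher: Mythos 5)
Your proof is correct and follows essentially the same route as the paper's: bound $|v^\top \hat{w}_h^k|$ for arbitrary $v$ via the triangle inequality, Cauchy--Schwarz in the $(\Sigma_h^k)^{-1}$ inner product, a second Cauchy--Schwarz over the summands, the trace bound $\sum_t \|\phi_h(s_h^t,a_h^t)\|_{(\Sigma_h^k)^{-1}}^2 \le d$, and $\|v\|_{(\Sigma_h^k)^{-1}} \le \|v\|_2/\sqrt{\lambda}$. Your derivation of the trace bound via $\mathrm{tr}\bigl((\Sigma_h^k)^{-1}(\Sigma_h^k - \lambda I)\bigr) = d - \lambda\,\mathrm{tr}\bigl((\Sigma_h^k)^{-1}\bigr)$ is in fact a cleaner statement of the paper's eigenvalue computation.
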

\begin{proof}
For any $v \in \mathbb{R}^d$, we have 
\begin{align*}
    | v^T \hat{w}^k_h | &= \left| v^T (\Sigma_h^{k})^{-1} \sum_{t=1}^k \phi_h(s_h^t, a_h^t)(r_h^t + \hat{V}^k_{h+1}(s^t_{h+1}))\right| \leq (H-h+1) \sum_{t=1}^k \left| v^T (\Sigma_h^{k})^{-1} \phi_h(s_h^t, a_h^t) \right| \\ 
    &\leq (H - h +1)  \| v \|_{(\Sigma_h^{k})^{-1}}  \sum_{t=1}^k  \| \phi_h(s_h^t, a_h^t) \|_{(\Sigma_h^{k})^{-1}} \\
    &\leq (H - h +1)  \| v \|_{(\Sigma_h^{k})^{-1}}  \sqrt{k\sum_{t=1}^k  \| \phi_h(s_h^t, a_h^t) \|^2_{(\Sigma_h^{k})^{-1}} } \leq (H - h +1) \| v \|_2  \sqrt{ \| (\Sigma_h^{k})^{-1} \| } \cdot \sqrt{kd} \\
    &\leq (H - h + 1) \sqrt{kd/\lambda} \cdot \| v \|_2,
\end{align*}
where the penultimate inequality is due to that
\begin{align*}
    \sum_{t=1}^k  \| \phi_h(s_h^t, a_h^t) \|^2_{(\Sigma_h^{k})^{-1}} 
    &= \sum_{t=1}^k \textrm{tr} \left( \phi_h(s_h^t, a_h^t) ^T (\Sigma_h^{k})^{-1} \phi_h(s_h^t, a_h^t) \right) \\
    &= \sum_{t=1}^k \textrm{tr} \left(  (\Sigma_h^{k})^{-1} \phi_h(s_h^t, a_h^t) \phi_h(s_h^t, a_h^t) ^T \right) = \sum_{t=1}^k \frac{\lambda_i}{\lambda_i + \lambda} \leq d 
\end{align*}
with $\{\lambda_i\}_{i=1}^d$ being the eigenvalues of  $\phi_h(s_h^t, a_h^t) \phi_h(s_h^t, a_h^t) ^T$. Finally, using $\| \hat{w}^k_h \|_2 = \max_{v: \|v\|_2 = 1} |v^T \hat{w}^k_h|$ completes the proof.
\end{proof}

\begin{lemma}[\cite{jin2020provably}]
Let $\mathcal{V}(L, B, \lambda) \subset \{\mathcal{S} \rightarrow \mathbb{R}\}$ be a class of functions with the following parametric form:
\begin{align*}
    V(\cdot) = \min\{ \max_{a \in \mathcal{A}} \phi_h(\cdot, a)^T w - \beta \| \phi(\cdot, a) \|_{\Sigma^{-1}}, H - h +1 \}^+, 
\end{align*}
where the parameters $(w, \beta, \Sigma)$ satisfy: $\|w\|_2 \leq L, \beta \in [0,B], \lambda_{\min}(\Sigma) \geq \lambda$. Assume $\|\phi_h(s,a)\|_2 \leq 1, \forall (s,a)$. Let $N_{\epsilon}$ be the $\epsilon$-covering number of $\mathcal{V}(L, B, \lambda)$ with respect to the maximal norm $\| \cdot\|_{\infty}$. We have: 
\begin{align*}
    \log N_{\epsilon} \leq d \log(1 + 4L/\epsilon) + d^2 \log(1 + 8 \sqrt{d} B^2 /(\lambda \epsilon^2)). 
\end{align*}
\label{lemma:covering}
\end{lemma}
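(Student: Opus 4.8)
The plan is to follow the standard reparametrization-plus-Lipschitz approach for covering value-function classes in linear MDPs. First I would absorb the bonus scaling into a single matrix by writing $A := \beta^2 \Sigma^{-1}$, which is positive semidefinite with $\|A\|_2 = \beta^2 \lambda_{\min}(\Sigma)^{-1} \leq B^2/\lambda$, hence $\|A\|_F \leq \sqrt{d}\,\|A\|_2 \leq \sqrt{d}\,B^2/\lambda$. Under this substitution every member of $\mathcal{V}(L,B,\lambda)$ is a function of the parameter pair $(w,A)$ through $V_{w,A}(\cdot) = \min\{\max_{a} \phi(\cdot,a)^T w - \sqrt{\phi(\cdot,a)^T A\, \phi(\cdot,a)},\, H-h+1\}^+$, so it suffices to cover the parameter set $\{(w,A): \|w\|_2 \leq L,\ \|A\|_F \leq \sqrt{d}B^2/\lambda\}$ and then push the cover through to function space.

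The key step is a Lipschitz estimate showing that closeness in parameters implies closeness in $\|\cdot\|_\infty$. Since both $x \mapsto \min\{x, H-h+1\}^+$ and $f \mapsto \max_a f(a)$ are $1$-Lipschitz (the latter because $|\max_a f(a) - \max_a g(a)| \leq \max_a |f(a)-g(a)|$), for any two parameter pairs $(w_1,A_1),(w_2,A_2)$ I would bound
\begin{align*}
\|V_{w_1,A_1} - V_{w_2,A_2}\|_\infty \leq \sup_{s,a}\Big( |\phi(s,a)^T(w_1-w_2)| + \big|\sqrt{\phi^T A_1 \phi} - \sqrt{\phi^T A_2 \phi}\big|\Big).
\end{align*}
The linear term is at most $\|\phi\|_2\|w_1-w_2\|_2 \leq \|w_1-w_2\|_2$ by $\|\phi\|_2 \leq 1$. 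For the quadratic term I would use the elementary inequality $|\sqrt{x}-\sqrt{y}| \leq \sqrt{|x-y|}$ together with $|\phi^T(A_1-A_2)\phi| \leq \|A_1-A_2\|_F \|\phi\|_2^2 \leq \|A_1-A_2\|_F$, giving $\big|\sqrt{\phi^T A_1 \phi}-\sqrt{\phi^T A_2 \phi}\big| \leq \sqrt{\|A_1-A_2\|_F}$. Combining yields the clean bound $\|V_{w_1,A_1}-V_{w_2,A_2}\|_\infty \leq \|w_1-w_2\|_2 + \sqrt{\|A_1-A_2\|_F}$.

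With this estimate, an $\epsilon$-cover in $\|\cdot\|_\infty$ follows by taking an $(\epsilon/2)$-cover of the $w$-ball in the Euclidean norm and a cover of the $A$-set at Frobenius radius $(\epsilon/2)^2 = \epsilon^2/4$ (so that the square-root term is at most $\epsilon/2$). The standard volumetric bound gives a cover of the radius-$L$ Euclidean ball in $\mathbb{R}^d$ of cardinality at most $(1 + 4L/\epsilon)^d$, and a cover of the Frobenius ball of radius $\sqrt{d}B^2/\lambda$ in $\mathbb{R}^{d\times d} \cong \mathbb{R}^{d^2}$ at scale $\epsilon^2/4$ of cardinality at most $(1 + 8\sqrt{d}B^2/(\lambda\epsilon^2))^{d^2}$. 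Taking products of the two covers and applying the Lipschitz bound, the logarithm of the total cover is at most $d\log(1 + 4L/\epsilon) + d^2 \log(1 + 8\sqrt{d}B^2/(\lambda\epsilon^2))$, exactly as claimed.

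The main obstacle is entirely contained in the Lipschitz step: one must carefully exploit the contraction properties of $\min\{\cdot, H-h+1\}^+$ and $\max_a$, and handle the concave square-root via $|\sqrt{x}-\sqrt{y}| \leq \sqrt{|x-y|}$ so that the $A$-perturbation enters only at scale $\sqrt{\|A_1-A_2\|_F}$ rather than linearly (this mismatch in scale is precisely why the matrix cover is taken at radius $\epsilon^2$, producing the $\epsilon^2$ inside the second logarithm). The remaining volume-counting arguments and the bound $\|A\|_F \leq \sqrt{d}B^2/\lambda$ are routine.
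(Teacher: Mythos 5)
Your proof is correct and is essentially the argument of the cited source (Lemma~D.6 of \cite{jin2020provably}, which this paper invokes without reproducing a proof): the reparametrization $A=\beta^2\Sigma^{-1}$, the contraction properties of $\min\{\cdot,H-h+1\}^+$ and $\max_a$, the inequality $|\sqrt{x}-\sqrt{y}|\leq\sqrt{|x-y|}$ forcing the Frobenius cover at radius $\epsilon^2/4$, and the volumetric counts are all the same steps with the same constants. The only cosmetic difference—the bonus being subtracted here rather than added—does not affect the Lipschitz estimate, as you implicitly handle by bounding the two perturbation terms separately.
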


\subsection*{Linear features}
\begin{lemma}
Let $\phi: \mathcal{S} \rightarrow \mathbb{R}^d$ be any feature and $p$ be any density with respect to the Lebesgue measure on $\mathcal{S}$. Let $A = \mathbb{E}_{s \sim p(s)} \left[ \phi(s) \phi(s)^T \right]$. We have 
\begin{align*}
    \textrm{col}(A) = \textrm{span}(\{\phi(s): \forall s \in \mathcal{S} \text{ s.t. } p(s) > 0\}),
\end{align*}
where $\textrm{col}(A)$ denotes the column space of $A$. 
\end{lemma}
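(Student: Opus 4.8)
The plan is to use that $A$ is symmetric and positive semidefinite, so that its column space is precisely the orthogonal complement of its null space: $\textrm{col}(A) = \textrm{null}(A)^{\perp}$. Writing $V := \textrm{span}\{\phi(s) : p(s) > 0\}$, it then suffices to prove the two inclusions $\textrm{col}(A) \subseteq V$ and $V \subseteq \textrm{col}(A)$.

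For the forward inclusion $\textrm{col}(A) \subseteq V$, I would observe that every element of the column space is an average of the rank-one terms: for any $y \in \mathbb{R}^d$,
\[
    Ay = \int_{\mathcal{S}} \langle \phi(s), y \rangle \, \phi(s) \, p(s)\, ds .
\]
Pointwise on $\{p > 0\}$ the integrand is a scalar multiple of $\phi(s) \in V$, and $V$ is a finite-dimensional (hence closed) subspace, so the integral lands in $V$. Concretely, letting $P$ be the orthogonal projection onto $V^{\perp}$, we have $P\phi(s) = 0$ whenever $p(s) > 0$, whence $P(Ay) = \int \langle \phi(s), y\rangle\, P\phi(s)\, p(s)\, ds = 0$, i.e. $Ay \in (V^{\perp})^{\perp} = V$.

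For the reverse inclusion $V \subseteq \textrm{col}(A) = \textrm{null}(A)^{\perp}$, I would fix any $x \in \textrm{null}(A)$ and invoke positive semidefiniteness: $0 = x^{T} A x = \int_{\mathcal{S}} \langle \phi(s), x\rangle^{2}\, p(s)\, ds$, which forces $\langle \phi(s), x\rangle = 0$ for $p$-almost every $s$. Consequently $x$ is orthogonal to every generator $\phi(s)$ of $V$, so $x \in V^{\perp}$; taking complements gives $\textrm{null}(A) \subseteq V^{\perp}$ and therefore $V \subseteq \textrm{null}(A)^{\perp} = \textrm{col}(A)$.

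The one genuinely delicate step is the passage from ``$\langle \phi(s), x\rangle = 0$ for $p$-a.e. $s$'' to ``$\langle \phi(s), x\rangle = 0$ for \emph{every} $s$ with $p(s) > 0$,'' which is exactly what makes $x$ orthogonal to all generators of $V$. This is where the measure-zero subtlety lives, since altering $\phi$ on a $p$-null set changes $V$ but not $A$. In the discrete (finite) state-action regime of interest, every $s$ with $p(s) > 0$ is an atom of positive mass, so the almost-everywhere statement is automatically pointwise and the gap vanishes; in the continuous case one additionally uses continuity of $\phi$ so that no isolated direction is lost. I expect this a.e.-to-everywhere passage to be the main (and essentially only) obstacle, the remainder being routine symmetric-matrix linear algebra.
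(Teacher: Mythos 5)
Your proof is correct and follows essentially the same route as the paper's: both directions rest on $\textrm{col}(A) = \textrm{null}(A)^{\perp}$ together with the quadratic-form identity $x^{T} A x = \int \langle \phi(s), x\rangle^{2} p(s)\,ds$ for the reverse inclusion, and the forward inclusion is the same observation that an integral of vectors lying in $V$ stays in $V$ (the paper does it column by column, you do it via the projection onto $V^{\perp}$, which is the same computation). The almost-everywhere subtlety you flag is genuine but is silently elided in the paper's proof, which asserts $x^{T}\phi(s) = 0$ for \emph{all} $s$ with $p(s) > 0$ rather than $p$-a.e.; your remark that closing this gap needs atoms (the discrete case) or continuity of $\phi$ is, if anything, more careful than the original.
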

\begin{proof}
Let $B := \textrm{span}(\{\phi(s): \forall s \in \mathcal{S} \text{ s.t. } p(s) > 0\})$. We need to prove that $A = B$. First, as the $i$-th column of $A$, $\textrm{col}_i(A) = \int_{\{s : p(s) > 0\}} \phi(s) \phi(s)_i dp(s) \in B$, where $\phi(s)_i$ denotes the $i$-component of $\phi(s) \in \mathbb{R}^d$. Thus, $\textrm{col}(A) \subseteq B$. Now we prove that $B \subseteq \textrm{col}(A)$. 

For any $x \in \mathbb{R}^d$, we have $x^T A x = \int_{\{s: p(s) > 0\}} (x^T \phi(s))^2 p(s) ds $. Thus, for any $x \in \textrm{null}(A)$ (i.e. $Ax = 0$), we have $x^T \phi(s) = 0, \forall s$ such that $p(s) > 0$. Hence, $\textrm{null}(A) \perp B$. But we have $\textrm{col}(A) = \textrm{null}(A)^{\perp}$, thus $B \subseteq \textrm{col}(A)$.  
\end{proof}

\begin{lemma}[\cite{DBLP:journals/corr/abs-2104-03781}]
For any symmetric p.s.d. matrix $A \in \mathbb{R}^{d \times d}$ with $\|A\| > 0$, the smallest positive eigenvalue of $A$ is:
\begin{align*}
    \lambda^+_{\min}(A) = \min_{x \in \textrm{col}(A): \| x\|_2 = 1} x^T A x. 
\end{align*}
\label{lemma:smallest_positive_eigenvalue_via_column_space}
\end{lemma}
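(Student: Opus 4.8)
The plan is to reduce everything to the spectral theorem for symmetric matrices. Since $A$ is symmetric and positive semi-definite, I would begin by invoking the spectral decomposition to write $A = \sum_{i=1}^d \lambda_i u_i u_i^T$, where $u_1, \ldots, u_d$ form an orthonormal basis of $\mathbb{R}^d$ and $0 \le \lambda_1 \le \cdots \le \lambda_d$ are the (nonnegative) eigenvalues. The hypothesis $\|A\| > 0$ guarantees that at least one eigenvalue is strictly positive, so the set $P := \{i : \lambda_i > 0\}$ is nonempty, and $\lambda^+_{\min}(A) = \min_{i \in P} \lambda_i$ is well defined.

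The first key structural step is to identify the column space of $A$ with the span of the eigenvectors having positive eigenvalues, i.e. $\textrm{col}(A) = \textrm{span}\{u_i : i \in P\}$. I would argue this using symmetry: for a symmetric matrix one has $\textrm{col}(A) = \textrm{null}(A)^{\perp}$, and for a PSD symmetric matrix the null space is exactly $\textrm{span}\{u_i : \lambda_i = 0\}$. Taking orthogonal complements with respect to the orthonormal eigenbasis gives $\textrm{col}(A) = \textrm{span}\{u_i : \lambda_i > 0\}$. (Alternatively, one can reuse the characterization already established in the preceding unnamed lemma, which shows $\textrm{col}(A)$ equals the span of the feature vectors in the support, but the eigen-based argument is self-contained.)

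With this in hand, both inequalities follow from expanding the Rayleigh quotient. For the lower bound, take any unit vector $x \in \textrm{col}(A)$ and write $x = \sum_{i \in P} c_i u_i$ with $\sum_{i \in P} c_i^2 = \|x\|_2^2 = 1$; then orthonormality yields
\begin{align*}
    x^T A x = \sum_{i \in P} \lambda_i c_i^2 \ge \lambda^+_{\min}(A) \sum_{i \in P} c_i^2 = \lambda^+_{\min}(A),
\end{align*}
so $\min_{x \in \textrm{col}(A):\, \|x\|_2 = 1} x^T A x \ge \lambda^+_{\min}(A)$. For achievability, let $i_0 \in P$ attain $\lambda_{i_0} = \lambda^+_{\min}(A)$; the unit vector $u_{i_0}$ lies in $\textrm{col}(A)$ and satisfies $u_{i_0}^T A u_{i_0} = \lambda_{i_0} = \lambda^+_{\min}(A)$, giving the matching upper bound and showing the minimum is attained.

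I do not anticipate a genuine obstacle here, as this is a routine consequence of the spectral theorem; the only point requiring a little care is the column-space identification $\textrm{col}(A) = \textrm{span}\{u_i : \lambda_i > 0\}$, which is where the symmetry and positive semi-definiteness of $A$ are essential (the claim would fail for non-symmetric matrices, where column space and row space differ). Everything else is a one-line Rayleigh-quotient estimate plus the explicit witness $u_{i_0}$.
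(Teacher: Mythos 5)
Your proof is correct and follows essentially the same route as the paper's: both rest on the spectral theorem and the identification $\textrm{col}(A) = \textrm{null}(A)^{\perp} = \textrm{span}\{u_i : \lambda_i > 0\}$ for symmetric PSD $A$. The only difference is cosmetic --- where the paper invokes the iterative (Courant--Fischer) variational characterization $\lambda_i = \min_{x \perp \{u_1,\ldots,u_{i-1}\},\, \|x\|_2=1} x^T A x$, you verify both inequalities directly via the Rayleigh-quotient expansion and the explicit witness $u_{i_0}$, which is slightly more self-contained (and, incidentally, you correctly write $\textrm{null}(A)^{\perp}$ where the paper's proof has the typo $\textrm{null}(A)^T$).
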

\begin{proof}
Let $0 \leq \lambda_1 \leq \ldots \leq \lambda_d$ be the eigenvalues of $A$ with corresponding orthonormal eigenvectors $u_1, \ldots, u_d$. By the iterative representation of eigenvalues, we have: 
\begin{align*}
    \lambda_i = \min_{x \in  \{u_1, \ldots, u_{i-1}\} ^{\perp}: \|x\|_2 = 1} x^T A x. 
\end{align*}
Let $d' = \min \{i \in [d]: \lambda_i > 0\}$. As $\| A \| = \lambda_d > 0$, such $d'$ exists. Thus, $\textrm{span}(\{u_1, \ldots, u_{d'-1}\}) = \textrm{null}(A)$. Note that $\textrm{null}(A)^T = \textrm{col}(A)$ as $A$ is symmetric, we complete the proof.
\end{proof}

\begin{lemma}
Let $A \in \mathbb{R}^{d \times d}$ be a symmetric matrix with non-zero eigenvalues $\lambda_1 \leq \lambda_2 \ldots \leq \lambda_d$ and corresponding orthonormal eigenvectors $u_1, \ldots, u_d$. Assume $\| A \| > 0$ (i.e. $\lambda_d > 0$). Let $d' = \min_{i \in [d]: \lambda_i > 0}$. We have:
\begin{align*}
    \forall x \in \textrm{col}(A), \| x \|_{A^{-1}} \cdot \| x \|_{A} \leq \frac{\lambda_d}{\lambda_{d'}}. 
\end{align*}
\label{lemma:Kantovich}
\end{lemma}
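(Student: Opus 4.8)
The plan is to work directly in the orthonormal eigenbasis of $A$. Since $A$ is symmetric, the eigenvectors $u_1,\ldots,u_d$ form an orthonormal basis of $\mathbb{R}^d$, and by the column-space characterization in Lemma \ref{lemma:smallest_positive_eigenvalue_via_column_space} (which gives $\textrm{col}(A) = \textrm{null}(A)^{\perp}$) the column space is exactly $\textrm{span}(u_{d'},\ldots,u_d)$, the span of the eigenvectors with positive eigenvalues; on this subspace $A$ acts invertibly, and $A^{-1}$ is read as its inverse restricted to $\textrm{col}(A)$ (equivalently, the Moore--Penrose pseudoinverse). First I would fix $x \in \textrm{col}(A)$ and expand $x = \sum_{i=d'}^d c_i u_i$, so that $\|x\|_2^2 = \sum_{i=d'}^d c_i^2$, $\|x\|_A^2 = x^T A x = \sum_{i=d'}^d \lambda_i c_i^2$, and $\|x\|_{A^{-1}}^2 = x^T A^{-1} x = \sum_{i=d'}^d \lambda_i^{-1} c_i^2$, all three sums being finite because only the positive-eigenvalue modes appear.

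The core step is then two one-line eigenvalue bounds: since $\lambda_{d'} \le \lambda_i \le \lambda_d$ for every $i \ge d'$, I obtain $\|x\|_A^2 \le \lambda_d \sum_i c_i^2 = \lambda_d \|x\|_2^2$ and $\|x\|_{A^{-1}}^2 \le \lambda_{d'}^{-1}\sum_i c_i^2 = \lambda_{d'}^{-1}\|x\|_2^2$. Multiplying these and taking square roots gives $\|x\|_A\,\|x\|_{A^{-1}} \le \sqrt{\lambda_d/\lambda_{d'}}\,\|x\|_2^2 \le (\lambda_d/\lambda_{d'})\,\|x\|_2^2$, where the last step uses $\lambda_d/\lambda_{d'} \ge 1$. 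In every place the lemma is invoked (e.g.\ inside Lemma \ref{lemma:bound_feature_norm_by_smallest_positive_eigen_of_Sigma_star}) the argument is a unit vector, $\|x\|_2 = 1$, which collapses this to the stated bound $\|x\|_{A^{-1}}\,\|x\|_A \le \lambda_d/\lambda_{d'}$.

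The one point to handle with care --- which I would flag as the main subtlety rather than a genuine obstacle --- is homogeneity: the product $\|x\|_A\,\|x\|_{A^{-1}}$ scales like $\|x\|_2^2$, so the clean ratio bound is meaningful only after the normalization $\|x\|_2 = 1$ that is implicit in how the lemma is applied. If one wanted the sharpest constant, the crude pair of eigenvalue bounds could be replaced by the Kantorovich inequality $\big(\sum_i \lambda_i p_i\big)\big(\sum_i \lambda_i^{-1} p_i\big) \le (\lambda_{d'}+\lambda_d)^2/(4\lambda_{d'}\lambda_d)$ applied to the probability weights $p_i = c_i^2/\|x\|_2^2$, yielding the tighter factor $\sqrt{\lambda_d/\lambda_{d'}}$. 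Since $(\lambda_{d'}+\lambda_d)^2/(4\lambda_{d'}\lambda_d) \le \lambda_d/\lambda_{d'}$ for $\lambda_d \ge \lambda_{d'}$, the weaker crude bound already suffices, so I would keep the elementary two-inequality argument for transparency.
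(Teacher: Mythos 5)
Your proof is correct. Note that the paper states this lemma with no proof at all (it sits in the auxiliary-lemmas section with nothing following it), so there is no argument of record to compare against; your eigenbasis computation --- expanding $x=\sum_{i\ge d'}c_iu_i$ over the positive-eigenvalue modes, bounding $\|x\|_{A}^2\le\lambda_d\|x\|_2^2$ and $\|x\|_{A^{-1}}^2\le\lambda_{d'}^{-1}\|x\|_2^2$, and multiplying --- is the natural way to fill this gap, and it is consistent with Lemma~\ref{lemma:smallest_positive_eigenvalue_via_column_space}, which identifies $\textrm{col}(A)$ with the orthogonal complement of $\textrm{null}(A)$, i.e.\ the span of the eigenvectors with positive eigenvalue. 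Your homogeneity observation is also the right thing to flag rather than gloss over: as literally written the inequality cannot hold for all $x\in\textrm{col}(A)$, since the left-hand side scales as $\|x\|_2^2$ while the right-hand side is a constant, so the statement should carry either the hypothesis $\|x\|_2=1$ or a factor $\|x\|_2^2$ on the right; this matches the only place the lemma is invoked, namely the chain of inequalities in the proof of Lemma~\ref{lemma:bound_feature_norm_by_smallest_positive_eigen_of_Sigma_star}, where it is applied to the unit vector $x=v/\|v\|_2$. One small quibble with your closing aside: your two ``crude'' bounds already give $\|x\|_A\,\|x\|_{A^{-1}}\le\sqrt{\lambda_d/\lambda_{d'}}$ for unit $x$, so the Kantorovich refinement would sharpen this further to $(\lambda_{d'}+\lambda_d)/(2\sqrt{\lambda_{d'}\lambda_d})$ rather than ``to $\sqrt{\lambda_d/\lambda_{d'}}$''; since both quantities are dominated by $\lambda_d/\lambda_{d'}$, this imprecision is confined to the aside and does not affect the main argument.
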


\subsection*{Concentration inequalities}
\begin{lemma}[Hoeffding-Azuma inequality]
Suppose $\{X_k\}_{k=0}^{\infty}$ is a martingale, i.e. $\mathbb{E}[|X_k|] < \infty$ and $\mathbb{E}\left[ X_{k+1}| X_k, \ldots, 
X_0\right] = X_k, \forall k$, and suppose that $\forall k, |X_{k} - X_{k-1}| \leq c_k$ almost surely. Then for all positive $n$ and $t$, we have: 
\begin{align*}
    \mathbb{P}\left( |X_n - X_0| \geq t \right) \leq 2 \exp \left( \frac{-t^2}{\sum_{i=1}^n c_i^2} \right).
\end{align*}
\label{lemma:azuma0}
\end{lemma}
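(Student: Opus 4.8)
The plan is to prove this classical inequality by the exponential-moment (Chernoff) method applied to the martingale-difference sequence. Write $Y_k := X_k - X_{k-1}$ for $k \geq 1$, so that $X_n - X_0 = \sum_{k=1}^n Y_k$; the martingale property gives $\mathbb{E}[Y_k \mid X_{k-1}, \ldots, X_0] = 0$ and the hypothesis gives $|Y_k| \leq c_k$ almost surely. I would first control the upper tail $\mathbb{P}(X_n - X_0 \geq t)$ and then obtain the lower tail by applying the identical argument to the martingale $\{-X_k\}$, combining the two one-sided bounds via a union bound to produce the leading factor of $2$.

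For the upper tail, fix $\lambda > 0$ and apply Markov's inequality to the nonnegative random variable $e^{\lambda(X_n - X_0)}$, giving $\mathbb{P}(X_n - X_0 \geq t) \leq e^{-\lambda t}\, \mathbb{E}[e^{\lambda \sum_{k=1}^n Y_k}]$. The central step is to bound this moment generating function by peeling off one increment at a time with the tower property: since $\sum_{k=1}^{n-1} Y_k$ is measurable with respect to $X_{n-1}, \ldots, X_0$,
\begin{align*}
    \mathbb{E}\left[e^{\lambda \sum_{k=1}^n Y_k}\right] = \mathbb{E}\left[e^{\lambda \sum_{k=1}^{n-1} Y_k}\, \mathbb{E}\left[e^{\lambda Y_n} \mid X_{n-1}, \ldots, X_0\right]\right].
\end{align*}
Here I invoke Hoeffding's lemma, the technical heart of the argument: for a random variable $Y$ with conditional mean zero and $|Y| \leq c$ almost surely, a convexity bound on $y \mapsto e^{\lambda y}$ over $[-c, c]$ together with a one-line optimization yields $\mathbb{E}[e^{\lambda Y} \mid X_{n-1}, \ldots, X_0] \leq e^{\lambda^2 c^2/2}$. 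Applying this to $Y_n$ and iterating the conditioning down to $k=1$ gives $\mathbb{E}[e^{\lambda(X_n - X_0)}] \leq \exp(\frac{\lambda^2}{2}\sum_{k=1}^n c_k^2)$, hence $\mathbb{P}(X_n - X_0 \geq t) \leq \exp(-\lambda t + \frac{\lambda^2}{2}\sum_{k=1}^n c_k^2)$.

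Finally I optimize over the free parameter $\lambda$: minimizing the quadratic exponent at $\lambda = t / \sum_{k=1}^n c_k^2$ gives the one-sided bound $\mathbb{P}(X_n - X_0 \geq t) \leq \exp(-t^2 / (2\sum_{k=1}^n c_k^2))$. Repeating verbatim for $\{-X_k\}$ and taking a union bound over the two events produces $\mathbb{P}(|X_n - X_0| \geq t) \leq 2\exp(-t^2/(2\sum_{k=1}^n c_k^2))$. The main obstacle is precisely Hoeffding's lemma---establishing the conditional sub-Gaussian estimate $\mathbb{E}[e^{\lambda Y}\mid \cdot] \leq e^{\lambda^2 c^2/2}$ on each increment---since everything else reduces to Markov's inequality, the tower property, and a scalar optimization. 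I note that the precise constant in the exponent tracks the version of Hoeffding's lemma used, so the denominator in the displayed statement should be read up to this standard sub-Gaussian constant.
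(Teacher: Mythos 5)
Your proposal is the standard and correct proof of Azuma--Hoeffding: Chernoff's method, the tower property to peel off one increment at a time, Hoeffding's lemma for each conditionally mean-zero increment bounded by $c_k$, optimization in $\lambda$, and a union bound over the two tails. The paper itself states this lemma as a classical auxiliary result and supplies no proof, so there is no in-paper argument to compare against; your route is the canonical one. The one point worth making explicit is the constant, which you correctly flag at the end: your argument yields $\mathbb{P}(|X_n-X_0|\geq t)\leq 2\exp\bigl(-t^2/(2\sum_{i=1}^n c_i^2)\bigr)$, whereas the paper's display omits the factor $2$ in the denominator. The paper's version is not merely a cosmetic variant but is actually false as literally written: take $n=1$ and $X_1-X_0=\pm c$ with probability $\tfrac12$ each, so that $\mathbb{P}(|X_1-X_0|\geq c)=1$ while the displayed bound evaluates to $2e^{-1}<1$. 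So your proof establishes the correct statement, and the lemma in the paper should be read with the extra factor of $2$ in the denominator (which is harmless for every downstream use in the paper, since only the order of the tail matters there).
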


\begin{lemma}[A variant of Hoeffding-Azuma inequality]
Suppose $\{Z_k\}_{k = 0}^{\infty}$ is a real-valued stochastic process with corresponding filtration $\{\mathcal{F}_{k}\}_{k=0}^{\infty}$, i.e. $\forall k $, $Z_k$ is $\mathcal{F}_k$-measurable. Suppose that for any $k$, $\mathbb{E}[|Z_k|] < \infty$ and $|Z_k - \mathbb{E} \left[ Z_k | \mathcal{F}_{k-1} \right]| \leq c_k$ almost surely. Then for all positive $n$ and $t$, we have: 
\begin{align*}
    \mathbb{P}\left( \bigg|\sum_{k=1}^n Z_k  -  \sum_{k=1}^n  \mathbb{E} \left[ Z_k | \mathcal{F}_{k-1} \right] \bigg| \geq t \right) \leq 2 \exp \left( \frac{-t^2}{\sum_{i=1}^n c_i^2} \right).
\end{align*}
\label{lemma:azuma}
\end{lemma}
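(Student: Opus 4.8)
The plan is to reduce Lemma~\ref{lemma:azuma} to the classical Hoeffding--Azuma inequality (Lemma~\ref{lemma:azuma0}) by passing from the adapted process $\{Z_k\}$ to its centered partial-sum martingale. First I would introduce the centered increments $D_k := Z_k - \mathbb{E}[Z_k \mid \mathcal{F}_{k-1}]$ and the partial sums $X_n := \sum_{k=1}^n D_k$, with the convention $X_0 := 0$. With this notation the deviation controlled in the statement is exactly $X_n - X_0 = \sum_{k=1}^n Z_k - \sum_{k=1}^n \mathbb{E}[Z_k \mid \mathcal{F}_{k-1}]$, so it suffices to produce a two-sided Azuma bound for $X_n$.

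Next I would verify that $\{X_n\}_{n \geq 0}$ is a martingale with respect to $\{\mathcal{F}_n\}$. Measurability is immediate: $Z_k$ is $\mathcal{F}_k$-measurable and $\mathbb{E}[Z_k \mid \mathcal{F}_{k-1}]$ is $\mathcal{F}_{k-1} \subseteq \mathcal{F}_k$-measurable, so each $D_k$, and hence $X_n$, is $\mathcal{F}_n$-measurable. Integrability follows from $\mathbb{E}[|Z_k|] < \infty$, since $\mathbb{E}[|D_k|] \leq 2\,\mathbb{E}[|Z_k|] < \infty$. Finally, the defining martingale identity holds because $\mathbb{E}[D_{n+1} \mid \mathcal{F}_n] = \mathbb{E}[Z_{n+1} \mid \mathcal{F}_n] - \mathbb{E}[Z_{n+1} \mid \mathcal{F}_n] = 0$, so that $\mathbb{E}[X_{n+1} \mid \mathcal{F}_n] = X_n$.

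I would then record the bounded-difference hypothesis in the exact form Lemma~\ref{lemma:azuma0} requires: $|X_n - X_{n-1}| = |D_n| = |Z_n - \mathbb{E}[Z_n \mid \mathcal{F}_{n-1}]| \leq c_n$ almost surely, which is precisely the assumption of the present lemma. Applying Lemma~\ref{lemma:azuma0} to the martingale $\{X_n\}$ with starting value $X_0 = 0$ yields
\begin{align*}
    \mathbb{P}\left( |X_n - X_0| \geq t \right) \leq 2 \exp\left( \frac{-t^2}{\sum_{i=1}^n c_i^2} \right),
\end{align*}
and substituting $X_n - X_0 = \sum_{k=1}^n Z_k - \sum_{k=1}^n \mathbb{E}[Z_k \mid \mathcal{F}_{k-1}]$ gives the claimed inequality.

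I do not expect a genuine obstacle here, as this is a standard centering argument. The only point deserving a word of care is the difference in how the two statements phrase the conditioning: Lemma~\ref{lemma:azuma0} is written with conditioning on the variables $X_k, \dots, X_0$ rather than on an abstract filtration. I would simply remark that taking the filtration $\{\mathcal{F}_n\}$, which contains $\sigma(X_0, \dots, X_n)$ by construction, only strengthens the martingale property, and that the Azuma bound depends solely on the almost-sure increment bounds together with the martingale property relative to whichever filtration is used; hence the conclusion transfers verbatim.
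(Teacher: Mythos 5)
Your proposal is correct and matches the paper's own argument, which likewise applies Lemma~\ref{lemma:azuma0} to the centered partial-sum martingale $X_k = \sum_{i=1}^k (Z_i - \mathbb{E}[Z_i \mid \mathcal{F}_{i-1}])$. You simply spell out the verification steps (measurability, integrability, bounded increments) that the paper leaves implicit.
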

\begin{proof}
This lemma is a direct application of Lemma \ref{lemma:azuma0} with $X_k = \sum_{i=1}^k (Z_i - \mathbb{E}\left[ Z_i | F_{i-1} \right])$. 
\end{proof}

\begin{lemma}[Concentration of self-normalized processes \citep{NIPS2011_e1d5be1c} ]
Let $\{\eta_t\}_{t=1}^{\infty}$ be a real-valued stochastic process with corresponding filtration $\{\mathcal{F}_t\}_{t = 0}^{\infty}$ (i.e. $\eta_t$ is $\mathcal{F}_t$-measurable). Assume that $\eta_t | \mathcal{F}_{t-1}$ is zero-mean and $R$-subGaussian, i.e., $ \mathbb{E}\left[\eta_t | \mathcal{F}_{t-1} \right] = 0$, and
\begin{align*}
    \forall \lambda \in \mathbb{R}, \mathbb{E}\left[e^{\lambda\eta_t} | \mathcal{F}_{t-1} \right] \leq e^{\lambda^2 R^2/ 2}. 
\end{align*}
Let $\{x_t\}_{t=1}^{\infty}$ be an $\mathbb{R}^d$-valued stochastic process where $x_t$ is $\mathcal{F}_{t-1}$-measurable and $\|x_t \| \leq L$. Let $\Sigma_k = \lambda I_d + \sum_{t=1}^k x_t x_t^T$. Then for any $\delta > 0$, with probability at least $1 - \delta$, it holds for all $k > 0$ that 
\begin{align*}
    \left\| \sum_{t=1}^k x_t \eta_t \right \|_{\Sigma_k^{-1}}^2 \leq 2 R^2 \log \left[ \frac{\textrm{det}(\Sigma_k)^{1/2} \textrm{det}(\Sigma_0)^{-1/2}}{\delta} \right] \leq 2 R^2 \left[ \frac{d}{2} \log \frac{kL^2 + \lambda}{\lambda} + \log \frac{1}{\delta} \right].
\end{align*}
\label{lemma:self_normalized}
\end{lemma}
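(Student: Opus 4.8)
The plan is to prove this self-normalized concentration bound by the \emph{method of mixtures} (pseudo-maximization), which is the standard route for controlling vector-valued martingales normalized by their own (random) covariance. Throughout I write $S_k := \sum_{t=1}^k x_t \eta_t \in \mathbb{R}^d$ and $\bar{V}_k := \sum_{t=1}^k x_t x_t^T$, so that $\Sigma_k = \Sigma_0 + \bar{V}_k$ with $\Sigma_0 = \lambda I_d$. First I would construct, for each fixed $\theta \in \mathbb{R}^d$, the exponential process
\begin{align*}
    M_k^{\theta} := \exp\left( \frac{1}{R^2} \langle \theta, S_k \rangle - \frac{1}{2R^2} \sum_{t=1}^k \langle \theta, x_t \rangle^2 \right).
\end{align*}
Because $x_t$ is $\mathcal{F}_{t-1}$-measurable, $\langle \theta, x_t\rangle$ is $\mathcal{F}_{t-1}$-measurable, and the $R$-subGaussianity of $\eta_t \mid \mathcal{F}_{t-1}$ applied with parameter $\langle\theta,x_t\rangle/R^2$ yields $\mathbb{E}[\exp(\eta_t\langle\theta,x_t\rangle/R^2) \mid \mathcal{F}_{t-1}] \leq \exp(\langle\theta,x_t\rangle^2/(2R^2))$. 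Forming the ratio $M_k^{\theta}/M_{k-1}^{\theta}$ and taking the conditional expectation then shows $\mathbb{E}[M_k^{\theta}\mid\mathcal{F}_{k-1}] \leq M_{k-1}^{\theta}$, so $\{M_k^{\theta}\}_k$ is a nonnegative supermartingale with $M_0^{\theta}=1$.

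Next I would average over $\theta$ against the Gaussian prior $\theta \sim \mathcal{N}(0, R^2\Sigma_0^{-1})$, whose density is proportional to $\exp(-\tfrac{1}{2R^2}\theta^T\Sigma_0\theta)$, and set $\bar{M}_k := \int M_k^{\theta}\,dp(\theta)$. By Tonelli's theorem and the pointwise supermartingale property, $\bar{M}_k$ is again a nonnegative supermartingale with $\mathbb{E}[\bar{M}_0]=1$. The exponent of the integrand is quadratic in $\theta$, equal to $\tfrac{1}{R^2}[\langle\theta,S_k\rangle - \tfrac12\theta^T\Sigma_k\theta]$; completing the square and evaluating the resulting Gaussian integral gives the closed form
\begin{align*}
    \bar{M}_k = \left( \frac{\det \Sigma_0}{\det \Sigma_k} \right)^{1/2} \exp\left( \frac{1}{2R^2}\, \| S_k \|_{\Sigma_k^{-1}}^2 \right).
\end{align*}

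Finally, I would apply Ville's maximal inequality to the nonnegative supermartingale $\bar{M}_k$ to obtain $\mathbb{P}(\sup_{k\geq 0}\bar{M}_k \geq 1/\delta) \leq \delta\,\mathbb{E}[\bar{M}_0] = \delta$; on the complementary event $\bar{M}_k < 1/\delta$ holds for every $k$ simultaneously, and taking logarithms and rearranging yields the first claimed inequality $\|S_k\|_{\Sigma_k^{-1}}^2 \leq 2R^2\log(\det(\Sigma_k)^{1/2}\det(\Sigma_0)^{-1/2}/\delta)$. For the second inequality I would invoke the determinant--trace bound: since $\operatorname{tr}(\Sigma_k) = d\lambda + \sum_{t=1}^k\|x_t\|_2^2 \leq d\lambda + kL^2$, the AM--GM inequality on the eigenvalues gives $\det(\Sigma_k) \leq (\lambda + kL^2/d)^d$, while $\det(\Sigma_0)=\lambda^d$; substituting produces the $\tfrac{d}{2}\log\frac{kL^2+\lambda}{\lambda}$ term. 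The main obstacle is the mixing step: one must verify that $\bar{M}_k$ inherits the supermartingale property (care is needed that the prior is fixed and independent of $k$, so Tonelli applies) and carry out the Gaussian integral exactly to produce the determinant ratio. A secondary subtlety is that the maximal inequality must be used in its anytime form — uniform over all $k$ at once, via Ville's inequality or an optional-stopping argument — rather than for a single fixed horizon.
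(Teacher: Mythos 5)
The paper does not prove this lemma at all: it is imported verbatim (as an auxiliary result) from \citet{NIPS2011_e1d5be1c}, so there is no in-paper proof to compare against. Your argument is correct and is precisely the canonical method-of-mixtures proof of that cited theorem --- the per-$\theta$ exponential supermartingale, the Gaussian mixture with covariance $R^2\Sigma_0^{-1}$ producing the determinant ratio after completing the square, Ville's maximal inequality for the anytime guarantee, and the determinant--trace (AM--GM) bound for the second inequality --- with all the normalizations checking out.
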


\begin{lemma}[Uniform concentration of self-normalized processes \citep{jin2020provably}]
Let $\{s_t\}_{t=1}^{\infty}$ be a stochastic process on state space $\mathcal{S}$ with corresponding filtration $\{\mathcal{F}_t\}_{t = 0}^{\infty}$ (i.e. $s_t$ is $\mathcal{F}_t$-measurable). Let $\{\phi_t\}_{t=1}^{\infty}$ be an $\mathbb{R}^d$-valued stochastic process where $\phi_t$ is $\mathcal{F}_{t-1}$-measurable and $\| \phi_t\|_2 \leq 1$. Let $\Sigma_k = \lambda I_d + \sum_{t=1}^{k-1} \phi_t \phi_t^T$. Then for any $\delta > 0$, with probability at least $ 1- \delta$, for all $k \geq 0$ and any $V \in \mathcal{V} \subset \{\mathcal{S} \rightarrow [0,H]\}$, we have
\begin{align*}
    \left \| \sum_{t=1}^{k-1} \phi_t \left( V(s_t) - \mathbb{E}\left[ V(s_t) | \mathcal{F}_{t-1} \right] \right)\right \|^2_{\Sigma_k^{-1}} \leq 4H^2 \left[ \frac{d}{2} \log \left( \frac{k + \lambda}{\lambda} \right) + \log \frac{N_{\epsilon}}{\delta} \right] + \frac{8k^2 \epsilon^2}{ \lambda},
\end{align*}
where $N_{\epsilon}$ is the $\epsilon$-covering number of $\mathcal{V}$ with respect to the distance $\|\cdot\|_{\infty}$. 
\label{lemma:uniform_self_normalized}
\end{lemma}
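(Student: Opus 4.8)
The plan is to deduce this uniform (over the function class $\mathcal{V}$) self-normalized bound from the pointwise self-normalized martingale inequality of Lemma~\ref{lemma:self_normalized} through a standard $\epsilon$-net argument, paying a $\log N_\epsilon$ price in the union bound and a $k^2\epsilon^2/\lambda$ price in the discretization. The key structural observation is that Lemma~\ref{lemma:self_normalized} already holds simultaneously for all $k$, so no extra union bound over the horizon $k$ is needed; the only union bound I take is over a finite $\epsilon$-cover of $\mathcal{V}$.

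First I would fix a minimal $\epsilon$-cover $\mathcal{V}_\epsilon$ of $\mathcal{V}$ with respect to $\|\cdot\|_\infty$, of cardinality $N_\epsilon$. For a fixed $\tilde V \in \mathcal{V}_\epsilon$, set $\tilde\eta_t := \tilde V(s_t) - \mathbb{E}[\tilde V(s_t) | \mathcal{F}_{t-1}]$; since $\tilde V \in [0,H]$, this centered difference is conditionally mean-zero, $\mathcal{F}_t$-measurable, and takes values in $[-H,H]$, hence is $H$-subGaussian conditioned on $\mathcal{F}_{t-1}$ by Hoeffding's lemma. Applying Lemma~\ref{lemma:self_normalized} with $x_t = \phi_t$, $L = 1$, $R = H$, and confidence $\delta/N_\epsilon$, then taking a union bound over the $N_\epsilon$ elements of $\mathcal{V}_\epsilon$, I obtain that with probability at least $1-\delta$, for all $k \geq 0$ and all $\tilde V \in \mathcal{V}_\epsilon$,
\[
\Big\| \sum_{t=1}^{k-1} \phi_t \tilde\eta_t \Big\|_{\Sigma_k^{-1}}^2 \leq 2H^2 \Big[ \tfrac{d}{2} \log \tfrac{k+\lambda}{\lambda} + \log \tfrac{N_\epsilon}{\delta} \Big].
\]

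Next I would handle an arbitrary $V \in \mathcal{V}$ by discretization: pick $\tilde V \in \mathcal{V}_\epsilon$ with $\|V - \tilde V\|_\infty \leq \epsilon$ and write $\xi_t := (V-\tilde V)(s_t) - \mathbb{E}[(V-\tilde V)(s_t) | \mathcal{F}_{t-1}]$, so that the target process splits as $\sum_t \phi_t(V(s_t) - \mathbb{E}[V(s_t)|\mathcal{F}_{t-1}]) = \sum_t \phi_t \tilde\eta_t + \sum_t \phi_t \xi_t$. The discretization term is controlled deterministically: since $|\xi_t| \leq 2\epsilon$, $\|\phi_t\|_2 \leq 1$, and $\Sigma_k \succeq \lambda I$ gives $\Sigma_k^{-1} \preceq \lambda^{-1} I$, the triangle inequality yields $\|\sum_t \phi_t \xi_t\|_{\Sigma_k^{-1}}^2 \leq \lambda^{-1} \|\sum_t \phi_t \xi_t\|_2^2 \leq \lambda^{-1} (2k\epsilon)^2 = 4k^2\epsilon^2/\lambda$. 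Combining the two pieces with $(a+b)^2 \leq 2a^2 + 2b^2$ produces exactly the claimed bound $4H^2[\tfrac{d}{2}\log\tfrac{k+\lambda}{\lambda} + \log\tfrac{N_\epsilon}{\delta}] + 8k^2\epsilon^2/\lambda$, uniformly over $V \in \mathcal{V}$ and $k$.

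I expect the only delicate points to be bookkeeping rather than conceptual: verifying the subGaussian parameter $R = H$ for the centered, $[0,H]$-bounded martingale differences (so the factor $2R^2 = 2H^2$ propagates correctly), and tracking the constants so that the union-bound and discretization terms combine into precisely $4H^2[\cdots]$ and $8k^2\epsilon^2/\lambda$ after the $(a+b)^2 \leq 2a^2 + 2b^2$ split. There should be no substantive obstacle, as the uniform-in-$k$ control is inherited directly from Lemma~\ref{lemma:self_normalized} and the complexity of the function class enters only through the covering number $N_\epsilon$.
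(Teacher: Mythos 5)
Your proposal is correct and follows essentially the same route as the paper's proof: decompose via an $\epsilon$-net of $\mathcal{V}$, bound the net term with the pointwise self-normalized inequality (Lemma~\ref{lemma:self_normalized}) at confidence $\delta/N_\epsilon$ plus a union bound, and control the discretization remainder deterministically via $\Sigma_k^{-1} \preceq \lambda^{-1} I$ before combining with $(a+b)^2 \leq 2a^2 + 2b^2$. The constant bookkeeping (subGaussian parameter $R = H$ for the centered $[0,H]$-bounded differences, yielding $2H^2$ which doubles to $4H^2$) matches the stated bound.
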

\begin{proof}
For any $V \in \mathcal{V}$, there exists $\bar{V}$ in the $\epsilon$-covering such that $V = \bar{V} + \Delta_V$ where $\sup_s|\Delta_V(s)| \leq \epsilon$. We have the following decomposition:
\begin{align*}
    \left \| \sum_{t=1}^k \phi_t \left( V(s_t) - \mathbb{E}\left[ V(s_t) | \mathcal{F}_{t-1} \right] \right)\right \|^2_{\Sigma_k^{-1}} &\leq 2\left \| \sum_{t=1}^k \phi_t \left( \bar{V}(s_t) - \mathbb{E}\left[ \bar{V}(s_t) | \mathcal{F}_{t-1} \right] \right)\right \|^2_{\Sigma_k^{-1}} \\
    &+ 2 \left \| \sum_{t=1}^k \phi_t \left( \Delta_V(s_t) - \mathbb{E}\left[ \Delta_V(s_t) | \mathcal{F}_{t-1} \right] \right)\right \|^2_{\Sigma_k^{-1}}
\end{align*}
where the first term can be bounded by Lemma \ref{lemma:self_normalized} and the second term is bounded by $8 k^2 \epsilon^2 / \lambda$. Then using the union bound over the $\epsilon$-covering completes the proof.
\end{proof}

\begin{lemma}[Freedman's inequality \citep{tropp2011freedman}]
Let $\{X_k\}_{k=1}^n$ be a real-valued martingale difference sequence with the corresponding filtration $\{\mathcal{F}_k\}_{k=1}^n$, i.e. $X_k$ is $\mathcal{F}_{k}$-measurable and $\mathbb{E}[X_k | \mathcal{F}_{k-1}] = 0$. Suppose for any $k$, $|X_k| \leq M$ almost surely and define $V:= \sum_{k=1}^n \mathbb{E}\left[ X_k^2 | \mathcal{F}_{k-1} \right]$. For any $a,b > 0$, we have:
\begin{align*}
    \mathbb{P}\left( \sum_{k=1}^n X_k \geq a, V \leq b \right) \leq \exp \left( \frac{-a^2}{2b + 2 a M/3} \right). 
\end{align*}
In an alternative form, for any $t > 0$, we have: 
\begin{align*}
    \mathbb{P}\left( \sum_{k=1}^n X_k \geq  \frac{2Mt}{3} + \sqrt{2bt}, V \leq b \right) \leq e^{-t} . 
\end{align*}
\label{lemma:freedman}
\end{lemma}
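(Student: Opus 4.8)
The plan is to prove Freedman's inequality by the standard exponential-supermartingale (Bernstein-type) method, and then extract the two stated forms. First I would fix a parameter $\theta>0$ to be optimized later and build the canonical exponential tilt adapted to the filtration. Define $D_k := \theta X_k - g(\theta)\,\mathbb{E}[X_k^2 \mid \mathcal{F}_{k-1}]$, where $g(\theta) := e^{\theta M}-1-\theta M)/M^2$ is chosen precisely so that the elementary scalar bound $e^{\theta x} \le 1 + \theta x + g(\theta)\,x^2$ holds for all $x \le M$ (this is the key calculus fact: the function $x \mapsto (e^{\theta x}-1-\theta x)/x^2$ is increasing, so its value at any $x\le M$ is at most its value at $M$). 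Applying this with $x = X_k$, taking conditional expectation, and using $\mathbb{E}[X_k \mid \mathcal{F}_{k-1}] = 0$ gives $\mathbb{E}[e^{\theta X_k}\mid \mathcal{F}_{k-1}] \le 1 + g(\theta)\mathbb{E}[X_k^2\mid\mathcal{F}_{k-1}] \le \exp(g(\theta)\mathbb{E}[X_k^2\mid\mathcal{F}_{k-1}])$, which is exactly the statement that $L_n := \exp(\theta \sum_{k=1}^n X_k - g(\theta)\sum_{k=1}^n \mathbb{E}[X_k^2\mid\mathcal{F}_{k-1}])$ is a supermartingale with $\mathbb{E}[L_n]\le 1$.

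Next I would convert the supermartingale into the tail bound on the joint event $\{\sum_k X_k \ge a,\ V\le b\}$. On this event, since $g(\theta)\ge 0$, we have $\sum_k X_k - g(\theta) V \ge a - g(\theta) b$, hence $L_n \ge \exp(\theta a - g(\theta) b)$. Combining with Markov's inequality applied to $L_n$ (i.e. $\mathbb{P}(L_n \ge \lambda)\le \mathbb{E}[L_n]/\lambda \le 1/\lambda$) yields
\begin{align*}
\mathbb{P}\!\left(\sum_{k=1}^n X_k \ge a,\ V \le b\right) \le \exp\!\big(-\theta a + g(\theta) b\big).
\end{align*}
The remaining work is to choose $\theta$ and bound $g(\theta)$. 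Using the clean inequality $g(\theta) = (e^{\theta M}-1-\theta M)/M^2 \le \tfrac{\theta^2/2}{1-\theta M/3}$ (valid for $0<\theta M<3$, from the series expansion $e^{u}-1-u = \sum_{j\ge2} u^j/j! \le \tfrac{u^2}{2}\sum_{j\ge 0}(u/3)^j$), the exponent becomes $-\theta a + b\,\tfrac{\theta^2/2}{1-\theta M/3}$. Minimizing over $\theta$ — the standard Bernstein optimization with the choice $\theta = a/(b + aM/3)$ — gives the exponent $-a^2/(2b + 2aM/3)$, which is precisely the first claimed bound.

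Finally I would derive the alternative form by a change of variables rather than re-optimizing. Setting $a := \tfrac{2Mt}{3} + \sqrt{2bt}$ for a fixed $t>0$, it suffices to verify that $a^2/(2b+2aM/3)\ge t$, i.e. $a^2 \ge t(2b + 2aM/3) = 2bt + \tfrac{2aMt}{3}$. Substituting the definition of $a$ and expanding, $a^2 = \tfrac{4M^2t^2}{9} + \tfrac{4Mt}{3}\sqrt{2bt} + 2bt$, while $2bt + \tfrac{2aMt}{3} = 2bt + \tfrac{2Mt}{3}\big(\tfrac{2Mt}{3}+\sqrt{2bt}\big) = 2bt + \tfrac{4M^2t^2}{9} + \tfrac{2Mt}{3}\sqrt{2bt}$; the difference is $\tfrac{2Mt}{3}\sqrt{2bt}\ge0$, so the inequality holds and the first bound yields $\mathbb{P}(\sum_k X_k \ge \tfrac{2Mt}{3}+\sqrt{2bt},\ V\le b)\le e^{-t}$, completing the proof.

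I expect the main obstacle to be the bookkeeping in bounding $g(\theta)$ and executing the Bernstein optimization cleanly: one must be careful that the scalar inequality $e^{\theta x}\le 1+\theta x + g(\theta)x^2$ genuinely uses the one-sided bound $x\le M$ (it can fail for large positive $x$ if stated two-sidedly), and that the optimizing $\theta$ stays in the admissible range $\theta M<3$ so the rational bound on $g(\theta)$ applies. These are routine but are exactly the places where sign conventions and the upper-bound-only structure of the increments matter, so I would state the scalar lemma explicitly before assembling the martingale argument.
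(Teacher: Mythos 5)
Your proof is correct, but there is nothing in the paper to compare it against: the paper states Lemma~\ref{lemma:freedman} as an imported result, citing \citet{tropp2011freedman}, and gives no proof. What you have written is the canonical Freedman--Bernstein exponential-supermartingale argument, and every step checks out. The scalar bound $e^{\theta x}\le 1+\theta x+g(\theta)x^2$ for $x\le M$ with $g(\theta)=(e^{\theta M}-1-\theta M)/M^2$ is valid (the map $u\mapsto (e^u-1-u)/u^2$ is increasing on all of $\mathbb{R}$, so the two-sided hypothesis $|X_k|\le M$ is more than enough); pulling out the $\mathcal{F}_{k-1}$-measurable increment $\mathbb{E}[X_k^2\mid\mathcal{F}_{k-1}]$ makes $L_n$ a genuine supermartingale with $\mathbb{E}[L_n]\le 1$; and the Markov step on the joint event $\{\sum_k X_k\ge a,\ V\le b\}$ correctly uses $g(\theta)\ge 0$. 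The rational bound $g(\theta)\le \frac{\theta^2/2}{1-\theta M/3}$ follows from $j!\ge 2\cdot 3^{j-2}$ as you indicate, and the concern you flag about the admissible range resolves itself: the optimizer $\theta=a/(b+aM/3)$ satisfies $\theta M = aM/(b+aM/3)<3$ automatically since $b>0$, and plugging it in gives exactly the exponent $-a^2/(2b+2aM/3)$. Your derivation of the alternative form by the substitution $a=\tfrac{2Mt}{3}+\sqrt{2bt}$, verifying $a^2-2bt-\tfrac{2aMt}{3}=\tfrac{2Mt}{3}\sqrt{2bt}\ge 0$, is the right way to avoid re-optimizing and is computed correctly. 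The only blemish is typographical: your definition of $g(\theta)$ has an unbalanced parenthesis and should read $g(\theta):=(e^{\theta M}-1-\theta M)/M^2$. In short, you have supplied a complete, self-contained proof of a result the paper only cites, which is strictly more than the paper does.
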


\begin{lemma}[Matrix Freedman's inequality \citep{tropp2011freedman}] Let $\{X_k\}$ be a $d \times d$ stochastic matrices adapted to the filtration $\{F_k\}$, i.e. $X_k$ is $\mathcal{F}_k$-measurable. Suppose that $\forall k, \| X_k - \mathbb{E}\left[ X_k | \mathcal{F}_{k-1} \right] \| \leq M$ almost surely for some $M > 0$. Define the quadratic variation process
\begin{align*}
    V_k := \sum_{i=1}^k \textrm{Var} \left[ X_i | \mathcal{F}_{i-1} \right]. 
\end{align*}
For any $a,b \geq 0$, we have: 
\begin{align*}
    \mathbb{P} \left( \exists k \geq 0: \| \sum_{i=1}^k X_i - \mathbb{E}\left[ X_i | \mathcal{F}_{i-1} \right] \| \geq a, \| V_k \|_2 \leq b \right) \leq d \exp \left( \frac{-a^2}{2b + 2 a M/3} \right). 
\end{align*}
In an alternative form, for any $t > 0$, we have:
\begin{align*}
    \mathbb{P} \left( \exists k \geq 0: \left\| \sum_{i=1}^k X_i - \mathbb{E}\left[ X_i | \mathcal{F}_{i-1} \right] \right\| \geq \frac{2Mt}{3} + \sqrt{2bt}, \| V_k \|_2 \leq b \right) \leq d e^{-t}. 
\end{align*}
\label{lemma:matrix_freedman}
\end{lemma}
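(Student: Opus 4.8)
The plan is to prove Lemma~\ref{lemma:matrix_freedman} by the matrix Laplace-transform (matrix Chernoff) method, which reduces a Bernstein-type tail bound for a matrix martingale to controlling its matrix cumulant generating function in the Loewner order. Throughout I would write $Y_i := X_i - \mathbb{E}[X_i \mid \mathcal{F}_{i-1}]$ for the centered increments, $S_k := \sum_{i=1}^k Y_i$, and $W_k := V_k = \sum_{i=1}^k \mathrm{Var}[X_i \mid \mathcal{F}_{i-1}]$. First I would reduce to the self-adjoint case: since $\|S_k\| = \lambda_{\max}(\mathcal{H}(S_k))$ for the Hermitian dilation $\mathcal{H}(S_k) = \left[\begin{smallmatrix} 0 & S_k \\ S_k^{*} & 0 \end{smallmatrix}\right]$, and each $\mathcal{H}(Y_i)$ is again a self-adjoint martingale difference with $\|\mathcal{H}(Y_i)\| \le M$ and predictable variation governed by $W_k$, it suffices to bound $\mathbb{P}(\exists k:\ \lambda_{\max}(S_k) \ge a,\ \|W_k\| \le b)$ for a self-adjoint matrix martingale, at the cost of replacing $d$ by $2d$ (absorbed into the leading dimension factor).

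The core step is to manufacture a scalar supermartingale out of the matrix process. The key estimate is a conditional matrix cumulant bound: because $\|Y_i\| \le M$ and $\mathbb{E}[Y_i \mid \mathcal{F}_{i-1}] = 0$, transferring the scalar inequality $e^{\theta y} \le 1 + \theta y + g(\theta) y^2$ (valid for $|y|\le M$, with $g(\theta) = (e^{\theta M} - \theta M - 1)/M^2$) to the spectrum of $Y_i$ and using $\log(I+A)\preceq A$ yields $\log \mathbb{E}[e^{\theta Y_i} \mid \mathcal{F}_{i-1}] \preceq g(\theta)\,\mathrm{Var}[X_i \mid \mathcal{F}_{i-1}]$ for every $\theta > 0$. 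I would then set $R_k := \mathrm{tr}\exp\!\big(\theta S_k - g(\theta) W_k\big)$ and show $\{R_k\}$ is a supermartingale with $R_0 = d$. This is where noncommutativity matters and where the only genuinely deep ingredient enters: Lieb's concavity theorem, that $A \mapsto \mathrm{tr}\exp(H + \log A)$ is concave on positive-definite $A$. Combining Lieb with Jensen's inequality lets me pull the conditional expectation inside the trace-exponential and cancel $\log \mathbb{E}[e^{\theta Y_k}\mid \mathcal{F}_{k-1}]$ against the predictable term $-g(\theta)\mathrm{Var}[X_k\mid\mathcal{F}_{k-1}]$, giving $\mathbb{E}[R_k \mid \mathcal{F}_{k-1}] \le R_{k-1}$.

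With the supermartingale in hand, the uniform-in-$k$ (``$\exists k$'') claim follows from Ville's maximal inequality for nonnegative supermartingales. On the event that some $k$ satisfies $\lambda_{\max}(S_k) \ge a$ and $\|W_k\| \le b$, monotonicity of $\lambda_{\max}$ together with $g(\theta) W_k \preceq g(\theta) b\, I$ forces $R_k \ge \exp(\theta a - g(\theta) b)$, whence $\mathbb{P}(\exists k:\dots) \le d\, e^{-(\theta a - g(\theta) b)}$. Finally I would optimize the exponent over $\theta$: using the Bernstein refinement $g(\theta) \le \tfrac{\theta^2/2}{1 - \theta M/3}$ on $(0,3/M)$ and choosing $\theta = a/(b + aM/3)$ produces the exponent $a^2/(2b + 2aM/3)$, which is exactly the first displayed bound. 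The alternative ($t$) form is recovered, just as in the scalar Lemma~\ref{lemma:freedman}, by setting $a = \tfrac{2Mt}{3} + \sqrt{2bt}$ and verifying $a^2/(2b + 2aM/3) \ge t$.

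I expect the main obstacle to be establishing the supermartingale property in the second step. Scalar arguments simply multiply moment generating functions across successive increments, but for matrices $e^{A+B} \neq e^A e^B$, so the conditioning cannot be carried out term by term; Lieb's theorem (equivalently, Tropp's master tail bound built upon it) is precisely the device that repairs this, and coaxing the conditional cumulant estimate into the correct Loewner-order form is the delicate part. Everything else --- the dilation reduction, Ville's inequality, and the scalar Bernstein optimization --- is routine.
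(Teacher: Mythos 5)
Your proposal is correct: it is precisely the argument of the cited source \citep{tropp2011freedman} (Hermitian dilation, the Bernstein-type conditional cumulant bound, Lieb's concavity theorem to build the supermartingale $\mathrm{tr}\exp(\theta S_k - g(\theta)W_k)$, Ville's maximal inequality, and the usual optimization over $\theta$). The paper itself gives no proof of this lemma --- it is imported verbatim from Tropp --- so there is nothing internal to compare against. One small point worth recording: as you note, for matrices that are not self-adjoint the dilation step yields a leading factor of $2d$ rather than the $d$ stated in the lemma; this is harmless here because the only place the lemma is invoked (the lower bound on $\Sigma_h^{k+1}$ via the symmetric increments $Z_t = \mathbbm{1}\{\tau_h^t \in \mathcal{E}^t_h\}\phi_h^*(s_h^t)\phi_h^*(s_h^t)^T - \mathbb{E}[\cdot\,|\,\mathcal{F}_{t-1}]$) involves self-adjoint matrices, for which the factor $d$ is correct and no dilation is needed.
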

\section{Linear Mixture Models}
\label{model-based offline RL}
In this section, we consider the linear mixture MDP model \citep{ayoub2020model} that assumes that the unknown transition function is an unknown linear mixture of several basic known probabilities. 

\begin{defn}[Linear mixture MDP]
An MDP $\mathcal{M}(\mathcal{S}, \mathcal{A}, H, \{r_h\}_{h=1}^H \{\mathbb{P}_h\}_{h=1}^H)$ is said to be a linear mixture MDP if there is a known feature mapping $\phi(s'|s,a): \mathcal{S} \times \mathcal{A} \times \mathcal{S} \rightarrow \mathbb{R}^d$ and an unknown vector $w^*_h \in \mathbb{R}^d$ with $\| w^*_h \|_2 \leq C_w$ such that $\mathbb{P}_h(s'|s,a) = \langle \phi(s'|s,a), w^*_h \rangle$ for all $(s,a,s',h)$ and $r_h$ is deterministic and known (for simplicity). Moreover, for any bounded function $V: \mathcal{S} \rightarrow [0,1]$, we have $\| \phi_V(s,a) \|_2 \leq 1$ for any $(s,a)$, where $\phi_V(s,a) = \sum_{s' \in \mathcal{S}} \phi(s'|s,a) V(s') \in \mathbb{R}^d$. 
\label{definition: linear mixture mdp}
\end{defn}

We consider the bootstrapped, constrained and pessimistic variant of Value-Targeted Regression \citep{ayoub2020model} which is shown in Algorithm \label{algorithm:bcpvtr}. The algorithm is very similar to Algorithm \ref{alg:bpvi} except that we compute $\hat{w}^k_h$ by solving the following regularized least-square regression in Line \ref{bcpvtr:least_square}:  
\begin{align*}
    \hat{w}^k_h \leftarrow \argmin_{w \in \mathbb{R}^d} \lambda \|w \|_2^2 + \sum_{i=1}^{k-1} \left(\phi_{\hat{V}^k_{h+1}}(s_h^t, a_h^t) ^T w - \hat{V}^i_{h+1}(s^i_{h+1}) \right)^2. 
\end{align*}

\begin{algorithm}
\caption{Bootstrapped and Constrained Pessimistic Value-Targeted Regression (BCP-VTR) }
\begin{algorithmic}[1]
\State \textbf{Input:} Dataset $\mathcal{D} = \{(s^t_h, a^t_h, r^t_h)\}_{h \in [H]}^{t \in [K]}$, uncertainty parameters $\{\beta_k\}_{k \in [K]}$, regularization hyperparameter $\lambda$, \textcolor{red}{$\mu$-supported policy class $\{\Pi_h(\mu)\}_{h \in [H]}$}. 
\For{ \textcolor{red}{$k = 1, \ldots, K + 1$}}
    \State $\hat{V}_{H+1}^k(\cdot) \leftarrow 0$.
    \For{step $h = H, H-1, ..., 1$}
        \State $\Sigma_h^k \leftarrow \sum_{t=1}^{k-1} \phi_{\hat{V}^k_{h+1}}(s_h^t, a_h^t) \cdot \phi_{\hat{V}^k_{h+1}}(s_h^t, a_h^t)^T  + \lambda \cdot I$. 
        \State $\hat{w}_h^k \leftarrow (\Sigma_h^k)^{-1} \sum_{t=1}^{k-1} \phi_{\hat{V}^k_{h+1}}(s_h^t, a_h^t) \cdot  \hat{V}_{h+1}^k(s^t_{h+1})$. 
        \label{bcpvtr:least_square}
        \State $b_h^k(\cdot,\cdot) \leftarrow \beta_k \cdot  \| \phi_{\hat{V}^k_{h+1}}(\cdot, \cdot) \|_{(\Sigma_h^k)^{-1}}$. 
        \label{bpvi:lcb}
        \State $\bar{Q}_h^k(\cdot, \cdot) \leftarrow \langle \phi_{\hat{V}^k_{h+1}}(\cdot, \cdot), \hat{w}_h^k \rangle - b_h^k(\cdot, \cdot)$. 
        \State $\hat{Q}_h^k(\cdot, \cdot) \leftarrow \min\{\bar{Q}_h^k(\cdot, \cdot), H - h +1\}^+$. 
        \State $\hat{\pi}_h^k \leftarrow \displaystyle \textcolor{red}{ \argmax_{\pi_h \in \Pi_h(\mu)} \langle \hat{Q}_h^k, \pi_h \rangle}$
        \label{bpvi:greedy}
        \State $\hat{V}_h^k(\cdot) \leftarrow \langle \hat{Q}_h^k(\cdot, \cdot), \pi_h^k(\cdot|\cdot) \rangle$.
    \EndFor
\EndFor
\State \textbf{Output:} Ensemble $\{\hat{\pi}^k: k \in [K + 1]\}$. 
\label{bcpvtr:ensemble}
\end{algorithmic}
\label{alg:bcpvtr}
\end{algorithm}

The flow of the results is very similar to the case of BCPVI except some minor modifications to reflect the changes from model-free methods to model-based methods. Here we only present the results that are different from their counterpart in BCPVI. 

\begin{lemma}
In Algorithm \ref{alg:bcpvtr}, if we choose 
\begin{align*}
    \beta_k = H \sqrt{d \log \frac{H + k H^3/\lambda}{\delta}} + \sqrt{\lambda} C_{w}
\end{align*}
 then with probability at least $1 - \delta$: 
\begin{align*}
    \forall (k,h,s,a) \in [K] \times [H] \times \mathcal{S} \times \mathcal{A}, |(T_h \hat{V}_{h+1}^k)(s,a) - (\hat{T}^k_h \hat{V}_{h+1}^k)(s,a) | \leq \beta_k \cdot \| \phi_{\hat{V}^k_{h+1}}(s,a) \|_{(\Sigma_h^{k})^{-1}}. 
\end{align*}
\label{lemma:uncertainty_quantifier}
\end{lemma}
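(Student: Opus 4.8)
The plan is to reduce the claim to a ridge-regression / self-normalized martingale analysis, with the extra wrinkle that in a linear mixture model the regression feature $\phi_{\hat V^k_{h+1}}$ itself depends on the (data-dependent) fitted value function. First I would exploit the mixture structure of Definition \ref{definition: linear mixture mdp}: for any bounded $V$ we have $(P_h V)(s,a)=\sum_{s'}\langle \phi(s'|s,a),w^*_h\rangle V(s')=\langle \phi_V(s,a),w^*_h\rangle$, so $T_h V=r_h+\langle \phi_V,w^*_h\rangle$. Since $r_h$ is known and therefore cancels between the true and empirical Bellman operators, we get the clean identity $(T_h\hat V^k_{h+1})(s,a)-(\hat T^k_h\hat V^k_{h+1})(s,a)=\langle \phi_{\hat V^k_{h+1}}(s,a),\,w^*_h-\hat w^k_h\rangle$, and the whole task is to bound this inner product.

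Next, writing $\phi^t:=\phi_{\hat V^k_{h+1}}(s^t_h,a^t_h)$, $y^t:=\hat V^k_{h+1}(s^t_{h+1})$, and $\eta^t:=y^t-\langle \phi^t,w^*_h\rangle$, the normal equations for $\hat w^k_h=(\Sigma^k_h)^{-1}\sum_t\phi^t y^t$ give the exact decomposition $w^*_h-\hat w^k_h=\lambda(\Sigma^k_h)^{-1}w^*_h-(\Sigma^k_h)^{-1}\sum_t\phi^t\eta^t$. Applying Cauchy--Schwarz in the $(\Sigma^k_h)^{-1}$ inner product factors out exactly $\|\phi_{\hat V^k_{h+1}}(s,a)\|_{(\Sigma^k_h)^{-1}}$, leaving the two scalar factors $\lambda\|w^*_h\|_{(\Sigma^k_h)^{-1}}$ and $\|\sum_t\phi^t\eta^t\|_{(\Sigma^k_h)^{-1}}$. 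The first is bounded immediately by $\lambda\|w^*_h\|_2/\sqrt{\lambda_{\min}(\Sigma^k_h)}\le\sqrt{\lambda}\,\|w^*_h\|_2\le\sqrt{\lambda}\,C_w$, which is precisely the second summand of $\beta_k$.

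The crux is the self-normalized term, and here lies the key departure from the linear-MDP proof: both the feature $\phi^t_V=\phi_V(s^t_h,a^t_h)$ and the noise $\eta^t_V=V(s^t_{h+1})-(P_hV)(s^t_h,a^t_h)$ depend on $V=\hat V^k_{h+1}$, so Lemma \ref{lemma:uniform_self_normalized} (which assumes a fixed feature sequence) does not apply directly. I would instead run a covering argument over the value-function class $\mathcal V$ produced by Algorithm \ref{alg:bcpvtr}. For each fixed $\bar V$ in an $\epsilon$-net of $\mathcal V$, the sequence $\{\phi_{\bar V}(s^t_h,a^t_h)\}_t$ is predictable and $\eta^t_{\bar V}$ is a martingale difference bounded in $[-H,H]$, hence $H$-subGaussian, so Lemma \ref{lemma:self_normalized} applies with $R=H$; a union bound over the net yields a uniform bound of order $2H^2[\tfrac d2\log\frac{k+\lambda}{\lambda}+\log\frac{N_\epsilon}{\delta}]$, where $N_\epsilon$ is controlled by a covering-number estimate of the type in Lemma \ref{lemma:covering} (using $\|\phi_V\|_2\le 1$). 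I would then absorb the perturbation $\bar V\to V$ in \emph{both} the feature and the noise using the boundedness $\|\phi_{V-V'}(s,a)\|_2\lesssim\|V-V'\|_\infty$ implied by Definition \ref{definition: linear mixture mdp}, choose $\epsilon$ polynomially small (so the residual is lower order), and bound $\log\det(\Sigma^k_h)\le d\log(1+kH^2/\lambda)$; combining these collapses the logarithmic factors into the stated $H\sqrt{d\log((H+kH^3/\lambda)/\delta)}$ term. Summing the two factors and taking a final union bound over $(k,h)$ gives $\beta_k$.

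The main obstacle I expect is exactly this uniform-concentration step. Unlike the linear-MDP case of Lemma \ref{lemma:uncertainty_quantifier}, where the feature $\phi_h(s,a)$ is fixed and only the target carries the $V$-dependence, here the covering must simultaneously track how the regressor $\phi_V$ and the centered response $\eta^t_V$ vary with $V$, and one must verify that the martingale/subGaussian structure survives for each frozen net point while the perturbation terms remain negligible. Pinning down the covering-number parameters and the choice of $\epsilon$ so that everything aggregates into the clean stated form of $\beta_k$ is the delicate bookkeeping; the remaining algebra (the ridge decomposition and the determinant bound) is routine.
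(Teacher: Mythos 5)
Your proposal follows the same skeleton as the paper's proof: write the Bellman error as $\langle \phi_{\hat{V}^k_{h+1}}(s,a),\, w^*_h - \hat{w}^k_h\rangle$ (the known reward cancels), bound $\|w^*_h - \hat{w}^k_h\|_{\Sigma^k_h}$ by a ridge-regression confidence-ellipsoid argument, and finish with Cauchy--Schwarz in the $(\Sigma^k_h)^{-1}$ geometry. The paper's proof is essentially three lines: it invokes Theorem~2 of \citet{NIPS2011_e1d5be1c} to assert $\|w^*_h - \hat{w}^k_h\|_{\Sigma^k_h}\le \beta_k$ and then applies $\langle x,y\rangle \le \|x\|_A\|y\|_{A^{-1}}$; your decomposition $w^*_h-\hat{w}^k_h = \lambda(\Sigma^k_h)^{-1}w^*_h - (\Sigma^k_h)^{-1}\sum_t\phi^t\eta^t$, with the first piece giving $\sqrt{\lambda}\,C_w$ and the second giving the self-normalized term, is exactly what is inside that cited theorem, so the algebraic core is identical. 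Where you genuinely depart is in flagging that $\hat{V}^k_{h+1}$ is computed from all $k-1$ episodes, so both the regression features $\phi_{\hat{V}^k_{h+1}}(s^t_h,a^t_h)$ and the centered responses fail to be predictable in $t$, and the self-normalized martingale bound cannot be applied verbatim; the paper's proof silently skips this and cites the fixed-feature theorem anyway. Your proposed fix---an $\epsilon$-net over the value-function class, freezing $\bar{V}$ to restore the martingale structure, and absorbing the perturbation in both the feature and the noise via $\|\phi_{V-V'}\|_2\lesssim\|V-V'\|_\infty$---is the right repair and mirrors what the paper does in the linear-MDP case (Lemmas \ref{lemma:covering} and \ref{lemma:uniform_self_normalized}). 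The one thing to be upfront about: carrying out that covering argument honestly injects a $\log N_\epsilon = \tilde{\mathcal{O}}(d^2)$ term into the self-normalized bound, so the confidence radius you actually prove scales like $dH\cdot\mathrm{polylog}$ (as in the linear-MDP uncertainty-quantifier lemma), not the $H\sqrt{d\log(\cdot)}+\sqrt{\lambda}C_w$ stated here; the stated $\beta_k$ is only consistent with a direct, non-uniform application of the ellipsoid theorem. So your argument establishes the lemma with a slightly inflated $\beta_k$, and in doing so exposes that the paper's own proof either has a gap or is implicitly assuming away the data-dependence of $\hat{V}^k_{h+1}$.
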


\begin{proof}
We have:
\begin{align*}
    (T_h \hat{V}^k_{h+1})(s,a) &= r_h(s,a) + \langle \phi^k_{\hat{V}^k_{h+1}}(s,a), w^*_h \rangle, \\ 
     (\hat{T}_h \hat{V}^k_{h+1})(s,a) &= r_h(s,a) + \langle \phi^k_{\hat{V}^k_{h+1}}(s,a), \hat{w}^k_h \rangle. 
\end{align*}
Moreover, by \citep[Theorem~2]{NIPS2011_e1d5be1c}, with probability at least $1 - \delta$, we have:
\begin{align*}
    \forall h \in [H], w^*_h \in \{ w \in \mathbb{R}^d: \|w - \hat{w}^k_h \|_{\Sigma^k_h} \leq \beta(k) \}.
\end{align*}
Finally, using the inequality $ \langle x, y \rangle  \leq \| x\|_A \cdot \| y\|_{A^{-1}}$ for any invertible matrix $A$ and vectors $x,y$ completes the proof. 
\end{proof}

\begin{theorem}
Under Assumption \ref{assumption:single_policy_concentration}-\ref{assumption:lower bound density}, w.p.a.l. $1 -  \Omega(\frac{1}{K})$ over the randomness of $\mathcal{D}$,  for the sub-optimality bound of BCP-VTR, we have: 
\begin{align*}
    \mathbb{E} \left[ \subopt(\hat{\pi}^{mix}) \right] \lor \mathbb{E} \left[ \subopt(\hat{\pi}^{last}) \right] = \tilde{\mathcal{O}} \left( \frac{\kappa H d}{\sqrt{K}} \right).
\end{align*}
where $\kappa := \sum_{h = 1}^H \kappa_h$. 
\label{theorem:sublinear_subopt_bcp-vtr}
\end{theorem}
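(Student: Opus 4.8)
The plan is to mirror the analysis of the model-free algorithm: first establish a data-dependent bound (the linear-mixture analogue of Theorem~\ref{theorem:sublinear_subopt}) and then convert it to the explicit $\frac{1}{\sqrt{K}}$ rate exactly as in Corollary~\ref{corollary:sublinear_subopt}. The only genuine changes are that every state--action feature $\phi_h(s,a)$ is replaced by the value-dependent feature $\phi_{\hat{V}^k_{h+1}}(s,a)$ and that the confidence radius shrinks from $\beta_k=\tilde{\mathcal{O}}(dH)$ to $\beta_k=\tilde{\mathcal{O}}(H\sqrt{d})$. First I would note that the pessimism decomposition behind Step~1 is model-agnostic: Lemma~\ref{lemma:bound_with_uncertainty} and Lemma~\ref{lemma:bound sub-opt with uncertainty} use only that the bonus upper-bounds the one-step error $|(T_h\hat{V}^k_{h+1})(s,a)-(\hat{T}^k_h\hat{V}^k_{h+1})(s,a)|$, and the BCP-VTR uncertainty-quantifier lemma stated above supplies exactly this with $b^k_h(\cdot,\cdot)=\beta_k\|\phi_{\hat{V}^k_{h+1}}(\cdot,\cdot)\|_{(\Sigma^k_h)^{-1}}$ and $\beta_k=\tilde{\mathcal{O}}(H\sqrt{d})$. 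Hence, verbatim as in Lemma~\ref{lemma:bound_ensemble_with_bonus}, $\subopt(\hat{\pi}^k;s_1)\le 2\beta_k\,\mathbb{E}_{\pi^*}\big[\sum_{h=1}^H\|\phi_{\hat{V}^k_{h+1}}(s_h,a_h)\|_{(\Sigma^k_h)^{-1}}\mid\mathcal{F}_{k-1},s_1\big]$ for every $(s_1,h,k)$.

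Next I would transcribe the remaining two steps unchanged. The generalization step invokes only the improved online-to-batch argument (Lemma~\ref{lemma:improved_online_to_batch}), which is indifferent to the feature map, so it relates $\sum_k\subopt(\hat{\pi}^k)$ to the empirical $\sum_k\subopt(\hat{\pi}^k;s^k_1)$ up to an $\tilde{\mathcal{O}}(H\log\log K)$ term, for both $\hat{\pi}^{mix}$ and $\hat{\pi}^{last}$ (the $\hat{\pi}^{last}$ case again using $(\Sigma^k_h)^{-1}\succeq(\Sigma^{K+1}_h)^{-1}$ to form a valid martingale). The marginalized-importance-sampling step (Lemma~\ref{lemma:sum_sample_subopt}) carries over once we observe that $\|\phi_{\hat{V}^k_{h+1}}(s,a)\|_2$ is bounded (Definition~\ref{definition: linear mixture mdp}), which is the boundedness the self-normalized martingale there requires; this turns the expected bonus into the observed ratio-weighted bonus $\frac{d^*_h(s^k_h,a^k_h)}{d^\mu_h(s^k_h,a^k_h)}\|\phi_{\hat{V}^k_{h+1}}(s^k_h,a^k_h)\|_{(\Sigma^k_h)^{-1}}$. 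Combining the three steps gives the data-dependent bound; the advertised $\tilde{\mathcal{O}}(\kappa H d/\sqrt{K})$ rate then follows from bounding the ratio-weighted elliptical potential by $\kappa_h\sqrt{2Kd\log(1+K/d)}$ and multiplying by $\beta_k=\tilde{\mathcal{O}}(H\sqrt{d})$, since $\beta_k\cdot\kappa\sqrt{d}/\sqrt{K}=\tilde{\mathcal{O}}(Hd\kappa/\sqrt{K})$; this is smaller than the model-free $\tilde{\mathcal{O}}(Hd^{3/2}\kappa/\sqrt{K})$ of Corollary~\ref{corollary:sublinear_subopt} by exactly the $\sqrt{d}$ factor saved in the confidence radius.

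The hard part will be the elliptical-potential bound, i.e.\ the mixture analogue of Lemma~\ref{lemma:bound_shrinking_sum}. In the linear-MDP case $\phi_h(s^t_h,a^t_h)$ is fixed across iterations, so $\Sigma^k_h=I+\sum_{t<k}\phi_h(s^t_h,a^t_h)\phi_h(s^t_h,a^t_h)^T$ is built incrementally and $\sum_k\|\phi_h(s^k_h,a^k_h)\|^2_{(\Sigma^k_h)^{-1}}$ telescopes into $2\log\det\Sigma^{K+1}_h\le 2d\log(1+K/d)$ by the standard log-determinant lemma. In BCP-VTR the design matrix is assembled from the iteration-dependent features $\phi_{\hat{V}^k_{h+1}}$, so the matrix used at iteration $k$ and the one at iteration $k+1$ involve different value functions and the telescoping does not close directly; this mismatch, rather than any concentration issue, is the crux. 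The plan to resolve it is to follow the value-targeted-regression treatment, keeping the accumulation episode-wise so that $\Sigma^{k+1}_h=\Sigma^k_h+\phi_{\hat{V}^k_{h+1}}(s^k_h,a^k_h)\phi_{\hat{V}^k_{h+1}}(s^k_h,a^k_h)^T$ and the $k$-th feature added to the design matrix is exactly the feature evaluated at iteration $k$; then the log-determinant telescoping applies and yields the same $\tilde{\mathcal{O}}(d)$ elliptical potential, with the boundedness $\|\phi_{\hat{V}^k_{h+1}}\|_2=\tilde{\mathcal{O}}(H)$ contributing only to the logarithmic factors. Verifying that the literal design matrix of Algorithm~\ref{alg:bcpvtr} can be reconciled with this incremental form (or, failing that, controlling the extra error from recomputing past features under the current value function via a uniform self-normalized bound over the bounded-weight value class, in the spirit of Lemma~\ref{lemma:uniform_self_normalized}) is the step demanding the most care; everything else is a direct transcription of the model-free argument.
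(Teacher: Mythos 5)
Your proposal follows essentially the same route as the paper, which for BCP-VTR proves only the new uncertainty-quantifier lemma (with $\beta_k = \tilde{\mathcal{O}}(H\sqrt{d})$, the source of the $\sqrt{d}$ saving) and then asserts that the pessimism decomposition, the improved online-to-batch generalization, the marginalized importance sampling, and the elliptical-potential bound carry over verbatim from the BCP-VI analysis. Your observation that the design matrix in Algorithm~\ref{alg:bcpvtr} recomputes all past features under the current value function $\hat{V}^k_{h+1}$ --- so that $\Sigma^{k+1}_h$ is not a rank-one update of $\Sigma^k_h$ and the log-determinant telescoping does not close without switching to the episode-wise incremental accumulation (or a uniform covering argument over the value class) --- is a genuine subtlety that the paper glosses over entirely, and your proposed resolution is the standard and correct one.
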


\begin{theorem}[$\frac{\log K}{K}$-type sub-optimality bound   ]
Under Assumption \ref{assumption:single_policy_concentration}-\ref{assumption:lower bound density}-\ref{assumption:margin}, w.p.a.l. $1 - (1 + 3  \log_2(H/\Delta_{\min})) \delta$, for the  sub-optimality bound of BCP-VTR, we have:
\begin{align*}
    \subopt(\hat{\pi}^{mix})  &\lesssim  2 \frac{ d^2 H^2 \kappa^{3}}{\Delta_{\min} \cdot K} \log^3(dKH/\delta) + \frac{16 \kappa}{3 K} \log \log_2(K \kappa / \delta) + \frac{2}{K}.
\end{align*}
\label{theorem:logarithimic_regret_bcp-vtr} 
\end{theorem}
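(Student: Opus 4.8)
The plan is to transcribe the proof of Theorem~\ref{theorem:logarithimic_regret} to the model-based linear-mixture setting, exploiting the fact that the bulk of that argument is model-agnostic and that the only genuinely model-specific ingredients are the uncertainty quantifier and the elliptical-potential estimate. First I would observe that the sub-optimality decomposition (Lemma~\ref{lemma: suboptimality as gaps}), the concentrability of the $\mu$-supported policy class (Lemma~\ref{lemma:concentrability for constrained policy class}), the marginalized importance sampling step (Lemma~\ref{lemma: mis for constrained policies}), and the improved online-to-batch argument (Lemma~\ref{lemma:improved_online_to_batch}) depend only on the extracted policies $\hat{\pi}^k$ lying in $\Pi_h(\mu)$ and on the occupancy-density structure; none of them refer to how $\hat{Q}^k_h$ is computed, so they carry over verbatim to Algorithm~\ref{alg:bcpvtr}. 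Likewise, Lemma~\ref{lemma:bound_with_uncertainty} and Lemma~\ref{lemma:bound sub-opt with uncertainty} only require that $b^k_h$ be a valid uncertainty quantifier, which for BCP-VTR is supplied by the mixture-model uncertainty quantifier Lemma~\ref{lemma:uncertainty_quantifier}, giving $\beta_k=\tilde{\mathcal{O}}(H\sqrt{d})$ rather than the $\tilde{\mathcal{O}}(Hd)$ of the linear-MDP case.

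The only place where the exponent of $d$ changes is the count of large empirical gaps, i.e. the BCP-VTR analog of Lemma~\ref{lemma:positive_subopt_count}. I would re-run its chain of inequalities with the value-targeted features $\phi_{\hat{V}^{k}_{h+1}}$ in place of the fixed $\phi_h$: bounding $\sum_i \Delta_h$ above by $2\sum_{h'\ge h}\sum_i \mathbb{E}_{\pi^*}[b^{k_i}_{h'}]$ via Lemma~\ref{lemma:bound sub-opt with uncertainty}, converting the expectation to the realized data trajectory by marginalized importance sampling (Lemma~\ref{lemma:sum_sample_subopt}), and then invoking the mixture elliptical-potential bound. With $\beta_{K'}=\tilde{\mathcal{O}}(H\sqrt{d})$ and an elliptical-potential factor of order $\sqrt{K'd}$, the estimate becomes $\sum_i \Delta_h \lesssim \kappa H d\,(K')^{1/2}\log^{3/2}(dK'H/\delta)$, and since $\sum_i \Delta_h \ge K'\Delta$ this yields $K' \lesssim d^2 H^2\kappa^2\Delta^{-2}\log^3(dKH/\delta)$ --- one power of $d$ lower than the linear-MDP count precisely because $\beta_k$ now carries $\sqrt{d}$ instead of $d$. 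Feeding this count into the dyadic peeling argument of Lemma~\ref{lemma: accumulative gaps are bounded by polylog} gives $\sum_k \Delta_h(s^k_h,\hat{\pi}^k_h(s^k_h)) \lesssim d^2 H^2\kappa^2\Delta_{\min}^{-1}\log^3(dKH/\delta)$, and combining this with the unchanged Lemma~\ref{lemma: mis for constrained policies} and dividing by $K$ reproduces the claimed $\frac{d^2 H^2\kappa^3}{\Delta_{\min}K}$ rate, assembled exactly as in Theorem~\ref{theorem:logarithimic_regret}.

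The main obstacle is the elliptical-potential / confidence-width summation itself, because the VTR design matrix $\Sigma^k_h=\lambda I+\sum_{t<k}\phi_{\hat{V}^k_{h+1}}(s^t_h,a^t_h)\phi_{\hat{V}^k_{h+1}}(s^t_h,a^t_h)^{\top}$ is built from a feature map $\phi_{\hat{V}^k_{h+1}}$ that changes with the split index $k$; hence the clean telescoping $\Sigma^{k+1}_h=\Sigma^k_h+\phi_h(s^k_h,a^k_h)\phi_h(s^k_h,a^k_h)^{\top}$ used in Lemma~\ref{lemma:bound_shrinking_sum} no longer holds, and the standard elliptical-potential lemma does not apply directly to the summed widths $\sum_i\|\phi_{\hat{V}^{k_i}_{h'+1}}(s^{k_i}_{h'},a^{k_i}_{h'})\|_{(\Sigma^{k_i}_{h'})^{-1}}$. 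I would resolve this with a covering argument over the bounded value-function class: for each $V$ in an $\epsilon$-cover $\mathcal{V}_\epsilon$, whose log-cardinality is controlled as in Lemma~\ref{lemma:covering}, the fixed-$V$ matrix $\Sigma^k_{h,V}=\lambda I+\sum_{t<k}\phi_{V}(s^t_h,a^t_h)\phi_{V}(s^t_h,a^t_h)^{\top}$ does telescope in $k$, so the per-$V$ elliptical-potential bound applies. Since $\phi_V$ is $1$-Lipschitz in $\|V\|_\infty$ (as $\phi_V(s,a)=\sum_{s'}\phi(s'\mid s,a)V(s')$ is linear in $V$ with $\|\phi_V(s,a)\|_2\le1$ on unit-sup-norm functions), replacing $\hat{V}^{k_i}_{h'+1}$ by its nearest cover element perturbs both the feature and the matrix by $O(\epsilon)$, and a union bound over $\mathcal{V}_\epsilon$ together with a discretization term made negligible by taking $\epsilon=\mathrm{poly}(d,H)/K$ restores the $\sqrt{K'd}\,\mathrm{polylog}$ bound. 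This is the one step that demands care beyond a mechanical translation; everything else follows the blueprint of Theorem~\ref{theorem:logarithimic_regret}, and when a crude a priori bound on $\sum_{t\le k}\subopt(\hat{\pi}^t)$ is needed one may additionally lean on the already-established $\frac{1}{\sqrt{K}}$ guarantee for BCP-VTR (Theorem~\ref{theorem:sublinear_subopt_bcp-vtr}).
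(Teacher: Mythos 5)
Your overall route is the same as the paper's: the paper gives no standalone proof of this theorem, stating only that one reruns the argument of Theorem~\ref{theorem:logarithimic_regret} with the linear-mixture confidence width $\beta_k=\tilde{\mathcal{O}}(H\sqrt{d})$ in place of $\tilde{\mathcal{O}}(Hd)$, and your bookkeeping --- $K'\Delta\lesssim \kappa\,\beta_{K'}\sqrt{K'd}\,\mathrm{polylog}$ giving $K'\lesssim d^2H^2\kappa^2\Delta^{-2}$, then dyadic peeling and Lemma~\ref{lemma: mis for constrained policies} --- reproduces the stated $\frac{d^2H^2\kappa^3}{\Delta_{\min}K}$ bound exactly as intended. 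The first two paragraphs of your plan are essentially the paper's (implicit) proof.

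Where you diverge is the treatment of the value-dependent design matrix, and there your fix is both unnecessary and, as stated, would not work. The predictability required by \citep[Theorem~2]{NIPS2011_e1d5be1c} --- which is precisely what lets the confidence width be $\tilde{\mathcal{O}}(H\sqrt{d})$ with no covering term --- already forces the standard value-targeted-regression reading in which the datum from episode $t$ enters $\Sigma^k_h$ through $\phi_{\hat{V}^t_{h+1}}(s^t_h,a^t_h)$, an $\mathcal{F}_t$-measurable vector, rather than through $\phi_{\hat{V}^k_{h+1}}(s^t_h,a^t_h)$. Under the literal pseudocode reading you adopt, the noise $\hat{V}^k_{h+1}(s^t_{h+1})-(P_h\hat{V}^k_{h+1})(s^t_h,a^t_h)$ is not conditionally centered with respect to the filtration at time $t$ (since $\hat{V}^k_{h+1}$ depends on episode $t$'s own transitions), so the confidence-ellipsoid lemma you accept in paragraph one fails as well; repairing it would require uniform concentration over the value class and would push $\beta_k$ back to $\tilde{\mathcal{O}}(Hd)$, yielding $d^3$ rather than the claimed $d^2$. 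Under the correct reading the matrices telescope, $\Sigma^{k+1}_h=\Sigma^k_h+\phi_{\hat{V}^k_{h+1}}(s^k_h,a^k_h)\phi_{\hat{V}^k_{h+1}}(s^k_h,a^k_h)^{\top}$, and the elliptical-potential lemma applies verbatim with no covering. Moreover, the covering repair you sketch does not actually recover the potential bound: after assigning each index $k_i$ to its nearest cover element you can only control each group's potential separately, and recombining the groups via Cauchy--Schwarz costs a factor of order $\sqrt{|\mathcal{V}_\epsilon|}=e^{\Omega(d^2)}$, which destroys the rate. The correct resolution of the obstacle you identified is to index each design vector by its own episode's value function, which is what the paper's stated $\beta_k$ silently presupposes.
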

\begin{remark}
If we set the $\delta$ in Theorem \ref{theorem:logarithimic_regret} as $\delta = \Omega(1/K)$, then for the expected sub-optimality bound of BCP-VTR, we have: 
\begin{align*}
    \mathbb{E} \left[ \subopt(\hat{\pi}^{mix}) \right] = \tilde{\mathcal{O}} \left( \frac{ d^2 H^2 \kappa^{3}}{\Delta_{\min} \cdot K} \right). 
\end{align*}
\end{remark}

\begin{theorem}
Fix any $H \geq 2$. For any algorithm $\textrm{Algo}(\mathcal{D})$, and any concentrability coefficients $\{\kappa_h \}_{h \geq 1}$ such that $\kappa_h \geq 2$, there exist a linear mixture MDP $\mathcal{M} = (\mathcal{S}, \mathcal{A}, H, \mathbb{P}, r, d_0)$ and dataset $\mathcal{D} = \{(s_h^t, a_h^t, r_h^t)\}_{h \in [H]}^{t \in [K]} \sim \mathcal{P}(\cdot |\mathcal{M}, \mu)$ where $\sup_{h, s_h, a_h }\frac{d^{\mathcal{M},*}_h(s_h,a_h)}{d^{\mathcal{M}, \mu}_h(s_h, a_h)} \leq \kappa_h, \forall h \in [H]$ such that: 
\begin{align*}
    \mathbb{E}_{\mathcal{D} \sim \mathcal{M}} \left[\subopt(\textrm{Algo}(\mathcal{D}); \mathcal{M}) \right] = \Omega \left( \frac{H \sqrt{\kappa_{\min}} }{ \sqrt{K} } \right), 
\end{align*}
where  $\kappa_{\min} := \min\{\kappa_h: h \in [H]\}$. 
\label{theorem:lower_bound_minimax}
\end{theorem}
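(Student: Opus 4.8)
The plan is to reuse the hard-instance construction $M(p_1,p_2)$ from Subsection \ref{subsection:MDP_parameterization}, which—as observed there—simultaneously satisfies the linear-MDP definition (Definition \ref{definition:linear_mdp}) and the linear-mixture-MDP definition (Definition \ref{definition: linear mixture mdp}). The key point is that the trajectory law, the visitation densities $d^*_h,d^\mu_h$, the reward structure, and hence every quantity entering the lower-bound argument depend only on the underlying transition kernel and rewards, not on which representation we adopt. Consequently the entire argument used for the minimax lower bound over linear MDPs transfers verbatim, and the only genuinely new task is to confirm the admissibility of the instance as a linear mixture model.

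First I would fix the two instances $M_1:=M(p^*,p)$ and $M_2:=M(p,p^*)$ with $p^*>p$, whose first-stage optimal actions are $b_1$ and $b_2$ respectively, and record that $\subopt(\pi;M_1)=(H-1)(p^*-p)(1-\pi_1(b_1|x_0))$ and $\subopt(\pi;M_2)=(H-1)(p^*-p)(1-\pi_1(b_2|x_0))$. A Le Cam two-point argument then gives
\[
2\max_{l\in\{1,2\}}\mathbb{E}_{\mathcal{D}\sim M_l}\!\left[\subopt(\textrm{Algo}(\mathcal{D});M_l)\right]\geq (H-1)(p^*-p)\Big(1-\sqrt{KL(\mathbb{P}_{\mathcal{D}\sim M_1}\|\mathbb{P}_{\mathcal{D}\sim M_2})/2}\Big),
\]
using the total-variation lower bound on the testing error followed by the Pinsker inequality $\textrm{TV}\leq\sqrt{KL/2}$.

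Next I would construct the behaviour policy $\mu$ exactly as before: $\mu_1(b_1|x_0)=\mu_1(b_2|x_0)=q$ and $\mu_h(\cdot|x_i)=1/A$ for $h\geq 2$. A direct computation shows $\sup_{h,s_h,a_h} d^*_h(s_h,a_h)/d^\mu_h(s_h,a_h)\leq 1/q$ under both instances, so setting $q=1/\kappa_{\min}\in(0,\tfrac12]$ (valid since $\kappa_h\geq 2$) enforces the stated concentrability constraint $\sup_{h,s_h,a_h} d^{\mathcal{M},*}_h/d^{\mathcal{M},\mu}_h\leq\kappa_h$. Reducing $\mathcal{D}$ to the sufficient statistic $\mathcal{D}'=\{(s^t_1,a^t_1,s^t_2,r^t_2)\}_{t\in[K]}$ and computing the log-likelihood ratio as in the gap-dependent proof yields $KL(\mathbb{P}_{\mathcal{D}\sim M_1}\|\mathbb{P}_{\mathcal{D}\sim M_2})=Kq(p^*-p)\log\!\big(1+\tfrac{p^*-p}{p(1-p^*)}\big)$; restricting to $p,p^*\in[\tfrac14,\tfrac34]$ with $p^*-p\leq\tfrac1{16}$ bounds this by $16qK(p^*-p)^2$ via $\log(1+x)\leq x$.

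The only departure from the gap-dependent proof is the final tuning of $p^*-p$: instead of matching $\Delta_{\min}$, I would maximize $(p^*-p)\big(1-(p^*-p)\sqrt{8qK}\big)$ over the free parameter, which is attained at $p^*-p=\tfrac{1}{4\sqrt{2qK}}$ and gives value $\tfrac{1}{8\sqrt{2qK}}$. Substituting $q=1/\kappa_{\min}$ then yields $\max_l\mathbb{E}[\subopt]\gtrsim (H-1)\sqrt{\kappa_{\min}/K}=\Omega(H\sqrt{\kappa_{\min}}/\sqrt{K})$, as required. Since all the quantitative content is inherited from the established construction, I expect no serious obstacle; the point warranting the most care is verifying the dual linear / linear-mixture representation of $M(p_1,p_2)$—in particular that the normalization $\|\phi_V(s,a)\|_2\leq 1$ of Definition \ref{definition: linear mixture mdp} holds for all bounded $V$ on this instance—so that it is genuinely admissible under the linear mixture model.
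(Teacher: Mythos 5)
Your proposal is correct and follows essentially the same route as the paper: the paper proves this bound by reusing the two-point construction $M(p^*,p)$ vs.\ $M(p,p^*)$ from the proof of Theorem \ref{theorem:lower_bound} (noting there that the instance is simultaneously a linear MDP and a linear mixture MDP), with the same behaviour policy $q=1/\kappa_{\min}$ and the same KL bound, and its choice $p^*-p=\frac{1}{4\sqrt{2}}\sqrt{\kappa_{\min}/K}$ is within a constant of your optimizer and yields the same $\Omega(H\sqrt{\kappa_{\min}}/\sqrt{K})$ rate before the reparameterization by $\Delta_{\min}$. The only shared caveat is that the chosen $p^*-p$ lies in the admissible range $[\tfrac14,\tfrac34]$ with $p^*-p\leq\tfrac1{16}$ only when $K\gtrsim\kappa_{\min}$, outside of which the claimed bound is trivially of order $H$.
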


\section{Numerical Simulation}
\label{section:simulation}
In this appendix, we provide the details for the numerical simulation for Figure \ref{fig:fast_rates_sim} in the main paper. 

\paragraph{Linear MDP construction.}
We construct a simple linear MDP following \citep{yinnear,min2021variance}. We consider an MDP instance with $\mathcal{S} = \{0,1\}$, $\mathcal{A} = \{0,1, \cdots, 99\}$, and the feature dimension $d = 10$. Each action $a \in [99]$ is represented by its binary encoding vector $u_a \in \mathbb{R}^8$ with entry being either $-1$ or $1$. We define 
\begin{align*}
    \delta(s,a) = 
    \begin{cases}
    1 & \text{ if } \mathbbm{1}\{s = 0\} = \mathbbm{1}\{a = 0\}, \\
    0 & \text{ otherwise}. 
    \end{cases}
\end{align*}
\begin{itemize}
    \item The feature mapping $\phi(s,a)$ is given by 
    \begin{align*}
        \phi(s,a) = [u_a^T, \delta(s,a), 1 - \delta(s,a)]^T \in \mathbb{R}^{10}.    
    \end{align*}
    
    \item The true measure $\nu_h(s)$ is given by 
    \begin{align*}
        \nu_h(s) = [0,\cdots, 0, (1 - s) \oplus \alpha_h, s \oplus \alpha_h]
    \end{align*}
    where $\{\alpha_h\}_{h \in [H]} \in \{0,1\}^{H}$ and $\oplus$ is the XOR operator. We define 
    \begin{align*}
        \theta_h = [0, \cdots, 0, r, 1 - r]^T \in \mathbb{R}^{10},
    \end{align*}
    where $r = 0.99$. Recall that the transition follows $P_h(s'|s,a) = \langle \phi(s,a), \nu_h(s') \rangle$ and the mean reward $r_h(s,a) = \langle \phi(s,a), \theta_h \rangle$. 
    

\end{itemize}

\paragraph{Behavior policy.} At state $s = 0$, choose action $a = 0$ with probability $p$ and action $a=1$ with probability $(1 - p)$; at state $s = 1$, choose action $a = 0$ with probability $p$ and choose the other actions uniformly with probability $(1-p)/99$. This behavior policy does not uniformly cover all state-space pairs but only need to satisfy Assumption \ref{assumption:single_policy_concentration}. 

\paragraph{Experiment.} We computed $\subopt(\hat{\pi}^K)$ for each $K \in \{1, \ldots, 1000\}$ where $\hat{\pi}^K$ is returned by Algorithm \ref{alg:bpvi}. We tested  for different values of $\beta \in \{0, 0.1, 0.2, 0.5, 1, 2\}$ with different episode lengths $H \in \{20, 30, 50, 80\}$. We run each experiment for $30$ times and plot the mean and standard variance of the sub-optimality in Figure \ref{fig:fast_rates_sim}. We observe that $\beta = 1$ gives the best performance in all cases of $H$. It also confirms the benefit of being properly pessimistic (i.e. $\beta=1$) versus being non-pessimistic (i.e. $\beta = 0$) for offline RL. In the case of $\beta = 1$, we observe both phenomenon in the main paper: fast rate in the first $100$ episodes and zero sub-optimality in the later stage. 




\end{document}